\documentclass{article} 
\usepackage{iclr2025_conference,times}


\usepackage{amsmath,amsfonts,bm}









\def\eqref#1{equation~\ref{#1}}









\def\1{\bm{1}}










\DeclareMathAlphabet{\mathsfit}{\encodingdefault}{\sfdefault}{m}{sl}
\SetMathAlphabet{\mathsfit}{bold}{\encodingdefault}{\sfdefault}{bx}{n}













\usepackage[page,header]{appendix}
\usepackage{titletoc}
\usepackage{lipsum}

\usepackage{url}
\usepackage{graphicx}
\usepackage{wrapfig}
\usepackage{xcolor}         
\usepackage{hyperref}

\hypersetup{
    colorlinks=true,
    citecolor=[RGB]{0, 102, 204} 
}

\usepackage{amsmath}
\usepackage{amssymb}
\usepackage{mathtools}
\usepackage{amsthm}
\usepackage{enumitem}
\usepackage{bbm}
\usepackage{subcaption}
\usepackage{nicematrix}
\usepackage{tikz}
\usepackage{sidecap}

\usepackage[capitalize,noabbrev]{cleveref}

\theoremstyle{plain}
\newtheorem{theorem}{Theorem}[section]
\newtheorem{proposition}[theorem]{Proposition}
\newtheorem{lemma}[theorem]{Lemma}

\newtheorem{corollary}[theorem]{Corollary}
\newtheorem{fact}[theorem]{Fact}
\theoremstyle{definition}
\newtheorem{definition}[theorem]{Definition}
\theoremstyle{remark}
\newtheorem{remark}[theorem]{Remark}

\title{Loss Landscape of Shallow ReLU-like Neural Networks: Stationary Points, Saddle Escape, and Network Embedding}


\author{Frank Zhengqing Wu$^{1*}$
\enskip\enskip
Berfin Şimşek$^{2}$\enskip\enskip
François Gaston Ged$^{3}$\thanks{Correspondence to \texttt{zhengqing.wu@epfl.ch}, \texttt{fged.math@gmail.com}.
}
\\
$^1$ École Polytechnique Fédérale de Lausanne, Lausanne (EPFL), Switzerland\\
$^2$ New York University, New York, USA\\
$^3$ University of Vienna, Vienna, Austria
}

%

\iclrfinalcopy 
\allowdisplaybreaks
\begin{document}

\maketitle

\begin{abstract}       
In this paper, we study the loss landscape of one-hidden-layer neural networks with ReLU-like activation functions trained with the empirical squared loss using gradient descent (GD).
We identify the stationary points of such networks, which significantly slow down loss decrease during training.
To capture such points while accounting for the non-differentiability of the loss, the stationary points that we study are directional stationary points, rather than other notions like Clarke stationary points. 
We show that, if a stationary point does not contain ``escape neurons", which are defined with first-order conditions, it must be a local minimum.
Moreover, for the scalar-output case, the presence of an escape neuron guarantees that the stationary point is not a local minimum.  
Our results refine the description of the \emph{saddle-to-saddle} training process starting from infinitesimally small (vanishing) initialization for shallow ReLU-like networks: 
By precluding the saddle escape types that previous works did not rule out, {we advance one step closer to a complete picture of the entire dynamics.}
Moreover, we are also able to fully discuss how network embedding, which is to instantiate a narrower network with a wider network, reshapes the stationary points.
\end{abstract}

\section{Introduction}
\label{sec: introduction}

Understanding the training process of neural networks calls for insights into their loss landscapes.
Characterization of the stationary points is a crucial aspect of these studies.  
In this paper, we investigate the stationary points of the loss of a one-hidden-layer neural network with ReLU-like activation functions.
The non-differentiability of the activation function renders such problems non-trivial.
It is worth noting that, although the non-differentiable areas only take up zero Lebesgue measure in the parameter space, such areas are often visited by GD and thus should not be neglected.

In particular, we discover that any loss-decreasing path starting from a stationary point must involve changes in the parameters of ``escape neurons" (defined in \cref{def: escape neurons}).
The absence of escape neurons guarantees the stationary point to become a local minimum.
Our results provide insight into the saddle escape process with minimal assumptions.
Supplementing \citet{bousquets_paper,GDReLUorthogonaloutput,correlatedReLU,boursier2024early,kumar2024directional}, our results lead to a fuller understanding of the saddle-to-saddle training dynamics, a typical dynamical pattern resulted from vanishing initialization.
More specifically, those previous works studied the behavior of gradient flow near a specific type of saddle point \citep{kumar2024directional}, and this work precludes the existence of other types in the general case.

We are also able to systematically describe how network embedding reshapes the stationary points by examining whether the ``escape neurons" are generated from the embedding process.
This directly extends the discussion by \citet{embeddingNEURIPS2019} to non-differentiable cases.

\subsection{Related Work}
\paragraph{Stationary Points.} 
Stationary points abound on the loss landscape of neural networks.
\cite{piecewise_linear_activation,bounds_on_descent_path_shallow_relu,liubo_non-differentiable,symmetrybreakinglocalmin, csimcsek2023should} have shown that saddle points and spurious local minima exist in the landscape of non-linear networks.
Stationary points may significantly affect training dynamics, leading to plateaus in the loss curve \citep{saad1995plateau,amari_plateau,jacot2022saddletosaddle, pesme2024saddle}.
When studying these stationary points, it is of both theoretical and practical interest~\citep{FUKUMIZU_hierarchical, berfin_geometry} to discriminate between their different types~\citep{NonstrictSaddles2}, for they affect gradient descent differently~\citep{escape_saddle1,escape_saddle2}. 
Specifically, while local minima and non-strict saddles are not escapable~\citep{NonstrictSaddles1,escape_saddle2,NonstrictSaddles2}, strict saddles are mostly escapable {under mild conditions}~\citep{escape_saddle1, escape_saddle2, saddle_escape_3, saddle_escaping_SGD,badsaddle}. 
In our setting, the non-differentiability complicates the analysis.
To tame such difficulty, previous works took various simplifications, such as only studying the differentiable areas \citep{analytical_forms_critical,piecewise_linear_activation,relu_splines}, only studying the non-differentiable stationary points yielded by specific constructions~\citep{liubo_non-differentiable}, and only studying the first two orders of the derivatives \citep{yun2019efficientlytestinglocaloptimality}.
These theories appeared inconclusive due to such 
 simplifications. 
 So far, a systematic characterization of the potentially non-differentiable stationary points in our setting has yet to be established.

 {In this work, we consider stationary points to be where the one-sided directional derivative (ODD) toward any direction is non-negative,
 which is known as directional stationary points \citep{dstationarity}.
 This notion effectively captures the points on the loss landscape that slow down GD and create loss plateaus, whether differentiable or not (see \cref{sec: stationary points,app: nondiff stationary points}). Such a property of this notion remained unnoticed by previous works.
 Earlier literature on ReLU network training dynamics \citep{GDReLUorthogonaloutput,one_neuron_relu_dynamics,kumar2024directional} observed such GD-stagnating points for specific cases but did not provide conditions to identify them in broader setup.
 Notably, our notion of stationarity also contrasts with other methods based on Clarke subdifferential \citep{wang2022the,clark_subdiff_used}, subgradient, or right-hand derivative~\citep{affine_target_function}, which may be less suitable for characterizing GD stationarity (see \cref{remark: other differential discussion}).}

\vspace{-10pt}
\paragraph{Training Dynamics.}
Different initialization scales lead to different training dynamics.
{In the lazy-training regime~\citep{chizat2019lazy}, which has relatively large initialization scales, the dynamics is well captured by the neural tangent kernel theory~\citep{jacot2018ntk, allen2019learning,arora2019fine}.}
In the regime of vanishing initialization, the training exhibits a feature learning behavior \citep{feature_learning}, which still awaits a generic theoretical account.
However, it has been widely observed that the training dynamics in this regime often displays a saddle-to-saddle pattern, meaning the loss will experience intermittent steep declines punctuated by plateaus where it remains relatively unchanged. 
We will describe this process for our setting, furthering the discussions from~\citet{bousquets_paper,GDReLUorthogonaloutput,correlatedReLU,boursier2024early,kumar2024directional}.
It is noteworthy that the training dynamics often unveils the implicit biases of the optimization algorithms.
In the saddle-to-saddle regime, the learned functions often prefer lower parameter ranks \citep{bousquets_paper,luo2021phase,jacot2022saddletosaddle} or smaller parameter norms \citep{GDReLUorthogonaloutput,pesme2024saddle}, which could be beneficial for generalization.

\vspace{-10pt}
\paragraph{Network Embedding.}
Network embedding provides a perspective for understanding how the optimization landscape transforms given more parameters, shedding light on the merit of over-parameterization~\citep{livni2014computational,OverparamInitialBasin,nobadlocalminima,deepwidenetworksurface,soudry2017exponentially,Overparam_quadratic_loss, venturi,soltanolkotabi2022theoretical}.
Previously, network embedding has been studied in various setups~\citep{FUKUMIZU_hierarchical, embeddingNEURIPS2019, mild-overparam, berfin_geometry, zhang2021embedding}, mostly focusing on how network embedding reshapes stationary points.
We extend this line of work to the non-differentiable cases. 

\subsection{Main Contribution}

In this paper, we study the aforementioned topics for one-hidden-layer networks with ReLU-like activation functions trained by the empirical squared loss.
Our contributions are as follows.
\begin{itemize}[itemsep=0pt,topsep=-0.5em,parsep=0pt,partopsep=0pt,leftmargin=*]
    \item {Noticing that the non-differentiability only lies within the hyper-planes orthogonal to training inputs, and derivatives can be handled easily on both sides of those hyperplanes,
    we develop a routine to fully investigate the ODDs of the loss.
    With its help,} we identify, classify, and characterize the stationary points with non-differentiability fully considered.
    \item We preclude saddle escape types that previous papers \citep{bousquets_paper,GDReLUorthogonaloutput,correlatedReLU,boursier2024early,kumar2024directional} were not able to rule out.
    We show that saddle escape must involve the parameter changes of escape neurons (\cref{def: escape neurons}) and apply this to describe the saddle-to-saddle dynamics.
     \item We study whether network embedding preserves stationarity or local minimality.
\end{itemize}

\section{Setup}
\label{sec: setup}
\paragraph{Network Architecture.} We study one-hidden-layer networks as illlustrated in \cref{fig: network architecture}.

For a given input $\mathbf{x} \in \mathbbm{R}^{d}$, the network output is:\\
\begin{wrapfigure}{r}{0.4\linewidth}
\vspace{-30pt}
    \centering  
\includegraphics[scale=0.45]{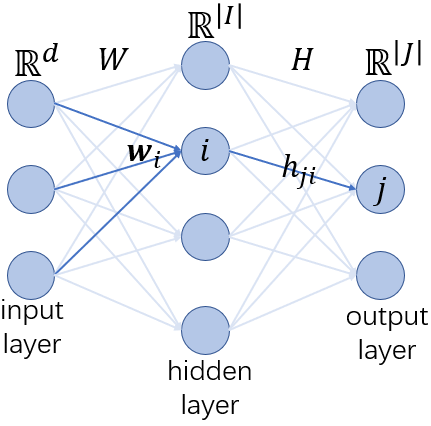}
    \caption{network architecture}
    \label{fig: network architecture}
    \vspace{0pt}
\end{wrapfigure}
\vspace{-15pt}
\begin{equation}
\hat{\mathbf{y}}(H,W;\mathbf{x}) = \hat{\mathbf{y}}(\mathbf{P};\mathbf{x})=H \rho(W\mathbf{x}).
\label{eq: output function}
 \end{equation} 
In \cref{eq: output function}, $W\in \mathbbm{R}^{\vert I\vert\times d}$ and $H = (h_{ji})\in \mathbbm{R}^{\vert J\vert\times \vert I\vert}$ are respectively the input and output weight matrices, and $I$ and $J$ are the sets of indices of the hidden neurons and output neurons.
The vector $\mathbf{P}\in \mathbbm{R}^D$ contains all the trainable parameters $W$ and $H$, so that  $D=\vert J \vert \times \vert I\vert + \vert I \vert \times d$. 
We denote the row of $W$ corresponding to hidden neuron $i$ by $\mathbf{w}_i\in \mathbbm{R}^d$.
The componentwise activation function $\rho(\cdot)$ reads:
\begin{align*}
\rho(z) = \alpha^+\mathbbm{1}_{z\geq0}z + \alpha^-\mathbbm{1}_{z<0}z; \quad\alpha^+,\alpha^-, z \in \mathbbm{R}.
\label{eq: activation function}
\end{align*}
In this paper, the \emph{assumptions} we make are $\alpha^+\neq \alpha^-$ and $d>1$.
Both are for brevity in exposition, rather than being indispensable for the validity of the outcomes.

\paragraph{Loss Function.} The training input/targets are $\mathbf{x}_k$/$\mathbf{y}_k$ with $ k\in K$, where $K$ denotes the set of the sample indices.
The empirical squared loss function takes the form:
\begin{equation}
\footnotesize
\mathcal{L}\left( \mathbf{P}\right)=\frac{1}{2} \sum_{k\in K}\Vert\hat{\mathbf y}_{k}(\mathbf{P})-\mathbf{y}_{k}\Vert^{2} = \frac{1}{2} \sum_{j\in J}\sum_{k\in K}\left( \hat{y}_{kj}(\mathbf{P}) - y_{kj} \right)^2,
\label{empirical squared loss}
\end{equation}
where $\Hat{\mathbf{y}}_k(\mathbf{P}) \coloneqq\Hat{\mathbf{y}}\left(\mathbf{P};\mathbf{x}_k\right)$, and $\hat{y}_{kj}$, $ y_{kj}$ are the components of $\hat{\mathbf y}_{k}$ and ${\mathbf y}_{k}$ at the output neuron $j$.


\section{Stationary Points}
\label{sec: basics}

In this section, we derive the ODDs and identify the stationary points of the loss.

\subsection{Deriving One-sided Directional Derivatives}
\label{sec: first-order derivatives}

Non-differentiability does not completely prohibit the investigation into derivatives.
{For example, studying the  left and right hand derivatives at the origin of $f(x) = \vert x\vert$ suffices to describe the function locally, despite the non-differentiability.
Generalizing this methodology to higher dimensions, we can handle the derivatives of the loss function by deriving them in different directions.}
Note that the loss of ReLU-like networks is continuous everywhere and piecewise $C^\infty$, which enables us to study its ODDs, and the ODDs are informative enough to characterize its first-order properties.

It is not hard to see that our neural network is always differentiable with respect to the output weights $h_{ji}$.
Non-differentiability only arises from input weights $\mathbf{w}_i$.
More precisely, the network (thus the loss) is non-differentiable at $\mathbf{P}$ if and only if a weight $\mathbf{w}_i$ is orthogonal to some training input $\mathbf{x}_k$'s (see \cref{fig:sector structure} for an illustration).
Nonetheless, when $\mathbf{w}_i$ moves a sufficiently small distance along a fixed direction, the sign of $\mathbf{w}_i\cdot\mathbf{x}_k$ is fixed for all $k\in K$.
Namely, $\mathbf{w}_i\cdot\mathbf{x}_k$ only moves on either the positive leg, the negative leg of the activation function $\rho(\cdot)$, or stays at the kink, but it will not cross the kink during the movement.
\textcolor{black}{With $\mathbf{w}_i$ constrained within such a small local region, the loss is a polynomial with respect to it.}
This allows us to study the ODDs.
Below, we specify some general directions where the ODDs can be written in closed forms.

\begin{figure}[h]
\vspace{-15pt}
    \centering
    \begin{minipage}{0.3\linewidth} 
        \centering
        \vspace{-10pt}\includegraphics[width=0.85\linewidth]{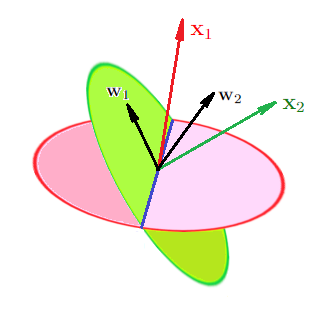}
    \end{minipage}%
    \hfill
    \begin{minipage}{0.7\linewidth} 
        \caption{\emph{A diagram demonstrating the non-differentiability with respect to the input weights.} Here, we show a case where input weights and training inputs are $3$-dimensional.
    There are two training inputs and two input weights. 
    $\mathbf{w}_1$ lies in a plane orthogonal to $\mathbf{x}_2$. 
    The loss is thus locally non-differentiable since it contains the term $\rho(\mathbf{w}_1\cdot\mathbf{x}_2)$. However, we can compute the ODD with respect to $\mathbf{w}_1$ since the loss function with $\mathbf{w}_1$ constrained on either side of the plane or on the plane is a polynomial of $\mathbf{w}_1$.}
        \label{fig:sector structure}
    \end{minipage}
    \vspace{-20pt}
\end{figure}

\begin{definition}
    \label{def: radial/tangential direction}
    Consider a nonzero $\mathbf{w}_i\in\mathbbm{R}^d$. 
    The \emph{radial direction} of $\mathbf{w}_i$ is $\mathbf{u}_i:=\frac{\mathbf{w}_i}{\Vert \mathbf{w}_i \Vert}$.
    Moreover, any unit vector {$\mathbf{v}_i$} orthogonal to $\mathbf{u}_i$ is called a \emph{tangential direction}.
    For $\mathbf{w}_i = \mathbf{0}$, the \emph{radial direction} does not exist, and a \emph{tangential direction} is defined to be any unit vector $\mathbf{v}_i\in \mathbbm{R}^d$.
\end{definition}
\begin{remark}
    The tangential direction is always definable given our assumption of $d>1$.
\end{remark}

Consider a non-zero $\mathbf{w}_i\in \mathbbm{R}^d$ and let $\mathbf{w}_i^\prime =\mathbf{w}_i+\Delta\mathbf{w}_i\neq\mathbf{w}_i$ be in the neighborhood of $\mathbf{w}_i$.
The vector $\Delta\mathbf{w}_i$ admits a unique decomposition $\Delta\mathbf{w}_i=\Delta r_i\mathbf{u}_i + \Delta s_i\mathbf{v}_i$ with $\Delta s_i\geq 0$ and $\mathbf{v}_i$ being a specific tangential direction of $\mathbf{w}_i$.
Fixing a direction for the derivative is thus equivalent to fixing a tangential direction.
For a generic function $f$,
$
{\frac{\partial f(\mathbf{w}_i)}{\partial r_i}\coloneqq\lim_{\Delta r_i\to 0+}\frac{f(\mathbf{w}_i+ \Delta r_i\mathbf{u}_i) - f(\mathbf{w}_i)}{\Delta r_i}}$ and 
$
{\frac{\partial f(\mathbf{w}_i)}{\partial s_i}\coloneqq\lim_{\Delta s_i\to 0+}\frac{f(\mathbf{w}_i+ \Delta s_i\mathbf{v}_i)- f(\mathbf{w}_i)}{\Delta s_i}}$ are the \textit{radial derivative} and the \textit{tangential derivative}.

For $\mathbf{w}_i = \mathbf{0}$, we can likewise define the tangential derivative for every tangential direction $\mathbf{v}_i$, and by convention, set the undefinable $\mathbf{u}_i$ and the radial derivative to zero.

\begin{remark}
The notation of $\frac{\partial}{\partial s_i}$ always implies a fixed tangential direction $\mathbf{v}_i$.
\end{remark}

We can then study the ODDs with respect to the output weights and the radial/tangential directions of the input weights.


\begin{lemma}
\label{lemma: first-order derivatives}
\normalsize
We first denote $(\hat{y}_{kj} - y_{kj}) \coloneqq e_{kj}$.
Then, we define the following quantities:
\vspace{2pt}
\footnotesize
\begin{align*}
\mathbf{d}_{ji} \coloneqq {\sum_{\substack{k:\\ \mathbf{w}_i\cdot\mathbf{x}_k>0}}
\alpha^+e_{kj}\mathbf{x}_k + \sum_{\substack{k:\\ \mathbf{w}_i\cdot\mathbf{x}_k<0}}
\alpha^- e_{kj}\mathbf{x}_k},
\end{align*}
\vspace{-8pt}
\begin{align}
\footnotesize
{
\mathbf{d}_{ji}^{\mathbf{v}_i}\coloneqq\lim_{\Delta s_i \searrow0^+}\left(\sum_{\substack{k:\\ (\mathbf{w}_i+\Delta s_i\mathbf{v}_i)\cdot\mathbf{x}_k>0}}
\alpha^+e_{kj}\mathbf{x}_k + \sum_{\substack{k:\\ (\mathbf{w}_i+\Delta s_i\mathbf{v}_i)\cdot\mathbf{x}_k<0}}
\alpha^-e_{kj}\mathbf{x}_k\right)}. 
\label{eq: djivi}
\end{align}
\normalsize
We have that
\footnotesize
\begin{equation}
\frac{\partial\mathcal{L}(\mathbf{P})}{\partial h_{ji}} = \mathbf{w}_i\cdot\mathbf{d}_{ji},\hspace{10pt}\text{\normalsize$\forall$ $(j,i) \in J\times I$},
\label{eq: dL/dh}
\end{equation}
\begin{equation}
\frac{\partial \mathcal{L}(\mathbf{P})}{\partial 
r_i} = \sum_{j\in J}h_{ji}\mathbf{d}_{ji}\cdot \mathbf{u}_i, \hspace{10pt}\text{\normalsize$\forall$ $i\in I$,}
\label{eq: dL/dr}
\end{equation}
\vspace{-3pt}
\begin{equation}
\footnotesize
\frac{\partial \mathcal{L}(\mathbf{P})}{\partial 
s_i} = \sum_{j\in J}h_{ji}\mathbf{d}_{ji}^{\mathbf{v}_i}\cdot \mathbf{v}_i, \hspace{10pt}\text{\normalsize$\forall$ $i\in I$, $\forall$ tangential direction $\mathbf{v}_i$'s of ${\mathbf{w}}_i$.}
\label{eq: dL/ds}
\end{equation}
\end{lemma}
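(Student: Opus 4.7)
The plan is to reduce every one-sided directional derivative to a local polynomial computation. I would apply the chain rule to \cref{empirical squared loss},
\[
\frac{\partial \mathcal{L}}{\partial \theta} \;=\; \sum_{j,k} e_{kj}\,\frac{\partial \hat{y}_{kj}}{\partial \theta},
\]
for each of the parameters $h_{ji}$ and the one-sided directions $r_i$ and $s_i$. By the observation immediately preceding the lemma, once the direction of motion is fixed, the sign of each $\mathbf{w}_i\cdot\mathbf{x}_k$ stabilizes on a right-neighborhood of zero, so $\rho$ locally acts as multiplication by a fixed slope in $\{\alpha^+,\alpha^-,0\}$ and each $\hat{y}_{kj}$ becomes an affine function of the scalar perturbation, whose slope at $0^+$ is the desired derivative.

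For \cref{eq: dL/dh}, the network is smooth in $h_{ji}$ with $\partial\hat{y}_{kj}/\partial h_{ji} = \rho(\mathbf{w}_i\cdot\mathbf{x}_k)$. Splitting $\rho$ by the sign of its argument and using $\rho(0)=0$ to absorb boundary samples at no cost immediately rearranges $\sum_k e_{kj}\rho(\mathbf{w}_i\cdot\mathbf{x}_k)$ into $\mathbf{w}_i\cdot\mathbf{d}_{ji}$.

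For \cref{eq: dL/dr}, I would exploit $\mathbf{u}_i\parallel \mathbf{w}_i$: this forces $\mathbf{w}_i\cdot\mathbf{x}_k = 0 \Leftrightarrow \mathbf{u}_i\cdot\mathbf{x}_k = 0$, so the sign of $(\mathbf{w}_i+\Delta r_i\mathbf{u}_i)\cdot\mathbf{x}_k$ coincides with the sign of $\mathbf{w}_i\cdot\mathbf{x}_k$ for all sufficiently small $\Delta r_i>0$, while boundary samples contribute nothing to $\hat{y}_{kj}$ throughout the perturbation. Reading off the slope in each sign regime (a factor $h_{ji}\alpha^\pm\mathbf{u}_i\cdot\mathbf{x}_k$) and summing across $j,k$ yields $\sum_j h_{ji}\,\mathbf{u}_i\cdot\mathbf{d}_{ji}$. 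For \cref{eq: dL/ds}, the same polynomial-on-one-side argument applies, but now boundary samples become live: when $\mathbf{w}_i\cdot\mathbf{x}_k = 0$ and $\mathbf{v}_i\cdot\mathbf{x}_k\neq 0$, the sign of $(\mathbf{w}_i+\Delta s_i\mathbf{v}_i)\cdot\mathbf{x}_k$ for $\Delta s_i>0$ is $\sign(\mathbf{v}_i\cdot\mathbf{x}_k)$, which in general differs from $\sign(\mathbf{w}_i\cdot\mathbf{x}_k)=0$. This is precisely what the limit defining $\mathbf{d}_{ji}^{\mathbf{v}_i}$ in \cref{eq: djivi} records, and pulling this limit out of the finite sum over $j,k$ is legitimate because the two index sets appearing in that definition stabilize once $\Delta s_i$ is small enough.

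The main obstacle is the bookkeeping at the kink and at the degenerate point $\mathbf{w}_i=\mathbf{0}$. In the latter case, $\mathbf{u}_i$ is undefined while both the radial derivative and $\mathbf{d}_{ji}$ (an empty sum) collapse to zero by convention, so \cref{eq: dL/dr} is vacuously consistent; \cref{eq: dL/ds} still goes through because $\mathbf{v}_i$ is allowed to be any unit vector and the sign-stabilization argument is unaffected. The only residual subtlety I would double-check is that samples with $\mathbf{w}_i\cdot\mathbf{x}_k = \mathbf{v}_i\cdot\mathbf{x}_k = 0$ remain at the kink throughout the perturbation and thus contribute nothing, consistent with their exclusion from both sums in \cref{eq: djivi}.
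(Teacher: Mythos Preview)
Your proposal is correct and follows essentially the same approach as the paper: apply the chain rule through $e_{kj}$, then exploit sign-stabilization of $(\mathbf{w}_i+\text{perturbation})\cdot\mathbf{x}_k$ on a right-neighborhood to reduce $\rho$ to multiplication by the appropriate slope in each case. The paper packages the case analysis into a helper function $\widetilde{\rho}_y(x)$ rather than spelling it out verbally, but the argument is the same, and your explicit treatment of the degenerate cases ($\mathbf{w}_i=\mathbf{0}$, doubly-orthogonal samples) is if anything more careful than the paper's own exposition.
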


The proof for the above is in \cref{app: proof of first-order derivatives lemma}. 
Such ODD computation can also be adapted for neural networks with more than one hidden layer, which is explained in \cref{app: extending derivative computation to MLP}.

It is easy to check that the network is linear with respect to the output weights and the norm of the input weights,
therefore 
$\frac{\partial \mathcal{L}}{\partial h_{ji}}$ and $\frac{\partial \mathcal{L}}{\partial r_{i}}$ are continuous everywhere.
But this is not the case for $\frac{\partial \mathcal{L}}{\partial s_{i}}$.

\subsection{Identifying Stationary Points}
\label{sec: stationary points}
To capture the GD-stagnating points on the loss landscape while accounting for the non-differentiability, we invoke the notion of directional stationarity to define stationary points.
\begin{definition}
    A set of parameters $\overline{\mathbf{P}}$ is a stationary point of the loss in our setting if
    $\lim_{ \alpha \searrow 0^+} \frac{\mathcal{L}(\overline{\mathbf{P}}+\alpha\mathbf d) - \mathcal{L}(\overline{\mathbf{P}})}{\alpha}\geq 0$, for all $\mathbf d\in \mathbbm{R}^D$.
    \label{def: directional stationarity original}
\end{definition}

Notice that the above definition of stationarity also incorporates the smooth stationary points, which are where the gradient vanishes and also exist in the loss landscape of ReLU-like networks.
By precluding first-order loss-decreasing paths in all directions, the above definition captures points near which GD infinitely decelerates or effectively comes to a halt.\footnote{We consider GD oscillating around a certain point to be effectively equivalent to GD coming to a halt.}
Capturing these phenomena is the desideratum of the notion of stationary points, which \cref{def: directional stationarity original} intuitively satisfies.
We showcase this on a scalar function in \cref{app: stationarity notion on a scalar function}.
We also provide examples of non-differentiable stationary points in \cref{app: nondiff stationary points}.

Exploiting the structure of our problem, we arrive at an equivalent definition of stationary points:
\begin{definition}
A set of parameters $\overline{\mathbf{P}}$ is a stationary point of the loss in our setting if the following holds:
(1) $\frac{\partial \mathcal{L} (\overline{\mathbf{P}}) }{\partial h_{ji}} = 0$, $\forall$ $(j,i) \in J\times I$;
(2) $\frac{\partial \mathcal{L} (\overline{\mathbf{P}}) }{\partial {r}_i}=0$, $\forall~i\in I$;
(3) $\frac{\partial \mathcal{L}({\overline{\mathbf{P}}}) }{\partial {s}_i} 
\geq 0$,  $\forall~i \in I$, $\forall$ tangential direction $\mathbf{v}_i$'s of $\overline{\mathbf{w}}_i$.
\label{def: stationary point}
\end{definition}

For directions along which the loss is continuously differentiable, we require the derivatives to be $0$, which explains the first two conditions.
Along the other directions, our definition permits upward slopes in one direction and the opposite at a stationary point, as indicated in the third condition.

The equivalence between \cref{def: directional stationarity original} and \cref{def: stationary point} is intuitive. 
For a generic function, there is a maximal subspace in its parameter space where the function is differentiable,
which we call the differentiable subspace for convenience.
The complementary subspace of the differentiable subspace is where the function is non-differentiable, which we call the non-differentiable subspace.

To examine all the ODDs surrounding a point in the parameter space, as required by \cref{def: directional stationarity original}, it suffices to check the ODDs in the differentiable subspace and the non-differentiable subspace separately.
This is because an ODD toward an arbitrary direction $\mathbf d$ in the parameter space can be written as a linear combination of the ODD along the projection of $\mathbf d$ onto the differentiable subspace and the ODD along the project of $\mathbf d$ onto the non-differentiable subspace. 
The ODDs along the coordinate axes that span the differentiable subspace are sufficient to describe any ODD within the differentiable subspace, hence the first two conditions in \cref{def: stationary point}.
However, the ODDs toward all the directions in the non-differentiable subspace need to be examined in the general case, hence the last condition in \cref{def: stationary point}.

Notably, \cref{def: directional stationarity original,def: stationary point} are tailored to capture the stationarity of GD as it admits positive slopes around stationary points. 
If we were to define stationarity for, say, gradient ascent, we would admit negative slopes around stationary points instead.
One implication of such a design is that, while all local minima are stationary points under \cref{def: directional stationarity original,def: stationary point}, this is not the case for non-differentiable local maxima.
\textcolor{black}{An interesting side note can be made about the rarity of local maxima on our loss landscape, whether they are differentiable or not.
We can prove that a necessary condition for $\mathbf{P}\in \mathbbm{R}^D$ to become a local maximum is $\hat{\mathbf{y}}_k(\mathbf{P})=0$ for all $k\in K$, as shown in \cref{app: rarity of local maxima}.
Such a result extends the arguments regarding the non-existence of differentiable local maxima by \citet{liubo_similar_paper,Gauss_newton}, offering a qualitative perspective of the loss landscape.}

\begin{definition}
   A \textit{saddle point} is a stationary point that is not a local minimum or a local maximum.
    \label{def: min max saddle}
\end{definition}

\subsubsection{Comparison with Other Notions of Stationarity}
\label{remark: other differential discussion}
Previous results on non-smooth non-convex stationarity mostly focus on Clarke stationary points \citep{Davis2020,yun2019efficientlytestinglocaloptimality}, which is where the Clarke subdifferential \citep{Clarke1975GeneralizedGA} includes zero.
But, such a notion of stationarity is not the best fit for studying GD.
For example, the origins of $f(x)=\vert x\vert$ and $f(x)=-\vert x\vert$ are both Clarke stationary points, but only the former stalls GD and qualifies as a stationary point of GD.
{Notably, directional stationary points are Clarke stationary points with no negative slopes around (see \cref{app: connection with Clarke}). As Clarke stationary points are known to be where stochastic subgradient descent converges \citep{Davis2020}, our notion of stationarity offers a more refined understanding of such first-order methods.}
It is also noteworthy that Clarke subdifferential \citep{Clarke1975GeneralizedGA} is a widely adopted tool for characterizing gradient flow of ReLU networks \citep{GDReLUorthogonaloutput,correlatedReLU,kumar2024directional}.
Given directional stationarity is more relevant for training dynamics, it might be more advisable to use Fr\'echet subdifferential, which underlies the definition of directional stationarity \cite{dstationarity}.\footnote{In plain words, if the Fr\'echet subdifferential at a point contains $\mathbf{0}$, it is a directional stationary point. For more details, please refer to \cite{dstationarity} for a review.}

Another widely known stationarity criterion is to have the subgradient set to contain $\mathbf{0}$.
However, such a notion does not apply to our problem since the subgradient cannot be defined for non-convex functions (see \cref{remark: subgradient fails}).

\cite{affine_target_function} defined the stationary points of ReLU networks to be where the right-hand directional derivatives along the canonical coordinate axes are zero.
However, in the presence of non-smoothness, ODDs derived along the canonical axes might not be informative enough to characterize the local structure of the function.
We demonstrate this with a toy example in \cref{app: right-hand stationarity discussion}.

\section{Main Results}

In this section, we classify and characterize stationary points (\cref{sec: local minima theory}) and discuss its implication for the training dynamics in the vanishing initialization limit (\cref{sec: about training dynamics}). 
We will also describe how network embedding reshapes the stationary points (\cref{sec: network embedding}).

\subsection{Properties of Stationary Points}
\label{sec: local minima theory}

\begin{definition}
\label{def: escape neurons}
At a stationary point, a hidden neuron $i$ is \textit{an escape neuron} if and only if there exist $j'\in J$ and a tangential direction $\mathbf{v}_i$ such that $\frac{\partial\mathcal{L}(\mathbf{\overline{P}})}{\partial s_i}= \sum_{j\in J}\overline{h}_{ji}\mathbf{d}_{ji}^{\mathbf{v}_i}\cdot \mathbf{v}_i = 0$, and $\mathbf{d}_{j'i}^{\mathbf{v}_i}\cdot \mathbf{v}_i\neq0$.
\end{definition}

\begin{theorem}
\label{thm: local minimum general output dimension}
Let
$\overline{\mathbf{P}}$ be a stationary point.
If $\overline{\mathbf{P}}$ does not contain escape neurons, then $\overline{\mathbf{P}}$ is a local minimum.
Furthermore, this sufficient condition is also a necessary one when $\vert J \vert = 1$.
\end{theorem}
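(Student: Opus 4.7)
The plan is to derive an \emph{exact} local decomposition of $\mathcal{L}(\mathbf{P})-\mathcal{L}(\overline{\mathbf{P}})$ and then use the no-escape hypothesis (respectively, the escape hypothesis in the scalar case) to read off its sign. Starting from the squared-loss identity $\mathcal{L}(\mathbf{P})-\mathcal{L}(\overline{\mathbf{P}})=\sum_{kj}e_{kj}\Delta_{kj}+\tfrac{1}{2}\sum_{kj}\Delta_{kj}^{2}$ with $e_{kj}:=\hat y_{kj}(\overline{\mathbf{P}})-y_{kj}$ and $\Delta_{kj}:=\hat y_{kj}(\mathbf{P})-\hat y_{kj}(\overline{\mathbf{P}})$, I would exploit the observation from \cref{sec: first-order derivatives} that, for $\|\Delta\mathbf{P}\|$ small and $\Delta\mathbf{w}_i=\Delta r_i\mathbf{u}_i+\Delta s_i\mathbf{v}_i$, each sign of $\mathbf{w}_i\cdot\mathbf{x}_k$ is either preserved or pinned by $\mathrm{sign}(\mathbf{v}_i\cdot\mathbf{x}_k)$ (using $\Delta s_i\geq 0$). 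This allows evaluating $\sum_k e_{kj}[\rho(\mathbf{w}_i\cdot\mathbf{x}_k)-\rho(\overline{\mathbf{w}}_i\cdot\mathbf{x}_k)]$ \emph{exactly} as $\Delta r_i(\mathbf{d}_{ji}\!\cdot\!\mathbf{u}_i)+\Delta s_i(\mathbf{d}_{ji}^{\mathbf{v}_i}\!\cdot\!\mathbf{v}_i)$.

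The key structural observation is that, at any stationary point, \cref{eq: dL/dh} forces $\overline{\mathbf{w}}_i\cdot\mathbf{d}_{ji}=0$ and hence $\mathbf{d}_{ji}\!\cdot\!\mathbf{u}_i=0$ whenever $\overline{\mathbf{w}}_i\neq\mathbf{0}$ (the case $\overline{\mathbf{w}}_i=\mathbf{0}$ is absorbed by the radial convention $\Delta r_i=0$). This collapses the radial cross term; summing over $i,j$ while invoking the remaining conditions of \cref{def: stationary point} yields the clean exact identity
\begin{equation*}
\mathcal{L}(\mathbf{P})-\mathcal{L}(\overline{\mathbf{P}})=\sum_i \Delta s_i\frac{\partial\mathcal{L}(\overline{\mathbf{P}})}{\partial s_i}+\sum_i\Delta s_i\sum_j\Delta h_{ji}(\mathbf{d}_{ji}^{\mathbf{v}_i}\!\cdot\!\mathbf{v}_i)+\tfrac{1}{2}\sum_{kj}\Delta_{kj}^{2},
\end{equation*}
in which the first and third summands are already non-negative.

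For sufficiency, the no-escape hypothesis splits, for each $i$ and each $\mathbf{v}_i$, into two cases: either (a) $\partial\mathcal{L}/\partial s_i>0$, or (b) $\mathbf{d}_{ji}^{\mathbf{v}_i}\!\cdot\!\mathbf{v}_i=0$ for every $j$. In case (b), neuron $i$'s inner cross sum is literally zero; in case (a), it is bounded in magnitude by $\|\Delta\mathbf{h}_{\cdot i}\|$ times a quantity piecewise linear in $\mathbf{v}_i$, hence dominated by the strictly positive $\Delta s_i(\partial\mathcal{L}/\partial s_i)$ once $\|\Delta H\|$ is below a suitable threshold. A compactness argument over the finitely many sectors in which $\mathbf{v}_i\mapsto\mathbf{d}_{ji}^{\mathbf{v}_i}$ is constant, combined with the stationarity lower bound $\partial\mathcal{L}/\partial s_i\geq 0$ and the implication $\partial\mathcal{L}/\partial s_i=0\Rightarrow(\mathbf{d}_{ji}^{\mathbf{v}_i}\!\cdot\!\mathbf{v}_i)_j=\mathbf{0}$ provided by no-escape, promotes this to a single uniform neighbourhood on which the whole expression is non-negative.

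For necessity when $|J|=1$, the escape condition immediately forces $\overline h_{i_0}=0$ together with some $\mathbf{v}_{i_0}$ for which $m:=\mathbf{d}_{i_0}^{\mathbf{v}_{i_0}}\!\cdot\!\mathbf{v}_{i_0}\neq 0$. I would exhibit a strict descent direction by perturbing only neuron $i_0$ with $\Delta s_{i_0}=\varepsilon$ and $\Delta h_{i_0}=-\mathrm{sign}(m)\varepsilon^{3/2}$: in the exact identity the first summand vanishes (because $\overline h_{i_0}=0$), the cross summand equals $-|m|\varepsilon^{5/2}$, and the quadratic $\sum_k\Delta_k^{2}$ is $O(\varepsilon^{3})$ (in fact $O(\varepsilon^{5})$ if $\overline{\mathbf{w}}_{i_0}=\mathbf{0}$), so the negative $\varepsilon^{5/2}$ term strictly dominates for $\varepsilon$ small, precluding local minimality. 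The main anticipated difficulty lies in the exact bookkeeping of the kink contributions to the identity for $\Delta G_{ji}$ and in the uniformity needed to turn the direction-wise sufficiency bound into a single neighbourhood; once these are in hand, the rest reduces to routine magnitude comparisons.
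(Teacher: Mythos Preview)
Your approach is correct and genuinely different from the paper's.

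The paper proves sufficiency by a direction-by-direction fourth-order Taylor expansion: it computes the directional Hessian explicitly and, under the no-escape hypothesis, writes it as a sum of Gram matrices $\sum_{j'}V_{j'}^\intercal V_{j'}$; it then shows that on the null directions of the Hessian the third-order term vanishes and the fourth-order term is again a sum of squares. Your exact squared-loss identity
\[
\mathcal{L}(\mathbf{P})-\mathcal{L}(\overline{\mathbf{P}})=\sum_i \Delta s_i\,\frac{\partial\mathcal{L}(\overline{\mathbf{P}})}{\partial s_i}+\sum_{i}\Delta s_i\sum_j\Delta h_{ji}\big(\mathbf{d}_{ji}^{\mathbf{v}_i}\!\cdot\!\mathbf{v}_i\big)+\tfrac{1}{2}\sum_{kj}\Delta_{kj}^{2}
\]
bypasses all of that: the quadratic-in-$\Delta_{kj}$ term absorbs in one stroke what the paper recovers only at orders two and four, and the single cross term is precisely the obstruction that the no-escape hypothesis either annihilates or lets the first term dominate. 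This is considerably shorter and more transparent; the paper's route, on the other hand, extends in principle to losses other than squared loss, where no such exact two-term split is available.

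One caution on the uniformity step. The phrase ``compactness over the finitely many sectors'' is slightly loose: the ratio $\big\|(\mathbf{d}_{ji}^{\mathbf{v}_i}\!\cdot\!\mathbf{v}_i)_j\big\|\big/\partial_{s_i}\mathcal{L}(\overline{\mathbf{P}})$ is a $0/0$ indeterminate on the zero set, and compactness alone does not bound it. What makes the bound work is the polyhedral structure: on each closed sector the denominator is linear in $\mathbf{v}_i$ and the numerator is the norm of a linear map in $\mathbf{v}_i$, so writing $\mathbf{v}_i$ as a non-negative combination of the sector's finitely many extreme rays and invoking the no-escape implication on those rays where the denominator vanishes yields a uniform constant. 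You should make this explicit rather than leave it to ``compactness''.

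For necessity with $|J|=1$, the paper also perturbs only $(h_{j_0i_0},s_{i_0})$ and reads the sign off the second-order Taylor term; your $\varepsilon^{5/2}$ versus $\varepsilon^{3}$ balancing from the exact identity is the same descent direction, obtained without computing the Hessian.
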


\begin{remark}
An intuition of \cref{def: escape neurons} and \cref{thm: local minimum general output dimension} for the scalar-output case is in \cref{app: intuition of escape neuron}.
\cref{thm: local minimum general output dimension} is proved by \cref{app: proof of local minimum theorem sufficiency} and \cref{app: proof of 1D local minimum theorem}.

Proving \cref{thm: local minimum general output dimension}  requires performing Taylor expansion methodically toward all directions in the parameter space.
Taylor expansion requires differentiability and thus might seem, at first glance, prohibited for our problem.
We solve this by aligning the coordinate system with the non-differentiable edges in the parameter space, so that the loss is always $C^\infty$ within each orthant.
Then, we can use Taylor expansion based on the ODDs (of any orders) along the aligned coordinate axes to accurately characterize the function within the orthants, as it would be like studying a differentiable function but limiting the scope to an orthant.
We provide visualization for such a method in \cref{app: intuition for coordinate picking}.

Since we can also compute the ODDs for deeper nets with an aligned coordinate system, as showcased in \cref{app: extending derivative computation to MLP}, it is conceptually easy to also compose such stationary-point-classifying theorems for deeper nets.
However, deeper nets entail a more cumbersome proof as there will be higher-ordered terms to be controlled.
Hence, we limit our current discussion to the shallow structure, which already suffices to solve unresolved problems from previous works of training dynamics and network embedding, discussed in \cref{sec: about training dynamics,sec: network embedding}, respectively.
\end{remark}

In the following, we refer to local minima without escape neurons as \textit{type-1 local minima}, and to the others as \textit{type-2 local minima}, which does not exist when $\vert J\vert = 1$ as per \cref{thm: local minimum general output dimension}.
\textcolor{black}{We tend to argue that type-2 local minima are rare and provide our reasoning in \cref{sec: why not only if for the general case??}.}

We also show that the stationary local maxima and saddle points of the loss offer ``second-order decrease" along the escape path, which can be formalized as below.

\begin{corollary}
\label{corollary: strictness}
If 
 $\overline{\mathbf{P}}$ is a non-minimum stationary point of $\mathcal{L}$ with $\vert J\vert = 1$, then there exist a unit vector $\boldsymbol{\ell}\in\mathbbm{R}^D$ and a constant $C<0$ such that $0>\mathcal{L}(\overline{\mathbf{P}}+\delta \boldsymbol{\ell}) - \mathcal{L}(\overline{\mathbf{P}}) \sim C\delta^2$, as $\delta\to 0+$.
\end{corollary}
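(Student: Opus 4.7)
The plan starts from invoking the necessary direction of \cref{thm: local minimum general output dimension}: a non-minimum stationary point with $\vert J\vert=1$ must contain an escape neuron, say $i^*$, so by \cref{def: escape neurons} there is a tangential direction $\mathbf{v}_{i^*}$ with $M\coloneqq\mathbf{d}_{i^*}^{\mathbf{v}_{i^*}}\cdot\mathbf{v}_{i^*}\neq 0$, while the accompanying condition $\overline{h}_{i^*} M=0$ forces $\overline{h}_{i^*}=0$. This vanishing of the output weight is the structural property the rest of the argument will exploit.

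I would then take $\boldsymbol{\ell}$ to be zero outside the coordinates of $\mathbf{w}_{i^*}$ and $h_{i^*}$, setting $\boldsymbol{\ell}_{\mathbf{w}_{i^*}}=a\mathbf{v}_{i^*}$ and $\boldsymbol{\ell}_{h_{i^*}}=b$ with $a>0$ and $a^2+b^2=1$. For every $\delta>0$ small enough, the signs of $(\overline{\mathbf{w}}_{i^*}+\delta a\mathbf{v}_{i^*})\cdot\mathbf{x}_k$ stabilize and coincide with the ones that select the slopes $\alpha_k^{\mathbf{v}_{i^*}}$ in the one-sided limit of \cref{eq: djivi}. For each $k$ the identity
\begin{equation*}
\Delta\hat{y}_k = \delta b\,\rho(\overline{\mathbf{w}}_{i^*}\cdot\mathbf{x}_k) + \delta^2 ab\,\alpha_k^{\mathbf{v}_{i^*}}(\mathbf{v}_{i^*}\cdot\mathbf{x}_k)
\end{equation*}
then holds exactly (with $\overline{h}_{i^*}=0$ killing the would-be $\delta^0$ term). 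Substituting into
\begin{equation*}
\mathcal{L}(\overline{\mathbf{P}}+\delta\boldsymbol{\ell})-\mathcal{L}(\overline{\mathbf{P}}) = \sum_k e_k\Delta\hat{y}_k + \tfrac{1}{2}\sum_k(\Delta\hat{y}_k)^2,
\end{equation*}
the $O(\delta)$ piece reduces to $\delta b\cdot\tfrac{\partial\mathcal{L}}{\partial h_{i^*}}(\overline{\mathbf{P}})=0$ by stationarity and \cref{eq: dL/dh}, and the $O(\delta^2)$ coefficient collapses to $abM+\tfrac{b^2}{2}A$ with $A\coloneqq\sum_k\rho(\overline{\mathbf{w}}_{i^*}\cdot\mathbf{x}_k)^2\geq 0$.

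Because $M\neq 0$, this quadratic in $(a,b)$ can be made strictly negative: if $A=0$ take $b$ with $\sign(b)=-\sign(aM)$; if $A>0$ minimize over $b$ at $b=-aM/A$ to obtain value $-a^2M^2/(2A)<0$. Rescaling $(a,b)$ to lie on the unit sphere preserves strict negativity, giving the desired leading term $C\delta^2$ with $C<0$; the higher-order contributions arising from $(\Delta\hat{y}_k)^2$ are $O(\delta^3)$ and do not affect the asymptotic. The main obstacle---and the reason the statement is restricted to scalar outputs---is to cleanly handle samples on the kink ($\overline{\mathbf{w}}_{i^*}\cdot\mathbf{x}_k=0$), where $\rho$ is non-differentiable. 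The fix is the orthant-locality already exploited for \cref{lemma: first-order derivatives}: moving $\mathbf{w}_{i^*}$ along a single fixed direction with $a>0$ pins every sample to one side of its kink hyperplane for all small $\delta>0$, so $\rho$ is effectively linear along the perturbation and the one-sided quantity $M$ captures the second-order effect exactly.
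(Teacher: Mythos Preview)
Your proposal is correct and follows essentially the same approach as the paper's own argument (see \cref{app: proof of 1D local minimum theorem} and \cref{remark: strictness corollary proof}): pick an escape neuron $i^*$, exploit that $\overline{h}_{j_0 i^*}=0$ in the scalar-output case, perturb only $(\mathbf{w}_{i^*},h_{j_0 i^*})$ along the tangential direction $\mathbf{v}_{i^*}$ and in $h$, and show the resulting second-order term $abM+\tfrac{b^2}{2}A$ (equivalently the paper's $\Vert\mathbf{V}_{h_{j_0 i}}\Vert^2 a^2 - 2\vert M\vert ab$) can be made strictly negative. The only minor inaccuracy is your closing remark: the restriction to $\vert J\vert=1$ is not about handling kink samples (your orthant-locality argument works equally well for any $\vert J\vert$) but about ensuring the escape neuron satisfies $\overline{h}_{j_0 i^*}=0$, which is what kills the $\mathbf{V}_{s_i}^{j_0}$ contributions and makes the second-order form indefinite---for $\vert J\vert>1$ an escape neuron need not have vanishing output weights and the Hessian can remain positive semidefinite, as discussed in \cref{sec: why not only if for the general case??}.
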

\cref{corollary: strictness} is explained in \cref{remark: strictness corollary proof}. 
{It is established by characterizing the loss-decreasing path caused by the variation of escape neuron parameters.}
It draws similarities between the potentially non-differentiable saddle points in our setting and smooth strict saddles.
It is known that saddle points in shallow linear networks are strict~\citep{strict_saddle1}.\footnote{\citep{strict_saddle1} also proved that saddle points in the \textit{true loss} of shallow ReLU networks are strict.}.
The above corollary thus serves to extend~\citep{strict_saddle1} to non-differentiable stationary points.

\subsubsection{A Numerical Experiment for \cref{thm: local minimum general output dimension} }
\label{sec: illustrative example}
We present a numerical experiment to illustrate \cref{thm: local minimum general output dimension}, also serving as a precursor to our discussion on training dynamics in \cref{sec: about training dynamics}.
We train a ReLU network that has $50$ hidden neurons with GD using $5$ training samples $(\mathbf{x}_k, y_k)$.
The inputs are $\mathbf{x}_k = (x_k,1)\in \mathbbm{R}^2$, and $x_k$'s are $-1$, $-0.6$, $-0.1$, $0.3$, $0.7$.
The target $y_k$'s, are $0.28$, $-0.1$, $0.03$, $0.23$, $-0.22$.
All the parameters are initialized independently with law $\mathcal{N}\left(0,(5\times10^{-6})^2\right)$.\footnote{We chose such an initialization scale to simulate the vanishing initialization since further diminishing it will not qualitatively change the patterns of the training dynamics.}
The training lasts $500$k epochs with a step size of $0.001$.
We will identify the saddle points and local minima encountered during training.

The training process is visualized in \cref{fig: small initialization training dynamics}.
\cref{fig: small init loss curve} is the loss curve,
{where the plateaus reflect the parameters moving near stationary points.}
\cref{fig: small init input orientation,fig: small init input norm,fig: small init output weight} show the evolution of input weight orientations, input weight norms, and output weights.
The input weight orientations are the counterclockwise angles between $\mathbf{w}_i$'s and the vector $(1,0)$.

\begin{figure*}[tbp]
    \centering
    \vspace{-10pt}
    \begin{subfigure}[b]{0.221\textwidth}
        \includegraphics[width=\textwidth]{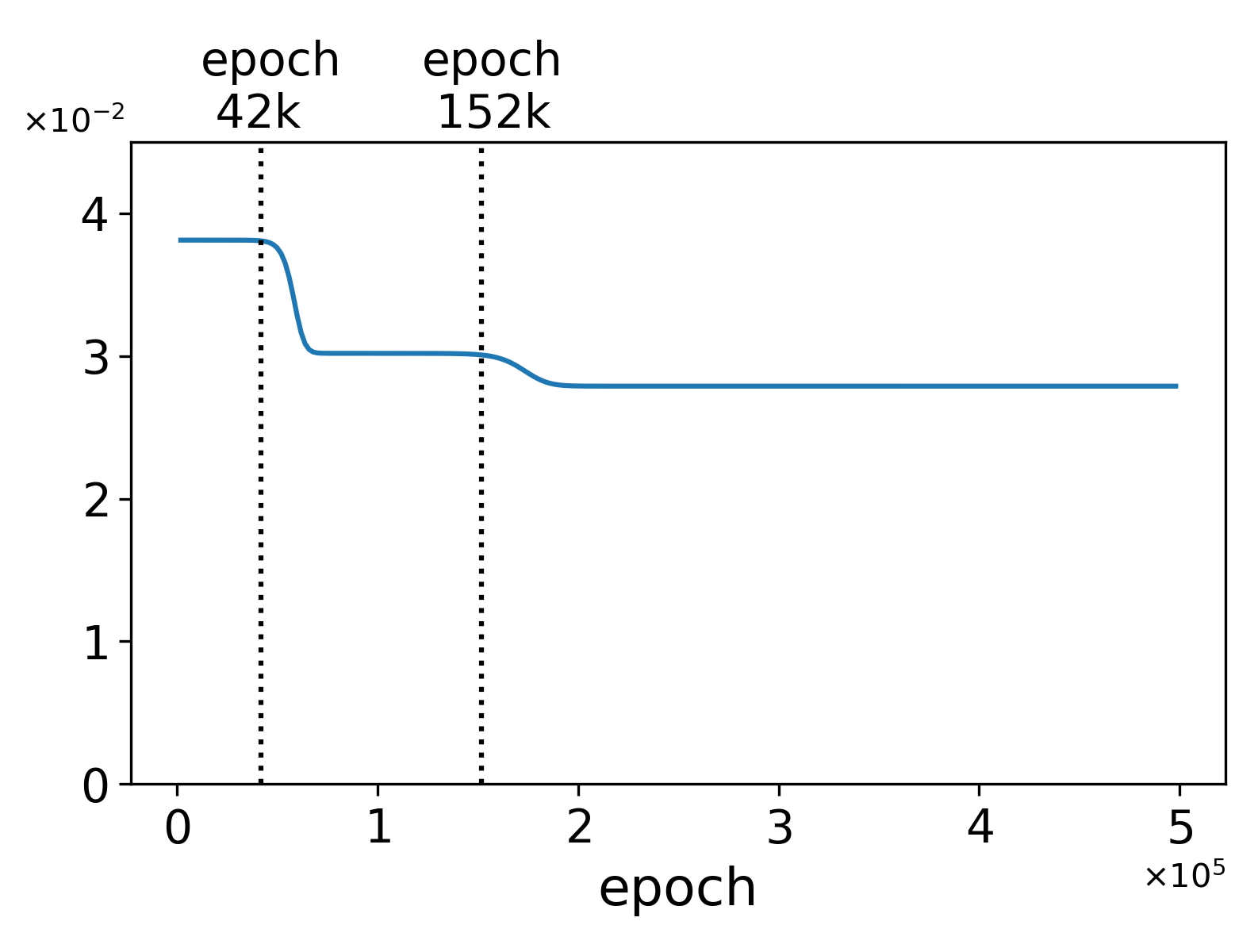}
        \caption{\footnotesize{loss curve}}
        \label{fig: small init loss curve}
    \end{subfigure}
    \begin{subfigure}[b]{0.216\textwidth}
        \includegraphics[width=\textwidth]{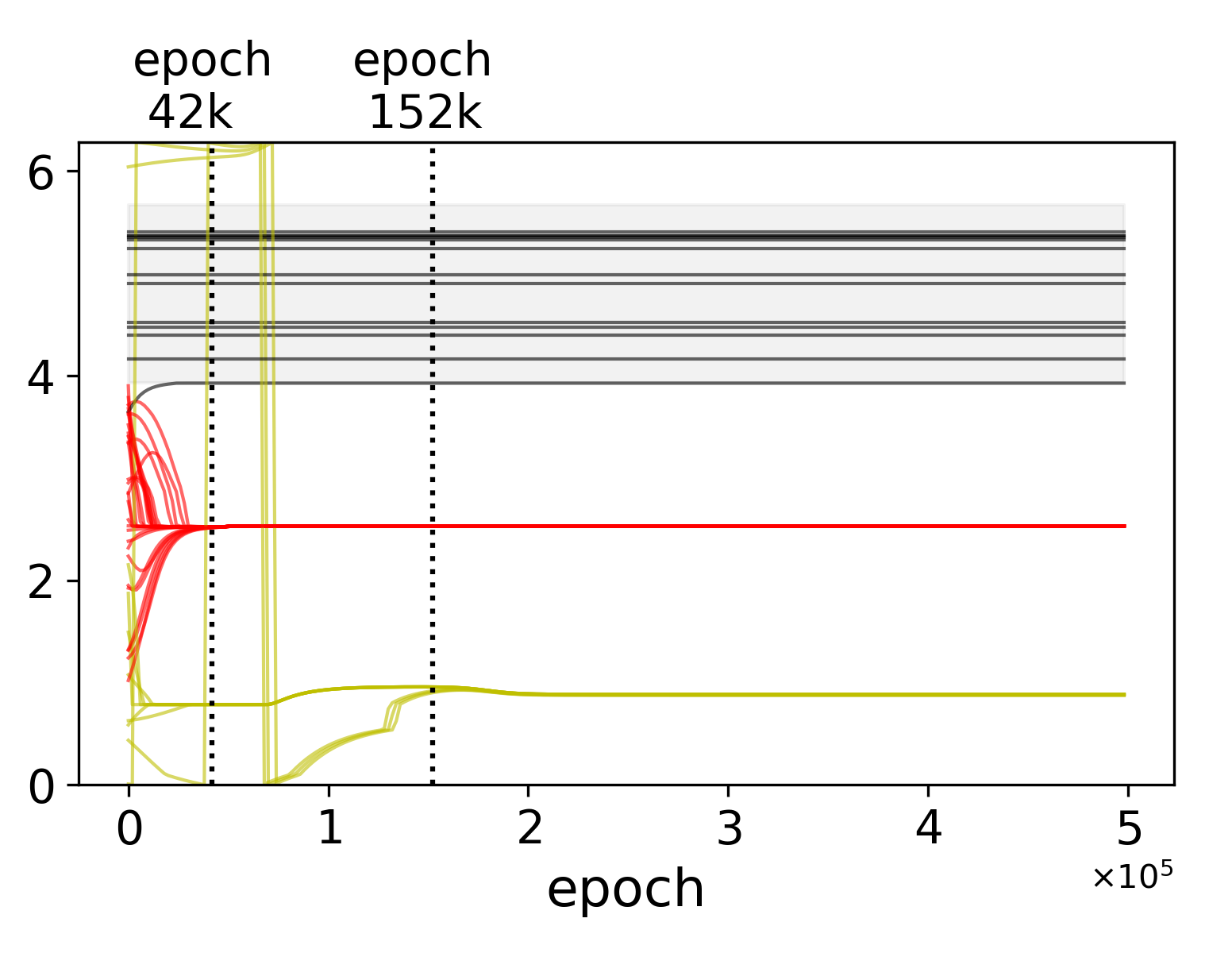}
        \caption{\footnotesize{direction of $\mathbf{w}_i$ (rad)}}
        \label{fig: small init input orientation}
    \end{subfigure}
    \begin{subfigure}[b]{0.216\textwidth}
        \includegraphics[width=\textwidth]{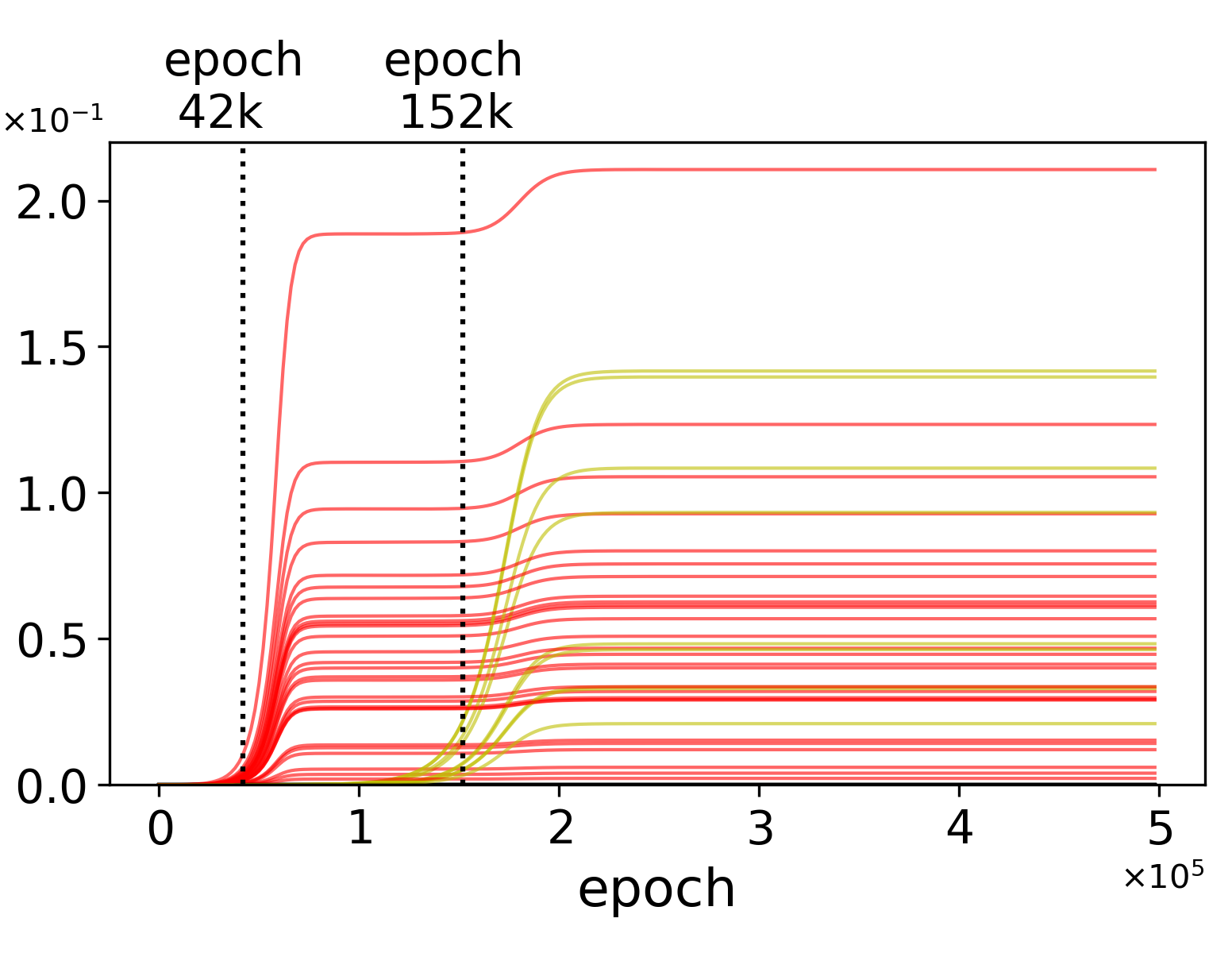}
        \caption{\footnotesize{$\Vert\mathbf{w}_i\Vert$}}
        \label{fig: small init input norm}
    \end{subfigure}
    \begin{subfigure}[b]{0.216\textwidth}
    \includegraphics[width=\textwidth]{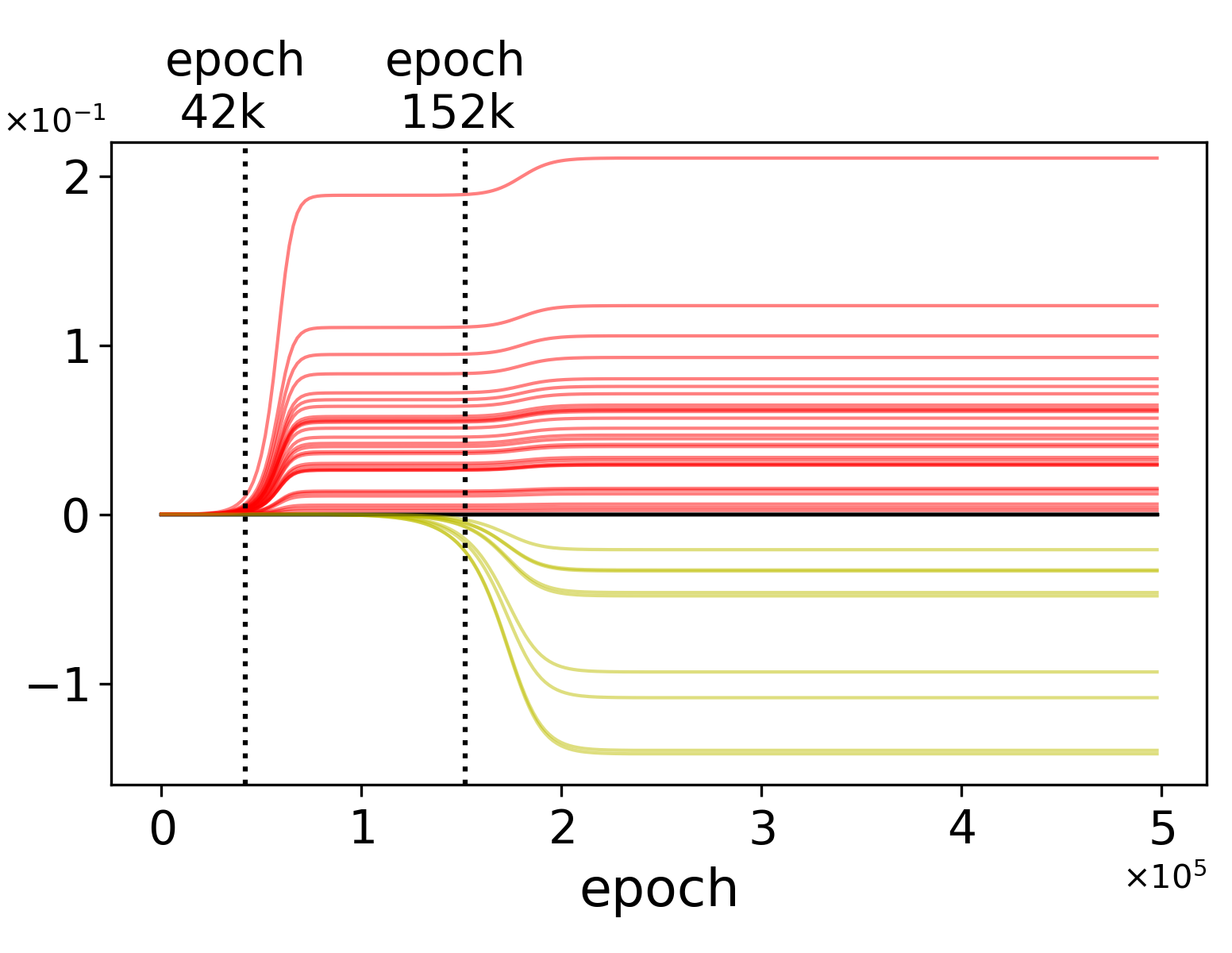}
        \caption{\footnotesize{$h_{j_0i}$}}
        \label{fig: small init output weight}
    \end{subfigure}
    \begin{subfigure}[b]{0.091\textwidth}
    \includegraphics[width=\textwidth]{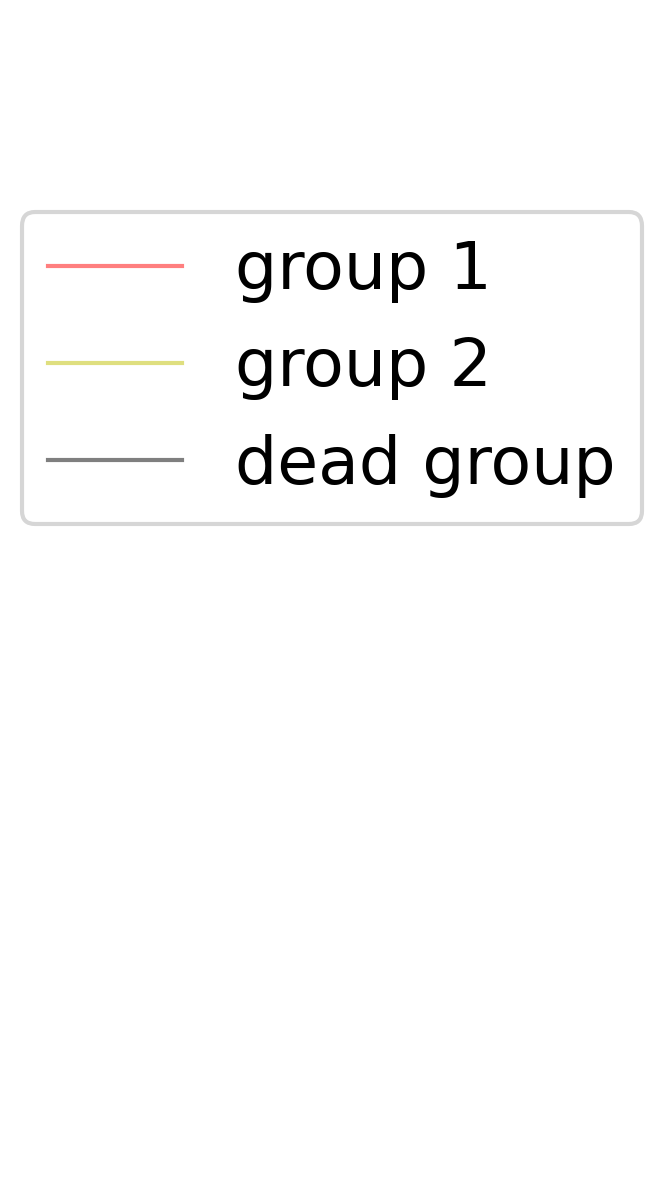}
    \end{subfigure}
    \caption{\textit{Evolution of all parameters during the training process from vanishing initialization.} \textbf{(a)} The loss curve encounters three plateaus, the last of which corresponds to a local minimum (confirmed by \cref{thm: local minimum general output dimension}). We mark the end of the plateaus with dashed vertical lines. \textbf{(b)} The input weights that are not associated with dead neurons are grouped at several angles.
    \textbf{(c) \& (d)} Grouped neurons have their amplitude increased from near-zero values, which coincides with the saddle escape.
   The movements of $\Vert \mathbf{w}_{i}\Vert$ and $\vert h_{j_0i}\vert$  are synchronous (see \cref{fact: input output weight same}).}
    \label{fig: small initialization training dynamics}
\end{figure*}

Each curve in \cref{fig: small init input orientation,fig: small init input norm,fig: small init output weight} corresponds to one hidden neuron. 
We see that most of the $\mathbf{w}_i$'s group at several angles before all the parameters gain considerable amplitudes.
This has been well studied by \citet{bousquets_paper,kumar2024directional,boursier2024early}.
We denote these hidden neurons by red and yellow in the figures and call them group 1 and group 2 neurons.
Notice that the $\mathbf{w}_i$'s between $3.93$ and $5.67~rad$ (the grey-tinted area in \cref{fig: small init input orientation}) have $\rho(\mathbf{w}_i\cdot\mathbf{x}_k)= 0$ for all $k\in K$.
This means that the gradients of all the parameters associated with these hidden neurons are zero (from \cref{lemma: first-order derivatives}).
These hidden neurons are the black curves\footnote{Note, there are black curves in \cref{fig: small init input norm,fig: small init output weight}. They stay too close to the zero line to show up.} in the figures and are called the \textit{dead neurons}.

We also note that the group 1 neurons are attracted to and stuck at a direction ($\approx 2.53 ~\mathrm{rad}$) that is orthogonal to one of the training inputs ($\mathbf{x}_5 = (0.7,1)$). 
This implies that the GD traverses a non-differentiable region of the loss landscape throughout the training.

In our example, the parameters escape from $2$ saddles, which happens at about epochs $42k$ and $152k$, reflected by the rapid loss drop in \cref{fig: small init loss curve}.
The last plateau in the loss curve is the result of a local minimum.
To numerically verify this, we perturb the parameters obtained after $500$k epochs of training with small noises, and the resulting loss change is always non-negative (see \cref{app: applying local minimum identification theorem}).
 
That the last plateau is a local minimum can also be confirmed by applying \cref{thm: local minimum general output dimension}.
In the following, we show that the stationary points that the network parameter is close to at epoch $42$k and $152$k have escape neurons, while the stationary point stalling the loss decrease in the end does not. 
For clarity, the index of the only output neuron is specified to be $j_0$.

\begin{fact}
\label{fact: input output weight same}
Throughout training, $(\Vert\mathbf{w}_i\Vert^2 - h_{j_0i}^2)$ remains unchanged.\footnote{This conclusion was proved in Section 9.5 of \citet{bousquets_paper}. } 
Thus, with vanishing (small but non-vanishing, resp.) initialization, we have $\Vert\mathbf{w}_i\Vert = \vert h_{j_0i}\vert$ ($\Vert\mathbf{w}_i\Vert \approx \vert h_{j_0i}\vert$, resp.) during training.
\end{fact}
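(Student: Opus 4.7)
The plan is to prove the conservation of $\|\mathbf{w}_i\|^2 - h_{j_0 i}^2$ along gradient flow via a one-line homogeneity computation based on \cref{lemma: first-order derivatives}, then deduce the two equalities (exact and approximate) from the initial condition. Since $|J|=1$ in the experiment, I will write $h_i := h_{j_0 i}$ for brevity throughout.

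First, I would read off from \cref{lemma: first-order derivatives}:
\begin{align*}
\frac{\partial \mathcal{L}}{\partial h_i} = \mathbf{w}_i \cdot \mathbf{d}_{j_0 i}, \qquad
\frac{\partial \mathcal{L}}{\partial r_i} = h_i\,\mathbf{d}_{j_0 i} \cdot \mathbf{u}_i = \frac{h_i}{\|\mathbf{w}_i\|}\bigl(\mathbf{w}_i\cdot \mathbf{d}_{j_0 i}\bigr).
\end{align*}
Under gradient flow, $\dot h_i = -\partial_{h_i}\mathcal{L}$ and $\tfrac{d}{dt}\|\mathbf{w}_i\| = -\partial_{r_i}\mathcal{L}$, the latter being meaningful because the radial derivative is continuous throughout the parameter space (as noted right after the Lemma). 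Multiplying yields
\begin{align*}
\tfrac{d}{dt}\|\mathbf{w}_i\|^2 \;=\; 2\|\mathbf{w}_i\|\tfrac{d}{dt}\|\mathbf{w}_i\| \;=\; -2 h_i\,(\mathbf{w}_i\cdot \mathbf{d}_{j_0 i}) \;=\; 2 h_i\dot h_i \;=\; \tfrac{d}{dt} h_i^2,
\end{align*}
so $\|\mathbf{w}_i(t)\|^2 - h_i(t)^2$ is constant in $t$. This is the classical balancedness law for positively $1$-homogeneous activations, adapted here to the possibly non-differentiable case via the one-sided directional-derivative framework.

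From conservation, both claims are immediate. With vanishing initialization, $\|\mathbf{w}_i(0)\| = |h_i(0)| = 0$, so the conserved quantity is identically zero and $\|\mathbf{w}_i(t)\| = |h_i(t)|$ for all $t$. For small but non-zero i.i.d.\ Gaussian initialization with variance $\sigma^2 \ll 1$, the initial value of the conserved quantity is $O(\sigma^2)$, so $\bigl|\|\mathbf{w}_i(t)\|^2 - h_i(t)^2\bigr| = O(\sigma^2)$ uniformly in $t$, giving $\|\mathbf{w}_i(t)\| \approx |h_i(t)|$ throughout training.

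The main obstacle is making sense of gradient flow when the trajectory crosses a non-differentiable hyperplane (some $\mathbf{w}_i\cdot\mathbf{x}_k = 0$), where $\nabla_{\mathbf{w}_i}\mathcal{L}$ in Cartesian coordinates is only one-sided. I would sidestep this by working in the $(h_i, \|\mathbf{w}_i\|, \mathbf{u}_i)$ parametrization: the output-weight and radial derivatives $\partial_{h_i}\mathcal{L}$ and $\partial_{r_i}\mathcal{L}$ are continuous everywhere by the remark following \cref{lemma: first-order derivatives}, so the flow of the scalars $(h_i, \|\mathbf{w}_i\|)$ is a genuine ODE with a continuous right-hand side, and the computation above is rigorous without caveats at crossing times. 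The degenerate case $\mathbf{w}_i = 0$ is harmless since then $\mathbf{d}_{j_0 i} = 0$, making both parameters stationary and the conservation trivial.
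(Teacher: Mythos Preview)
Your argument is correct. The paper does not give its own proof of this fact; it simply records it as folklore and cites \citet{bousquets_paper} (Section 9.5) in a footnote. Your computation is exactly the standard balancedness argument for positively $1$-homogeneous activations, and your use of the radial derivative from \cref{lemma: first-order derivatives} is a clean way to handle the non-differentiable points: since any $k$ with $\mathbf{w}_i\cdot\mathbf{x}_k=0$ also has $\mathbf{u}_i\cdot\mathbf{x}_k=0$, the ambiguous $\rho'(0)$ term never contributes to $\tfrac{d}{dt}\|\mathbf{w}_i\|^2$, and your identity $\tfrac{d}{dt}\|\mathbf{w}_i\|^2=\tfrac{d}{dt}h_i^2$ holds along the gradient flow of \cref{corollary: saddle precluding} regardless of the convention chosen for $\rho'(0)$.

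One small wording issue in your last paragraph: the pair $(h_i,\|\mathbf{w}_i\|)$ does not evolve as an autonomous ODE (its right-hand side still depends on $\mathbf{u}_i$ and the other parameters through $\mathbf{d}_{j_0 i}$), so ``genuine ODE with continuous right-hand side'' overstates things. What you actually need, and what you have, is just that along any trajectory of the Cartesian flow the identity $\tfrac{d}{dt}\|\mathbf{w}_i\|=-\partial_{r_i}\mathcal{L}$ holds pointwise in $t$, which is exactly the computation above. This is a phrasing fix, not a mathematical gap.
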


\cref{def: escape neurons} indicates that the escape neurons have  $h_{j_0i}=0$ 
(also explained in \cref{app: intuition of escape neuron}).
By \cref{fact: input output weight same}, the amplitudes of all the parameters associated with escape neurons must be small.
On the other hand, dead neurons (which only exist in the ReLU case) are not escape neurons by definition.

Thus, when the network nears a saddle point, it must contain neurons that have small amplitudes and are not dead neurons.
We will refer to such neurons as \emph{small living neurons} in the following.
They are the ones corresponding to the escape neurons in the nearby saddle point.
In \cref{fig: small initialization training dynamics}, at epoch $42$k ($152$k, resp.), neurons belonging to both group 1 and 2 (group 2, resp.) are small living neurons.
However, after the network reaches the last plateau, small living neurons are depleted, meaning the stationary point it was approaching must be a local minimum.
A formal statement of such observations for vanishing initialization is in \cref{corollary: saddle precluding}.

{We also conducted two other numerical experiments whose results are presented in \cref{app: additional numerical experiments}. 
One used $3$-dimensional input and scalar output, and the other used $2$-dimensional input and output.
The former shows similar behavior as the experiment in this section, the latter reveals escape neurons that have non-zero output weights, which exist when $\vert J \vert > 1$, as predicted by \cref{def: escape neurons}.}

\subsection{Saddle Escape in the Saddle-to-saddle Dynamics}
\label{sec: about training dynamics}
Networks trained from vanishing initialization exhibit \textit{saddle-to-saddle dynamics} \citep{GDReLUorthogonaloutput,jacot2022saddletosaddle,pesme2024saddle}, meaning the loss experiences intermittent decreases punctuated by long plateaus.
Multiple previous works have studied this dynamics \citep{bousquets_paper,GDReLUorthogonaloutput,correlatedReLU,boursier2024early}.
However, previous discussions regarding saddle escape were usually premised on strong assumptions, such as orthogonality \citep{GDReLUorthogonaloutput} or correlatedness of training inputs \citep{correlatedReLU}.
Without such assumptions, previous works could not go beyond the escape from the origin, which is trivially a saddle point \citep{bousquets_paper,boursier2024early}.
These simplifications/narrowing
imposed a specific form on all the saddle points being discussed. 
Such a form was summarized by \citet{kumar2024directional}, and they further noted that there could be other types of saddle points that remained undiscussed (in their Section 5.2).
However, we show that no other types of saddle points need to be considered in such dynamics and demonstrate qualitatively how saddle points are escaped from.
Aligned with previous works, the discussion in this section is for scalar-output ReLU networks.

We can see that, \cref{thm: local minimum general output dimension} and \cref{fact: input output weight same} only allow gradient flow from vanishing initialization to approach saddle points that have hidden neurons whose parameters are all zero,\footnote{It is worth noting that there could be saddle points with $h_{j_0i} = 0$ but $\mathbf{w}_i\neq 0$ on the entire landscape. However, \cref{fact: input output weight same} states that such saddle points are not visited if we start from vanishing initialization.} which coincide exactly with saddle points previously studied by \citet{kumar2024directional,bousquets_paper,GDReLUorthogonaloutput,correlatedReLU,boursier2024early}, according to Section 5.2 of \cite{kumar2024directional}. 
To avoid ambiguity, we present our result formally:

\begin{corollary}
    Following the setup and notation introduced in \cref{sec: setup}, let $\mathbf{P}(t) = (\mathbf w_i(t), h_{j_0 i}(t))_{i\in I}\in \mathbbm{R}^D$ be the parameter trajectory of a one-hidden-layer scalar-output ReLU network trained with the empirical squared loss $\mathcal{L}$, where $t \geq 0$ denotes time. 
    Suppose the trajectory satisfies $\frac{\mathrm d\mathbf P}{\mathrm d t} = -\frac{\partial \mathcal{L}}{\partial \mathbf P}$, in which we specify $\frac{\partial \operatorname{ReLU}(x)}{\partial x} = \mathbbm{1}_{\{x> 0\}}$ in the chain rule.\footnote{This subgradient-style differential cannot help us characterize the stationarity of non-convex functions (as discussed in \cref{app: nondiff saddles}), but suffices to identify a unique gradient flow without involving additional concepts. Notably, this is also how PyTorch implements the derivative of $\operatorname{ReLU}$.}
    Let $\mathbf A\in \mathbbm{R}^D$ be an arbitrary vector. 
    We initialize the network with $\mathbf{P}(0)\coloneqq\sigma \mathbf A$.
    Under these conditions, if $\lim_{\sigma \searrow 0^+}\left(\inf_t\Vert\mathbf P(t) - \overline{\mathbf P}  \Vert\right) = 0$, where $\overline{\mathbf P}= (\overline{\mathbf w}_i, \overline{ h}_{j_0 i})_{i\in I}$ is a non-minimum stationary point, we must have $\overline{\mathbf w}_i = \mathbf{0}$ and $\overline{ h}_{j_0 i} = 0$ for some $i\in I$.
\label{corollary: saddle precluding}
\end{corollary}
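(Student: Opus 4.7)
The plan is to combine the necessary direction of \cref{thm: local minimum general output dimension} (available because $|J|=1$) with the homogeneity-based conservation law of \cref{fact: input output weight same}, and then take the vanishing-initialization limit. Since $\overline{\mathbf P}$ is a non-minimum stationary point and the output is scalar, \cref{thm: local minimum general output dimension} guarantees the existence of at least one escape neuron; I label it $i^*$. The goal is to show that $i^*$ itself witnesses the conclusion, i.e.\ $\overline{\mathbf w}_{i^*}=\mathbf 0$ and $\overline h_{j_0 i^*}=0$.

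First I would unpack \cref{def: escape neurons} in the scalar-output setting. With $J=\{j_0\}$, the defining condition for $i^*$ reduces to the existence of a tangential direction $\mathbf v_{i^*}$ with
\begin{equation*}
\overline h_{j_0 i^*}\,\mathbf d_{j_0 i^*}^{\mathbf v_{i^*}}\!\cdot \mathbf v_{i^*}=0 \qquad\text{and}\qquad \mathbf d_{j_0 i^*}^{\mathbf v_{i^*}}\!\cdot \mathbf v_{i^*}\neq 0,
\end{equation*}
which immediately forces $\overline h_{j_0 i^*}=0$. This is the only step where the hypothesis $|J|=1$ is essential: with more than one output, the sum over $j$ in \cref{def: escape neurons} could cancel without any individual $\overline h_{j i^*}$ vanishing.

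To pin down $\overline{\mathbf w}_{i^*}=\mathbf 0$, I would then call on \cref{fact: input output weight same}. Writing $\mathbf A=(\mathbf a_i,b_i)_{i\in I}$, the per-neuron conservation law along the prescribed gradient flow reads
\begin{equation*}
\Vert \mathbf w_i(t)\Vert^2-h_{j_0 i}(t)^2=\sigma^2\bigl(\Vert\mathbf a_i\Vert^2-b_i^2\bigr),\qquad t\ge 0,\; i\in I.
\end{equation*}
For any sequence $\sigma_n\searrow 0^+$, the assumption $\lim_{\sigma\searrow 0^+}\inf_t\Vert\mathbf P(t)-\overline{\mathbf P}\Vert=0$ supplies times $t_n$ with $\mathbf w_i(t_n)\to\overline{\mathbf w}_i$ and $h_{j_0 i}(t_n)\to\overline h_{j_0 i}$ for every $i$. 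Passing to the limit in the identity above at $(\sigma_n,t_n)$ yields $\Vert\overline{\mathbf w}_i\Vert=|\overline h_{j_0 i}|$ for all $i$. Applied to $i^*$, together with $\overline h_{j_0 i^*}=0$, this gives $\overline{\mathbf w}_{i^*}=\mathbf 0$, proving the corollary with $i=i^*$.

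The main technical point I expect to require care is justifying that the conservation identity is actually valid along the entire trajectory, since gradient flow may meander across the non-differentiable loci $\{\mathbf w_i\cdot \mathbf x_k=0\}$. Using the subgradient convention $\partial \operatorname{ReLU}(x)/\partial x=\mathbbm 1_{\{x>0\}}$ stipulated in the statement, a direct computation of $\tfrac{\mathrm d}{\mathrm d t}(\Vert\mathbf w_i\Vert^2-h_{j_0 i}^2)$ shows the $\mathbf w_i$- and $h_{j_0 i}$-contributions cancel pointwise, so the quantity is absolutely continuous with derivative zero almost everywhere; this is what underlies \cref{fact: input output weight same} and makes the limit step rigorous. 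Everything else in the argument is a bookkeeping exercise in combining the two limits $\sigma_n\to 0$ and $t_n\to\infty$ with the structural consequences of \cref{thm: local minimum general output dimension}.
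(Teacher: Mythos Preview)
Your proposal is correct and follows essentially the same approach as the paper: the paper derives the corollary directly from \cref{thm: local minimum general output dimension} (which, in the scalar-output case, forces an escape neuron with $\overline h_{j_0 i}=0$) combined with \cref{fact: input output weight same} in the vanishing-initialization limit to conclude $\Vert\overline{\mathbf w}_i\Vert=|\overline h_{j_0 i}|=0$. Your write-up is slightly more explicit about the limit-passing step and the validity of the conservation law across non-differentiable loci, but the skeleton is identical.
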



We next discuss how the gradient flow escapes from the saddle points.
In the vicinity of the saddle points, the network has hidden neurons with small parameters, which can either be dead neurons or small living neurons.
It is easy to see that locally perturbing dead neurons does not change the loss \citep{embeddingNEURIPS2019} and they cannot move since they have zero gradients.
On the other hand, a direct consequence of \cref{thm: local minimum general output dimension} ({proved in \cref{app: proof of local minimum theorem sufficiency} and  explained in \cref{remark: loss-decreasing path corollary}}) states:
\begin{fact}[A partial restatement of \cref{thm: local minimum general output dimension}]
\label{consequence: loss-decreasing inactively propagated living neurons}
A strictly loss-decreasing path from a stationary point of the network must involve parameter variations of escape neurons.
\end{fact}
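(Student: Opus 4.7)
The plan is to derive this statement as a direct \emph{reduction} to the sufficiency direction of \cref{thm: local minimum general output dimension}. Let $\overline{\mathbf{P}}$ be a stationary point and let $E\subseteq I$ denote its set of escape neurons. I would freeze the parameters $(\overline{\mathbf{w}}_i,\overline{h}_{ji})_{i\in E, j\in J}$ at their values at $\overline{\mathbf{P}}$ and regard the loss as a function of the remaining parameters $(\mathbf{w}_i, h_{ji})_{i\in I\setminus E, j\in J}$ only. The statement then follows if I can show that this restricted loss attains a local minimum at the restriction of $\overline{\mathbf{P}}$: any path that perturbs only non-escape-neuron parameters lives in this restricted sub-problem, so it cannot strictly decrease the loss.

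The first step is to notice that the restricted loss has exactly the same functional form as the original one. It is the empirical squared loss of a one-hidden-layer ReLU-like network indexed by $I\setminus E$, on the same inputs $\mathbf{x}_k$, with shifted targets $\mathbf{y}_k' := \mathbf{y}_k - \sum_{i\in E}\overline{H}_{\cdot i}\,\rho(\overline{\mathbf{w}}_i\cdot \mathbf{x}_k)$. Under this reduction the residuals $e_{kj}$ are unchanged, so the directional quantities $\mathbf{d}_{ji}$ and $\mathbf{d}_{ji}^{\mathbf{v}_i}$ from \cref{lemma: first-order derivatives}, for $i\in I\setminus E$, coincide with their counterparts in the original problem. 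Consequently, every ODD of the restricted loss at the restricted point agrees with the corresponding ODD of the original loss taken along a direction that leaves the parameters of $E$ untouched.

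The second step is to verify that, in the restricted problem, the restricted point is still a stationary point and moreover contains no escape neurons. The three conditions of \cref{def: stationary point} hold at $\overline{\mathbf{P}}$, in particular for each $i\in I\setminus E$ and each $(j,i)$ with $i\in I\setminus E$, so they transfer to the restricted problem verbatim. For the absence of escape neurons, by \cref{def: escape neurons} a neuron $i\in I\setminus E$ fails the escape condition in the original problem for every tangential direction $\mathbf{v}_i$; since the quantities $\overline{h}_{ji}$ and $\mathbf{d}_{ji}^{\mathbf{v}_i}$ are unchanged under the reduction, the same failure carries over to the restricted problem. Applying the sufficiency direction of \cref{thm: local minimum general output dimension} now yields that $\overline{\mathbf{P}}$ is a local minimum of the restricted loss, which is precisely what is needed.

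The main step requiring care is a bookkeeping check: I must confirm that the ``local minimum'' neighborhood produced for the restricted loss translates into a genuine local forbiddance of strict descent along non-escape directions in the ambient parameter space. This is immediate because the inclusion of the restricted parameter space into $\mathbbm{R}^D$ obtained by appending the frozen coordinates is a linear isometry, so any sufficiently small perturbation confined to non-escape coordinates has the same norm in the full space and the same loss as its image in the restricted space. Thus, I do not expect a genuine obstacle beyond this compatibility; the true content of the claim is the variable-freezing reduction, and once that is set up correctly the result follows from \cref{thm: local minimum general output dimension} with no new analysis.
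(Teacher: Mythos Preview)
Your proposal is correct. It rests on the same underlying observation the paper uses---that the ``no escape neuron'' hypothesis in the sufficiency proof of \cref{thm: local minimum general output dimension} is invoked neuron-by-neuron, only for those $i$ whose tangential coordinate actually moves---but you package it differently. The paper (in the remark immediately following the fourth-order analysis in \cref{app: proof of local minimum theorem sufficiency}) simply asks the reader to re-run the Taylor-expansion argument under the constraint that $\Delta$ leaves all escape-neuron coordinates fixed; since the offending cross-term $\mathbf{d}_{ji}^{\mathbf{v}_i}\cdot\mathbf{v}_i$ in $\frac{\partial^2\mathcal{L}}{\partial s_i\partial h_{ji}}$ is then multiplied by $\Delta_{s_i}=0$, the Hessian remains a sum of Gram matrices and the rest of the proof goes through verbatim. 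Your route instead freezes the escape neurons, absorbs their contribution into a shifted target $\mathbf{y}_k'$, and recognizes the remaining parameters as a smaller instance of the \emph{same} architecture to which the theorem applies as a black box. The advantage of your reduction is modularity: you never re-enter the Taylor machinery, and the argument would survive unchanged if the sufficiency proof were later replaced by a different one. The paper's white-box inspection is shorter because no sub-problem needs to be set up. Both are valid and ultimately encode the same per-neuron insight.
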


As a result, saddle escape must change the parameters of small living neurons, and such parameter change of small living neurons exploits the loss-decreasing path offered by the escape neurons in the nearby saddle point (\cref{consequence: loss-decreasing inactively propagated living neurons}).
Moreover, when the training stalls near a saddle point, the small living neurons are always attracted to where their parameter amplitudes will be increased \citep{bousquets_paper}. 
Combining all these, we know that escaping from saddles always entails amplitude increase of small living neurons.

Other forms of saddle escape cannot take place.
  This contrasted with other different but similar setups.
For example, in \citet{embeddingNEURIPS2019,mild-overparam,pesme2024saddle},\footnote{See \cref{app: why F is often zero} for the setting of \citet{embeddingNEURIPS2019}. \citet{mild-overparam} studied a teacher-student setup with true loss. \citet{pesme2024saddle} studied diagonal linear networks.} the existence of several input weights with the same direction suffices to make a strict saddle, meaning saddle escape can take place without involving small living neurons.

In the rest of this section, we will combine our results with previous works to give a full understanding of the entire saddle-to-saddle dynamics, which is summarized in \cref{fig: training schematic}.

The initial phase of training will have the $\mathbf{w}_i$'s that are not associated with dead neurons to group at finitely many attracting directions when all the parameters have negligible amplitudes.
Such a phase is termed the \emph{alignment phase} \citep{bousquets_paper,luo2021phase,boursier2024early,kumar2024directional}
 and can be observed in \cref{fig: small init input orientation}.
The neurons whose input weights are grouped closely will be called an \emph{effective neuron} in the following, as they correspond to one kink position in the learned function.

\begin{wrapfigure}{r}{0.45\linewidth}
\vspace{-20pt}
    \centering
    \includegraphics[scale=0.5]{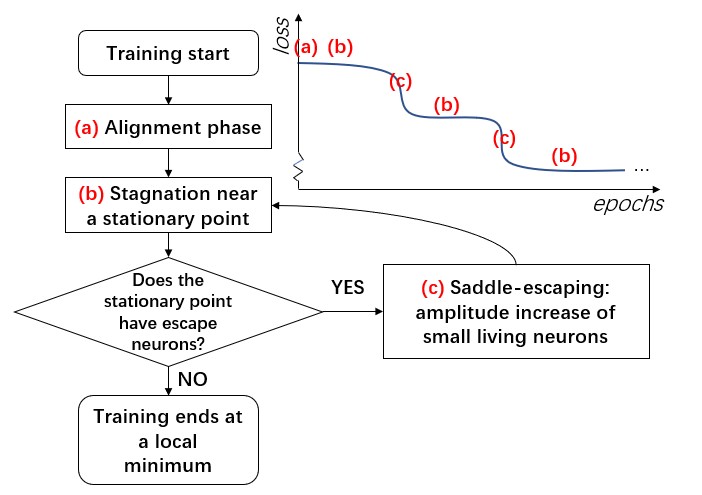}
    \caption{\textit{A flow chart for the training process.} 
    We preclude other schemes of saddle escape, for example, by splitting aligned neurons, which is possible in \citet{embeddingNEURIPS2019,mild-overparam,pesme2024saddle}. 
    Also, note that besides the indispensable amplitude increase of small living neurons, saddle escape might also be accompanied by amplitude and orientation changes of other neurons.
}
    \label{fig: training schematic}
    \vspace{-15pt}
\end{wrapfigure}

After the initial alignment phase, the grouped neurons will have their amplitude increased from near-zero values.
Notice that these neurons are small living neurons before the amplitude increase.
Those that contribute to one effective neuron would have their amplitudes increased together \citep{bousquets_paper,GDReLUorthogonaloutput,correlatedReLU},\footnote{The amplitude increase of one group of small living neurons creates one kink in the learned function (\cref{fig:output function ex01}).} which exploits the loss decreasing path offered by escape neurons in the saddle point nearby (\cref{consequence: loss-decreasing inactively propagated living neurons}).
To describe such a process, \citet{GDReLUorthogonaloutput} and \citet{correlatedReLU} rigorously characterized the gradient flow for simplified cases, revealing trajectories consistent with \cref{fig: training schematic}.
The training process terminates in a local minimum after all the small living neurons are depleted.

Moving forward, it will be meaningful to investigate the exact dynamics between saddles in the general setup.
\citet{GDReLUorthogonaloutput,correlatedReLU} already tackled this for the training process with orthogonal and correlated training inputs.
These simplifications averted the simultaneous norm change of different groups of neurons during saddle escape, which often takes place in general.
An example is the training process after epoch 152k in \cref{fig: small initialization training dynamics}.

{Notably, as the loss landscape is continuous, the insight we draw for vanishing initialization might be extrapolated to small but non-vanishing initialization.
Concretely, in the latter case, we can still observe that the accelerations of loss decrease also roughly coincide with the amplitude increase of small living neurons. 
This means that the acceleration of loss decrease in this regime also exploits the loss-decreasing path given by the escape neurons in the nearby saddle points, similar to the vanishing initialization regime.
We elucidate such an observation in \cref{app: small but non-vanishing init training dynamics}.}

\subsection{How Network Embedding Reshapes Stationary Points}
\label{sec: network embedding}
Network embedding is the process of instantiating a network using a wider network without changing the network function, or at least, the output of the network evaluated at all the training input $\mathbf{x}_k$'s.
In this section, we demonstrate how network embedding reshapes stationary points, which offers a perspective on the merit of over-parameterization.
In particular, we answer the following question:
Does network embedding preserve stationarity or local minimality?
Such questions were partially addressed by \citep{embeddingNEURIPS2019} for ReLU networks.
With a more complete characterization of stationary points and local minima in \cref{sec: stationary points,sec: local minima theory}, we are able to extend their results.

Following the naming in \citep{embeddingNEURIPS2019}, there are three network embedding strategies for ReLU(-like) networks: unit replication, inactive units, and inactive propagation. 
We will showcase the usability of our theory for unit replication, which is the most technically involved, in the main text.
The proof for unit replication and the full discussion for the other two embedding strategies are presented in \cref{app: proof of unit replication stationary preserve lemma,app: inactive units,app: inactive propagation}.

\begin{wrapfigure}{r}{0.45\linewidth}
\vspace{-20pt}
    \centering
    \includegraphics[scale=0.4]{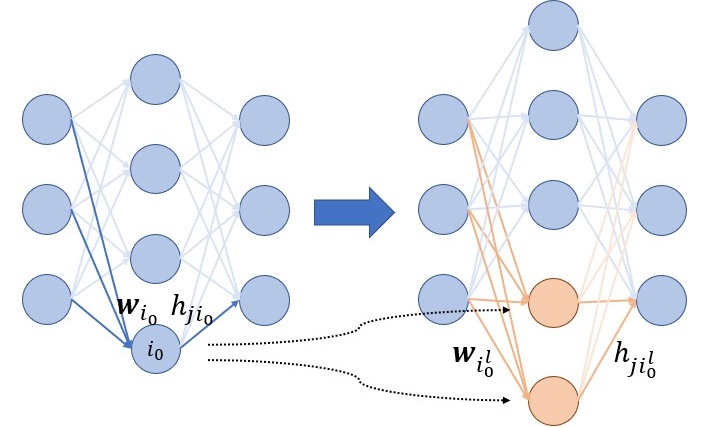}
        \caption{unit replication}
        \label{fig: unit replication}
        \vspace{-40pt}
\end{wrapfigure}

The process of unit replication is  illustrated in \cref{fig: unit replication}: given a set of parameters $\mathbf{P}$,
replace the hidden neuron $i_0\in I$ associated with $\mathbf{w}_{i_0}$ and $h_{ji_0}$, with a new set of hidden neurons $\{i_0^l;~l\in L\}$ associated with weights
$\mathbf{w}_{i_0^l}=\beta_l\mathbf{w}_{i_0}$ and $h_{ji_0^l} = \gamma_l h_{ji_0}$ for all $j\in J$.
To preserve the network function, we specify:
$
\beta_l > 0,~\sum\limits_{l\in L}\beta_l\gamma_l=1$, for all $l\in L$. 
We can prove the following for unit replication.

\begin{proposition}
\label{lemma: unit replication stationarity minimality result}
In our setting, {unit replication} performed on neuron $i_0$ preserves the stationarity of a stationary point if and only if at least one of the following conditions is satisfied:
    (1) All tangential derivatives of  $\mathbf{w}_{i_0}$ are $0$.
    (2) $\gamma_l \geq 0$, $\forall$ $l \in L$. \\
Moreover, {unit replication} performed on neuron $i_0$ preserves the type-1 local minimality of a type-1 local minimum if and only if at least one of the following holds:
    (1) All tangential derivatives of $\mathbf{w}_{i_0}$ are $0$.
    (2) $\gamma_l > 0$, $\forall$ $l \in L$. 
\end{proposition}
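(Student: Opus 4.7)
The plan is to compute the one-sided directional derivatives of $\mathcal{L}$ at the embedded point $\mathbf{P}'$ in terms of the ODDs at the original point $\mathbf{P}$, using \cref{lemma: first-order derivatives}. The crucial simplification is that unit replication preserves the network function on the training inputs, so the error terms $e_{kj}$ are identical at $\mathbf{P}$ and $\mathbf{P}'$. Furthermore, since $\beta_l > 0$, the sign pattern of $\mathbf{w}_{i_0^l}\cdot\mathbf{x}_k = \beta_l\mathbf{w}_{i_0}\cdot\mathbf{x}_k$ matches that of $\mathbf{w}_{i_0}\cdot\mathbf{x}_k$, the radial direction $\mathbf{u}_{i_0^l}$ equals $\mathbf{u}_{i_0}$, and the tangential directions of $\mathbf{w}_{i_0^l}$ coincide with those of $\mathbf{w}_{i_0}$. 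Consequently, $\mathbf{d}_{j i_0^l} = \mathbf{d}_{j i_0}$ and $\mathbf{d}_{ji_0^l}^{\mathbf{v}} = \mathbf{d}_{ji_0}^{\mathbf{v}}$ for every $j\in J$ and every tangential direction $\mathbf{v}$.

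For Part 1 (stationarity preservation), substituting into \cref{eq: dL/dh,eq: dL/dr,eq: dL/ds} yields $\partial\mathcal{L}(\mathbf{P}')/\partial h_{j i_0^l} = \beta_l\,\partial\mathcal{L}(\mathbf{P})/\partial h_{j i_0}$, $\partial\mathcal{L}(\mathbf{P}')/\partial r_{i_0^l} = \gamma_l\,\partial\mathcal{L}(\mathbf{P})/\partial r_{i_0}$, and $\partial\mathcal{L}(\mathbf{P}')/\partial s_{i_0^l} = \gamma_l\,\partial\mathcal{L}(\mathbf{P})/\partial s_{i_0}$ (for the common tangential direction). The first two are automatically zero at a stationary $\mathbf{P}$. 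Stationarity of $\mathbf{P}'$ therefore reduces to requiring $\gamma_l \,\partial\mathcal{L}(\mathbf{P})/\partial s_{i_0} \geq 0$ for every $l$ and every tangential direction. Since $\partial\mathcal{L}(\mathbf{P})/\partial s_{i_0} \geq 0$ always, either $\gamma_l \geq 0$ for every $l$, or, if some $\gamma_l < 0$, the corresponding tangential derivative at $\mathbf{P}$ must vanish for every tangential direction. This gives both directions of the ``iff'' in the first part.

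For Part 2 (type-1 local-minimality preservation), I first note that neurons distinct from the replicated ones are untouched and contribute no escape neurons. So I only need to rule out each $i_0^l$ being an escape neuron while ensuring $\mathbf{P}'$ remains stationary. Suppose $\mathbf{P}$ is a type-1 local minimum. Then $\mathbf{w}_{i_0}$ is not an escape neuron, so whenever $\partial\mathcal{L}(\mathbf{P})/\partial s_{i_0} = 0$ along some $\mathbf{v}$, we automatically have $\mathbf{d}_{ji_0}^{\mathbf{v}}\cdot\mathbf{v} = 0$ for all $j$. If condition (1) holds (all tangential derivatives of $\mathbf{w}_{i_0}$ vanish at $\mathbf{P}$), then $\mathbf{d}_{j i_0^l}^{\mathbf{v}}\cdot\mathbf{v} = \mathbf{d}_{ji_0}^{\mathbf{v}}\cdot\mathbf{v} = 0$ for every $j,\mathbf{v}$, so no $i_0^l$ can be an escape neuron regardless of $\gamma_l$. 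If condition (2) holds ($\gamma_l > 0$ for all $l$), then the tangential derivative at $\mathbf{P}'$ along $\mathbf{v}$ equals zero iff the corresponding derivative at $\mathbf{P}$ is zero, which again forces all $\mathbf{d}_{ji_0^l}^{\mathbf{v}}\cdot\mathbf{v}$ to vanish; again no escape neuron is created. Conversely, if neither (1) nor (2) holds, pick $l$ with $\gamma_l \leq 0$ and a tangential $\mathbf{v}$ with $\mathbf{d}_{j'i_0}^{\mathbf{v}}\cdot\mathbf{v} \neq 0$ for some $j'$. If $\gamma_l = 0$, the tangential derivative at $\mathbf{P}'$ is identically zero but $\mathbf{d}_{j'i_0^l}^{\mathbf{v}}\cdot\mathbf{v}\neq 0$, exhibiting $i_0^l$ as an escape neuron; if $\gamma_l < 0$, stationarity is already destroyed by Part 1. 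Either way $\mathbf{P}'$ fails type-1 local minimality.

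The conceptually non-trivial step is the converse half of Part 2, namely recognising that the ``no escape neuron'' condition at $\mathbf{P}$ forces each individual dot product $\mathbf{d}_{ji_0}^{\mathbf{v}}\cdot\mathbf{v}$ to vanish whenever the full tangential derivative does; this is what permits the clean scaling argument and explains why condition (1) is exactly the right alternative to $\gamma_l > 0$. The remaining bookkeeping, namely verifying that replicated neurons inherit the full tangential derivative structure through the $\beta_l>0$ and $\sum_l\beta_l\gamma_l=1$ normalization, is a direct substitution into \cref{lemma: first-order derivatives} and presents no real obstacle.
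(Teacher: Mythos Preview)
Your proposal is correct and follows essentially the same approach as the paper: both compute the ODDs at the embedded point via \cref{lemma: first-order derivatives}, exploit that $\beta_l>0$ forces $\mathbf{d}_{ji_0^l}=\mathbf{d}_{ji_0}$ and $\mathbf{d}_{ji_0^l}^{\mathbf{v}}=\mathbf{d}_{ji_0}^{\mathbf{v}}$, obtain the $\beta_l$/$\gamma_l$ scaling of the three derivative types, and for Part~2 combine stationarity preservation with the observation that the type-1 hypothesis forces each $\mathbf{d}_{ji_0}^{\mathbf{v}}\cdot\mathbf{v}$ to vanish whenever the full tangential derivative does. The paper packages Part~2 slightly differently---proving a standalone proposition that escape neurons are avoided iff the neuron is tangentially flat or every $\gamma_l\neq 0$, and then intersecting with the stationarity condition---but the logic and the key insight are the same as yours.
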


\begin{remark}
\citep{embeddingNEURIPS2019} was not able to discuss stationarity preservation for ReLU networks as the non-differentiability hindered the invocation of stationarity conditions for differentiable functions.
This problem has been hurdled completely in the current paper.
Moreover, \citep{embeddingNEURIPS2019} was only able to discuss local minimality preservation for a highly restricted type of unit replication.
Our proposition above nevertheless holds for all unit replication types in general, though the scope is limited to type-1 local minimality preservation. 
\end{remark}

\begin{remark}
{At first sight, the statement about the preservation of type-1 local minima in the above lemma contradicts Theorem 10 of \citep{embeddingNEURIPS2019}, which suggests that embedding a local minimum should result in a strict saddle. 
Nonetheless, the assumptions of that theorem does not apply to our setting. 
For more details, please refer to \cref{app: why F is often zero}. }
\end{remark}

{It is noteworthy that unit replication always preserves stationarity for differentiable networks \citep{zhang2021embedding}, which no longer holds for the non-differentiable ReLU-like networks.}

\paragraph{A closing remark.} 
Our discussion here extends multiple previous results regarding network embedding that did not consider the non-differentiable cases.
Particularly, we refine the picture of the embedding principle of neural networks \citep{zhang2021embedding}, promote a better understanding of overparameterization \citep{berfin_geometry,embeddingNEURIPS2019,zhang2021embedding}, and provide theoretical guarantee for training schemes that involves the operations of network embedding \citep{deepest_splitting,wang2024lemon}.
A more detailed discussion on these topics are deferred to \cref{app: network embedding application}.

\section{Summary and Discussion}
In this paper, we identify, classify, and characterize the stationary points for one-hidden-layer neural networks with ReLU-like activation functions. 
We also study the saddle escape process in the training dynamics with vanishing initialization and the effect of network embedding on stationary points.
These can lead to several future directions. 
First, our approach of studying ODDs and aligning coordinate axes for Taylor expansion can be extrapolated to other architectures, including convolutional networks, residue networks, transformers, etc.  
Second, with a better geometric understanding of the potentially non-differentiable stationary points (such as~\cref{corollary: strictness}), we are at a better place to characterize the performance of gradient-based optimization algorithms on such landscapes, which are neither smooth nor convex.
Third, with the stationary points identified, we can study the basins of attractions of gradient flow.
Lastly, the findings of this paper might help formalize the low-rank bias formed in generic saddle-to-saddle dynamics. 

\section*{Acknowledgements}
We thank Prof.\@ Martin Jaggi from the Machine Learning and Optimization Laboratory at EPFL and Prof.\@ Cl\'ement Hongler from the Chair of Statistical Field Theory at EPFL for their valuable suggestions and feedback.
We also appreciate the insightful input from the reviewers, area chairs, and program chairs.

\section*{Reproducibility Statement}
The code for all the numerical experiments can be found at \url{https://github.com/ZhengqingUUU/relu_like_NN_loss_landscape}.

{
\bibliographystyle{unsrtnat}
\bibliography{iclr2025_conference}

\begin{thebibliography}{55}
\providecommand{\natexlab}[1]{#1}
\providecommand{\url}[1]{\texttt{#1}}
\expandafter\ifx\csname urlstyle\endcsname\relax
  \providecommand{\doi}[1]{doi: #1}\else
  \providecommand{\doi}{doi: \begingroup \urlstyle{rm}\Url}\fi

\bibitem[Maennel et~al.(2018)Maennel, Bousquet, and Gelly]{bousquets_paper}
Hartmut Maennel, Olivier Bousquet, and Sylvain Gelly.
\newblock Gradient descent quantizes relu network features, 2018.

\bibitem[Boursier et~al.(2022)Boursier, Pillaud-Vivien, and Flammarion]{GDReLUorthogonaloutput}
Etienne Boursier, Loucas Pillaud-Vivien, and Nicolas Flammarion.
\newblock Gradient flow dynamics of shallow relu networks for square loss and orthogonal inputs.
\newblock \emph{Advances in Neural Information Processing Systems}, 35:\penalty0 20105--20118, 2022.

\bibitem[Chistikov et~al.(2024)Chistikov, Englert, and Lazi\'{c}]{correlatedReLU}
Dmitry Chistikov, Matthias Englert, and Ranko Lazi\'{c}.
\newblock Learning a neuron by a shallow relu network: dynamics and implicit bias for correlated inputs.
\newblock In \emph{Proceedings of the 37th International Conference on Neural Information Processing Systems}, NIPS '23, Red Hook, NY, USA, 2024. Curran Associates Inc.

\bibitem[Boursier and Flammarion(2024)]{boursier2024early}
Etienne Boursier and Nicolas Flammarion.
\newblock Early alignment in two-layer networks training is a two-edged sword.
\newblock \emph{arXiv preprint arXiv:2401.10791}, 2024.

\bibitem[Kumar and Haupt(2024)]{kumar2024directional}
Akshay Kumar and Jarvis Haupt.
\newblock Directional convergence near small initializations and saddles in two-homogeneous neural networks.
\newblock \emph{Transactions on Machine Learning Research}, 2024.
\newblock ISSN 2835-8856.
\newblock URL \url{https://openreview.net/forum?id=hfrPag75Y0}.

\bibitem[Fukumizu et~al.(2019)Fukumizu, Yamaguchi, Mototake, and Tanaka]{embeddingNEURIPS2019}
Kenji Fukumizu, Shoichiro Yamaguchi, Yoh-ichi Mototake, and Mirai Tanaka.
\newblock Semi-flat minima and saddle points by embedding neural networks to overparameterization.
\newblock In H.~Wallach, H.~Larochelle, A.~Beygelzimer, F.~d\textquotesingle Alch\'{e}-Buc, E.~Fox, and R.~Garnett, editors, \emph{Advances in Neural Information Processing Systems}, volume~32. Curran Associates, Inc., 2019.
\newblock URL \url{https://proceedings.neurips.cc/paper_files/paper/2019/file/a4ee59dd868ba016ed2de90d330acb6a-Paper.pdf}.

\bibitem[He et~al.(2020)He, Wang, and Tao]{piecewise_linear_activation}
Fengxiang He, Bohan Wang, and Dacheng Tao.
\newblock Piecewise linear activations substantially shape the loss surfaces of neural networks.
\newblock In \emph{International Conference on Learning Representations}, 2020.
\newblock URL \url{https://openreview.net/forum?id=B1x6BTEKwr}.

\bibitem[Sharifnassab et~al.(2020)Sharifnassab, Salehkaleybar, and Golestani]{bounds_on_descent_path_shallow_relu}
Arsalan Sharifnassab, Saber Salehkaleybar, and S.~Jamaloddin Golestani.
\newblock Bounds on over-parameterization for guaranteed existence of descent paths in shallow relu networks.
\newblock In \emph{International Conference on Learning Representations}, 2020.

\bibitem[Liu et~al.(2021)Liu, Liu, Zhang, and Yuan]{liubo_non-differentiable}
Bo~Liu, Zhaoying Liu, Ting Zhang, and Tongtong Yuan.
\newblock Non-differentiable saddle points and sub-optimal local minima exist for deep relu networks.
\newblock \emph{Neural Networks}, 144:\penalty0 75--89, 2021.
\newblock ISSN 0893-6080.
\newblock \doi{https://doi.org/10.1016/j.neunet.2021.08.005}.
\newblock URL \url{https://www.sciencedirect.com/science/article/pii/S0893608021003105}.

\bibitem[Arjevani and Field(2021)]{symmetrybreakinglocalmin}
Yossi Arjevani and Michael Field.
\newblock Analytic study of families of spurious minima in two-layer relu neural networks: A tale of symmetry ii, 2021.

\bibitem[{\c{S}}im{\c{s}}ek et~al.(2023){\c{S}}im{\c{s}}ek, Bendjeddou, Gerstner, and Brea]{csimcsek2023should}
Berfin {\c{S}}im{\c{s}}ek, Amire Bendjeddou, Wulfram Gerstner, and Johanni Brea.
\newblock Should under-parameterized student networks copy or average teacher weights?
\newblock \emph{arXiv preprint arXiv:2311.01644}, 2023.

\bibitem[Saad and Solla(1995)]{saad1995plateau}
David Saad and Sara~A Solla.
\newblock On-line learning in soft committee machines.
\newblock \emph{Physical Review E}, 52\penalty0 (4):\penalty0 4225, 1995.

\bibitem[Amari(1998)]{amari_plateau}
Shun-ichi Amari.
\newblock {Natural Gradient Works Efficiently in Learning}.
\newblock \emph{Neural Computation}, 10\penalty0 (2):\penalty0 251--276, 02 1998.
\newblock ISSN 0899-7667.
\newblock \doi{10.1162/089976698300017746}.
\newblock URL \url{https://doi.org/10.1162/089976698300017746}.

\bibitem[Jacot et~al.(2022)Jacot, Ged, Şimşek, Hongler, and Gabriel]{jacot2022saddletosaddle}
Arthur Jacot, François Ged, Berfin Şimşek, Clément Hongler, and Franck Gabriel.
\newblock Saddle-to-saddle dynamics in deep linear networks: Small initialization training, symmetry, and sparsity, 2022.

\bibitem[Pesme and Flammarion(2024)]{pesme2024saddle}
Scott Pesme and Nicolas Flammarion.
\newblock Saddle-to-saddle dynamics in diagonal linear networks.
\newblock \emph{Advances in Neural Information Processing Systems}, 36, 2024.

\bibitem[Fukumizu and Amari(2000)]{FUKUMIZU_hierarchical}
K.~Fukumizu and S.~Amari.
\newblock Local minima and plateaus in hierarchical structures of multilayer perceptrons.
\newblock \emph{Neural Networks}, 13\penalty0 (3):\penalty0 317--327, 2000.
\newblock ISSN 0893-6080.
\newblock \doi{https://doi.org/10.1016/S0893-6080(00)00009-5}.
\newblock URL \url{https://www.sciencedirect.com/science/article/pii/S0893608000000095}.

\bibitem[\c{S}im\c{s}ek et~al.(2021)\c{S}im\c{s}ek, Ged, Jacot, Spadaro, Hongler, Gerstner, and Brea]{berfin_geometry}
Berfin \c{S}im\c{s}ek, Fran{\c{c}}ois Ged, Arthur Jacot, Francesco Spadaro, Clement Hongler, Wulfram Gerstner, and Johanni Brea.
\newblock Geometry of the loss landscape in overparameterized neural networks: Symmetries and invariances.
\newblock In Marina Meila and Tong Zhang, editors, \emph{Proceedings of the 38th International Conference on Machine Learning}, volume 139 of \emph{Proceedings of Machine Learning Research}, pages 9722--9732. PMLR, 18--24 Jul 2021.
\newblock URL \url{https://proceedings.mlr.press/v139/simsek21a.html}.

\bibitem[Achour et~al.(2022)Achour, Malgouyres, and Gerchinovitz]{NonstrictSaddles2}
El~Mehdi Achour, François Malgouyres, and Sébastien Gerchinovitz.
\newblock The loss landscape of deep linear neural networks: a second-order analysis, 2022.

\bibitem[Lee et~al.(2017)Lee, Panageas, Piliouras, Simchowitz, Jordan, and Recht]{escape_saddle1}
Jason~D. Lee, Ioannis Panageas, Georgios Piliouras, Max Simchowitz, Michael~I. Jordan, and Benjamin Recht.
\newblock First-order methods almost always avoid saddle points, 2017.

\bibitem[Lee et~al.(2016)Lee, Simchowitz, Jordan, and Recht]{escape_saddle2}
Jason~D. Lee, Max Simchowitz, Michael~I. Jordan, and Benjamin Recht.
\newblock Gradient descent converges to minimizers, 2016.

\bibitem[Achour et~al.(2021)Achour, Malgouyres, and Gerchinovitz]{NonstrictSaddles1}
El~Mehdi Achour, Franccois Malgouyres, and S{\'e}bastien Gerchinovitz.
\newblock Global minimizers, strict and non-strict saddle points, and implicit regularization for deep linear neural networks.
\newblock 2021.

\bibitem[Jin et~al.(2017)Jin, Ge, Netrapalli, Kakade, and Jordan]{saddle_escape_3}
Chi Jin, Rong Ge, Praneeth Netrapalli, Sham~M. Kakade, and Michael~I. Jordan.
\newblock How to escape saddle points efficiently.
\newblock \emph{CoRR}, abs/1703.00887, 2017.
\newblock URL \url{http://arxiv.org/abs/1703.00887}.

\bibitem[Daneshmand et~al.(2018)Daneshmand, Kohler, Lucchi, and Hofmann]{saddle_escaping_SGD}
Hadi Daneshmand, Jonas~Moritz Kohler, Aur{\'{e}}lien Lucchi, and Thomas Hofmann.
\newblock Escaping saddles with stochastic gradients.
\newblock \emph{CoRR}, abs/1803.05999, 2018.
\newblock URL \url{http://arxiv.org/abs/1803.05999}.

\bibitem[Ziyin et~al.(2023)Ziyin, Li, Simon, and Ueda]{badsaddle}
Liu Ziyin, Botao Li, James~B. Simon, and Masahito Ueda.
\newblock Sgd with a constant large learning rate can converge to local maxima, 2023.

\bibitem[Zhou and Liang(2018)]{analytical_forms_critical}
Yi~Zhou and Yingbin Liang.
\newblock Critical points of linear neural networks: Analytical forms and landscape properties.
\newblock In \emph{International Conference on Learning Representations}, 2018.
\newblock URL \url{https://openreview.net/forum?id=SysEexbRb}.

\bibitem[Sahs et~al.(2022)Sahs, Pyle, Damaraju, Caro, Tavaslioglu, Lu, Anselmi, and Patel]{relu_splines}
Justin Sahs, Ryan Pyle, Aneel Damaraju, Josue~Ortega Caro, Onur Tavaslioglu, Andy Lu, Fabio Anselmi, and Ankit~B. Patel.
\newblock Shallow univariate relu networks as splines: Initialization, loss surface, hessian, and gradient flow dynamics.
\newblock \emph{Frontiers in Artificial Intelligence}, 5, 2022.
\newblock ISSN 2624-8212.
\newblock \doi{10.3389/frai.2022.889981}.
\newblock URL \url{https://www.frontiersin.org/articles/10.3389/frai.2022.889981}.

\bibitem[Yun et~al.(2019)Yun, Sra, and Jadbabaie]{yun2019efficientlytestinglocaloptimality}
Chulhee Yun, Suvrit Sra, and Ali Jadbabaie.
\newblock Efficiently testing local optimality and escaping saddles for relu networks, 2019.
\newblock URL \url{https://arxiv.org/abs/1809.10858}.

\bibitem[Li et~al.(2020)Li, So, and Ma]{dstationarity}
Jiajin Li, Anthony Man-Cho So, and Wing-Kin Ma.
\newblock Understanding notions of stationarity in non-smooth optimization, 2020.
\newblock URL \url{https://arxiv.org/abs/2006.14901}.

\bibitem[Lee et~al.(2022)Lee, Sim, and Ye]{one_neuron_relu_dynamics}
Sangmin Lee, Byeongsu Sim, and Jong~Chul Ye.
\newblock Magnitude and angle dynamics in training single relu neurons, 2022.

\bibitem[Wang et~al.(2022)Wang, Lacotte, and Pilanci]{wang2022the}
Yifei Wang, Jonathan Lacotte, and Mert Pilanci.
\newblock The hidden convex optimization landscape of regularized two-layer re{LU} networks: an exact characterization of optimal solutions.
\newblock In \emph{International Conference on Learning Representations}, 2022.
\newblock URL \url{https://openreview.net/forum?id=Z7Lk2cQEG8a}.

\bibitem[Davis et~al.(2020{\natexlab{a}})Davis, Drusvyatskiy, Kakade, and Lee]{clark_subdiff_used}
Damek Davis, Dmitriy Drusvyatskiy, Sham Kakade, and Jason~D. Lee.
\newblock Stochastic subgradient method converges on tame functions.
\newblock \emph{Foundations of Computational Mathematics}, 20\penalty0 (1):\penalty0 119--154, 2020{\natexlab{a}}.
\newblock ISSN 1615-3383.
\newblock \doi{10.1007/s10208-018-09409-5}.
\newblock URL \url{https://doi.org/10.1007/s10208-018-09409-5}.

\bibitem[Cheridito et~al.(2022)Cheridito, Jentzen, and Rossmannek]{affine_target_function}
Patrick Cheridito, Arnulf Jentzen, and Florian Rossmannek.
\newblock Landscape analysis for shallow neural networks: Complete classification of critical points for affine target functions.
\newblock \emph{Journal of Nonlinear Science}, 32\penalty0 (5), jul 2022.
\newblock \doi{10.1007/s00332-022-09823-8}.
\newblock URL \url{https://doi.org/10.1007%2Fs00332-022-09823-8}.

\bibitem[Chizat et~al.(2019)Chizat, Oyallon, and Bach]{chizat2019lazy}
Lenaic Chizat, Edouard Oyallon, and Francis Bach.
\newblock On lazy training in differentiable programming.
\newblock \emph{Advances in neural information processing systems}, 32, 2019.

\bibitem[Jacot et~al.(2018)Jacot, Gabriel, and Hongler]{jacot2018ntk}
Arthur Jacot, Franck Gabriel, and Cl{\'e}ment Hongler.
\newblock Neural tangent kernel: Convergence and generalization in neural networks.
\newblock \emph{Advances in neural information processing systems}, 31, 2018.

\bibitem[Allen-Zhu et~al.(2019)Allen-Zhu, Li, and Liang]{allen2019learning}
Zeyuan Allen-Zhu, Yuanzhi Li, and Yingyu Liang.
\newblock Learning and generalization in overparameterized neural networks, going beyond two layers.
\newblock \emph{Advances in neural information processing systems}, 32, 2019.

\bibitem[Arora et~al.(2019)Arora, Du, Hu, Li, and Wang]{arora2019fine}
Sanjeev Arora, Simon Du, Wei Hu, Zhiyuan Li, and Ruosong Wang.
\newblock Fine-grained analysis of optimization and generalization for overparameterized two-layer neural networks.
\newblock In \emph{International Conference on Machine Learning}, pages 322--332. PMLR, 2019.

\bibitem[Yang and Hu(2020)]{feature_learning}
Greg Yang and J.~Edward Hu.
\newblock Feature learning in infinite-width neural networks.
\newblock \emph{CoRR}, abs/2011.14522, 2020.
\newblock URL \url{https://arxiv.org/abs/2011.14522}.

\bibitem[Luo et~al.(2021)Luo, Xu, Ma, and Zhang]{luo2021phase}
Tao Luo, Zhi-Qin~John Xu, Zheng Ma, and Yaoyu Zhang.
\newblock Phase diagram for two-layer relu neural networks at infinite-width limit.
\newblock \emph{The Journal of Machine Learning Research}, 22\penalty0 (1):\penalty0 3327--3373, 2021.

\bibitem[Livni et~al.(2014)Livni, Shalev-Shwartz, and Shamir]{livni2014computational}
Roi Livni, Shai Shalev-Shwartz, and Ohad Shamir.
\newblock On the computational efficiency of training neural networks.
\newblock \emph{Advances in neural information processing systems}, 27, 2014.

\bibitem[Safran and Shamir(2015)]{OverparamInitialBasin}
Itay Safran and Ohad Shamir.
\newblock On the quality of the initial basin in overspecified neural networks.
\newblock In \emph{International Conference on Machine Learning}, 2015.

\bibitem[Soudry and Carmon(2016)]{nobadlocalminima}
Daniel Soudry and Yair Carmon.
\newblock No bad local minima: Data independent training error guarantees for multilayer neural networks.
\newblock \emph{ArXiv}, abs/1605.08361, 2016.

\bibitem[Nguyen and Hein(2017)]{deepwidenetworksurface}
Quynh Nguyen and Matthias Hein.
\newblock The loss surface of deep and wide neural networks.
\newblock \emph{CoRR}, abs/1704.08045, 2017.
\newblock URL \url{http://arxiv.org/abs/1704.08045}.

\bibitem[Soudry and Hoffer(2017)]{soudry2017exponentially}
Daniel Soudry and Elad Hoffer.
\newblock Exponentially vanishing sub-optimal local minima in multilayer neural networks, 2017.

\bibitem[Du and Lee(2018)]{Overparam_quadratic_loss}
Simon Du and Jason Lee.
\newblock On the power of over-parametrization in neural networks with quadratic activation.
\newblock In Jennifer Dy and Andreas Krause, editors, \emph{Proceedings of the 35th International Conference on Machine Learning}, volume~80 of \emph{Proceedings of Machine Learning Research}, pages 1329--1338. PMLR, 10--15 Jul 2018.
\newblock URL \url{https://proceedings.mlr.press/v80/du18a.html}.

\bibitem[Venturi et~al.(2019)Venturi, Bandeira, and Bruna]{venturi}
Luca Venturi, Afonso~S. Bandeira, and Joan Bruna.
\newblock Spurious valleys in one-hidden-layer neural network optimization landscapes.
\newblock \emph{Journal of Machine Learning Research}, 20\penalty0 (133):\penalty0 1--34, 2019.
\newblock URL \url{http://jmlr.org/papers/v20/18-674.html}.

\bibitem[Soltanolkotabi et~al.(2022)Soltanolkotabi, Javanmard, and Lee]{soltanolkotabi2022theoretical}
Mahdi Soltanolkotabi, Adel Javanmard, and Jason~D. Lee.
\newblock Theoretical insights into the optimization landscape of over-parameterized shallow neural networks, 2022.

\bibitem[Safran et~al.(2020)Safran, Yehudai, and Shamir]{mild-overparam}
Itay Safran, Gilad Yehudai, and Ohad Shamir.
\newblock The effects of mild over-parameterization on the optimization landscape of shallow relu neural networks.
\newblock In \emph{Annual Conference on Learning Theory}, 2020.
\newblock URL \url{https://api.semanticscholar.org/CorpusID:219176870}.

\bibitem[Zhang et~al.(2021)Zhang, Zhang, Luo, and Xu]{zhang2021embedding}
Yaoyu Zhang, Zhongwang Zhang, Tao Luo, and Zhiqin~J Xu.
\newblock Embedding principle of loss landscape of deep neural networks.
\newblock \emph{Advances in Neural Information Processing Systems}, 34:\penalty0 14848--14859, 2021.

\bibitem[Liu(2021)]{liubo_similar_paper}
Bo~Liu.
\newblock Understanding the loss landscape of one-hidden-layer relu networks.
\newblock \emph{Knowledge-Based Systems}, 220:\penalty0 106923, 2021.
\newblock ISSN 0950-7051.
\newblock \doi{https://doi.org/10.1016/j.knosys.2021.106923}.
\newblock URL \url{https://www.sciencedirect.com/science/article/pii/S0950705121001866}.

\bibitem[Botev et~al.(2017)Botev, Ritter, and Barber]{Gauss_newton}
Aleksandar Botev, Hippolyt Ritter, and David Barber.
\newblock Practical gauss-newton optimisation for deep learning.
\newblock In \emph{International Conference on Machine Learning}, pages 557--565. PMLR, 2017.

\bibitem[Davis et~al.(2020{\natexlab{b}})Davis, Drusvyatskiy, Kakade, and Lee]{Davis2020}
Damek Davis, Dmitriy Drusvyatskiy, Sham Kakade, and Jason~D. Lee.
\newblock Stochastic subgradient method converges on tame functions.
\newblock \emph{Foundations of Computational Mathematics}, 20\penalty0 (1):\penalty0 119--154, 2020{\natexlab{b}}.
\newblock ISSN 1615-3383.
\newblock \doi{10.1007/s10208-018-09409-5}.
\newblock URL \url{https://doi.org/10.1007/s10208-018-09409-5}.

\bibitem[Clarke(1975)]{Clarke1975GeneralizedGA}
Frank~H. Clarke.
\newblock Generalized gradients and applications.
\newblock \emph{Transactions of the American Mathematical Society}, 205:\penalty0 247--262, 1975.
\newblock URL \url{https://api.semanticscholar.org/CorpusID:120174258}.

\bibitem[Kawaguchi(2016)]{strict_saddle1}
Kenji Kawaguchi.
\newblock Deep learning without poor local minima.
\newblock In D.~Lee, M.~Sugiyama, U.~Luxburg, I.~Guyon, and R.~Garnett, editors, \emph{Advances in Neural Information Processing Systems}, volume~29. Curran Associates, Inc., 2016.
\newblock URL \url{https://proceedings.neurips.cc/paper_files/paper/2016/file/f2fc990265c712c49d51a18a32b39f0c-Paper.pdf}.

\bibitem[Wu et~al.(2019)Wu, Wang, and Liu]{deepest_splitting}
Lemeng Wu, Dilin Wang, and Qiang Liu.
\newblock Splitting steepest descent for growing neural architectures.
\newblock In H.~Wallach, H.~Larochelle, A.~Beygelzimer, F.~d\textquotesingle Alch\'{e}-Buc, E.~Fox, and R.~Garnett, editors, \emph{Advances in Neural Information Processing Systems}, volume~32. Curran Associates, Inc., 2019.
\newblock URL \url{https://proceedings.neurips.cc/paper_files/paper/2019/file/3a01fc0853ebeba94fde4d1cc6fb842a-Paper.pdf}.

\bibitem[Wang et~al.(2024)Wang, Su, Lu, Xie, Liu, Yuan, Lin, Sun, and Yang]{wang2024lemon}
Yite Wang, Jiahao Su, Hanlin Lu, Cong Xie, Tianyi Liu, Jianbo Yuan, Haibin Lin, Ruoyu Sun, and Hongxia Yang.
\newblock {LEMON}: Lossless model expansion.
\newblock In \emph{The Twelfth International Conference on Learning Representations}, 2024.
\newblock URL \url{https://openreview.net/forum?id=3Vw7DQqq7U}.

\end{thebibliography}
}


\newpage
\appendix
\Large
{\centering Appendix}

\normalsize
\section*{Table of Contents}

\startcontents[sections]
\printcontents[sections]{l}{1}{\setcounter{tocdepth}{2}}


\section{Proof of \cref{lemma: first-order derivatives}}
\label{app: proof of first-order derivatives lemma}

\subsection{Derivation of \cref{eq: dL/dh} }

We compute
\begin{align*}
    \frac{\partial \hat{y}_{kj^\prime}(\mathbf{P})}{\partial h_{ji}}
    &=\frac{\partial}{\partial h_{ji}}\sum_{i'\in I}h_{j'i'}\rho(\mathbf{w}_{i'}\cdot \mathbf{x}_k)
    =\mathbbm{1}_{\{j=j'\}}\rho(\mathbf{w}_{i}\cdot \mathbf{x}_k).
\end{align*}
Thus, we have
\begin{align*}
    \frac{\partial}{\partial h_{ji}}\mathcal{L}({\mathbf{P}})
    &=\frac{\partial}{\partial h_{ji}} \big( \frac{1}{2} \sum_{j^\prime\in J}\sum_{k\in K} \left( \hat{y}_{kj^\prime} - y_{kj^\prime} \right)^2 \big)
    =\sum_{j'\in J}\sum_{k\in K}e_{kj'}\frac{\partial \hat{y}_{kj'}}{\partial h_{ji}}
    =\sum_{k\in K}e_{kj}\rho(\mathbf{w}_{i}\cdot \mathbf{x}_k) =\mathbf{w}_i\cdot\mathbf{d}_{ji},
\end{align*}

\subsection{Derivation of \cref{eq: dL/dr}}

For all $x,y\in\mathbbm{R}$, define
\begin{align*}
    \widetilde{\rho}_y(x):=
    \begin{cases}
        \alpha_+ x &\text{if }y>0,\\
        \rho(x) &\text{if }y=0,\\
        \alpha_- x &\text{if }y<0.
    \end{cases}
\end{align*}

The map $\widetilde{\rho}_\cdot(\cdot)$ will be convenient to compute the following derivatives.

{

Let $i$ be such that $\Vert\mathbf{w}_i\Vert\neq 0$, let $\mathbf{u}_i=\frac{\mathbf{w}_i}{\Vert\mathbf{w}_i\Vert}$.
Recall that the derivatives $\frac{\partial}{\partial r_i},\frac{\partial}{\partial s_i}$ are defined below \cref{def: radial/tangential direction}.
Below, we write $\mathbf{w}_i=r_i\mathbf{u}_i$ and take the derivative at $r_i=\Vert \mathbf{w}_i\Vert$.
Note that $\rho(r_i\mathbf{w}_i\cdot\mathbf{x}_k)=r_i\rho(\mathbf{w}_i\cdot\mathbf{x}_k)$, that is, along the direction $\mathbf{u}_i$, the activation $\rho$ is linear.
We thus have that
\begin{align*}
    \frac{\partial \hat{y}_{kj}(\mathbf{P})}{\partial r_i}
    &= h_{ji}\frac{\partial\rho(r_i\mathbf{u}_i\cdot \mathbf{x}_k)}{\partial r_i}
    = h_{ji}\rho(\mathbf{u}_i\cdot \mathbf{x}_k).
\end{align*}
We then get
\begin{align*}
    \frac{\partial\mathcal{L}(\mathbf{P})}{\partial r_i}
    &=\sum_{j\in J}\sum_{k\in K}e_{kj}\frac{\partial\hat{y}_{kj}}{\partial r_i}
    =\sum_{j\in J}\sum_{k\in K}e_{kj}h_{ji}\rho(\mathbf{u}_i\cdot \mathbf{x}_k)\\
    &=\sum_{j\in J}\sum_{k\in K}e_{kj}h_{ji}\mathbf{u}_i\cdot\widetilde{\rho}_{\mathbf{u}_i\cdot \mathbf{x}_k}(\mathbf{x}_k)\\
    &=\sum_{j\in J}h_{ji}\mathbf{u}_i\cdot\left({\sum_{\substack{k:\\ \mathbf{w}_i\cdot\mathbf{x}_k>0}}
\alpha^+e_{kj}\mathbf{x}_k + \sum_{\substack{k:\\ \mathbf{w}_i\cdot\mathbf{x}_k<0}}
\alpha^- e_{kj}\mathbf{x}_k}\right)\\
    &=\sum_{j\in J}h_{ji}\mathbf{u}_i\cdot \mathbf{d}_{ji},
\end{align*}
where $\mathbf d_{ji}$ was defined in \cref{lemma: first-order derivatives}.
}

\subsection{Derivation of \cref{eq: dL/ds}}

{

Recall that when taking the derivative $\frac{\partial}{\partial s_i}$, a specific direction $\mathbf{v}_i$ is chosen.
One can check that
\begin{align*}
    \frac{\partial \rho(\mathbf{w}_i\cdot \mathbf{x}_k)}{\partial s_i}
    &=\mathbbm{1}_{\{\mathbf{w}_i\cdot\mathbf{x}_k\neq 0\}}\widetilde{\rho}_{\mathbf{w}_i\cdot\mathbf{x}_k}(\mathbf{v}_i\cdot\mathbf{x}_k) + \mathbbm{1}_{\{\mathbf{w}_i\cdot\mathbf{x}_k=0\}}\rho(\mathbf{v}_i\cdot\mathbf{x}_k)\\
    &=\widetilde{\rho}_{\mathbf{w}_i\cdot \mathbf{x}_k}(\mathbf{v}_i\cdot\mathbf{x}_k)
\end{align*}
This gives us
\begin{align*}
    \frac{\partial\hat{y}_{kj}(\mathbf{P})}{\partial s_i}
    &=h_{ji}\frac{\partial\rho(\mathbf{w}_i\cdot\mathbf{x}_k)}{\partial s_i}
    =h_{ji}\widetilde{\rho}_{\mathbf{w}_i\cdot \mathbf{x}_k}(\mathbf{v}_i\cdot\mathbf{x}_k)
\end{align*}
We thus have that
\begin{align*}
    \frac{\partial\mathcal{L}(\mathbf{P})}{\partial s_i}
    &=\sum_{j\in J}\sum_{k\in K}e_{kj}\frac{\partial\hat{y}_{kj}(\mathbf{P})}{\partial s_i}
    =\sum_{j\in J}\sum_{k\in K}e_{kj}\widetilde{\rho}_{\mathbf{w}_i\cdot \mathbf{x}_k}(\mathbf{v}_i\cdot\mathbf{x}_k)\\
    &=\sum_{j\in J}\sum_{k\in K}e_{kj}\mathbf{v}_i\cdot\left(\mathbbm{1}_{\{\mathbf{w}_i\cdot \mathbf{x}_k\neq 0\}}\widetilde{\rho}_{\mathbf{w}_i\cdot \mathbf{x}_k}(\mathbf{x}_k) + \mathbbm{1}_{\{\mathbf{w}_i\cdot \mathbf{x}_k= 0\}}\widetilde{\rho}_{\mathbf{v}_i\cdot \mathbf{x}_k}(\mathbf{x}_k)\right)\\
    &=\sum_{\substack{k:\\ (\mathbf{w}_i+\Delta s_i\mathbf{v}_i)\cdot\mathbf{x}_k>0}}
\alpha^+e_{kj}\mathbf{x}_k + \sum_{\substack{k:\\ (\mathbf{w}_i+\Delta s_i\mathbf{v}_i)\cdot\mathbf{x}_k<0}}
\alpha^-e_{kj}\mathbf{x}_k\\
    &=\sum_{j\in J}h_{ji}\mathbf{v}_i\cdot\mathbf{d}_{ji}^{\mathbf{v}_i},
\end{align*}
where $\Delta s_i>0$ is sufficiently small and where $\mathbf{d}_{ji}^{\mathbf{v}_i}$ was defined in \cref{lemma: first-order derivatives}.
}

\section{Extending Lemma \cref{lemma: first-order derivatives} to Networks with Multiple Hidden Layers}
\label{app: extending derivative computation to MLP}
In this section, we demonstrate how we can compute the radial and tangential derivatives with respect to weights that are in ReLU-like networks with multiple hidden layers.

The notation of this section inherits from the one-hidden-layer case for the most part. 
Nonetheless, we augment the notation for the hidden layer weights with an index to number the layers.

The training inputs are $\mathbf{x}_k$'s, and we also denote them by $\mathbf{x}_k^{(0)}\in \mathbbm{R}^{\vert I_0 \vert}$ (for the sake of notation simplicity).
The training targets are $\mathbf{y}_k\in \mathbbm{R}^{\vert J \vert}$.
The input weight matrix in hidden layer $l\in \{1,\cdots,L\}$ is $W^{(l)}\in\mathbbm{R}^{\vert I_{l-1}\vert\times\vert I_{l} \vert}$.
$I_{l}$ is the set of hidden neurons in layer $l$.
The hidden neuron weights corresponding to one row of $W^{(l)}$ is $\mathbf{w}_i^{(l)}$, where $i \in  I_{l}$.
The output weight matrix is $H\in\mathbbm{R}^{\vert J \vert\times\vert I_L\vert}$, each row of which is $\mathbf{h}_j$ with $j\in J$.

The neural network is defined recursively with the following:
\begin{align*}
    \eta_{k}^{(l)} = W^{(l)}\mathbf{x}_k^{(l-1)},\quad \mathbf{x}_k^{(l)} = \rho( \eta_{k}^{(l)}),\quad\forall~l\in\{1,2,\cdots,L\};\quad \hat{\mathbf{y}}_k = H\mathbf{x}_k^{(L)}.
\end{align*}

\subsection{Output Weight Derivatives}

We first compute the network outputs' derivatives with respect to the output weights.

\begin{align*}
    \frac{\partial \hat y_{kj'}}{\partial \mathbf{h}_j} = \mathbbm{1}_{j=j'}\mathbf{x}_k^{(L)}
\end{align*}
Then we have:
\begin{align*}
    \frac{\partial \mathcal{L}}{\partial \mathbf{h}_j} = \sum_{k\in K}\sum_{j'\in J} e_{kj'}\frac{\partial y_{kj'}}{\partial \mathbf{h}_j} = \sum_{k\in K}e_{kj}\mathbf{x}_k^{(L)}
\end{align*}
\subsection{Hidden Neuron Weight Derivatives}
For a specific layer $l^0\in\{1,2,\cdots,L\}$, and a specific direction of $\Delta \mathbf{w}_i^{(l^0)} = \Delta r_i^{(l^0)}\mathbf{u}_i^{(l^0)}+\Delta s_i^{(l^0)}\mathbf{v}_i^{(l^0)}$, where $\mathbf{u}_i^{(l^0)}$ and $\mathbf{v}_i^{(l^0)}$ are the radial direction and a tangential direction of $\mathbf{w}_i^{(l^0)}$, and $i\in I_{l^0}$, we compute the radial derivative and tangential derivative.

\subsubsection*{Radial Derivative}
\begin{align*}
    \frac{\partial \hat y_{kj}}{\partial r_i^{(l^0)}} &= \frac{\partial \hat y_{kj}}{\partial x_{ki}^{(l^0)}}\frac{\partial x_{ki}^{(l^0)}}{\partial r_i^{(l^0)}} 
    =\frac{\partial \hat y_{kj}}{\partial x_{ki}^{(l^0)}}\rho (\mathbf{u}_i^{(l^0)}\cdot \mathbf{x}_k^{(l^0-1)})
\end{align*}
Thus:
\begin{align*}
    \frac{\partial \mathcal{L}}{\partial r_i^{(l^0)}} &= \sum_{j\in J} \sum_{k\in K}  e_{kj}\frac{\partial \hat y_{kj}}{\partial r_i^{(l^0)}}\\
    &= \sum_{j\in J}\sum_{k\in K} e_{kj} \frac{\partial \hat y_{kj}}{\partial x_{ki}^{(l^0)}}\rho (\mathbf{u}_i^{(l^0)}\cdot \mathbf{x}_k^{(l^0-1)}),
\end{align*}
which is reminiscent of the one-hidden-layer case.

\subsubsection*{Tangential Derivative}
Again, we start by computing the output's derivative.

\begin{align*}
\frac{\partial \hat y_{kj}}{\partial s_i^{(l^0)}} &= \frac{\partial \hat y_{kj}}{\partial x_{ki}^{(l^0)}}\frac{\partial x_{ki}^{(l^0)}}{\partial s_i^{(l^0)}} = \frac{\partial \hat y_{kj}}{\partial x_{ki}^{(l^0)}}\Tilde\rho_{\mathbf{w}_i^{(l^0)}\cdot \mathbf{x}_k^{(l^0-1)}} (\mathbf{v}_i^{(l^0)}\cdot \mathbf{x}_k^{(l^0-1)})
\end{align*}
Thus:
\begin{align*}
    \frac{\partial \mathcal{L}}{\partial s_i^{(l^0)}} &= \sum_{j\in J} \sum_{k\in K}  e_{kj}\frac{\partial \hat y_{kj}}{\partial s_i^{(l^0)}}\\
    &= \sum_{j\in J}\sum_{k\in K} e_{kj} \frac{\partial \hat y_{kj}}{\partial x_{ki}^{(l^0)}}\Tilde\rho_{\mathbf{w}_i^{(l^0)}\cdot \mathbf{x}_k^{(l^0-1)}} (\mathbf{v}_i^{(l^0)}\cdot \mathbf{x}_k^{(l^0-1)})\\
    &= \sum_{j\in J} \sum_{\substack{k:\\(\mathbf{w}_i^{(l^0)}+\Delta s_i\mathbf{v}_i)\cdot\mathbf{x}_k>0}} \alpha^+ e_{kj}\frac{\partial \hat y_{kj}}{\partial x_{ki}^{(l^0)}}\mathbf{v}_i^{(l^0)}\cdot \mathbf{x}_k^{(l^0-1)}\\
    &+\sum_{j\in J} \sum_{\substack{k:\\(\mathbf{w}_i^{(l^0)}+\Delta s_i\mathbf{v}_i)\cdot\mathbf{x}_k<0}} \alpha^- e_{kj}\frac{\partial \hat y_{kj}}{\partial x_{ki}^{(l^0)}}\mathbf{v}_i^{(l^0)}\cdot \mathbf{x}_k^{(l^0-1)},
\end{align*}
where $\Delta s_i > 0$ is arbitrarily small. 
Again, this is reminiscent of the one-hidden-layer case.

\subsubsection*{Computing $\frac{\partial \hat y_{kj}}{\partial x_{ki}^{(l^0)}}$}

$\frac{\partial \hat y_{kj}}{\partial x_{ki}^{(l^0)}}$ shows up in the formula or radial directional derivative and tangential directional derivative.
In this part, we compute the vector containing this value, $\frac{\partial \hat y_{kj}}{\partial \mathbf{x}_k^{(l^0)}}$.
This derivative can be derived from the routine of back-propagation, adapted to accommodate the non-differentiability of the activation function.

We first compute:
\begin{align*}
    \frac{\partial \hat{\mathbf{y}}_k}{\partial \mathbf{x}_k^{(L)}} = H^\intercal
\end{align*}
Then, for $l\in \{l^0, l^0+1, \cdots, L-1\}$, we have:
\begin{align*}
    \frac{\partial \hat{\mathbf{y}}_k}{\partial \mathbf{x}_k^{(l)}} &= \frac{\partial \boldsymbol{\eta}_k^{(l+1)}}{\partial \mathbf{x}_k^{(l)}} \frac{\partial \mathbf{x}_k^{(l+1)}}{\partial \boldsymbol{\eta}_k^{(l+1)}}\frac{\partial\hat{\mathbf{y}}_k}{\partial \mathbf{x}_k^{(l+1)}}\\
    &= \left(W_l^{(l)}\right)^\intercal 
    \operatorname{diag}(\boldsymbol{\alpha}_k^{(l)})\frac{\partial\hat{\mathbf{y}}_k}{\partial \mathbf{x}_k^{(l+1)}},
\end{align*}
where the vector of $\boldsymbol{\alpha}_k^{(l)}$ may be componentwise defined as:

\begin{align*}
\alpha_{ki}^{(l)} =
\begin{cases}
\alpha^+ & \text{if } \eta_{ki}^{(l)} + \Delta \eta_{ki}^{(l)}(\Delta \mathbf{w}_i^{(l^0)}) > 0\\
0 & \text{if } \eta_{ki}^{(l)} + \Delta \eta_{ki}^{(l)}(\Delta \mathbf{w}_i^{(l^0)}) = 0 \\
\alpha^- & \text{if }
\eta_{ki}^{(l)} + \Delta \eta_{ki}^{(l)}(\Delta \mathbf{w}_i^{(l^0)}) < 0
\end{cases}
\end{align*}
Here, $\Delta \eta_{ki}^{(l)}(\Delta \mathbf{w}_i^{(l^0)})$ is the change of $\eta_{ki}^{(l)}$ when an arbitrarily small perturbation of $\Delta \mathbf{w}_i^{(l^0)}$ is applied to $\mathbf{w}_i^{(l^0)}$.
When computing radial derivative, such perturbation can be taken as $\Delta \mathbf{w}_i^{(l^0)} = \Delta r_i^{(l^0)}\mathbf{u}_i^{(l^0)}$.
When computing the tangential derivative, such perturbation can be taken as $\Delta \mathbf{w}_i^{(l^0)} = \Delta s_i^{(l^0)}\mathbf{v}_i^{(l^0)}$.

\section{Discussions on Stationarity Notions}
\label{app: discussion on stationarity notion}

\subsection{{Directional Stationarity Captures the Stagnating Behavior of GD}}
\label{app: stationarity notion on a scalar function}
In this section, we show that the absence of first-order negative slopes at one point on a scalar function prevents GD from stagnating with high probability.
\begin{proposition}
Consider a random function $f(x) = \sum_{n = 1}^N\alpha_n x^n \mathbbm{1}_{\{x<0\}}+ \sum_{n = 1}^N\beta_n x^n \mathbbm{1}_{\{x\geq0\}}$, where $x\in \mathbb{R}$,  $N\in \mathbb{N}$, and the random coefficients $\{\boldsymbol\alpha,\boldsymbol\beta\}\triangleq\{\alpha_1,\cdots \alpha_N,\beta_1,\cdots,\beta_N\}$  are drawn independently from a distribution that is absolutely continuous with respect to Lebesgue measure. 
Denote left-hand and right-hand derivatives by $f_-^{'}(\cdot)$ and $f_+^{'}(\cdot)$.
We study the GD process $x_{t+1} = x_{t} - \eta f_+^{'}(x_t)$, where $\eta > 0$ is the step size. Suppose that $f_+^{'}(0) < 0$ or $f_-^{'}(0)<0$.
Then, with probability $1$, the following holds.
There exists an interval containing the origin $\chi(\boldsymbol \alpha, \boldsymbol \beta )\triangleq \left[ a(\boldsymbol \alpha, \boldsymbol\beta), b(\boldsymbol \alpha, \boldsymbol\beta)\right]$, where $a(\boldsymbol \alpha, \boldsymbol\beta)<0$ and $b(\boldsymbol \alpha, \boldsymbol\beta)>0$, and the time for escaping from this interval can be upper bounded by $\infty>\tilde t (\eta,\boldsymbol\alpha, \boldsymbol\beta) > 0$, which means if $x_t \in \chi$, then there exist $0<t^\prime\leq \tilde{t}$ such that $x_{t+t^\prime}\notin\mathbf{\chi}$.
\label{prop: scalar function}
\end{proposition}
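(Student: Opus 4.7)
The plan is to pick an interval $\chi = [-\epsilon,\epsilon]$ so thin that a single GD step already takes every point of $\chi$ outside $\chi$; this gives $\tilde t = 1$. The only randomness that needs to be handled is absorbed by the probability-$1$ event that $\alpha_1\neq 0$ and $\beta_1\neq 0$, which is immediate from the absolute continuity of the coefficient distribution. Note that GD uses $f_+^{'}(x_t)$, which satisfies $f_+^{'}(0)=\beta_1$, $\lim_{x\searrow 0}f_+^{'}(x)=\beta_1$, and $\lim_{x\nearrow 0}f_+^{'}(x)=\alpha_1$, and which is a polynomial in $x$ on each of $(-\infty,0)$ and $(0,\infty)$.

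The preparatory step is to extract a uniform continuity modulus: by the polynomial structure, there exist $\epsilon_0>0$ and $C=C(\boldsymbol{\alpha},\boldsymbol{\beta})>0$ such that $|f_+^{'}(x)-\beta_1|\leq C|x|$ for $x\in(0,\epsilon_0]$ and $|f_+^{'}(x)-\alpha_1|\leq C|x|$ for $x\in[-\epsilon_0,0)$. I would then set
\begin{equation*}
\epsilon \;=\; \min\!\left(\epsilon_0,\; \frac{\eta\,\min(|\alpha_1|,|\beta_1|)}{2+\eta C}\right), \qquad \chi=[-\epsilon,\epsilon],
\end{equation*}
and verify the one-step escape $x_t-\eta f_+^{'}(x_t)\notin\chi$ directly in each sign configuration of $(\alpha_1,\beta_1)$ permitted by the hypothesis. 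The representative sub-case $\beta_1>0$, $\alpha_1<0$ (in which the origin is a local minimum of $f$) goes as follows: for $x\in(0,\epsilon]$, $x-\eta f_+^{'}(x)\leq \epsilon(1+\eta C)-\eta\beta_1\leq -\epsilon$; for $x\in[-\epsilon,0)$, symmetrically $x-\eta f_+^{'}(x)\geq \eta|\alpha_1|-\epsilon(1+\eta C)\geq \epsilon$; and at $x=0$, $x_1=-\eta\beta_1<-\epsilon$. The remaining admissible sub-cases (those with $\beta_1<0$, either sign of $\alpha_1$) are handled by the same two estimates with the roles of the two sides exchanged.

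The main obstacle is conceptual, not computational: in the local-minimum sub-case, continuous-time intuition suggests that GD should converge to the origin rather than escape, and one is tempted to shrink $\epsilon$ further in order to trap the iterates. What rescues the proposition is precisely the opposite reasoning: since $\eta$ is fixed and $|\alpha_1|,|\beta_1|>0$ almost surely, any GD step from a point of $[-\epsilon,\epsilon]$ with $\epsilon\lesssim \eta\min(|\alpha_1|,|\beta_1|)/(2+\eta C)$ overshoots $\chi$, so the discrete dynamics escapes in a single step regardless of which side of the origin hosts the negative slope. Setting $\tilde t(\eta,\boldsymbol{\alpha},\boldsymbol{\beta})=1$ then completes the argument.
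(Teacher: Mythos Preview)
Your one-step overshoot argument is correct: on the probability-$1$ event $\alpha_1\neq 0,\ \beta_1\neq 0$, shrinking the interval to $[-\epsilon,\epsilon]$ with $\epsilon\lesssim \eta\min(|\alpha_1|,|\beta_1|)/(2+\eta C)$ forces $|x-\eta f_+'(x)|>\epsilon$ for every $x\in[-\epsilon,\epsilon]$, so $\tilde t=1$ works regardless of the signs of $\alpha_1,\beta_1$. This is a genuinely different and more elementary route than the paper's. The paper instead fixes, on each side of the origin, an interval on which the derivative keeps a definite sign (independently of $\eta$), and then bounds the number of GD steps needed to walk monotonically out of that interval; their $\tilde t$ is of order $b/(\eta|m|)$ rather than $1$.

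There is one discrepancy worth noting: the proposition explicitly writes $\chi(\boldsymbol\alpha,\boldsymbol\beta)=[a(\boldsymbol\alpha,\boldsymbol\beta),b(\boldsymbol\alpha,\boldsymbol\beta)]$, i.e.\ the interval is supposed to depend only on the coefficients, not on the step size. Your $\epsilon$ depends on $\eta$, so as stated you have proved a slightly different (though equally useful for the paper's purposes) statement. The paper's choice of interval---the region where $f'_+$ has a fixed sign---is $\eta$-free, and the $\eta$-dependence is pushed entirely into $\tilde t$. If you want to match the proposition literally, you would need to revert to a multi-step bound on a fixed interval. A second, harmless point: the paper uses the one-sided directional convention $f_-'(0)=\lim_{h\searrow 0}(f(-h)-f(0))/h$, under which the local-minimum configuration $\beta_1>0,\ \alpha_1<0$ is actually excluded by the hypothesis; your argument covers all four sign patterns anyway, so this only affects which case you call ``representative.''
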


\begin{remark}
The above proposition studies a random function to simulate that the loss landscape of neuron networks is usually also (a realization) of a random function, with randomness coming from the dataset.
The above proposition shows that if the left-hand or right-hand derivative at the origin is negative, then the escape time from the origin $\tilde t$ only concerns the realization of the function (determined by $\boldsymbol \alpha, \boldsymbol\beta$) and the learning rate $\eta$, without involving how close GD gets to the origin. By contrast, saddle points, no matter whether they are differentiable or not, trap GD or gradient flow for a longer time if the trajectory of GD or gradient flow reaches closer to them \citep{bousquets_paper,GDReLUorthogonaloutput,correlatedReLU}. More concretely, if the origin $x = 0$ is a saddle point, then the upper bound for the escape time should be $\tilde t (\eta,\boldsymbol\alpha, \boldsymbol\beta,\hat d)$, where $\hat d\triangleq \inf_{t^\prime \geq 0}\vert x_{t+t^\prime}\vert$ is the shortest distance from the trajectory following $x_t$ to the origin $0$, and it must be part of the upper bound $\tilde t$. For example, suppose $x=0$ is a conventional smooth saddle point, then, if $\hat d \searrow 0$, we have $\tilde t\nearrow\infty$; and if $\hat d = 0$, we have $\tilde t=\infty$ (GD is permanently stuck in this case).
\end{remark}

\begin{proof}
We prove the proposition in different situations.
\paragraph{Situation 1:} 
 $f_+^{'}(0), f_-^{'}(0)<0$.\\
Since $f_+^{'}(0)<0$, based on the right continuity of the derivative, we have that there exists $b_1(\boldsymbol\alpha, \boldsymbol\beta)>0$ such that $f_+^{\prime}(x)<0$, if $x\in[0,b_1]$.
Define $m_1^+ \coloneqq \sup_{x\in[0,b_1
]} f_+^{\prime}(x) < 0$. 
Then, we have, if $x_t \in [0,b_1]$ and $t_1^\prime$ satisfies that $x_{t+t_1^\prime-1}\in [0,b_1]$,
$x_{t+t_1^\prime} - x_t \geq -\eta m_1^+t_1^\prime >0$.
As a result, there exists a time step that GD would step beyond $b_1$.
Namely, there exists a step count $t_1^\prime$ such that $x_{t+ t_1^\prime}>b_1$, and  $t_1^\prime$ must be upper-bounded by: $t_1^\prime \leq \lceil \frac{b_1(\boldsymbol\alpha, \boldsymbol\beta)}{-\eta m_1^+}\rceil+1\coloneqq \hat t_1^\prime$. \\
Similarly, since $f_-^\prime(0)<0$, we can have an interval $[a_1(\boldsymbol\alpha, \boldsymbol \beta), 0)$ with  $a_1(\boldsymbol\alpha, \boldsymbol \beta)<0$ such that $f_-^{\prime}(x)<0$, if $x\in[a_1,0]$.
Define $m_1^- \coloneqq \sup_{x\in[a_1,0
)} f_-^{\prime}(x) < 0$.
We have,  if $x_t \in [a_1,0)$ and $t_1^{\prime\prime}$ satisfies that $x_{t+t_1^{\prime\prime}-1}\in [a_1,0)$,
$x_{t+t_1^{\prime\prime}} - x_t \leq \eta m_1^-t_1^{\prime\prime} <0$, where we applied $f_-^{\prime}(x) = -f_+^{\prime}(x)$, if $x_ \in [a_1,0)$, to the GD update rule. 
As a result, there exists a time step that GD would step beyond $a_1$.
Namely, there exists a step count $t_1^{\prime\prime}$ such that $x_{t+ t_1^{\prime\prime}}<a_1$, and  $t_1^{\prime\prime}$ must be upper-bounded by: $t_1^{\prime\prime} \leq \lceil \frac{a_1(\boldsymbol\alpha, \boldsymbol\beta)}{-\eta m_1^-}\rceil\coloneqq \hat t_1^{\prime\prime}$.\\
Up till now, we have proved that, \textit{for situation 1, there exists an interval $\chi(\boldsymbol\alpha, \boldsymbol\beta) = [a_1(\boldsymbol\alpha, \boldsymbol\beta),b_1(\boldsymbol\alpha, \boldsymbol\beta)]$ such that, if $x_t\in\chi$, there exits $\Tilde{t}_1(\boldsymbol\alpha, \boldsymbol\beta,\eta)\coloneqq \max(t_1^{\prime},t_1^{\prime\prime})$  } such that $x_{t+t^\prime}\notin\chi$ for some $0<t^\prime\leq \Tilde{t}_1(\boldsymbol\alpha, \boldsymbol\beta,\eta)$.
\textbf{Notice that this case is a Clarke stationary point \citep{wang2022the,yun2019efficientlytestinglocaloptimality}. 
Nonetheless, we prove here that such a point does not slow down GD.}

\paragraph{Situation 2:}  $f_+^{'}(0)<0, f_-^{'}(0)>0$.\\
Since  $f_+^{'}(0)<0$, similar to situation 1, we can have the following.
There exists an interval $[0,b_2(\boldsymbol\alpha, \boldsymbol\beta)]$ where $b_2(\boldsymbol\alpha, \boldsymbol\beta)>0$ such that, if $x_t\in [0,b_2(\boldsymbol\alpha, \boldsymbol\beta)]$, then there exists a step count $t_2^\prime$ such that $x_{t+ t_2^\prime}>b_2$, and  $t_2^\prime$ must be upper-bounded by: $t_2^\prime \leq \lceil \frac{b_2(\boldsymbol\alpha, \boldsymbol\beta)}{-\eta m_2^+}\rceil+1\coloneqq \hat t_2^\prime$, where $m_2^+$ is defined with  $m_2^+ \coloneqq \sup_{x\in[0,b_2
]} f_+^{\prime}(x) < 0$.\\
Moreover, since $f_-^{'}(0)>0$,  we can have an interval $[a_2(\boldsymbol\alpha, \boldsymbol \beta), 0)$ with  $a_2(\boldsymbol\alpha, \boldsymbol \beta)>0$ and $f_-^{\prime}(x)<0$, if $x\in[a_2,0]$.
Define $m_2^- \coloneqq \inf_{x\in[a_2,0
)} f_-^{\prime}(x) > 0$.
We have,  if $x_t \in [a_2,0)$ and $t_2^{\prime\prime}$ satisfies that $x_{t+t_2^{\prime\prime}-1}\in [a_2,0)$,
$x_{t+t_2^{\prime\prime}} - x_t \geq \eta m_2^-t_2^{\prime\prime} >0$, where we applied $f_-^{\prime}(x) = -f_+^{\prime}(x)$, if $x \in [a_2,0)$, to the GD update rule. 
As a result, there exists a time step that GD would step beyond $0$.
Namely, there exists a step count $t_2^{\prime\prime}$ such that $x_{t+ t_2^{\prime\prime}}>0$, and  $t_2^{\prime\prime}$ must be upper-bounded by: $t_2^{\prime\prime} \leq \lceil \frac{a_2(\boldsymbol\alpha, \boldsymbol\beta)}{\eta m_2^-}\rceil\coloneqq \hat t_2^{\prime\prime}$.\\
Up till now, we have proved that, \textit{for situation 2, there exists an interval $\chi(\boldsymbol\alpha, \boldsymbol\beta) = [a_2(\boldsymbol\alpha, \boldsymbol\beta),b_2(\boldsymbol\alpha, \boldsymbol\beta)]$ such that, if $x_t\in\chi$, there exits $\Tilde{t}_2(\boldsymbol\alpha, \boldsymbol\beta,\eta)\coloneqq \hat t_1^{\prime}+\hat t_1^{\prime\prime}$  } such that $x_{t+t^\prime}\notin\chi$ for some $0<t^\prime\leq \Tilde{t}_1(\boldsymbol\alpha, \boldsymbol\beta,\eta)$.
\paragraph{Situation 3}:  $f_+^{'}(0)>0, f_-^{'}(0)<0$.
We can prove the proposition holds for this situation similar to what we have done for situation 2.

Other situations account for measure $0$ and do not concern the validity of the proposition.
\end{proof}

\begin{remark}
    Note that there could indeed be cases with probability $0$ in which directional stationarity fails to capture the GD-stalling behavior of a point.
    For example, if $f(x) = x^2 \mathbbm{1}_{\{x\leq0\}} - x\mathbbm{1}_{\{x>0\}}$, and we start GD from $x_0 < 0$ with a learning rate $\eta < \frac{1}{4}$, then the GD will be stuck at $x = 0$. 
    Nevertheless, this point is not a directional stationary point.
    There might exist analogous situations for our setup where directional stationarity fails to capture GD-stalling points.
    However, Such situations have probability $0$.
\end{remark}

\subsection{Examples of Non-differentiable Stationary Points}
\label{app: nondiff stationary points}
Here, we give two examples of non-differentiable (directional) stationary points under GD.

\subsubsection{A Non-differentiable Local Minimum}
\begin{figure}[htb]
    \centering
    \includegraphics{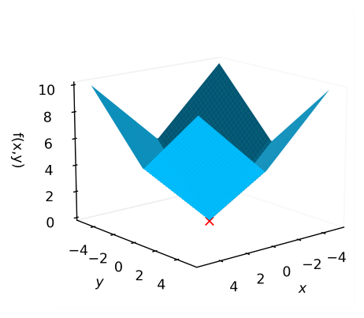}
    \caption{A non-differentiable local minimum}
    \label{fig: nondiff local min}
\end{figure}
In \cref{fig: nondiff local min}, we show a non-differentiable local minimum of the function $f(x,y) = \vert x\vert +\vert y\vert$, which is the origin and denoted by the red x. 
It is not hard to see that GD performed on the function will approach this point and stop there (or bounce in its vicinity, which can be taken as ``GD effectively comes to a halt", as phrased in \cref{sec: stationary points}).

\subsubsection{A Non-differentiable Saddle Point}
\label{app: nondiff saddles}
In \cref{fig: non-diff saddle itself}, we illustrate a non-differentiable saddle point and GD's behavior near it. 
The function we investigate in this example is the following:
\begin{equation}
    f(x, y) =
\begin{cases}
y^2 + x & \text{if } x > 0 \text{ and } y > 0 \\
-y^2 + x & \text{if } x > 0 \text{ and } y \leq 0 \\
y^2 - x & \text{if } x \leq 0 \text{ and } y > 0 \\
-y^2 - x & \text{if } x \leq 0 \text{ and } y \leq 0 \\
\end{cases},
\end{equation}
which is showed in \cref{fig: non-diff saddle itself}.
The origin is a non-differentiable saddle point. 
If we perform GD starting from $(x,y) = (1e^{-6}, 1)$, the trajectory of GD will be as shown in \cref{fig: non-diff saddle GD trajectory}, and the change of function value during the GD process will be as shown in \cref{fig: func value of GD non-diff saddle}.
The behavior of GD is much like that near a differentiable saddle.
However, one can show that, in this case, given a step size smaller than $\frac{1}{4}$, the non-differentiable saddle point is non-escapable by GD, even though it has second-order decreasing directions.
This is in contrast with strict differentiable saddles, which are almost always escapable by GD \citep{escape_saddle1, escape_saddle2, saddle_escape_3, saddle_escaping_SGD}.

\begin{remark}
\label{remark: subgradient fails}
    Note, that the subgradient cannot be defined at the saddle point discussed here.
    We remind the reader that the subgradient is defined as:

\begin{definition}
    Let \( f: \mathbb{R}^n \to \mathbb{R} \) be a convex function. The subgradient set of \( f \) at \( \mathbf{x}_0 \) is defined as:

\[
\partial f(\mathbf{x}_0) = \left\{ \mathbf g \in \mathbb{R}^n : f(\mathbf{x}) \geq f(\mathbf{x}_0) + \mathbf g^T (\mathbf{x} - \mathbf{x}_0), \, \forall \mathbf{x} \in \mathbb{R}^n \right\}
\]
\label{def: subgradient}
\end{definition}
As a matter of fact, the definition of subgradient generally fails at saddle points due to non-convexity.
One can verify that the subgradient cannot be defined for the origin of the loss landscape of our setup in general.
Since the ODDs toward all the directions are zero at the origin (by \cref{lemma: first-order derivatives}), if the subgradient $\mathbf{g}$ can be defined, it must mean that $\mathbf g = \mathbf 0$.
Taking it to \cref{def: subgradient}, we find that, if the subgradient at the origin were $\mathbf g = \mathbf 0$, the origin would be a local minimum. 
However, the origin normally have loss-decreasing path around it.
\end{remark}

\begin{figure}[htb]
    \centering
    \begin{subfigure}{0.45\textwidth}
        \includegraphics[width=\linewidth]{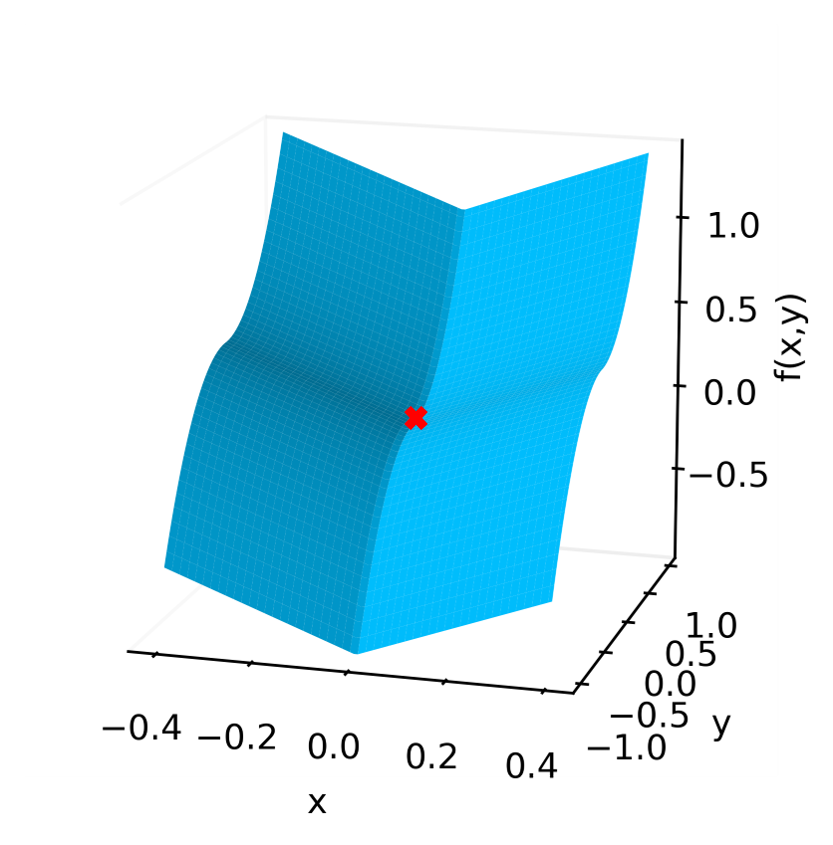}
        \caption{The saddle point}
        \label{fig: non-diff saddle itself}
    \end{subfigure}
    \begin{subfigure}{0.47\textwidth}
        \begin{subfigure}{0.6\linewidth}
        \centering
            \includegraphics[width=\linewidth]{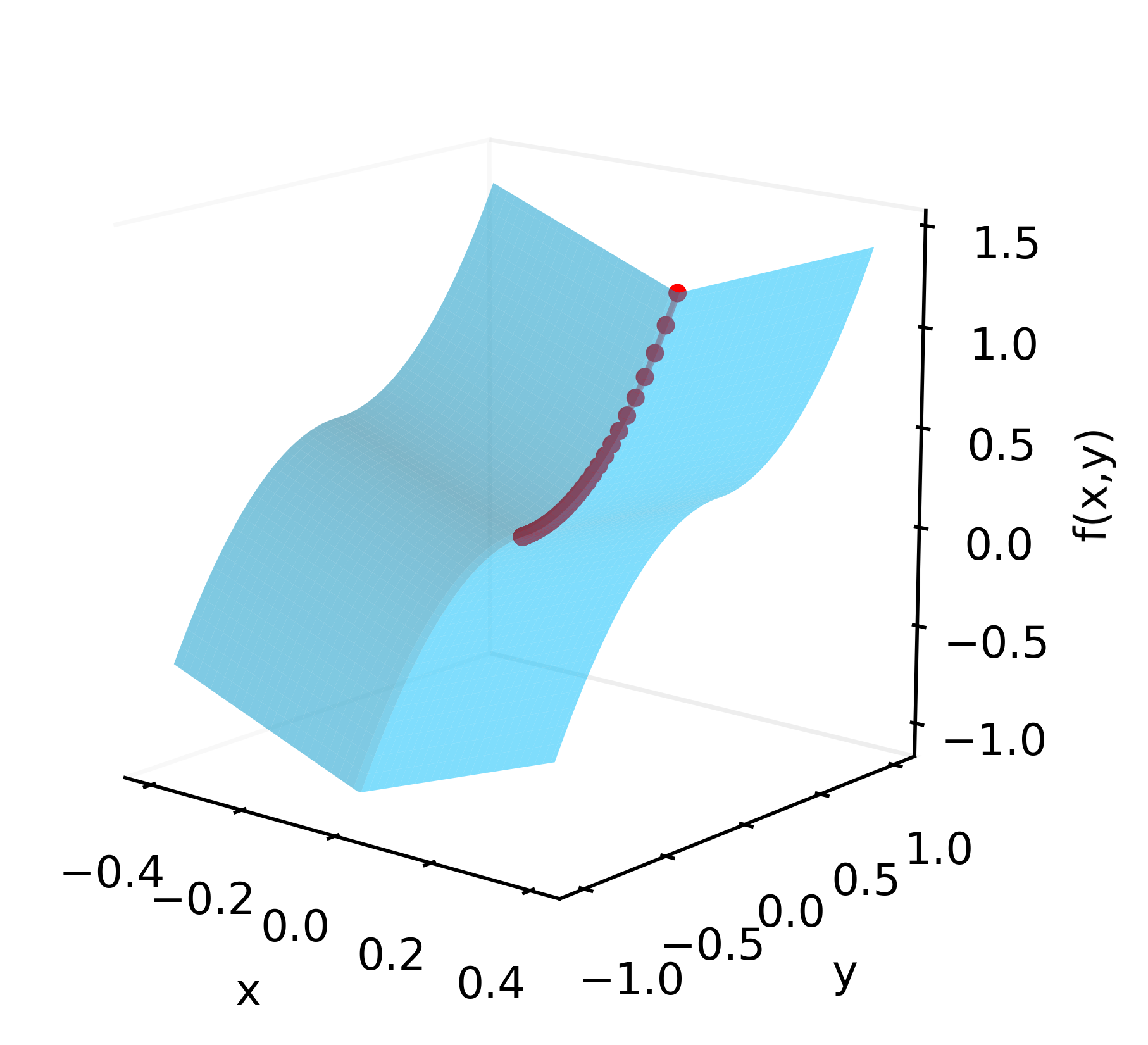}
            \caption{A GD trajectory}
            \label{fig: non-diff saddle GD trajectory}
        \end{subfigure}
        \begin{subfigure}{0.6\linewidth}
            \includegraphics[width=\linewidth]{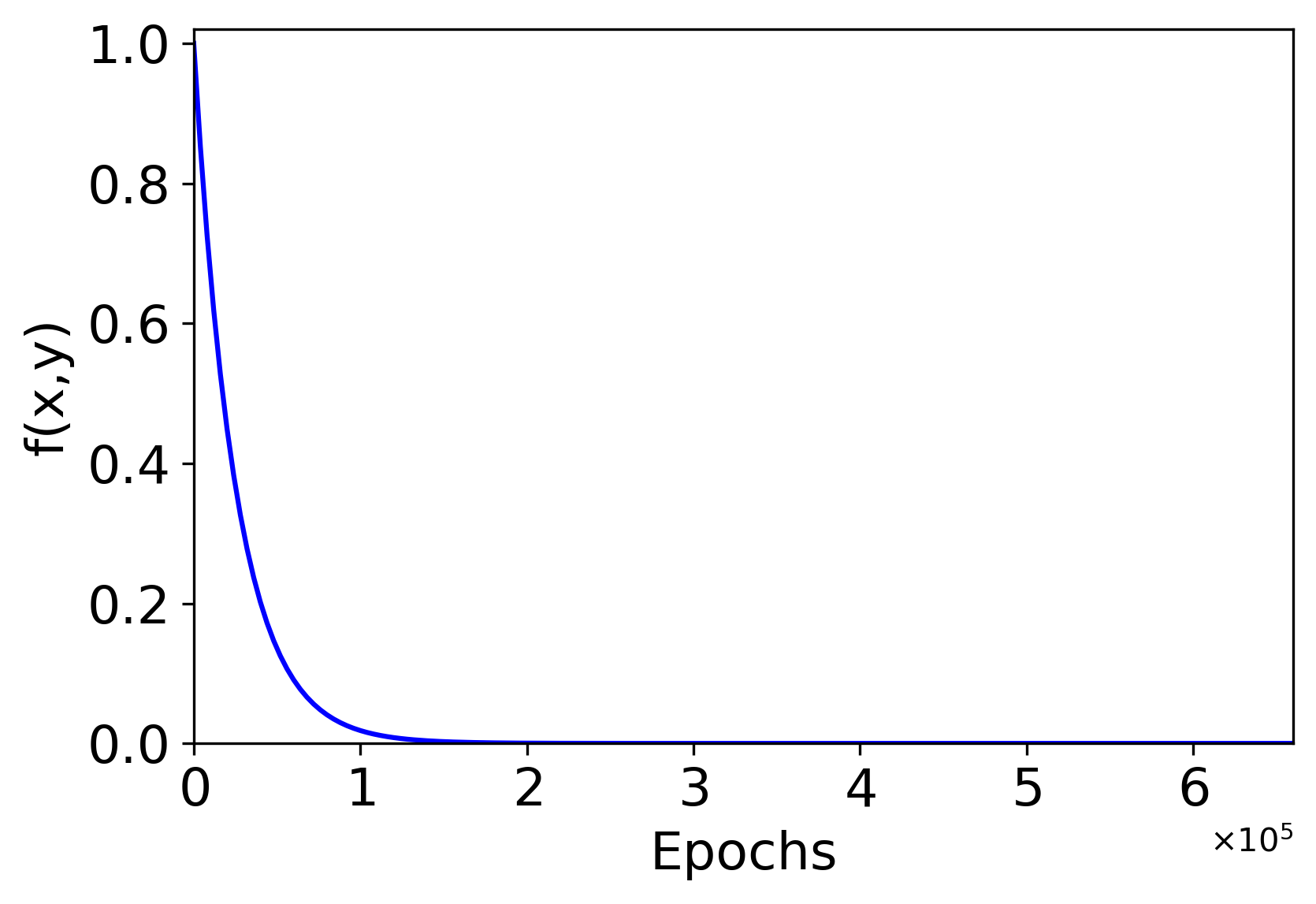}
            \caption{Function value during GD}
            \label{fig: func value of GD non-diff saddle}
        \end{subfigure}
        \label{fig: non-diff saddle loss change}
    \end{subfigure}
    \caption{A non-differentiable saddle point}
    \label{fig: non-diff saddle}
\end{figure}

\subsection{Connection between Directional Stationarity and Clarke Stationarity \citep{wang2022the,Davis2020}}
\label{app: connection with Clarke}
We show that directional stationary points must also be Clarke stationary points.
This insight was also given by \cite{dstationarity}. 
We prove it here for convenience, after which we show that directional stationary points are Clarke stationary points with no negative ODD around it.
\begin{proposition}
    Given a function $f: \mathbb{R}^N\to \mathbb{R}$ whose ODD
    $$f^\prime(\mathbf x,\mathbf v)\coloneqq \lim_{h\searrow 0^+}\frac{f(\mathbf x + h\mathbf v) - f(\mathbf x)}{h}$$
    can be defined for all $\mathbf x \in \mathbbm{R}^N$ and for all $\mathbf v \in \mathbbm{R}^N$. 
    If $\overline{\mathbf x}$ satisfies that $f(\overline{\mathbf{x}},\mathbf v) \geq 0$, for any $\mathbf{v}\in \mathbbm{R}^N$, so that it is a directional stationary point, then it must also be a Clarke stationary point \citep{Davis2020,wang2022the}.
    \label{prop: my stat < clarke}
\end{proposition}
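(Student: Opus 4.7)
The plan is to rely on the fundamental comparison $f^\circ(\overline{\mathbf{x}},\mathbf{v}) \geq f'(\overline{\mathbf{x}},\mathbf{v})$ between the Clarke generalized directional derivative and the ordinary one-sided directional derivative. Since the proposition implicitly requires $f$ to be regular enough for the Clarke subdifferential to be sensibly defined (local Lipschitzness is the standard assumption, and it covers the piecewise-$C^\infty$ loss landscape treated in this paper), I would first recall the usual definitions: the Clarke generalized directional derivative
\[
f^\circ(\mathbf{x},\mathbf{v}) \;=\; \limsup_{\mathbf{y}\to\mathbf{x},\; t\searrow 0^+}\frac{f(\mathbf{y}+t\mathbf{v})-f(\mathbf{y})}{t},
\]
and the Clarke subdifferential $\partial^C f(\overline{\mathbf{x}})=\{\mathbf{g}\in\mathbbm{R}^N:\langle \mathbf{g},\mathbf{v}\rangle\leq f^\circ(\overline{\mathbf{x}},\mathbf{v})\text{ for all }\mathbf{v}\}$. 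By the support-function characterization, $\mathbf{0}\in\partial^C f(\overline{\mathbf{x}})$ is equivalent to $f^\circ(\overline{\mathbf{x}},\mathbf{v})\geq 0$ for all $\mathbf{v}\in\mathbbm{R}^N$, so this scalar condition is what I would aim at.

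The core step is to specialize the joint limit defining $f^\circ$ to the constant sequence $\mathbf{y}\equiv\overline{\mathbf{x}}$, which is legitimate because the $\limsup$ taken over a larger family of sequences is always at least the one taken over a subfamily. This yields
\[
f^\circ(\overline{\mathbf{x}},\mathbf{v}) \;\geq\; \limsup_{t\searrow 0^+}\frac{f(\overline{\mathbf{x}}+t\mathbf{v})-f(\overline{\mathbf{x}})}{t} \;=\; \lim_{t\searrow 0^+}\frac{f(\overline{\mathbf{x}}+t\mathbf{v})-f(\overline{\mathbf{x}})}{t} \;=\; f'(\overline{\mathbf{x}},\mathbf{v}),
\]
where the $\limsup$ collapses to a genuine limit by the standing hypothesis that the one-sided directional derivative exists in every direction. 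Combining this with the directional stationarity assumption $f'(\overline{\mathbf{x}},\mathbf{v})\geq 0$ for all $\mathbf{v}$ immediately gives $f^\circ(\overline{\mathbf{x}},\mathbf{v})\geq 0$ uniformly in $\mathbf{v}$, which is exactly the Clarke stationarity condition.

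I do not anticipate any substantive obstacle: the argument reduces to a definitional chain once the comparison $f^\circ \geq f'$ is established, and that comparison is itself just the monotonicity of $\limsup$ with respect to the index set. The only point worth flagging is the implicit regularity assumption on $f$ that makes $f^\circ$ meaningful; under local Lipschitzness (which holds in the paper's setup), everything goes through. For perspective, I would also note that the reverse implication fails — the origin of $f(x)=-\vert x\vert$ is Clarke stationary yet has strictly negative one-sided directional derivatives — which is precisely why the authors later emphasize that directional stationary points form the strict subset of Clarke stationary points ``with no negative slopes around.''
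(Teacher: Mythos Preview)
Your proposal is correct and follows essentially the same route as the paper: recall the Clarke generalized directional derivative and subdifferential, reduce Clarke stationarity to $f^\circ(\overline{\mathbf x},\mathbf v)\ge 0$ for all $\mathbf v$, establish $f^\circ(\overline{\mathbf x},\mathbf v)\ge f'(\overline{\mathbf x},\mathbf v)$ by restricting the $\limsup$ to $\mathbf y=\overline{\mathbf x}$, and then invoke the directional stationarity hypothesis. Your version is, if anything, a bit more explicit than the paper's (the paper simply remarks that $f^\circ$ amounts to a supremum over ODDs at neighboring points and hence dominates $f'$), and your side comments on the implicit Lipschitz regularity and the $-\vert x\vert$ counterexample are apt.
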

\begin{proof}
Clarke stationary points are defined to be where 
\begin{equation}
    \mathbf 0 \in \partial^{\circ} f(\mathbf x):=\left\{\boldsymbol \xi \in \mathbb{R}^N:\langle\boldsymbol\xi, \mathbf v\rangle \leq f^{\circ}(\mathbf x, \mathbf v), \forall \mathbf v \in \mathbb{R}^N\right\},
    \label{eq: Clarke stationary point}
\end{equation}
in which $f^{\circ}(\mathbf x,\mathbf v)$ is the Clarke generalized directional derivative:
\begin{equation}
f^{\circ}(\mathbf x, \mathbf v)=\limsup _{\mathbf y \rightarrow \mathbf x, h \searrow 0+} \frac{f(\mathbf y+h \mathbf v)-f(\mathbf y)}{h}.
\label{eq: Clarke generalized directional derivative}
\end{equation}
Since \cref{eq: Clarke generalized directional derivative} is equivalent to taking a supremum over all the ODDs computed for neighboring point $\mathbf y$, we must have 
\begin{equation}
f^{\circ}(\overline{\mathbf x}, \mathbf v)\geq f^\prime(\overline{\mathbf x}, \mathbf v)\geq 0, \quad \forall \mathbf v \in \mathbbm R^N,
\label{eq: Clarke derivative >= 0}
\end{equation}
in which the second inequality comes from the condition in \cref{prop: my stat < clarke}.
Notice, \cref{eq: Clarke derivative >= 0} immediately make \cref{eq: Clarke stationary point}  hold, which proves the proposition.
\end{proof}
\begin{fact}
Directional stationary points are Clarke stationary points that do not have negative ODDs toward any direction around it.
\end{fact}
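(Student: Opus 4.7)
The fact asserts a characterization: a point $\overline{\mathbf x}$ is directional stationary if and only if it is Clarke stationary and has no negative one-sided directional derivative (ODD) in any direction. The plan is to split the biconditional into two inclusions, each of which reduces to results already established in the preceding material.

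For the forward implication, the ``no negative ODD'' part is immediate from the definition of directional stationarity, namely $f'(\overline{\mathbf x},\mathbf v)\geq 0$ for every $\mathbf v\in\mathbbm{R}^{N}$; and the ``Clarke stationary'' part is exactly the content of the preceding Proposition, which feeds the inequality $f^{\circ}(\overline{\mathbf x},\mathbf v)\geq f'(\overline{\mathbf x},\mathbf v)\geq 0$ into the defining set inclusion $\mathbf 0\in\partial^{\circ} f(\overline{\mathbf x})$. Nothing further is needed on this side beyond citing that Proposition.

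For the reverse implication, the hypothesis ``$f'(\overline{\mathbf x},\mathbf v)\geq 0$ for every $\mathbf v$'' is already, verbatim, the definition of directional stationarity, so the Clarke-stationarity hypothesis is logically redundant on this side. To justify that the characterization nonetheless carries content --- i.e., that Clarke stationarity alone is strictly weaker --- I would append the separating example $f(x)=-|x|$ at the origin, for which $\partial^{\circ} f(0)=[-1,1]\ni 0$ so the origin is Clarke stationary, yet $f'(0,\pm 1)=-1<0$ so the origin is not directional stationary and GD initialized nearby slides off it immediately.

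The main difficulty here is not mathematical but expository: the reverse inclusion is tautological once the definitions are unpacked, and so the natural risk is for a reader to perceive the fact as vacuous. I would therefore organize the write-up so that the forward inclusion (routed entirely through the preceding Proposition) and the $-|x|$ counterexample are presented as the substantive content, and the backward inclusion is flagged explicitly as a re-reading of the definition of directional stationarity.
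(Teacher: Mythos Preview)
Your proposal is correct and follows essentially the same logic as the paper's proof, which formalizes the two inclusions set-theoretically by writing $O=\overline{L}$ (your ``by definition'' step), then using $O\subseteq C$ from the preceding Proposition together with $C\setminus L=C\cap\overline{L}\subseteq\overline{L}=O$. Your explicit observation that the Clarke hypothesis is redundant on the reverse direction, and your inclusion of the $-|x|$ separating example, are expository additions not present in the paper's brief set-algebra argument, but the underlying reasoning is identical.
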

\begin{proof}
    Following the setup in \cref{prop: my stat < clarke}.
    Define $O\coloneqq \{{\mathbf x}\vert f^\prime({\mathbf x}, \mathbf v)\geq 0, \forall \mathbf v\in \mathbbm{R}^N\}$,  $C\coloneqq \{{\mathbf x}\vert \mathbf 0 \in f^\circ({\mathbf x}, \mathbf v)\}$, $L\coloneqq\{\mathbf x\vert f^\prime({\mathbf x}, \mathbf v) < 0,~\text{for some}~\mathbf v\in \mathbbm{R}^N\}$.
    To prove the above fact, we need to prove $O = C \setminus L$.
    By definition $O = \overline L$.
    Thus, $C\setminus L = C\bigcap\overline L\subseteq O$. 
    Also, we have that $O\subseteq C$ (which is proved in \cref{prop: my stat < clarke}) and $O =\overline L$, thus we have $O \subseteq O \setminus L$. As a result $O = C\setminus L$.
\end{proof}

\subsection{Discussion on the Stationarity Proposed by \cite{affine_target_function}}
\label{app: right-hand stationarity discussion}
\cite{affine_target_function} defined stationary points to be where the right-hand derivatives along the canonical axes are zero.
Please refer to their Equation (2.3) and Definition 2.1 for details.
Such a stationarity notion might not suit our purpose as the ODDs on canonical axes might not be informative enough to characterize the local structure of a function in all directions.
We will demonstrate this with a toy example.

Consider the origin of the function $f(x,y,z) = \vert x-y \vert - \vert x+y \vert + z^2$. 
It has zero right-hand derivative on all the coordinate axes, and thus qualifies as a stationary point by the criterion proposed by \cite{affine_target_function}.
However, there exist negative slopes around it, for example, in the direction of $(1,1,0)$.
For such a function, the stationarity can be verified by:\\[2mm]
(1) checking whether the ODDs toward all directions in $\mathbbm{R}^3$ are non-negative, which is analogous to \cref{def: directional stationarity original}; or\\[2mm]
(2) checking whether the directional derivative along the $z$ axis, where the function is continuously differentiable, is zero; then checking whether the ODDs in all the directions on the $(x,y)$-plane are nonnegative; which is analogous to \cref{def: stationary point}.

\section{Rarity of Local Maxima}
\label{app: rarity of local maxima}
In theory, local maxima can exist in our setting \cite{analytical_forms_critical}.
Nonetheless, the theorem below suggests that they are extremely rare.

\begin{theorem}
\label{thm: local maxima}
If there exists some input weight vector $\mathbf{w}_i$ in $\mathbf{P}\in\mathbbm{R}^D$ satisfying $\rho\left(\mathbf{w}_i \cdot \mathbf{x}_k\right) \neq 0 $ for some input $\mathbf{x}_k$,
then, $\mathbf{P}$ cannot be a local maximum. 
\end{theorem}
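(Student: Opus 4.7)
The plan is to reduce to a one-dimensional convexity argument along a direction where $\mathcal{L}$ is smooth, namely the coordinate direction of a single output weight. Let $i^\star\in I$ and $k^\star\in K$ be indices, guaranteed by hypothesis, such that $\rho(\mathbf{w}_{i^\star}\cdot \mathbf{x}_{k^\star})\neq 0$, and pick any $j\in J$. I would consider the univariate slice obtained by perturbing only the scalar parameter $h_{j i^\star}$, i.e.\ the function $g(\delta)\coloneqq \mathcal{L}(\mathbf{P}+\delta\,\mathbf{e}_{h_{j i^\star}})$, where $\mathbf{e}_{h_{j i^\star}}$ denotes the corresponding coordinate direction in $\mathbbm{R}^D$. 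Each network output $\hat y_{kj}$ depends affinely on $h_{j i^\star}$ with coefficient $\rho(\mathbf{w}_{i^\star}\cdot \mathbf{x}_k)$, so $g$ is a quadratic polynomial in $\delta$; in particular, no non-differentiability appears along this slice, because the ReLU-like kinks are confined to the input-weight subspace.

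The next step is to read off the leading coefficient. Direct inspection of \cref{empirical squared loss} gives
\begin{equation*}
g''(0) \;=\; \frac{\partial^2 \mathcal{L}(\mathbf{P})}{\partial h_{j i^\star}^{\,2}} \;=\; \sum_{k\in K}\rho(\mathbf{w}_{i^\star}\cdot \mathbf{x}_k)^2 \;\geq\; \rho(\mathbf{w}_{i^\star}\cdot \mathbf{x}_{k^\star})^2 \;>\; 0,
\end{equation*}
so $g$ is strictly convex. A strictly convex univariate quadratic admits no local maximum: if $g'(0)\neq 0$, then one of the two signs of $\delta$ strictly increases $g$ for every small $|\delta|>0$, and if $g'(0)=0$ the exact Taylor expansion $g(\delta)=g(0)+\tfrac{1}{2}g''(0)\delta^2$ yields strict increase on both sides. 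Pulling back to $\mathbbm{R}^D$, every neighborhood of $\mathbf{P}$ contains a point $\mathbf{P}+\delta\,\mathbf{e}_{h_{j i^\star}}$ with $\mathcal{L}(\mathbf{P}+\delta\,\mathbf{e}_{h_{j i^\star}})>\mathcal{L}(\mathbf{P})$, so $\mathbf{P}$ cannot be a local maximum.

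I do not anticipate a genuine obstacle: the argument is essentially a one-line Hessian sign check, made possible by the fact that non-differentiability of $\mathcal{L}$ lives entirely in the input-weight subspace, so restricting to an output-weight coordinate bypasses it. The only care required is to notice that the quadratic structure of $g$ is exact along the whole real line (not merely second-order at $0$), which follows immediately from $\hat y_{kj}$ being globally affine in $h_{j i^\star}$ with all other parameters frozen. It is worth flagging that this approach would fail if the hypothesis were weakened to $\mathbf{w}_{i^\star}\neq \mathbf{0}$, since all $\rho(\mathbf{w}_{i^\star}\cdot \mathbf{x}_k)$ could then vanish simultaneously, collapsing $g''(0)$ to zero; the precise condition $\rho(\mathbf{w}_{i^\star}\cdot \mathbf{x}_{k^\star})\neq 0$ is exactly what makes the argument go through.
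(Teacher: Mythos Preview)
Your proposal is correct and essentially identical to the paper's proof: both restrict to the one-dimensional slice along a single output weight $h_{ji^\star}$, observe that the loss is smooth (in fact quadratic) in that coordinate, and use the strict positivity of $\partial^2\mathcal{L}/\partial h_{ji^\star}^2=\sum_{k}\rho(\mathbf{w}_{i^\star}\cdot\mathbf{x}_k)^2$ together with the first-derivative case split to produce a strictly loss-increasing path. Your phrasing in terms of strict convexity of the univariate quadratic is a slightly cleaner packaging of the same computation.
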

\begin{proof}
    We need to show that when the parameters $\mathbf{P}$ satisfies the sufficient condition in \cref{thm: local maxima}, there exist perturbations leading to strict loss increase.
Let us focus on one of the hidden neurons (with the subscript of ${i}$) that has $\rho\left(\mathbf{w}_{{i}}\cdot \mathbf{x}_k\right)\neq0$ for some $k\in K$.
Let us also specify an output neuron indexed with $j$.
We will construct a strictly loss-increasing path only by modifying $h_{j i}$.

{{Here we could simply use the derivatives that we computed before and show it's positive:}
\begin{align*}
    \frac{\partial\mathcal{L}(\overline{P})}{\partial h_{ji}}
    &=\sum_{k\in K}e_{kj}\rho(\mathbf{w}_i\cdot\mathbf{x}_k),\\
    \frac{\partial^2\mathcal{L}(\overline{P})}{\partial h_{ji}^2}
    &=\sum_{k\in K}\rho(\mathbf{w}_i\cdot\mathbf{x}_k)^2.
\end{align*}
Hence, we see that either the first derivative is non-zero, in which case there is a loss-increasing direction since the loss is continuously differentiable in $h_{ji}$, or the first derivative is null and the second derivative is strictly positive, in which case we also get a loss-increasing path.
}

\end{proof}

\begin{remark}
\cite{liubo_similar_paper} proved the absence of differentiable local maxima for $\Vert J \Vert =1$, and \cite{Gauss_newton} precluded the existence of differentiable strict local maxima for networks with piecewise linear activation functions.
Here, we provide a more generic conclusion,  that if $\mathbf{P}$ is a local maximum, no matter whether differentiable or not, then all input weight vectors are not ``activating" any of the inputs in the dataset, meaning $\hat{\mathbf{y}}_k(\mathbf{P})=0$ for all $k\in K$.
\end{remark}

\section{Intuition of \cref{def: escape neurons} and \cref{thm: local minimum general output dimension}}
\label{app: intuition of escape neuron}
{When the output dimension is one, we can understand why the escape neurons, as defined, can lead to loss-decreasing path.
Let us denote the only output neuron with the subscript of $j_0$.
For the escape neuron, we have that for some tangential direction $\mathbf{v}_i$ fixed, $\frac{\partial\mathcal{L}(\mathbf{\overline{P}})}{\partial s_i}= \overline{h}_{j_0i}\mathbf{d}_{j_0i}^{\mathbf{v}_i}\cdot \mathbf{v}_i = 0$ and $\mathbf{d}_{j_0i}^{\mathbf{v}_i}\cdot \mathbf{v}_i\neq0$. 
This must mean that $\overline{h}_{j_0i}=0$.
If we slightly perturb $\overline{h}_{j_0i}$ such that it has a different sign than $\mathbf{d}_{j_0i}^{\mathbf{v}_i}\cdot \mathbf{v}_i$, then the tangential derivative $\frac{\partial\mathcal{L}(\mathbf{\overline{P}})}{\partial s_i}$ after the perturbation is negative, indicating a loss-decreasing direction.}

\section{Proof of \cref{thm: local minimum general output dimension}}
For brevity, we use $\left(a_l\right)_{l\in L}$ to denote a vector that is composed by stacking together the $a_l$'s with $l\in L$.
\subsection{Proof of the Sufficiency}
\label{app: proof of local minimum theorem sufficiency}

{
We start by introducing the ODDs of the loss.
Fix a unit vector $\Delta\in\mathbbm{R}^D$ in the parameter space, along which we investigate the derivative.
By doing so, we also fix the vectors $\mathbf{u}_i,\mathbf{v}_i$ for all $i\in I$, since the decomposition $\Delta_{\mathbf{w}_i}=\Delta_{r_i}\mathbf{u}_i+\Delta_{s_i}\mathbf{v}_i$ is unique, i.e. there exists a unique unit tangential vector $\mathbf{v}_i$ orthogonal with $\mathbf{u}_i=\frac{\mathbf{w}_i}{\Vert \mathbf{w}_i\Vert}$ and unique $r_i,s_i\geq 0$.
{By convention, if $\mathbf{w}_i=0$, we set $\mathbf{u}_i=0$ and $r_i=0$.}
See \cref{fig: input weight perturbation decomposition} for a visualization of
$\mathbf{u}_i$, $\mathbf{v}_i$.

The ODD of the loss in direction $\Delta$ is then defined by
\begin{align}
    \partial_\Delta\mathcal{L}(\mathbf{P})
    :=\lim_{\epsilon\to 0+}\frac{\mathcal{L}(\mathbf{P}+\epsilon\Delta)-\mathcal{L}(\mathbf{P})}{\epsilon}.
    \label{eq: first-order one-sided derivative}
\end{align}
Note that the limit always exists, since, as we saw in previous sections, the partial derivatives $\frac{\mathcal{L}(\mathbf{P})}{\partial s_i},\frac{\mathcal{L}(\mathbf{P})}{\partial r_i}$ are always well defined.
However, we stress that because of the non-differentiability, we can have that $\partial_\Delta\mathcal{L}(\mathbf{P})\neq -\partial_{-\Delta}\mathcal{L}(\mathbf{P})$.

Consider a stationary point $\overline{\mathbf{P}}$ of $\mathcal{L}$ with no escape neuron.
Also, consider a direction $\Delta$ such that $\partial_\Delta\mathcal{L}(\overline{\mathbf{P}})=0$.
\emph{Directions with $\partial_\Delta\mathcal{L}(\overline{\mathbf{P}})>0$ (which is possible due to the positive tangential slope at input weights) are guaranteed to increase the loss locally.}
Our strategy is to show that the ODDs of higher orders are non-negative. For example, at order $2$, this means that
\begin{align}
    \partial_\Delta^2\mathcal{L}(\overline{\mathbf{P}})
    :=\lim_{\epsilon\to 0+}\frac{\partial_\Delta\mathcal{L}(\overline{\mathbf{P}}+\epsilon\Delta)-\partial_\Delta\mathcal{L}(\overline{\mathbf{P}})}{\epsilon}\geq 0.
    \label{eq: second-order one-sided derivative}
\end{align}
{As a side note, the loss of ReLU-like networks can be understood as numerous patches of linear network loss pieced together. Thus, moving along a fixed direction $\Delta$ from a given point locally exploits the loss of a linear network, which is a polynomial with respect to all the parameters.
In other words, the loss landscape is piecewise differentiable ($C^\infty$).
As a result, ODDs of any orders along $\Delta$ (whose computations are similar to \cref{eq: first-order one-sided derivative,eq: second-order one-sided derivative}) are always definable.}

Then, we need to investigate the directions $\Delta$ such that $\partial_\Delta^2\mathcal{L}(\overline{\mathbf{P}})=0$, since the loss could increase, decrease or remain constant in those directions with higher orders.
We will see that at order $3$, these directions also yield $\partial_\Delta^3\mathcal{L}(\overline{\mathbf{P}})=0$, and at order $4$, necessarily, $\partial_\Delta^4\mathcal{L}(\overline{\mathbf{P}})\geq 0$, and since all higher order derivatives are null\footnote{{All ODDs of the fourth order are constants, as shown in \cref{app: fourth-order terms}.}}, this yields the claim.

These will be achieved by investigating the Taylor expansion of the loss in the direction $\Delta$, which reads for small $\epsilon>0$ as
\begin{align*}
    \mathcal{L}(\overline{\mathbf{P}}+\epsilon\Delta)
    &=\mathcal{L}(\overline{\mathbf{P}})+\epsilon\partial_\Delta\mathcal{L}(\overline{\mathbf{P}})+\frac{1}{2!}\epsilon^2\partial_\Delta^2\mathcal{L}(\overline{\mathbf{P}})+\frac{1}{3!}\epsilon^3\partial_\Delta^3\mathcal{L}(\overline{\mathbf{P}})+\frac{1}{4!}\epsilon^4\partial_\Delta^4\mathcal{L}(\overline{\mathbf{P}}).
\end{align*}
}

\begin{figure}
    \centering    \includegraphics[width=0.8\textwidth]{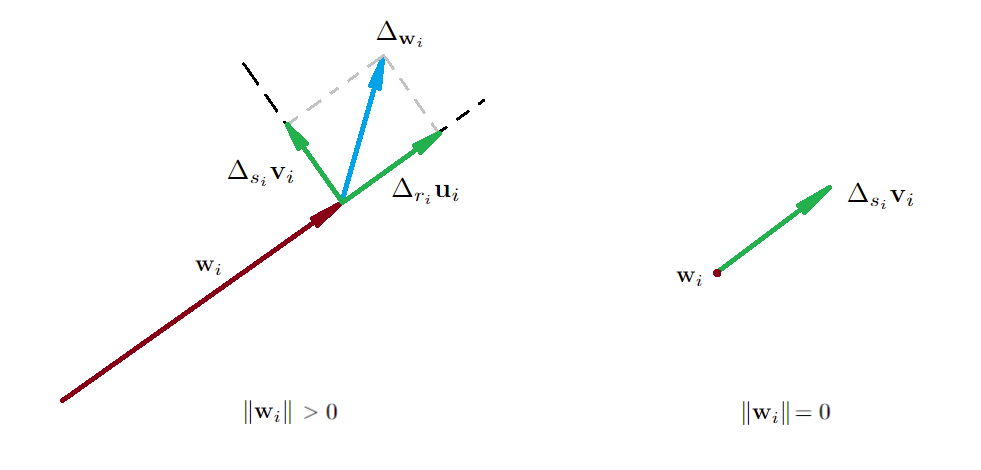}
    \caption{ Breakdown of perturbation applied on input weights. Left: Perturbation on ${\mathbf{w}}_i$ with $\Vert{\mathbf{w}}_i \Vert >0 $ can be decomposed into the radial direction ${\mathbf{u}}_i$ and the tangential direction $\mathbf{v}_i$. Right: Perturbation applied on a zero input weight can only be decomposed into one direction $\mathbf{v}_i$.}
    \label{fig: input weight perturbation decomposition}
\end{figure}

\subsubsection{Second-order Terms}
We organize the Hessian matrix at $\overline{\mathbf{P}}$ correspondingly as:
 \vspace{30pt}
 \NiceMatrixOptions
  {
    custom-line = 
     {
       letter = I , 
       command = hdashedline , 
       tikz = dashed ,
       width = \pgflinewidth
     }
  }
\begin{equation}
    H_\mathcal{L}=
    \begin{bNiceArray}{cIcIc}[]
        \dfrac{\partial^2 \mathcal{L}(\overline{\mathbf{P}})}{\partial h_{j_1i_1}\partial h_{j_2i_2}}
        & \dfrac{\partial^2 \mathcal{L}(\overline{\mathbf{P}})}{\partial r_{i_1}\partial h_{j_2i_2}}
        &
        \dfrac{\partial^2 \mathcal{L}(\overline{\mathbf{P}})}{\partial s_{i_1}\partial h_{j_2i_2}}
        \\[1em] \hdashedline\noalign{\vskip 1ex}  
        \dfrac{\partial^2 \mathcal{L}(\overline{\mathbf{P}})}{\partial h_{j_1i_1}\partial r_{i_2}} & \dfrac{\partial^2 \mathcal{L}(\overline{\mathbf{P}})}{\partial r_{i_1}\partial r_{i_2}}
       & \dfrac{\partial^2 \mathcal{L}(\overline{\mathbf{P}})}{\partial s_{i_1}\partial r_{i_2}} 
       \\[1em]  \hdashedline\noalign{\vskip 1ex} 
    \dfrac{\partial^2 \mathcal{L}(\overline{\mathbf{P}})}{\partial h_{j_1i_1}\partial s_{i_2}} & \dfrac{\partial^2 \mathcal{L}(\overline{\mathbf{P}})}{\partial r_{i_1}\partial s_{i_2}}
       & \dfrac{\partial^2 \mathcal{L}(\overline{\mathbf{P}})}{\partial s_{i_1}\partial s_{i_2}}  
\CodeAfter
  \OverBrace[shorten,yshift=2mm]{1-1}{1-1}{\vert I_h 
 \vert \text{ columns}}
 \OverBrace[shorten,yshift=2mm]{1-2}{1-2}{\vert I_r 
 \vert \text{ columns}}
 \OverBrace[shorten,yshift=2mm]{1-3}{1-3}{\vert I_s 
 \vert \text{ columns}}
 \SubMatrix.{1-3}{1-3}\}[ right-xshift=0.5em,name=A]
\tikz \node [right] at (A-right.east) {$\vert I_h 
 \vert$  rows};
 \SubMatrix.{2-3}{2-3}\}[ right-xshift=0.5em,name=B]
\tikz \node [right] at (B-right.east) {$\vert I_r 
 \vert$  rows};
 \SubMatrix.{3-3}{3-3}\}[ right-xshift=0.5em,name=C]
\tikz \node [right] at (C-right.east) {$\vert I_s 
 \vert$  rows};
    \end{bNiceArray} 
\label{eq: Hessian matrix}
\end{equation}

{
\paragraph{Computing the second order partial derivatives.}
Recall the first order partial derivatives
\begin{align*}
    \frac{\partial\mathcal{L}(\mathbf{P})}{\partial h_{ji}}
    &=\sum_{k\in K}e_{kj}\rho(\mathbf{w}_{i}\cdot \mathbf{x}_k),\\
    \frac{\partial\mathcal{L}(\mathbf{P})}{r_i}
    &=\sum_{j\in J}h_{ji}\sum_{k\in K}e_{kj}\rho(\mathbf{u}_i\cdot\mathbf{x}_k),\\
    \frac{\partial\mathcal{L}(\mathbf{P})}{\partial s_i}
    &=\sum_{j\in J}h_{ji}\sum_{k\in K}e_{kj}\widetilde{\rho}_{\mathbf{w}_i\cdot\mathbf{x}_k}(\mathbf{v}_i\cdot\mathbf{x}_k).
\end{align*}
We now compute the second-order derivatives of the loss.
Recall that these derivatives are defined for fixed radial and tangential directions, that is, the vectors $\mathbf{u}_i,\mathbf{v}_i$, $i\in I$ are fixed.
We have

    \begin{align*}
    \frac{\partial^2 \mathcal{L}(\mathbf{P})}{\partial h_{j_1i_1}\partial h_{j_2i_2}}
    &=\sum_{k\in K}\frac{\partial e_{kj_1}}{h_{j_2i_2}}\rho(\mathbf{w}_i\cdot \mathbf{x}_k)\\
    &=\sum_{k\in K}\mathbbm{1}_{\{j_1=j_2\}}\rho(\mathbf{w}_{i_1}\cdot\mathbf{x}_k)\rho(\mathbf{w}_{i_2}\cdot\mathbf{x}_k),\\
    \frac{\partial^2 \mathcal{L}(\mathbf{P})}{\partial r_{i_1}\partial h_{ji_2}}
    &= \frac{\partial}{\partial h_{ji_2}}\sum_{j'\in J}\sum_{k\in K}e_{kj'}h_{j'i_1}\rho(\mathbf{u}_{i_1}\cdot\mathbf{x}_k)\\
    &= \sum_{k\in K}\left(e_{kj}\rho(\mathbf{u}_{i_1}\cdot\mathbf{x}_k)\mathbbm{1}_{\{i_1=i_2\}} + h_{ji_1}\rho(\mathbf{u}_{i_1}\cdot\mathbf{x}_k)\rho(\mathbf{w}_{i_2}\cdot\mathbf{x}_k)\right),\\
    \frac{\partial^2 \mathcal{L}(\mathbf{P})}{\partial s_{i_1}\partial h_{ji_2}}
    &=\frac{\partial}{\partial h_{ji_2}}h_{ji_1}\sum_{k\in K}e_{kj}\widetilde{\rho}_{\mathbf{w}_{i_1}\cdot\mathbf{x}_k}(\mathbf{v}_{i_1}\cdot \mathbf{x}_k)\\
    &=\sum_{k\in K}e_{kj}\widetilde{\rho}_{\mathbf{w}_{i_1}\cdot\mathbf{x}_k}(\mathbf{v}_{i_1}\cdot \mathbf{x}_k)\mathbbm{1}_{\{i_1=i_2\}}+h_{ji_1}\sum_{k\in K}\widetilde{\rho}_{\mathbf{w}_{i_1}\cdot\mathbf{x}_k}(\mathbf{v}_{i_1}\cdot \mathbf{x}_k)\rho(\mathbf{w}_{i_2}\cdot\mathbf{x}_k),\\
    \frac{\partial^2 \mathcal{L}(\mathbf{P})}{\partial r_{i_1}\partial r_{i_2}}
    &=\frac{\partial}{\partial r_2}\sum_{j\in J}\sum_{k\in K}e_{kj}h_{ji_1}\rho(\mathbf{u}_{i_1}\cdot\mathbf{x}_k)\\
    &=\sum_{j\in J}\sum_{k\in K}h_{ji_1}h_{ji_2}\rho(\mathbf{u}_{i_1}\cdot\mathbf{x}_k)\rho(\mathbf{u}_{i_2}\cdot\mathbf{x}_k),\\
    \frac{\partial^2 \mathcal{L}(\mathbf{P})}{\partial s_{i_1}\partial r_{i_2}}
    &=\frac{\partial}{\partial r_{i_2}}\sum_{j\in J}\sum_{k\in K}e_{kj}h_{ji_1}\widetilde{\rho}_{\mathbf{w}_{i_1}\cdot\mathbf{x}_k}(\mathbf{v}_{i_1}\cdot \mathbf{x}_k)\\
    &=\sum_{j\in J}\sum_{k\in K}h_{ji_1}h_{ji_2}\widetilde{\rho}_{\mathbf{w}_{i_1}\cdot\mathbf{x}_k}(\mathbf{v}_{i_1}\cdot \mathbf{x}_k)\rho(\mathbf{u}_{i_2}\cdot\mathbf{x}_k),\\
    \frac{\partial^2 \mathcal{L}(\mathbf{P})}{\partial s_{i_1}\partial s_{i_2}}
    &=\frac{\partial}{\partial s_{i_2}}\sum_{j\in J}\sum_{k\in K}e_{kj}h_{ji_1}\widetilde{\rho}_{\mathbf{w}_{i_1}\cdot\mathbf{x}_k}(\mathbf{v}_{i_1}\cdot \mathbf{x}_k)\\
    &=\sum_{j\in J}\sum_{k\in K}h_{ji_1}h_{ji_2}\widetilde{\rho}_{\mathbf{w}_{i_1}\cdot\mathbf{x}_k}(\mathbf{v}_{i_1}\cdot \mathbf{x}_k)\widetilde{\rho}_{\mathbf{w}_{i_2}\cdot\mathbf{x}_k}(\mathbf{v}_{i_2}\cdot \mathbf{x}_k)
\end{align*}

\paragraph{Second order partial derivatives at the stationary point.}
When evaluated at the stationary point $\overline{\mathbf{P}}$, two of these derivatives simplify.
Firstly, we note that, if $\Vert\mathbf{w}_i\Vert\neq 0$, then $\sum_{k\in K}(e_{kj}\rho(\mathbf{u}_{i_1}\cdot\mathbf{x}_k)\mathbbm{1}_{\{i_1=i_2\}}=\frac{1}{\Vert \mathbf{w}_i\Vert}\frac{\partial\mathcal{L}(\overline{\mathbf{P}})}{\partial h_{ji_1}}=0$ by stationarity.
If $\Vert\mathbf{w}_i\Vert=0$, $\Vert\mathbf{u_i}\Vert=0$ by convention so the sum is null as well.
In any case, we have that
\begin{align*}
    \frac{\partial^2 \mathcal{L}(\overline{\mathbf{P}})}{\partial r_{i_1}\partial h_{ji_2}}
    &=0+\sum_{k\in K}h_{ji_1}\rho(\mathbf{u}_{i_1}\cdot\mathbf{x}_k)\rho(\mathbf{w}_{i_2}\cdot\mathbf{x}_k).
\end{align*}
Similarly, since we choose the direction $\Delta$ such that $\partial_\Delta\mathcal{L}(\overline{\mathbf{P}})=0$, we must have that $\frac{\partial\mathcal{L}(\overline{\mathbf{P}})}{\partial s_i}=0$. Since there is no escape neuron, necessarily, $\sum_{k\in K}e_{kj}\widetilde{\rho}_{\mathbf{w}_i\cdot\mathbf{x}_k}(\mathbf{v}_i\cdot\mathbf{x}_k)=\mathbf{d}_{ji}^{\mathbf{v}_i}\cdot\mathbf{v_i}=0$.
This implies that
\begin{align}
    \frac{\partial^2 \mathcal{L}(\mathbf{P})}{\partial s_{i_1}\partial h_{ji_2}}
    &=0+h_{ji_1}\sum_{k\in K}\widetilde{\rho}_{\mathbf{w}_{i_1}\cdot\mathbf{x}_k}(\mathbf{v}_{i_1}\cdot \mathbf{x}_k)\rho(\mathbf{w}_{i_2}\cdot\mathbf{x}_k).
    \label{eq: simplified without escape neuron}
\end{align}

\paragraph{The Hessian matrix is positive semidefinite.}
Define the following three $|K|$-dimensional vectors: for $i\in I,j\in J$, let
\begin{align}\label{eq:V_h}
    &\mathbf{V}_{h_{ji}}:=\left(\rho(\mathbf{w}_i\cdot\mathbf{x}_k)\right)_{k=1,\ldots, |K|},\\
    &\mathbf{V}_{r_i}^j:=\left(h_{ji}\rho(\mathbf{u}_i\cdot\mathbf{x}_k)\right)_{k=1,\ldots, |K|},\\
    &\mathbf{V}_{s_i}^{j}:=\left(h_{ji}\widetilde{\rho}_{\mathbf{w}_i\cdot\mathbf{x}_k}(\mathbf{v}_i\cdot \mathbf{x}_k)\right)_{k=1,\ldots,|K|}.
\end{align}
Then, we assemble the vectors and for $j'\in J$, we define the matrix
\begin{align*}
    V_{j'}:=\left(\left(\mathbbm{1}_{\{j=j'\}}\mathbf{V}_{h_{ji}}\right)_{(j,i)\in I_h},\left(\mathbf{V}_{r_i}^{j'}\right)_{i\in I_r},\left(\mathbf{V}^{j'}_{s_i}\right)_{i\in I_s}\right)\in\mathbbm{R}^{|K|\times D'},
\end{align*}
where $\mathbbm{1}_{\{j=j'\}}$ is multiplied to each component of $\mathbf{V}_{h_{ji}}$.
The vectors are disposed such that each is a column of $V^{j'}$.
We thus see that the directional Hessian \cref{eq: Hessian matrix} can be written as
\begin{align}
    \mathcal{H}_{\mathcal{L}}
    =\sum_{j'\in J}V_{j'}^{\intercal}V_{j'}.
    \label{eq: sum of gram matrix}
\end{align}
Once again, like the vectors $\mathbf{u}_i,\mathbf{v}_i$ for $i\in I$, the Hessian matrix implicitly depends on the direction $\Delta$.
Hence, for every fixed unit vector $\Delta\in\mathbbm{R}^D$, the Hessian matrix $\mathcal{H}_\mathcal{L}$ is a sum of positive semidefinite Gram matrices, and as such, is positive semidefinite.

We deduce that for all unit vector $\Delta\in\mathbbm{R}^D$, it holds that
\begin{align*}
    \partial_\Delta^2\mathcal{L}(\overline{\mathbf{P}})
    &=\Delta^\intercal\mathcal{H}_\mathcal{L}\Delta\geq 0,
\end{align*}
as claimed.
}

\subsubsection{Third-order Terms}
\label{app: third-order terms}
{
In this section, we continue our reasoning and assume that $\overline{\mathbf{P}}$ is a stationary point with no escape neuron and $\Delta\in\mathbbm{R}^D$ is a fixed unitary vector such that $\partial_\Delta^2\mathcal{L}(\overline{\mathbf{P}})=\Delta^\intercal\mathcal{H}_\mathcal{L}\Delta=0$.
In particular, since $\mathcal{H}_\mathcal{L}=\sum_{j\ni J}V_j^\intercal V_j$ in this case, it holds that $V_j\Delta\in\mathbbm{R}^{\vert K\vert}$ is a zero vector for all $j\in J$, where $V_j$ is defined below \cref{eq:V_h}.
We show below that this entails $\partial_\Delta^3\mathcal{L}(\overline{\mathbf{P}})=0$.
}

From the formulae for the second-order terms, we can see that there can only be six types of non-zero third-order derivatives, namely,
$\frac{\partial^3\mathcal{L}(\overline{\mathbf{P}})}
{\partial h_{ji_1} \partial r_{i_1} \partial h_{ji_2}}$,
$\frac{\partial^3\mathcal{L}(\overline{\mathbf{P}})}
{\partial h_{ji_1} \partial r_{i_1} \partial r_{i_2}}$,
$\frac{\partial^3\mathcal{L}(\overline{\mathbf{P}})}
{\partial h_{ji_1} \partial r_{i_1} \partial s_{i_2}}$,
$\frac{\partial^3\mathcal{L}(\overline{\mathbf{P}})}
{\partial h_{ji_1} \partial s_{i_1} \partial h_{ji_2}}$,
$\frac{\partial^3\mathcal{L}(\overline{\mathbf{P}})}
{\partial h_{ji_1} \partial s_{i_1} \partial r_{i_2}}$, and
$\frac{\partial^3\mathcal{L}(\overline{\mathbf{P}})}
{\partial h_{ji_1} \partial s_{i_1} \partial s_{i_2}}$.

{

Recall the vectors $\mathbf{V}^{j'}_{h_{ji}},\mathbf{V}^{j'}_{r_{i}},\mathbf{V}^{j'}_{s_{i}}$ defined in \cref{eq:V_h}, and defined the following vectors
\begin{align*}
    \mathbf{V}_{hr_i}
    &= \left(\rho(\mathbf{u}_i\cdot \mathbf{x}_k)
    \right)_{k\in\{1,2,\cdots,K\}},\\
    \mathbf{V}_{hs_i}
    &= \left(\widetilde{\rho}_{\mathbf{w}_i\cdot\mathbf{x}_k}(\mathbf{v}_i\cdot\mathbf{x}_k)
    \right)_{k\in\{1,2,\cdots,K\}},
\end{align*}
The third order partial derivatives can be conveniently expressed using them. For example, we compute the first one
\begin{align*}
    \frac{\partial^3\mathcal{L}(\overline{\mathbf{P}})}{\partial h_{ji_1} \partial r_{i_1} \partial h_{ji_2}}
    &=\frac{\partial}{\partial h_{ji_2}}\sum_{k\in K}\left(e_{kj}\rho(\mathbf{u}_{i_1}\cdot\mathbf{x}_k) + h_{ji_1}\rho(\mathbf{u}_{i_1}\cdot\mathbf{x}_k)\rho(\mathbf{w}_{i_1}\cdot\mathbf{x}_k)\right)\\
    &=\left(1+\mathbbm{1}_{\{i_1=i_2\}}\right)\sum_{k\in K}\rho(\mathbf{u}_{i_1}\cdot\mathbf{x}_k)\rho(\mathbf{w}_{i_2}\cdot\mathbf{x}_k)\\
   &=\left(1+\mathbbm{1}_{\{i_1=i_2\}}\right)\mathbf{V}_{hr_{i_1}}\cdot\mathbf{V}_{h_{ji_2}}.
\end{align*}
The other derivatives follow similar easy calculations that are left to the reader, yielding the following results:
\begin{align*}
    \frac{\partial^3\mathcal{L}(\overline{\mathbf{P}})}{\partial h_{ji_1} \partial r_{i_1} \partial r_{i_2}}
    &=\left(1+\mathbbm{1}_{\{i_1=i_2\}}\right)\mathbf{V}_{hr_{i_1}}\cdot\mathbf{V}_{r_{i_2}}^{j}\\
    \frac{\partial^3\mathcal{L}(\overline{\mathbf{P}})}{\partial h_{ji_1} \partial r_{i_1} \partial s_{i_2}}
    &=\left(1+\mathbbm{1}_{\{i_1=i_2\}}\right)\mathbf{V}_{hr_{i_1}}\cdot\mathbf{V}_{s_{i_2}}^{j}\\
    \frac{\partial^3\mathcal{L}(\overline{\mathbf{P}})}{\partial h_{ji_1} \partial s_{i_1} \partial h_{ji_2}}
    &=\left(1+\mathbbm{1}_{\{i_1=i_2\}}\right)\mathbf{V}_{hs_{i_1}}\cdot\mathbf{V}_{h_{ji_2}}\\
    \frac{\partial^3\mathcal{L}(\overline{\mathbf{P}})}{\partial h_{ji_1} \partial s_{i_1} \partial r_{i_2}}
    &=\left(1+\mathbbm{1}_{\{i_1=i_2\}}\right)\mathbf{V}_{hs_{i_1}}\cdot\mathbf{V}_{r_{i_2}}^{j}\\
    \frac{\partial^3\mathcal{L}(\overline{\mathbf{P}})}{\partial h_{ji_1} \partial s_{i_1} \partial s_{i_2}}
    &=\left(1+\mathbbm{1}_{\{i_1=i_2\}}\right)\mathbf{V}_{hs_{i_1}}\cdot\mathbf{V}_{s_{i_2}}^{j}.
\end{align*}

In the third-order ODD of the loss at $\overline{\mathbf{P}}$ in direction $\Delta$, each of the above derivatives appear.
More specifically, if $i_1\neq i_2$, then any of them is counted $3!=6$ times (the number of orderings that yield the same derivative) and if $i_1=i_2$, then it is counted $3$ times.
We thus see that
\begin{align*}
    \partial^3_\Delta\mathcal{L}(\overline{\mathbf{P}})
    &=\sum_{j\in J}\bigg(6\sum_{i_1\neq i_2\in I}\Delta_{h_{ji_1}}\Delta_{r_{i_1}}\mathbf{V}_{hr_{i_1}}\left(\Delta_{h_{ji_2}}V_{h_{ji_2}}+\Delta_{r_{i_2}}V_{r_{i_2}}+\Delta_{s_{i_2}}V_{s_{i_2}}\right)\\
    &\hspace{2cm}+ 2\times 3\sum_{i_1\in I}\Delta_{h_{ji_1}}\Delta_{r_{i_1}}\mathbf{V}_{hr_{i_1}}\left(\Delta_{h_{ji_1}}V_{h_{ji_1}}+\Delta_{r_{i_1}}V_{r_{i_1}}+\Delta_{s_{i_1}}V_{s_{i_1}}\right)\bigg)\\
    &\hspace{0.5cm}+\sum_{j\in J}\bigg(6\sum_{i_1\neq i_2\in I}\Delta_{h_{ji_1}}\Delta_{s_{i_1}}\mathbf{V}_{hs_{i_1}}\left(\Delta_{h_{ji_2}}V_{h_{ji_2}}+\Delta_{r_{i_2}}V_{r_{i_2}}+\Delta_{s_{i_2}}V_{s_{i_2}}\right)\\
    &\hspace{2cm}+ 2\times 3\sum_{i_1\in I}\Delta_{h_{ji_1}}\Delta_{s_{i_1}}\mathbf{V}_{hs_{i_1}}\left(\Delta_{h_{ji_1}}V_{h_{ji_1}}+\Delta_{r_{i_1}}V_{r_{i_1}}+\Delta_{s_{i_1}}V_{s_{i_1}}\right)\bigg)\\
    &=6\sum_{j\in J}\sum_{i_1,i_2\in I}\Delta_{h_{ji_1}}\big(\Delta_{r_{i_1}}\mathbf{V}_{hr_{i_1}}+\Delta_{s_{i_1}}\mathbf{V}_{hs_{i_1}}\big)V_j\Delta
\end{align*}
Recall from the discussion at the start of the subsection that since $\overline{\mathbf{P}}$ is a stationary point with no escape neuron and since $\partial^2_\Delta\mathcal{L}(\overline{\mathbf{P}})=0$, it holds that \begin{equation}
   V_j\Delta=0.
   \label{eq: flat second-order directions}
\end{equation}
This shows that $\partial_\Delta^3\mathcal{L}(\overline{\mathbf{P}})=0$, as claimed.
}

\subsubsection{Fourth-order Terms}
\label{app: fourth-order terms}
{
This is the last step of our argument, where we show that $\partial^4_\Delta\mathcal{L}(\overline{\mathbf{P}})\geq 0$ always holds.

We compute the fourth-order partial derivatives of the loss.
Note from the third order partial derivatives that only three of them can be non null, namely $\frac{\partial^4 \mathcal{L}(\overline{\mathbf{P}})}{\partial h_{ji_1} \partial r_{i_1} \partial h_{ji_2} \partial r_{i_2}}$, $\frac{\partial^4 \mathcal{L}(\overline{\mathbf{P}})}{\partial h_{ji_1} \partial s_{i_1} \partial h_{ji_2} \partial s_{i_2}}$ and $\frac{\partial^4 \mathcal{L}(\overline{\mathbf{P}})}{\partial h_{ji_1} \partial r_{i_1} \partial h_{ji_2} \partial s_{i_2}}$.
It is straightforward – but cumbersome therefore left to the reader – to check the following:
\begin{align*}
    \frac{\partial ^4\mathcal{L}(\overline{\mathbf{P}})}{\partial h_{ji_1} \partial r_{i_1} \partial h_{ji_2} \partial r_{i_2}}
    &=(1+\mathbbm{1}_{\{i_1=i_2\}})\mathbf{V}_{hr_{i_1}}\cdot\mathbf{V}_{hr_{i_2}}\\
    \frac{\partial ^4\mathcal{L}(\overline{\mathbf{P}})}{\partial h_{ji_1} \partial s_{i_1} \partial h_{ji_2} \partial s_{i_2}}
    &=(1+\mathbbm{1}_{\{i_1=i_2\}})\mathbf{V}_{hs_{i_1}}\cdot\mathbf{V}_{hs_{i_2}},\\
    \frac{\partial ^4\mathcal{L}(\overline{\mathbf{P}})}{\partial h_{ji_1} \partial r_{i_1} \partial h_{ji_2} \partial s_{i_2}}
    &=(1+\mathbbm{1}_{\{i_1=i_2\}})\mathbf{V}_{hr_{i_1}}\cdot\mathbf{V}_{hs_{i_2}}.
\end{align*}
As for the third order, depending on whether $i_1$ equals $i_2$, the number of occurrences of these derivatives changes between $4!$ and $4!/2$, and the fourth-order ODD reads
\begin{align*}
    \partial_\Delta^4\mathcal{L}(\overline{\mathbf{P}})
    &=\sum_{j\in J}\left(\sum_{i_1\neq i_2\in I}4!\Delta_{h_{ji_1}}\Delta_{r_{i_1}}\Delta_{h_{ji_2}}\Delta_{r_{i_2}}\mathbf{V}_{hr_{i_1}}\cdot\mathbf{V}_{hr_{i_2}} + \sum_{i_1\in I}2\frac{4!}{2\times 2}\Delta_{h_{ji_1}}^2\Delta_{r_{i_1}}^2\mathbf{V}_{hr_{i_1}}\cdot\mathbf{V}_{hr_{i_1}}\right)\\
    &\hspace{1cm} + \sum_{j\in J}\left(\sum_{i_1\neq i_2\in I}4!\Delta_{h_{ji_1}}\Delta_{s_{i_1}}\Delta_{h_{ji_2}}\Delta_{s_{i_2}}\mathbf{V}_{hs_{i_1}}\cdot\mathbf{V}_{hs_{i_2}} + \sum_{i_1\in I}2\frac{4!}{2\times 2}\Delta_{h_{ji_1}}^2\Delta_{r_{i_1}}^2\mathbf{V}_{hs_{i_1}}\cdot\mathbf{V}_{hs_{i_1}}\right)\\
    &\hspace{1cm} + \sum_{j\in J}\left(\sum_{i_1\neq i_2\in I}4!\Delta_{h_{ji_1}}\Delta_{r_{i_1}}\Delta_{h_{ji_2}}\Delta_{s_{i_2}}\mathbf{V}_{hr_{i_1}}\cdot\mathbf{V}_{hs_{i_2}} + \sum_{i_1\in I}2\frac{4!}{2}\Delta_{h_{ji_1}}^2\Delta_{r_{i_1}}\Delta_{s_{i_1}}\mathbf{V}_{hr_{i_1}}\cdot\mathbf{V}_{hs_{i_1}}\right)\\
    &=\frac{4!}{2}\sum_{j\in J}\bigg(\sum_{i_1,i_2\in I}\Delta_{h_{ji_1}}\Delta_{r_{i_1}}\Delta_{h_{ji_2}}\Delta_{r_{i_2}}\mathbf{V}_{hr_{i_1}}\cdot\mathbf{V}_{hr_{i_2}}+\Delta_{h_{ji_1}}\Delta_{s_{i_1}}\Delta_{h_{ji_2}}\Delta_{s_{i_2}}\mathbf{V}_{hs_{i_1}}\cdot\mathbf{V}_{hs_{i_2}}\\
    &\hspace{4cm}+2\Delta_{h_{ji_1}}\Delta_{r_{i_1}}\Delta_{h_{ji_2}}\Delta_{s_{i_2}}\mathbf{V}_{hr_{i_1}}\cdot\mathbf{V}_{hs_{i_2}}\bigg)\\
    &=\frac{4!}{2}\sum_{j\in J}\left(\sum_{i_1,i_2\in I}(\Delta_{h_{ji_1}}\Delta_{r_{i_1}}\mathbf{V}_{hr_{i_1}}+\Delta_{h_{ji_1}}\Delta_{s_{i_1}}\mathbf{V}_{hs_{i_1}})\cdot(\Delta_{h_{ji_2}}\Delta_{r_{i_2}}\mathbf{V}_{hr_{i_2}}+\Delta_{h_{ji_2}}\Delta_{s_{i_2}}\mathbf{V}_{hs_{i_2}})\right)\\
    &=\frac{4!}{2}\sum_{j\in J}\overline{\mathbf{V}}_j^\intercal \overline{\mathbf{V}}_j,
\end{align*}
where we just introduced the vector $\overline{\mathbf{V}}_j$, defined by
\begin{align*}
    \Big(\sum_{i\in I}\Delta_{h_{ji}}\Delta_{r_i}\mathbf{V}_{hr_i}+\Delta_{h_{ji}}\Delta_{s_i}\mathbf{V}_{hs_i}\Big)_{j\in J}.
\end{align*}
We see from the above that $\partial_\Delta\mathcal{L}(\overline{\mathbf{P}})\geq 0$, as claimed, which concludes the proof of the sufficiency in \cref{thm: local minimum general output dimension}.
}

\begin{remark}
    [about \cref{consequence: loss-decreasing inactively propagated living neurons}]
    \label{remark: loss-decreasing path corollary}
    Note, at a stationary point, if a perturbation direction $\Delta$ does not perturb the parameters of escape neurons, then the above proof also effectively shows that such a perturbation can not strictly decrease the loss, which gives rise to \cref{consequence: loss-decreasing inactively propagated living neurons}.
\end{remark}

\subsection{Proof of the Necessity in \cref{thm: local minimum general output dimension}}
\label{app: proof of 1D local minimum theorem}
The necessity of the condition in \cref{thm: local minimum general output dimension}, which holds in the scalar-output case, will be proved in the following via contradiction.
Namely, if $\overline{\mathbf{P}}$ is a stationary point on the loss landscape with at least one hidden neuron being an escape neuron, then this stationary point cannot be a local minimum.
Let us select one such hidden neuron $i$, associated with $(\overline{h}_{j_0i}$ and $ \overline{\mathbf{w}}_i)$. 
To construct the loss-decreasing path, we perturb $\overline{h}_{j_0i}$ and $\overline{\mathbf{w}}_i$ with $\Delta h_{j_0i}$ and $\Delta \mathbf{w}_i = \Delta s_i \mathbf{v}_i$, respectively; $\mathbf{v}_i$ being one of the tangential directions satisfying and $\frac{\partial{\mathcal{L}(\overline{\mathbf{P}})}}{\partial s_i} = 0$ and $\mathbf{d}_{j_0i}^{\mathbf{v}_i}\cdot \mathbf{v}_i \neq 0$.
We assert that we can design the following perturbation to strictly decrease the loss:
$\Delta h_{ji} = -\operatorname{sgn}(\mathbf{d}_{j_0i}^{\mathbf{v}_i}\cdot \mathbf{v}_i) a$, $\Delta s_i = b$, with $a,b>0$ sufficiently small and satisfying certain conditions, which is discussed below.

To study the loss change after perturbing the weights as described, we resort to Taylor expansion again.
In this case, we only need to look into terms of no higher than the second order.

For the first-order terms, we have:
\begin{equation}
T_1(\overline{\mathbf{P}}, \Delta\mathbf{P}^\prime) = \frac{\partial \mathcal{L}(\overline{\mathbf{P}})}{\partial h_{j_0i}}\Delta h_{j_0i} + \frac{\partial \mathcal{L}(\overline{\mathbf{P}})}{\partial s_i} \Delta s_i.
\end{equation}
Since $\overline{\mathbf{P}}$ is a stationary point, we must have $\frac{\partial \mathcal{L}(\overline{\mathbf{P}})}{\partial h_{j_0i}} = 0$. 
Moreover, we have $\frac{\partial \mathcal{L}(\overline{\mathbf{P}}) }{\partial {s}_i} =0$, as this is how we choose the tangential perturbation direction $\mathbf{v}_i$.
Thus we have $T_1(\overline{\mathbf{P}}, \Delta\mathbf{P}^\prime) = 0$.

Then we calculate the second-order terms. 
Three types of second-order derivatives are involved, $\frac{\partial^2 \mathcal{L}(\overline{\mathbf{P}})}{\partial {h_{j_0i}}^2}$, $\frac{\partial^2 \mathcal{L}(\overline{\mathbf{P}})}{\partial s_i^2}$, and   $\frac{\partial^2 \mathcal{L}(\overline{\mathbf{P}})}{\partial h_{j_0i} \partial s_i}$.
Notice that, in \cref{eq: simplified without escape neuron}, we could simplify the last derivative given that the neuron is not an escape neuron, which we can no longer do here.
This causes the Hessian matrix to be indefinite.

Notice that, we must also have $\overline{h}_{j_0i}= 0$ to fulfill the requirements of an escape neuron in the scalar output case.
Hence, we have $\mathbf{V}_{s_i}^{j_0} = 0$.
Thus, the second-order terms can be derived as:
\begin{subequations}
\begin{align}
T_2(\overline{\mathbf{P}}, \Delta\mathbf{P}^\prime) &= \frac{\partial^2 \mathcal{L}(\overline{\mathbf{P}})}{\partial {h_{j_0i}}^2} \left(\Delta h_{j_0i}\right)^2 + \frac{\partial^2 \mathcal{L}(\overline{\mathbf{P}})}{\partial s_i^2}\left(\Delta s_{i}\right)^2 + 2\frac{\partial^2 \mathcal{L}(\overline{\mathbf{P}})}{\partial h_{j_0i} \partial s_i}\Delta h_{j_0i} \Delta s_i\\
&= \mathbf{V}_{h_{j_0i}}\cdot \mathbf{V}_{h_{j_0i}} \left(\Delta h_{j_0i}\right)^2
+ \underbrace{\mathbf{V}_{s_i}^{j_0}\cdot \mathbf{V}_{s_i}^{j_0}}_{(=0)} \left(\Delta s_{i}\right)^2
+ 2 \big({\mathbf{d}_{j_0i}^\mathbf{v}\cdot \mathbf{v}_{i}} + \mathbf{V}_{h_{j_0i}}\cdot \underbrace{\mathbf{V}_{s_i}^{j_0}}_{(=0)}\big)\Delta h_{j_0i}\Delta s_{i}\\
&= \Vert\mathbf{V}_{h_{j_0i}}\Vert^2 a^2 + 2\left( \mathbf{d}_{j_0i}^{\mathbf{v}_i}\cdot \mathbf{v}_i \right)\left( -\operatorname{sgn}(\mathbf{d}_{j_0i}^{\mathbf{v}_i}\cdot \mathbf{v}_i) \right)ab\\
&= \Vert\mathbf{V}_{h_{j_0i}}\Vert^2 a^2 - 2\vert \mathbf{d}_{j_0i}^{\mathbf{v}_i}\cdot \mathbf{v}_i \vert ab
\end{align}
\end{subequations}
Note, we have specified that both $a$ and $b$ are positive. 
Thus, if we choose these two numbers such that:
\begin{equation}
b > \frac{\Vert \mathbf{V}_{h_{j_0i}}\Vert ^2}{ 2\vert \mathbf{d}_{j_0i}^{\mathbf{v}_i}\cdot \mathbf{v}_i \vert} a.
\label{eq: ab condition}
\end{equation}
Then we have $T_2(\overline{\mathbf{P}}, \Delta\mathbf{P}^\prime)$ being strictly negative, indicating a strictly loss-decreasing path, which completes the proof.

\begin{remark}[about \cref{corollary: strictness}]
\label{remark: strictness corollary proof}
When we perturb the parameters of the escape neurons as above, we are exploiting second-order loss-decreasing paths, which yields the statement of \cref{corollary: strictness}.
\end{remark}

\subsubsection{Why cannot we prove the necessity for networks with multiple output neurons?}
\label{sec: why not only if for the general case??}

\indent The construction of the loss-decreasing path in this section is based on the observation that the Hessian matrix is no longer a positive semi-definite matrix when there exist escape neurons for $\vert J \vert = 1$ case. 
However, in the case where $\vert J\vert>1$, we cannot draw a similar conclusion.
When the network has multiple output neurons, there is currently no guarantee that escape neurons will introduce negative eigenvalues in the Hessian matrix.
More specifically, 
like the $\vert J\vert =1$ case, 
some $\frac{\partial^2\mathcal{L}(\overline{\mathbf{P}})}{\partial s_{i}\partial h_{ji}}$ term in the Hessian matrix is no longer simply a dot product of two vectors, $\mathbf{V}_{s_{i}}^j\cdot \mathbf{V}_{h_{ji}}$, but is added with a non-zero term of $\mathbf{d}_{ji}^{\mathbf{v}_i}\cdot\mathbf{v}_i$.
Owing to this, we can no longer write the Hessian matrix as a sum of Gram matrices as in \cref{eq: sum of gram matrix}.
Nonetheless, this do not necessarily mean that the Hessian matrix admits negative eigenvalues, which is why the necessity in \cref{thm: local minimum general output dimension} does not hold for general cases.

More concretely,
the Hessian matrix for a multidimensional output network at a stationary point with escape neurons can be denoted as: 
\begin{equation}
H_{\mathcal{L}} = \sum_{j\in J}\underbrace{\big(\Tilde{H}_{j}+ V_j^\intercal V_j\big)}_{\text{defined to be } H_{\mathcal{L}}^j}
\label{eq: no longer sum of gram}
\end{equation}
Compared with \cref{eq: sum of gram matrix}, \cref{eq: no longer sum of gram} has an extra term ($\Tilde{H}_j$) in the summation which contains the terms of $\mathbf{d}_{ji}^{\mathbf{v}_i}\cdot \mathbf{v}_{i}\neq 0$ at the positions held by $\frac{\partial^2 \mathcal{L}(\overline{\mathbf{P}})}{\partial h_{ji}\partial s_{i}}$ or $\frac{\partial^2 \mathcal{L}(\overline{\mathbf{P}})}{\partial s_{i}\partial h_{ji}}$ in the Hessian matrix.
It is easy to see that $H_\mathcal{L}^j$ is not positive semi-definite as long as $\Tilde{H}_j$ is not a zero matrix.

Consequently, if the sufficient condition in \cref{thm: local minimum general output dimension} is not satisfied, then the Hessian matrix constructed based on the perturbation becomes a summation of matrices, some of which are indefinite. 
However, there can be no theoretical guarantee that such a summation cannot lead to a positive semi-definite matrix in the end. 
Thus, even if the sufficient condition in \cref{thm: local minimum general output dimension} is violated, the Hessian matrix may still be positive semi-definite.
\emph{\textcolor{black}{However, despite the possibility of this case, the chance of a summation of indefinite matrices and positive semidefinite matrices resulting in a positive semidefinite matrix is relatively small. 
Even if the Hessian matrix is positive semidefinite, the existence of escape neurons seems to hinder us from claiming anything general regarding the loss change incurred by the third-order terms.\footnote{Note that the fourth-order terms always lead to non-negative loss changes, as shown in \cref{app: fourth-order terms}}
This is because our current discussion regarding the third-order terms (in \cref{app: third-order terms}) relies on the description of the flat directions in the second-order terms being \cref{eq: flat second-order directions}, which is premised on the absence of escape neurons.
Hence, we believe that local minima contradicting the sufficient condition of \cref{thm: local minimum general output dimension} (the so-called type-2 local minima) should be rare if exist.
The rigorous characterization regarding the ``rarity" will be conducted in the future.}} 

\section{Taylor Expansion with an Aligned Coordinate System}
\label{app: intuition for coordinate picking}

Taylor expansion requires the function to be differentiable. 
As a result, the non-differentiability of ReLU-like networks prohibits us from directly invoking Taylor expansion (based on the canonical coordinate system).
We circumvent this problem by choosing a coordinate system whose axes correspond to the radial/tangential directions of the input weights and the output weights.
We clarify this with the following example.

We study a one-hidden-layer ReLU network with 2-dimensional input and scalar output, and there is only one hidden neuron.
The input weight is $\mathbf w_1 = (w_{11}, w_{12})$, and the output weight is ${h}_{j_0 1}$.
There is one training sample, denoted by $\mathbf x_1 = (0.9,0.3)^\intercal$ and $y_1 = 3$.
We fix ${h}_{j_0 1} = 0.5$ and plot the loss as a function of $(w_{11}, w_{12})$ in both panels of \cref{fig:axes comparison}.
The dotted gray line segments in the plots indicate a non-differentiable edge on the loss surface.
We investigate a specific input weight $(0.15,-0.45)$, denoted by the black arrows in \cref{fig:axes comparison}.

\begin{figure}[htbp]
    \centering
    \begin{subfigure}[b]{0.45\textwidth}
        \centering
        \includegraphics[width=\textwidth]{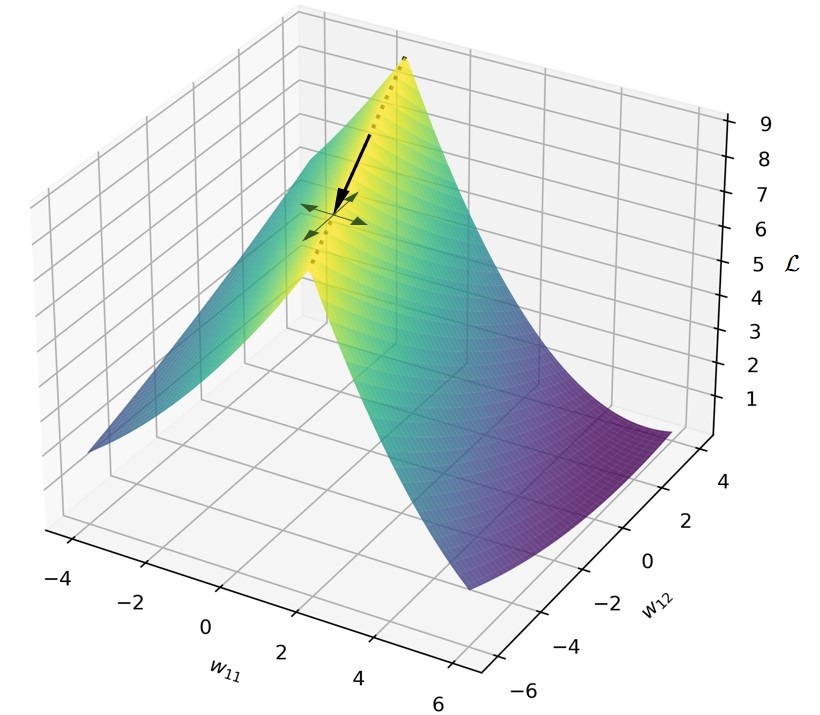} 
        \caption{canonical axes}
        \label{fig:canonical axes}
    \end{subfigure}
    \hfill
    \begin{subfigure}[b]{0.45\textwidth}
        \centering
        \includegraphics[width=\textwidth]{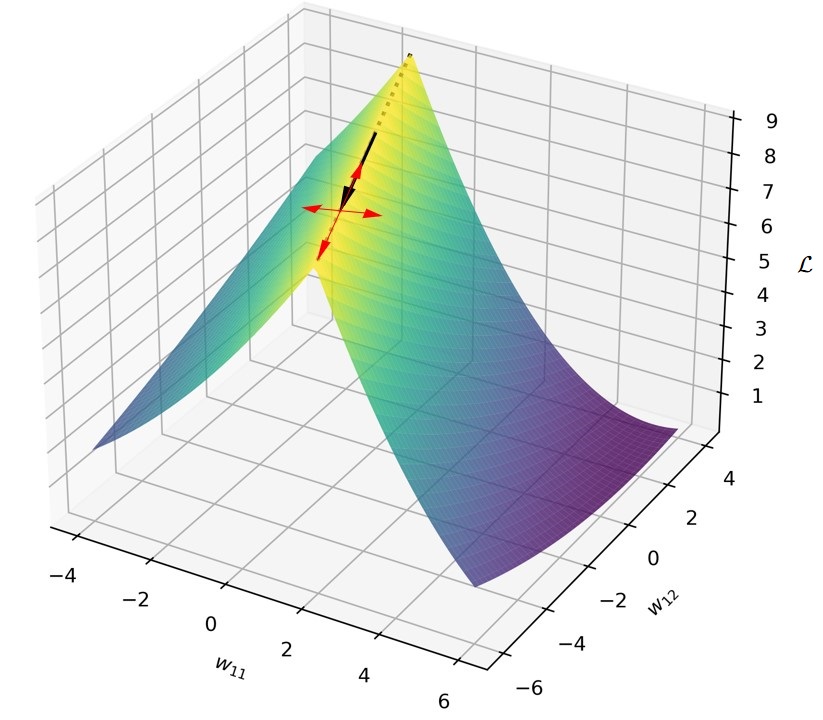} 
        \caption{aligned axes}
        \label{fig:aligned axes}
    \end{subfigure}
    \caption{\textit{Comparison between the canonical axes and the aligned axes}. 
    The curved surface is the loss plotted as a function of the input weight $\mathbf w_1 = (w_{11}, w_{12})$ of the only hidden neuron, which is non-differentiable. The black arrow $(0.15,-0.45)$ denotes an input weight vector that has the same direction as the non-differentiable edge (denoted by the gray dotted line segments). 
    The green arrows in \textbf{(a)} represent the canonical axes, while the red arrows in \textbf{(b)} represent the aligned coordinate axes (composed of the radial directions and tangential directions of $\mathbf w_1$).}
    \label{fig:axes comparison}
\end{figure}

In \cref{fig:canonical axes}, we study how the movement of the input weight changes the loss with the canonical axes (the green arrows).
Each canonical axis corresponds to an input weight component.
More concretely, if we perform Taylor expansion with these axes, we will compute the partial derivatives of $\frac{\partial \mathcal L}{\partial (\pm w_{11})}$, $\frac{\partial \mathcal L}{\partial (\pm w_{12})}$,  $\frac{\partial^2 \mathcal L}{\partial (\pm w_{11})\partial (\pm w_{12})}$, etc. The ``$\pm$" sign here means that we need to compute the ODD along both the positive and negative directions along the axes to account for non-differentiability.
Nonetheless, these partial derivatives do not suffice to describe the function surface since non-differentiability exists within the orthants between the axes.
As a result, the ODDs computed along such canonical axes cannot compose Taylor expansions that accurately characterize the loss surface within the orthants.

In this paper, we study the axes that are aligned with the non-differentiability on the loss surface, which is indicated by the red arrows in \cref{fig:aligned axes}.
Notice that these red arrows are exactly the radial and tangential directions of the input weight $\mathbf{w}_1$.
The loss within any orthant bounded by these aligned axes is always differentiable.
Hence, the partial derivatives with respect to these axes suffice to compose Taylor expansion that accurately describes the function anywhere in any orthant.
Such a method is equivalent to studying differentiable functions but limiting the scope to only one specific orthant.
Specifically, we limit the scope of Taylor expansion to one orthant by taking $\Delta s_i \searrow 0^+$ rather than $\Delta s_i \to 0$ in \cref{eq: djivi}.

Some back-of-the-envelope derivation can reveal that the loss is always continuously differentiable along the radial direction of input weights, which can also be observed in \cref{fig:aligned axes}.
Hence, the partial derivative along the same direction as $\mathbf{w}_1$ suffices to describe the radial derivative, which means we do not need to study the partial derivative along the direction of $-\mathbf{w}_1$.
However, as the loss is not differentiable along the tangential directions, we need to study the partial derivative along both tangential directions of $\mathbf{w}_1$.

\section{Verifying Local Minimality in a Real Case}
\label{app: applying local minimum identification theorem}

The fact that the last long plateau in \cref{fig: small init loss curve} corresponds to a local minimum can be verified with a numerical experiment.
The verification can be conducted by perturbing the parameters at the end of the training with small noises and investigating whether such perturbations yield negative loss change.
In our experiment, we perturb all the parameters by adding noise to all of them.
The noise added to each of the parameters is independently generated from a zero-centered uniform distribution $\mathcal{U}(-\zeta, \zeta)$, where $\zeta>0$ is a small value making sure that the network parameters after perturbation still stays in its neighborhood.
Namely, the loss function restricted on the line segment connecting the parameters before and after the perturbation should be $C^\infty$, meaning the line segment should not stretch across a non-differentiable area in the parameter space.
In our case, we chose $\zeta \approx 1.28\times 10^{-4}$.
We conducted the perturbation for $5000$ times and recorded the loss change incurred by each perturbation, which is illustrated in a frequency histogram in \cref{fig:loss change perturbation histogram}.
One can see that all of the perturbations resulted in positive loss change, agreeing with our theoretical prediction.

\begin{figure}
    \centering
    \includegraphics[width=0.4\textwidth]{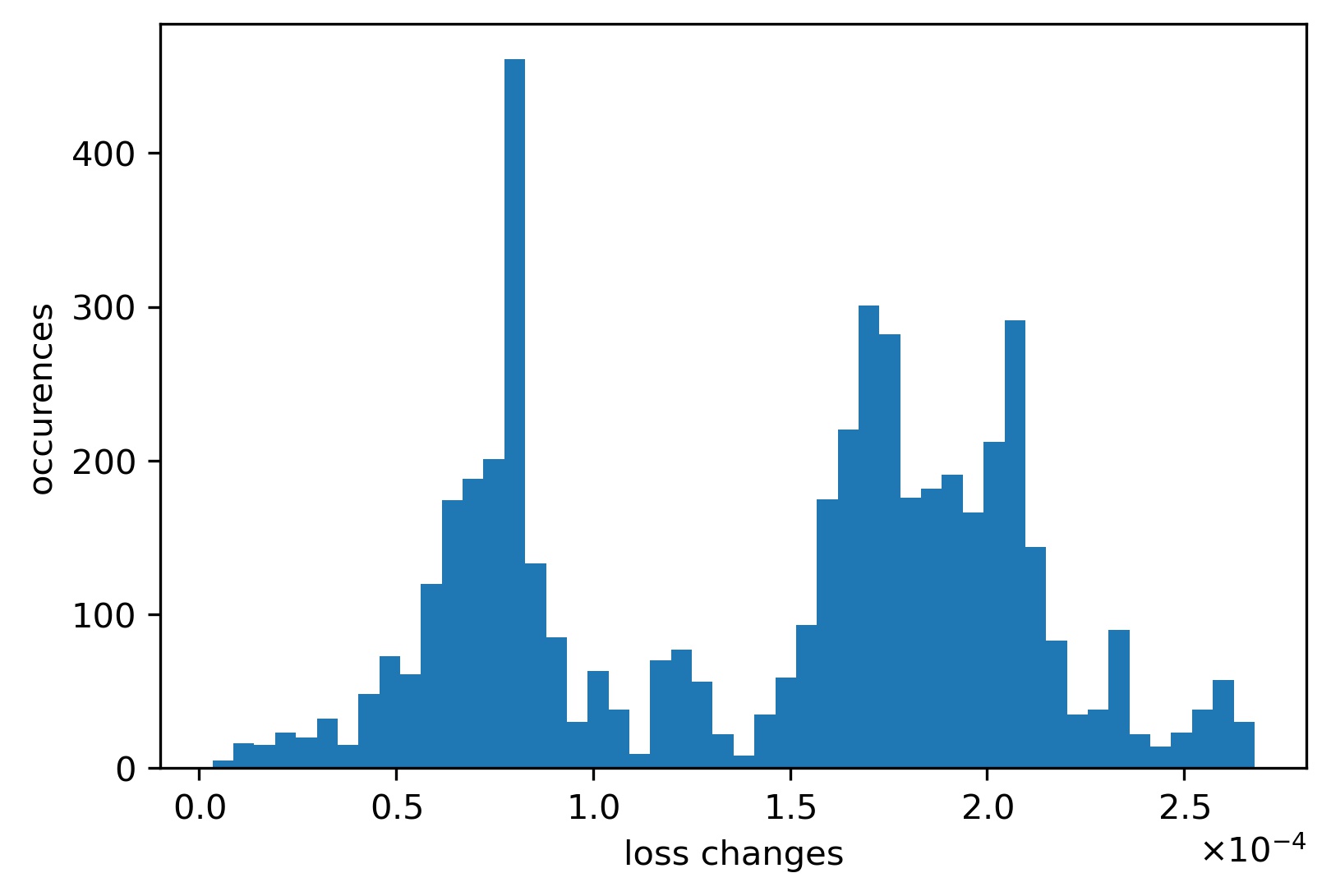}
    \caption{An occurrence histogram showing the loss changes incurred by the $5000$ perturbations.}
    \label{fig:loss change perturbation histogram}
\end{figure}

\section{{Additional Numerical Experiments}}
\label{app: additional numerical experiments}

\subsection{3-dimensional Input, Scalar Output}
In this example, we train the network with the following $\mathbf{x}_k$'s as inputs:
\[
\mathbf{x}_1 = \begin{pmatrix} -0.3 \\ -0.75 \\ -0.5 \end{pmatrix}, \quad
\mathbf{x}_2 = \begin{pmatrix} -0.2 \\ -0.2 \\  0.4 \end{pmatrix}, \quad
\mathbf{x}_3 = \begin{pmatrix} -0.6 \\  1.0 \\ -1.0 \end{pmatrix}, \quad
\mathbf{x}_4 = \begin{pmatrix} -0.4 \\  0.4 \\  0.3 \end{pmatrix},
\]
\[
\mathbf{x}_5 = \begin{pmatrix}  0.6 \\ -0.1 \\ -0.7 \end{pmatrix}, \quad
\mathbf{x}_6 = \begin{pmatrix}  0.4 \\ -0.9 \\  0.3 \end{pmatrix}, \quad
\mathbf{x}_7 = \begin{pmatrix}  0.2 \\  0.2 \\ -0.5 \end{pmatrix}.
\]
The targets $y_k$ are as follows: $y_1 = -0.5$, $y_2 = 0.1$, $y_3 = -0.6$, $y_4 = 0.3$, $y_5 = 0.8$, $y_6 = -0.3$, $y_7 = -0.1$.

We use a one-hidden-layer ReLU network with $50$ hidden neurons whose parameters are initialized with law $\mathcal{N}\left(0,(9.51\times 10^{-11})^2\right)$.
The training process is visualized in \cref{fig: small initialization training dynamics 3D input}, which is quite similar to that in \cref{fig: small initialization training dynamics}.

\begin{figure*}[htbp]
    \centering
    \begin{subfigure}[b]{0.221\textwidth}
        \includegraphics[width=\textwidth]{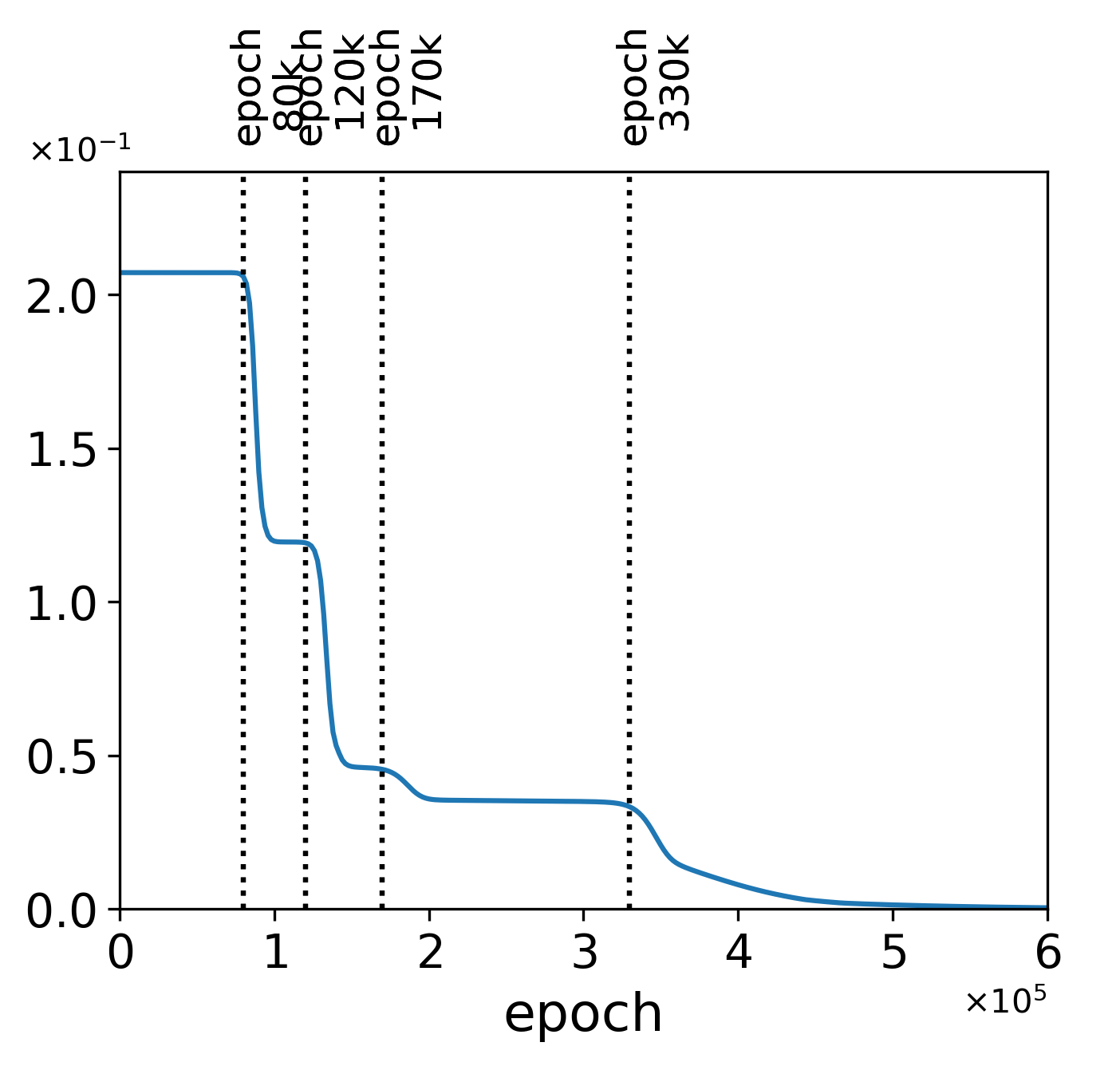}
        \caption{\footnotesize{loss curve}}
        \label{fig: 3D inpu loss curve}
    \end{subfigure}
    \begin{subfigure}[b]{0.216\textwidth}
        \includegraphics[width=\textwidth]{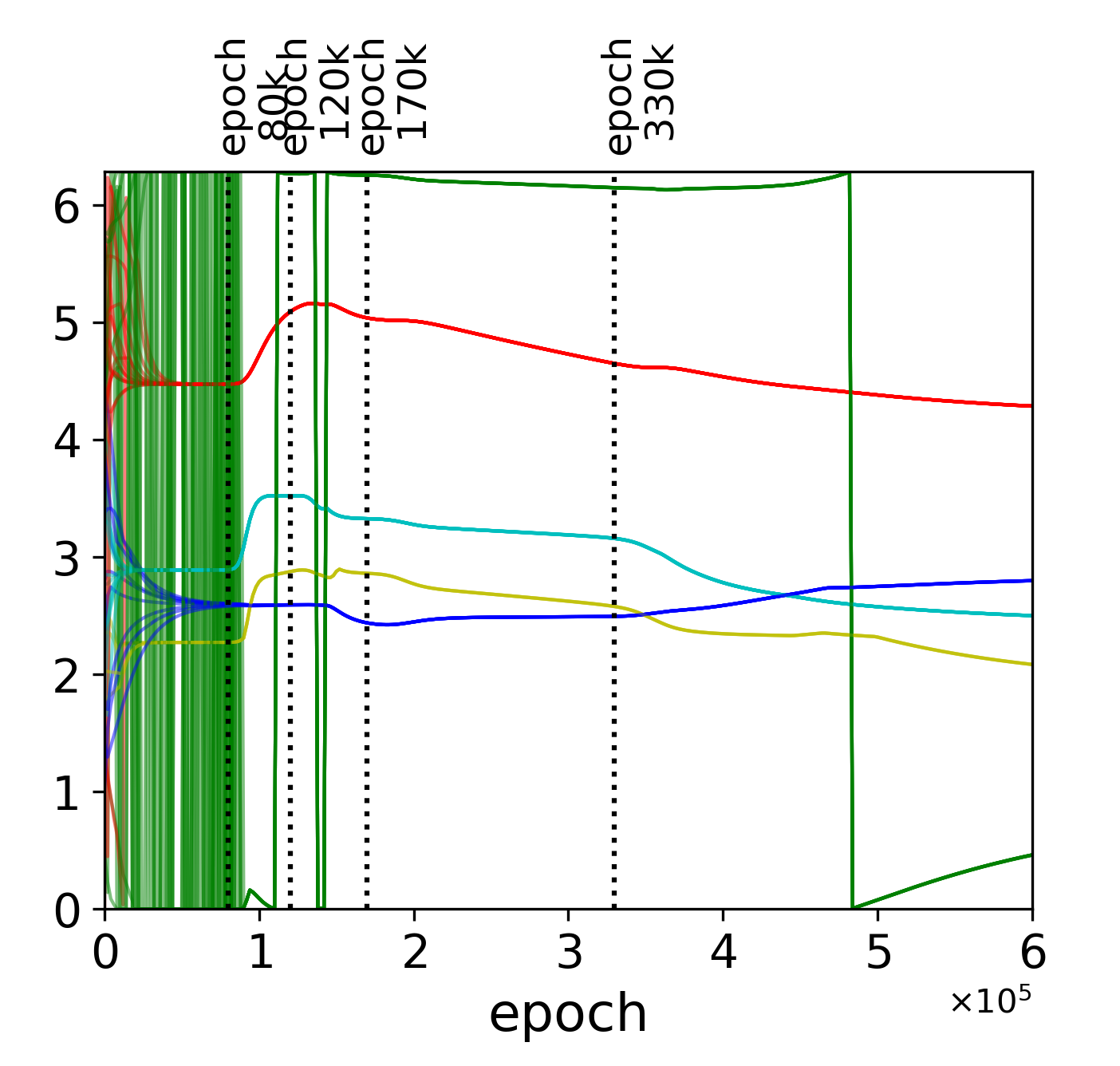}
        \caption{\footnotesize{direction of $\mathbf{w}_i$ (rad)}}
        \label{fig: 3D input weight angle}
    \end{subfigure}
    \begin{subfigure}[b]{0.216\textwidth}
        \includegraphics[width=\textwidth]{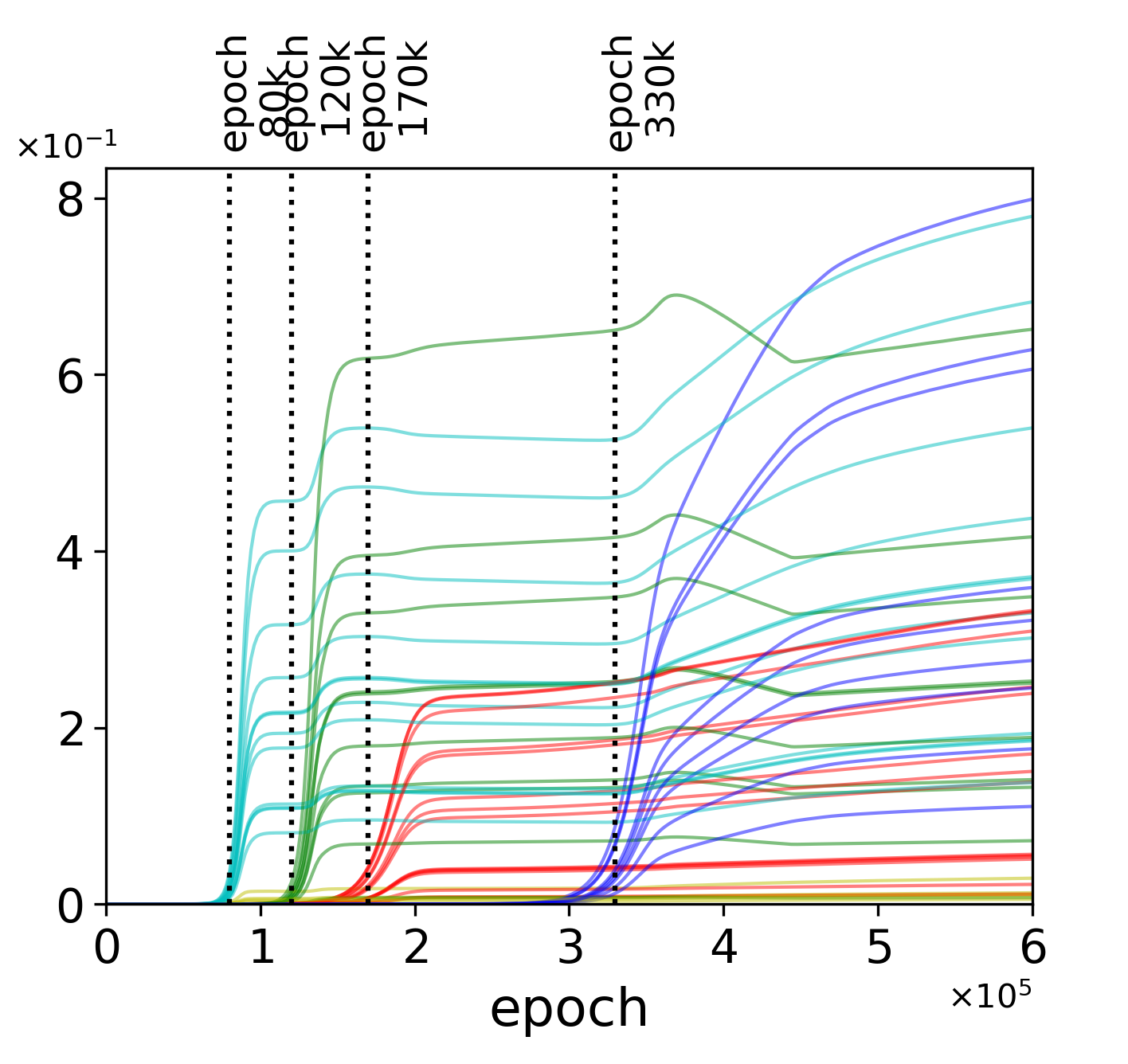}
        \caption{\footnotesize{$\Vert\mathbf{w}_i\Vert$}}
        \label{fig: 3D input small init input norm}
    \end{subfigure}
    \begin{subfigure}[b]{0.216\textwidth}
    \includegraphics[width=\textwidth]{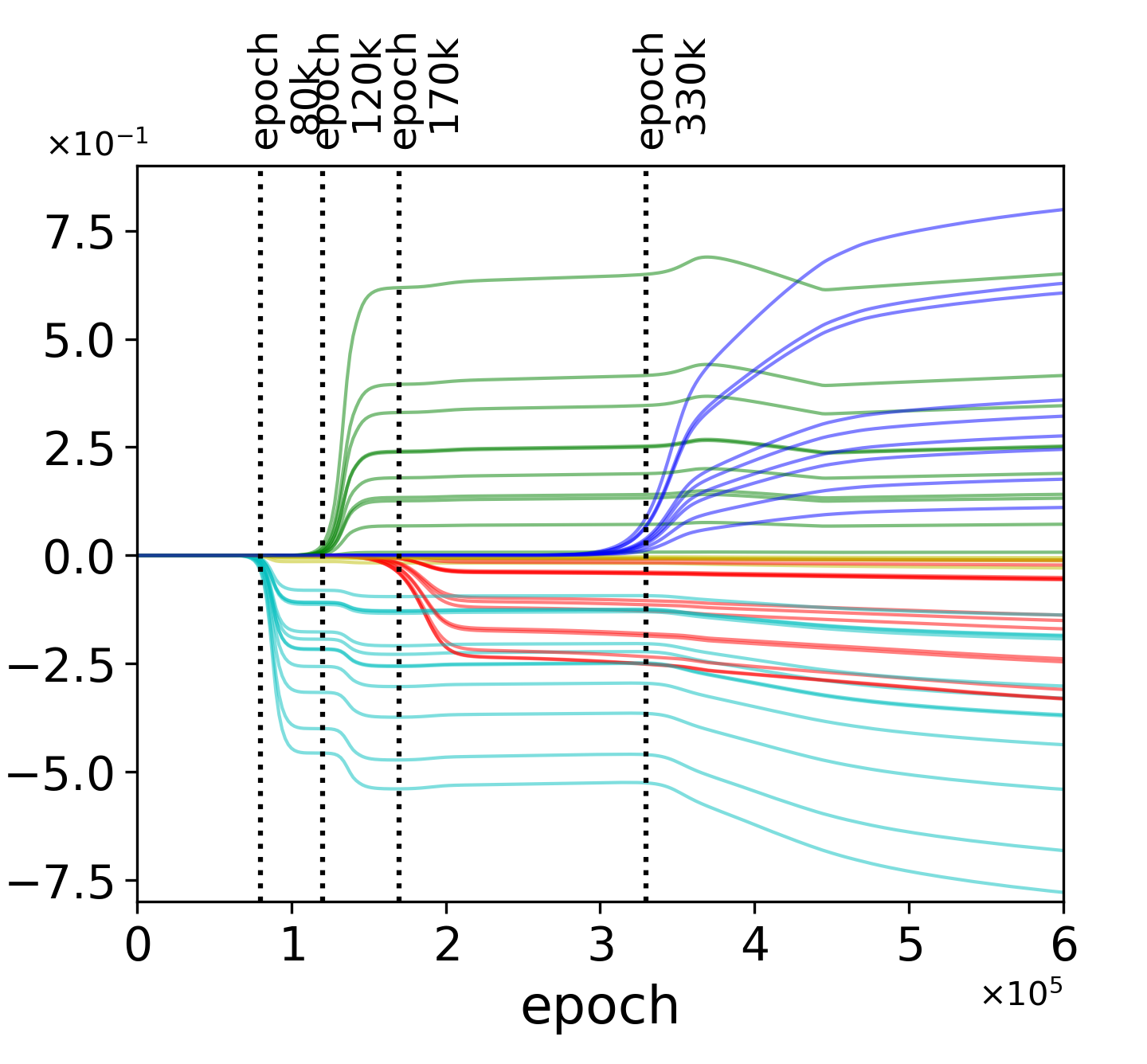}
        \caption{\footnotesize{$h_{j_0i}$}}
        \label{fig: 3D input small init output weight}
    \end{subfigure}
    \begin{subfigure}[b]{0.091\textwidth}
    \includegraphics[width=\textwidth]{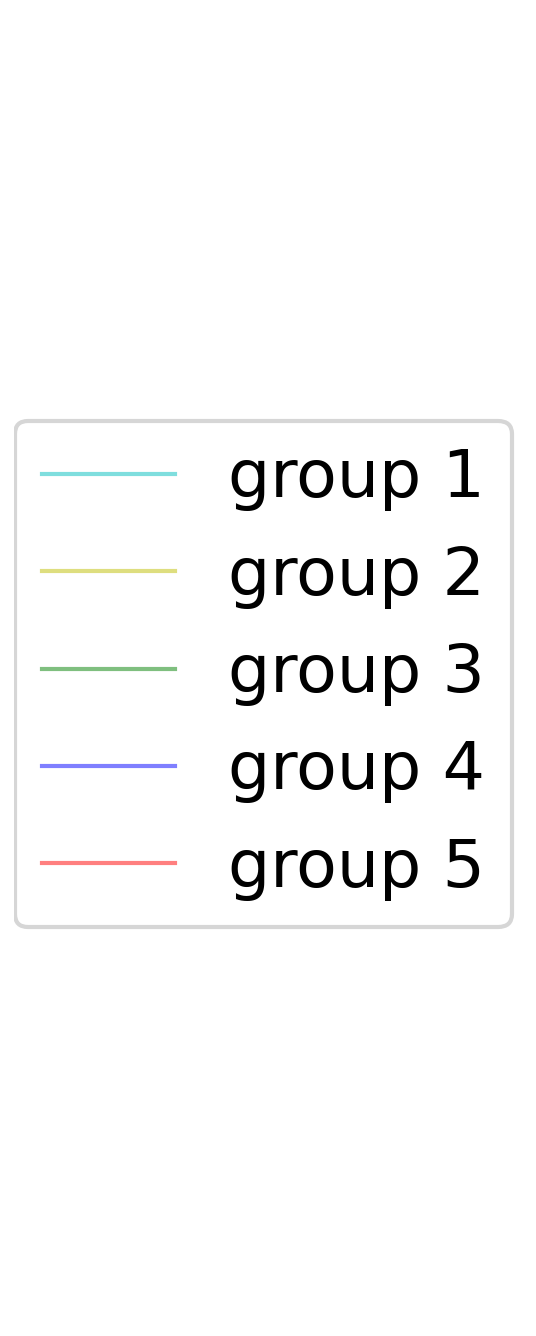}
    \end{subfigure}
    \caption{\textit{Evolution of all parameters during the training process from vanishing initialization.}
    The network visualized here has 3-dimensional input and scalar output.
    The direction of $\mathbf{w}_i$ in \textbf{(b)} is the direction of its first two components.
    The training dynamics is similar to that in \cref{fig: small initialization training dynamics}.
    addle escape (marked by the vertical dotted lines) is always accompanied by the amplitude growth of grouped small living neurons.} 
    \label{fig: small initialization training dynamics 3D input}
\end{figure*}

In short, the existence of small living neurons indicates that the stationary point is a saddle point, while their absence means that the stationary point is a local minimum.
Moreover, saddle escape is always triggered by the amplitude increase of small living neurons.
In this example, the training trajectory escaped from four saddle points and eventually ended up at the global minimum with zero loss.

It is worth noting that, in the above example, when escaping from the first saddle, two groups of small living neurons (group 1 and 2) grew together, which is also consistent with \cref{fig: training schematic}.

\subsection{2-dimensional Input and Output}

In this example, we train the network with the following $\mathbf{x}_k$'s as inputs:
\[
\mathbf{x}_1 = \begin{pmatrix} -0.3 \\ 0.5 \end{pmatrix}, \quad
\mathbf{x}_2 = \begin{pmatrix} 1.0 \\ 1.0 \end{pmatrix}, \quad
\mathbf{x}_3 = \begin{pmatrix} -0.6 \\ -1.0 \end{pmatrix}, \quad
\mathbf{x}_4 = \begin{pmatrix} 0.4 \\ -0.4 \end{pmatrix}.
\]

The targets $\mathbf{y}_k$'s are:
\[
\mathbf{y}_1 = \begin{pmatrix} 0.6 \\ -0.5 \end{pmatrix}, \quad
\mathbf{y}_2 = \begin{pmatrix} 0.5 \\ -1.0 \end{pmatrix}, \quad
\mathbf{y}_3 = \begin{pmatrix} -0.4 \\ 0.6 \end{pmatrix}, \quad
\mathbf{y}_4 = \begin{pmatrix} 0.8 \\ 0.2 \end{pmatrix}.
\]

We use a one-hidden-layer ReLU network with $50$ hidden neurons whose parameters are initialized with law $\mathcal{N}\left(0,(9.51\times 10^{-11})^2\right)$.
The training process is visualized in \cref{fig: 2D output training process}.

\begin{figure}[htbp]
    \centering
    \begin{subfigure}{0.3\textwidth}
        \includegraphics[width=\linewidth]{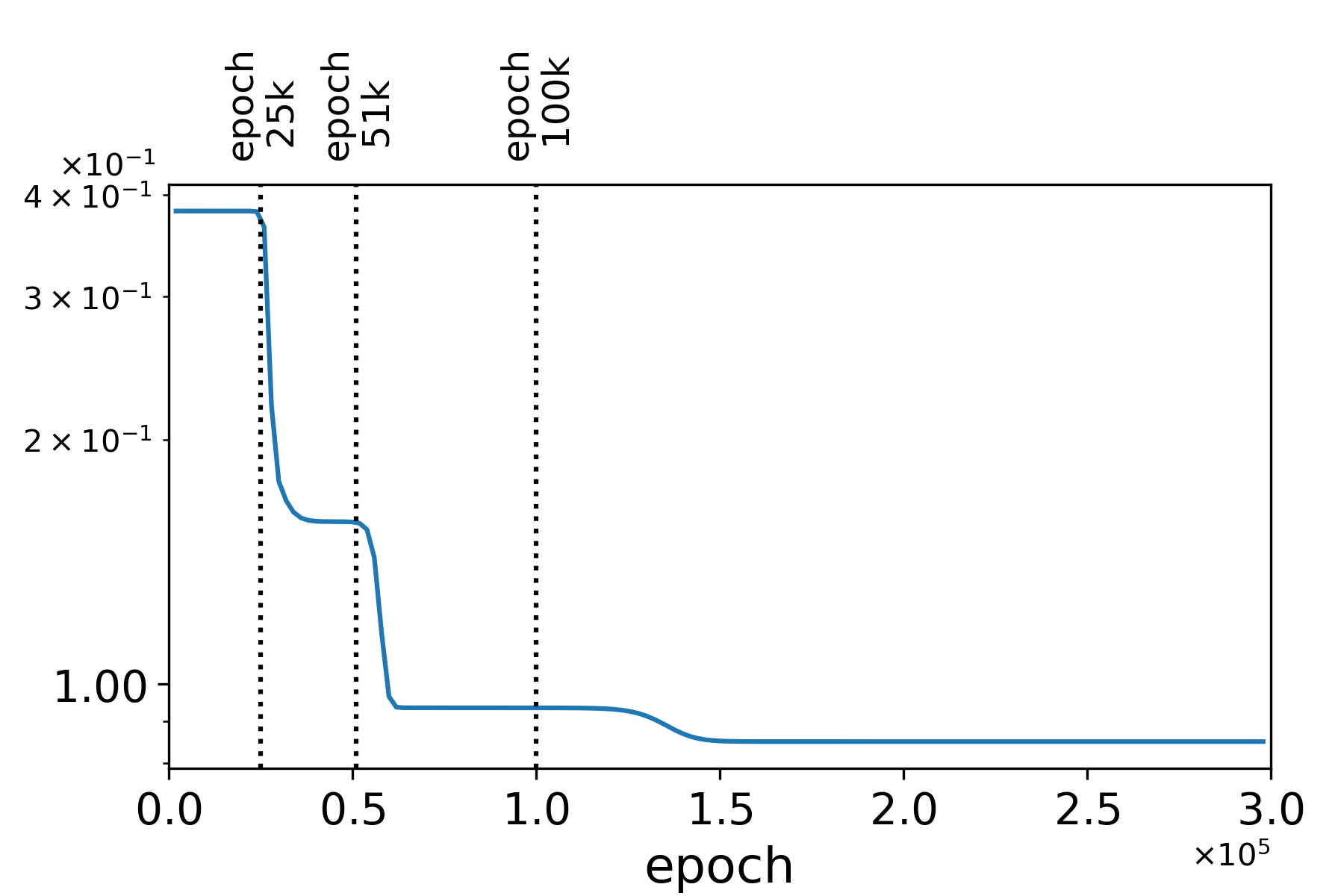}
        \caption{loss curve}
        \label{fig: loss 2D output}
    \end{subfigure}
    \begin{subfigure}{0.3\textwidth}
        \includegraphics[width=\linewidth]{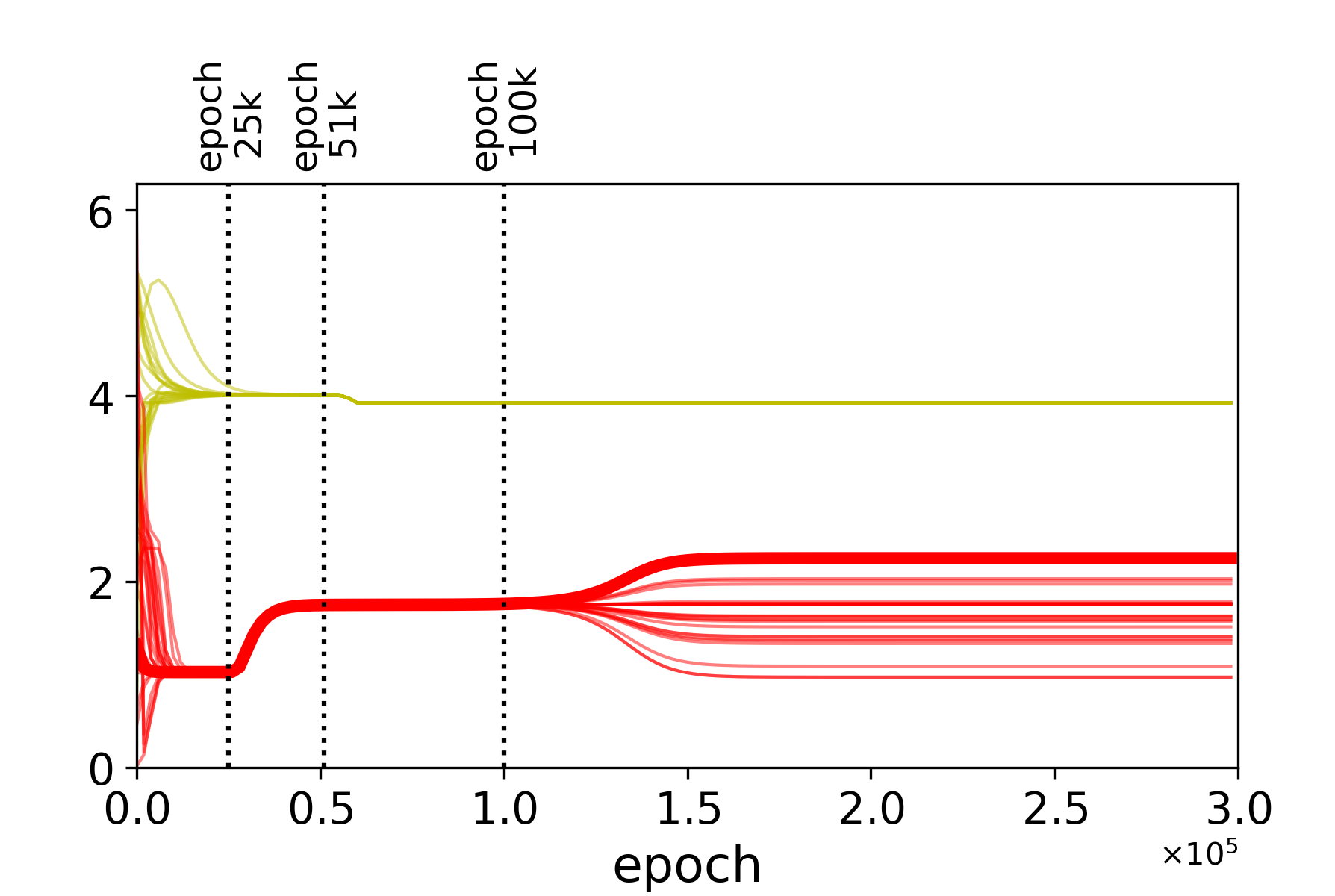}
        \caption{direction of $\mathbf w_i$ (rad)}
        \label{fig: input angle 2D output}
    \end{subfigure}
    \begin{subfigure}{0.3\textwidth}
        \includegraphics[width=\linewidth]{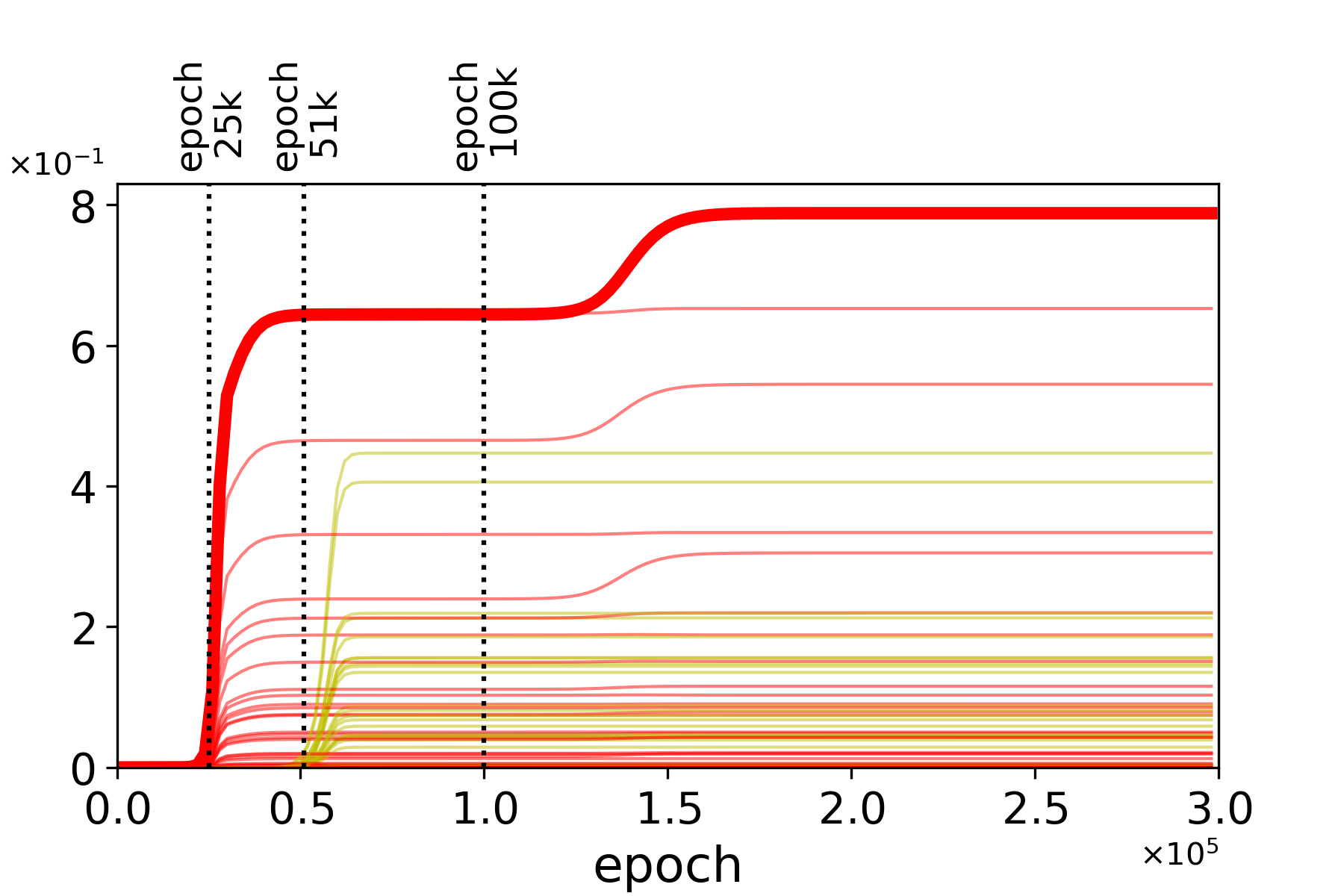}
        \caption{$\Vert \mathbf w_i\Vert$}
        \label{fig: input norm 2D output}
    \end{subfigure}

    \begin{subfigure}{0.3\textwidth}
        \includegraphics[width=\linewidth]{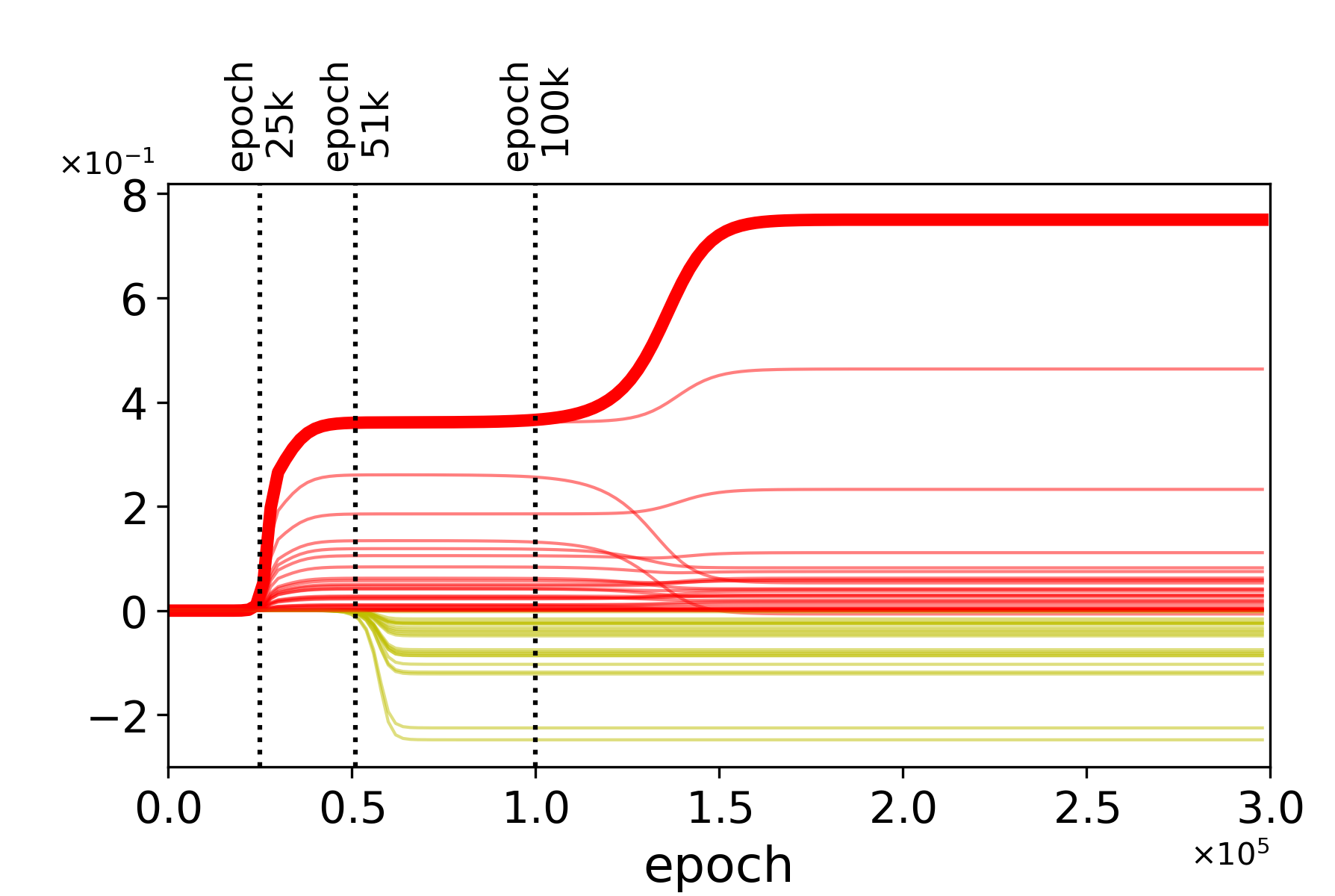}
        \caption{$h_{1i}$}
        \label{fig: output 1 2D output}
    \end{subfigure}
    \begin{subfigure}{0.3\textwidth}
        \includegraphics[width=\linewidth]{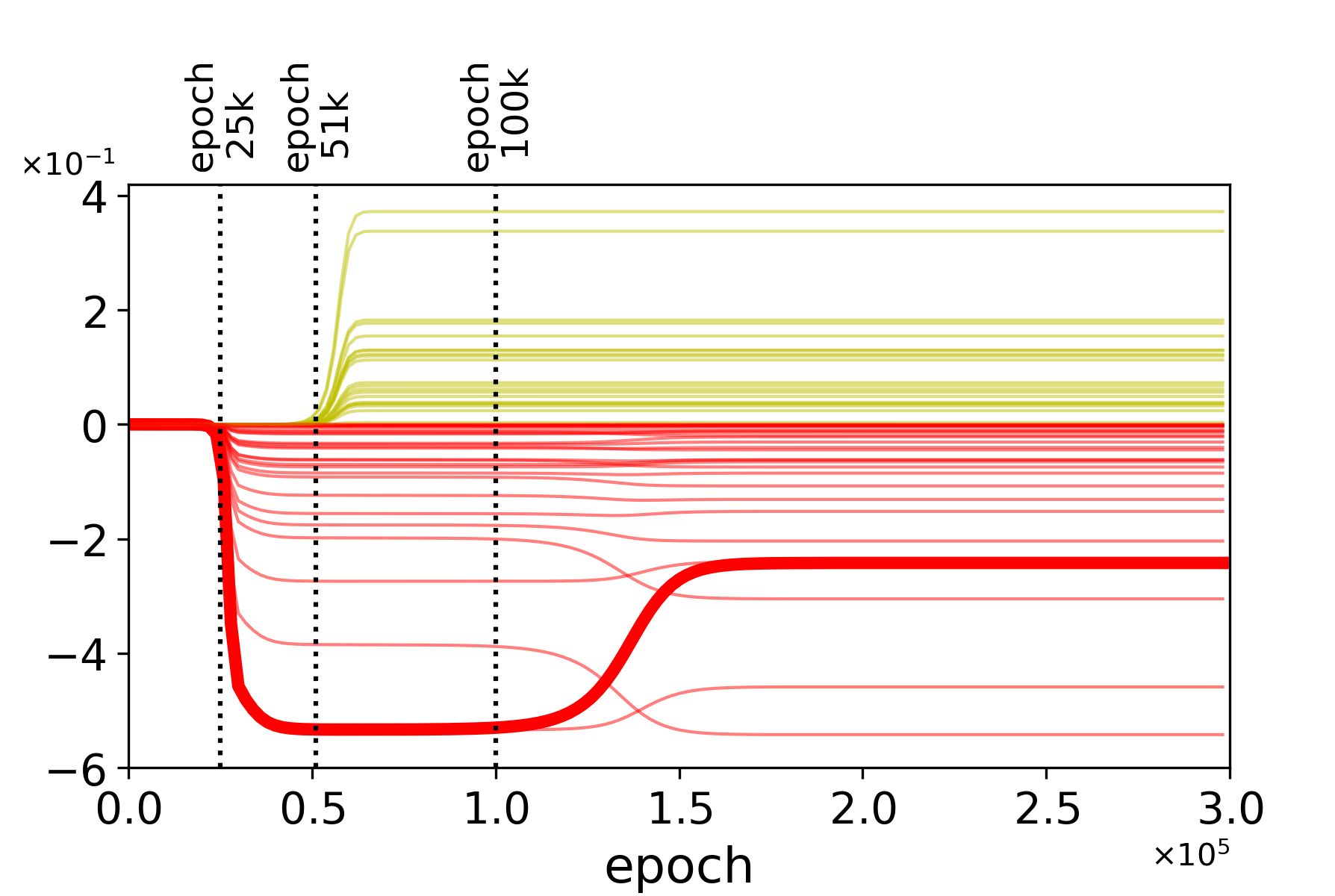}
        \caption{$h_{2i}$}
        \label{fig: output 2 2D output}
    \end{subfigure}
    \begin{subfigure}{0.12\textwidth}
        \includegraphics[width=\linewidth]{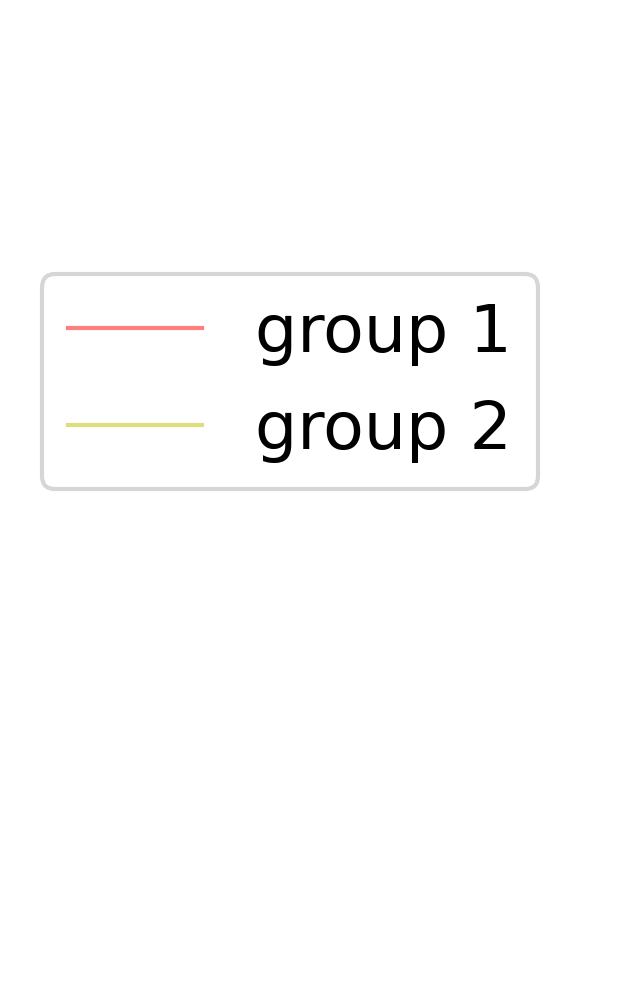}
        \label{fig:legend}
    \end{subfigure}

    \caption{\textit{Evolution of all parameters during the training process from vanishing initialization.}
    The network visualized here has 2-dimensional input and output.
    In this case, the output weights associating with one hidden neuron have $2$ components, tracked by \textbf{(d)} and \textbf{(e)}. The saddle escape (marked by vertical dotted lines) can be caused by the amplitude increasing of small living neurons or the splitting of neurons that already have non-negligible amplitudes. The trajectory of one neuron is thickened in \textbf{(b)}-\textbf{(e)} for discussion.}
    \label{fig: 2D output training process}
\end{figure}

This training process escaped from $3$ saddles. 
Notably, the last saddle escape involves the parameter variation of neurons that are not small living neurons, which contrasts the scalar-output case, as escape neurons in the multidimensional output case might have non-zero output weights (\cref{def: escape neurons}).

Eventually, the training reached a local minimum.
We verify that it is a local minimum by locally perturbing the network parameter at the end of training with independently drawn noise for $5000$ times.
The resulting changes in loss are all positive.
The occurrence histogram demonstrating the loss changes incurred by the $5000$ perturbations are shown in \cref{fig: 2D perturbation}.

\begin{figure}
    \centering
    \includegraphics[width=0.4\linewidth]{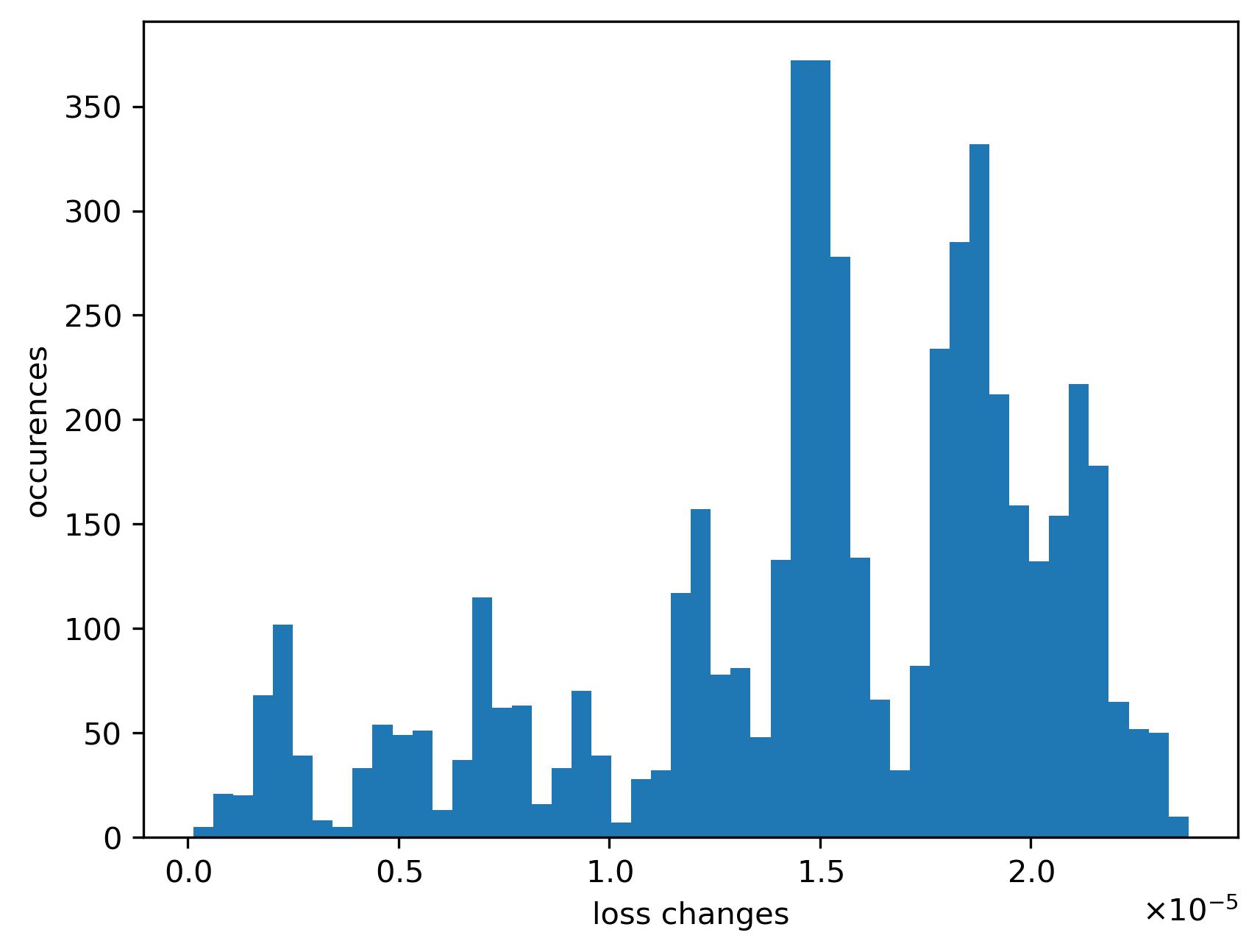}
    \caption{An occurrence histogram showing the loss changes incurred by the 5000 perturbations applied to the end of training in the 2D output case.}
    \label{fig: 2D perturbation}
\end{figure}

As it is a complicated case, \textbf{we identify the escape neurons in each saddle encountered during training}.
To facilitate the presentation, we first present the evolution of errors at both output components, $e_{k1}\coloneqq \hat y_{k1} - y_{k1}$ and $e_{k2}\coloneqq \hat y_{k2} - y_{k2}$, which are shown in \cref{fig:error_plot}. 

\begin{figure}[ht]
    \centering

    \begin{subfigure}{0.3\textwidth}
        \centering
        \includegraphics[width=\linewidth]{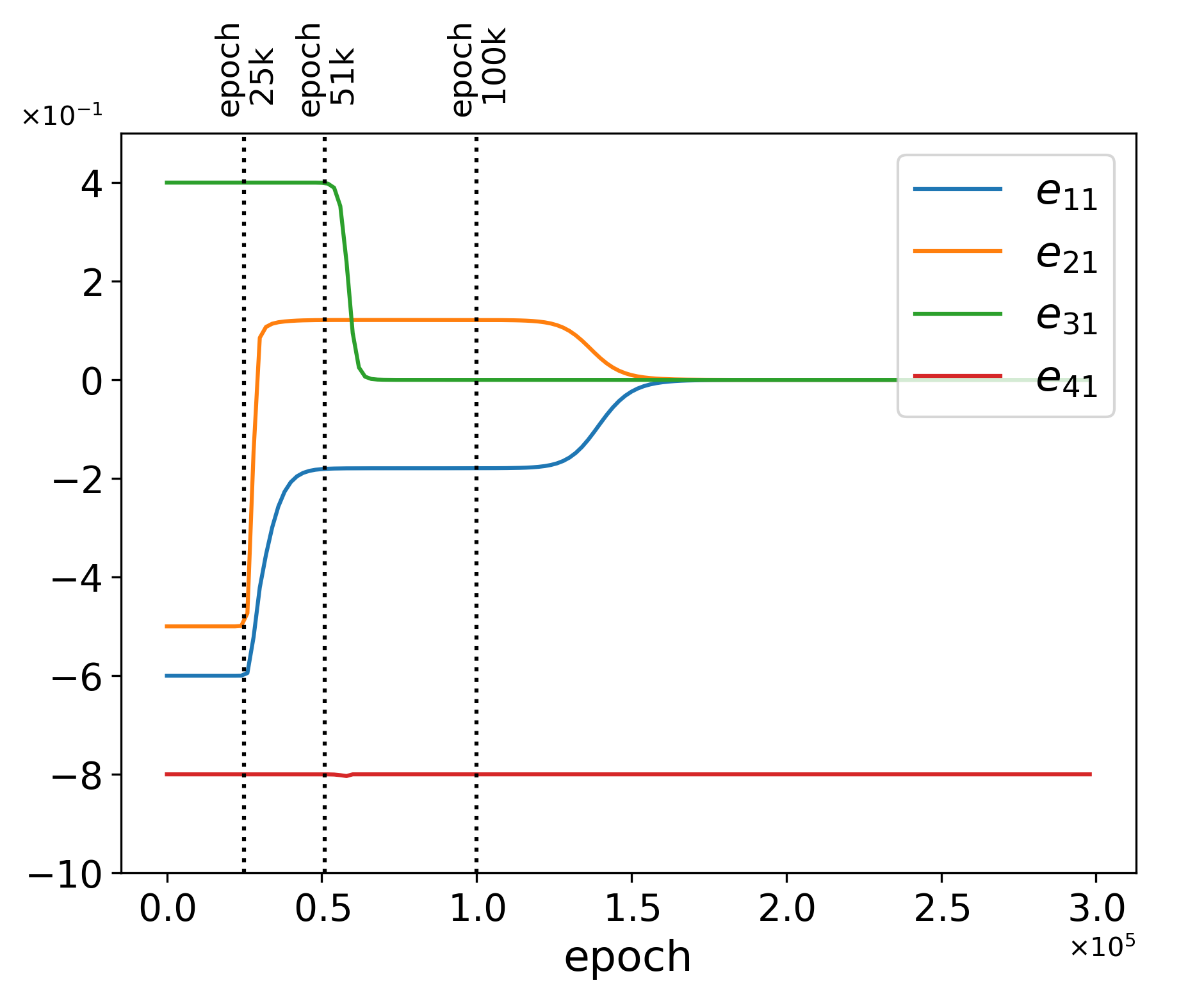} 
        \caption{$e_{k1}$}
        \label{fig:ek1}
    \end{subfigure}
 \hspace{-0.1cm} 
    \begin{subfigure}{0.3\textwidth}
        \centering
\includegraphics[width=\linewidth]{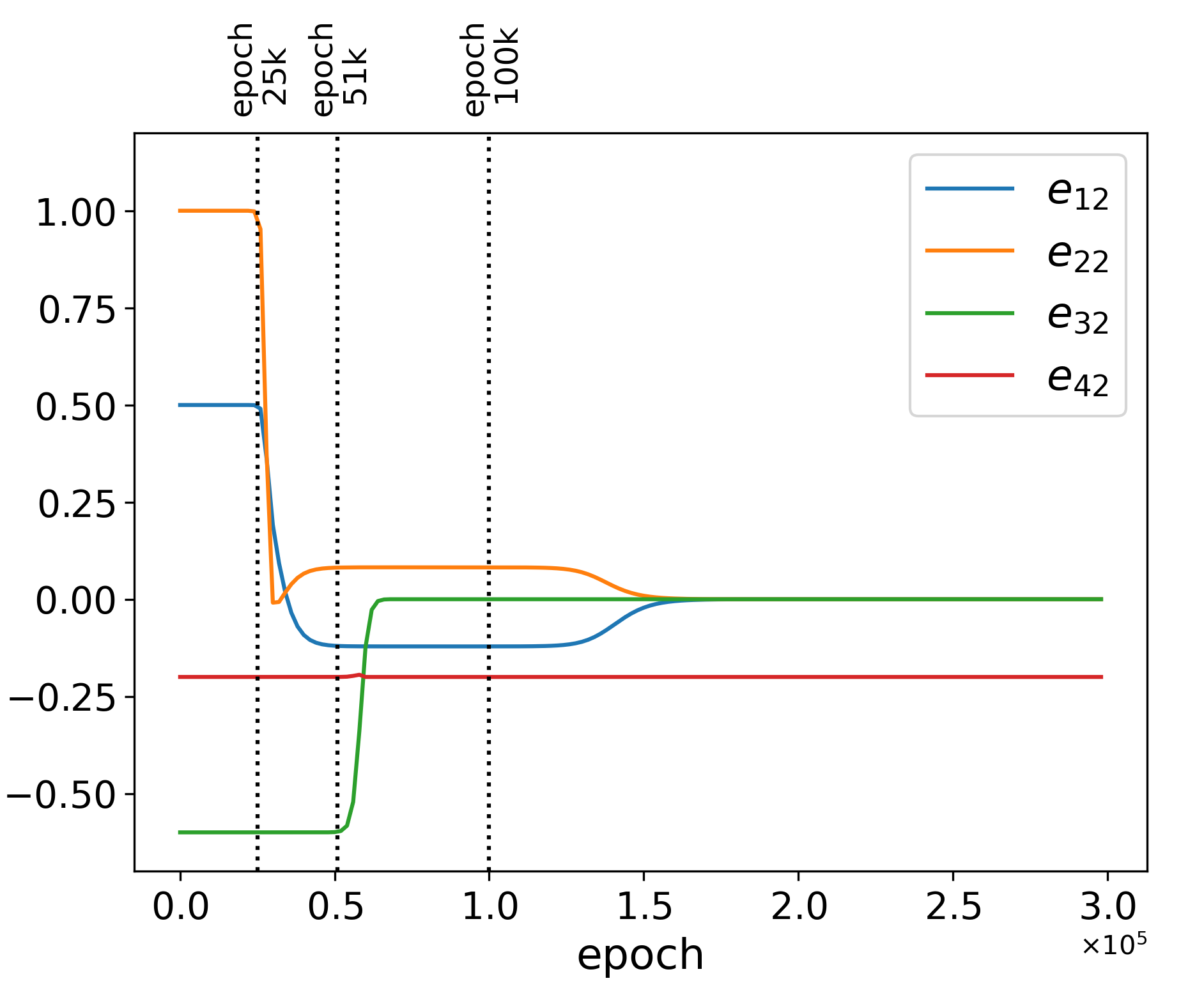} 
        \caption{$e_{k2}$}
        \label{fig:ek2}
    \end{subfigure}

    \caption{Errors at both components of output.}
    \label{fig:error_plot}
\end{figure}

\textit{The first saddle point is the origin.}
As all the output weights are zero, we know that the $\frac{\partial \mathcal L}{\partial s_i} = 0$  for any $i\in I$ and tangential direction $\mathbf v_i$ (from \cref{eq: dL/ds}).
We remind the reader that $\mathbf v_i$ is the tangential direction implied in the tangential derivative $\frac{\partial \mathcal L}{\partial s_i}$.
We will show that there exists a specific tangential direction $\hat{\mathbf v}_i$ such that $\mathbf d_{1i}^{\hat{\mathbf v}_i}\cdot \hat{\mathbf v}_i\neq 0$, which makes the origin a saddle point by \cref{def: escape neurons,thm: local minimum general output dimension}.

We find $\hat{\mathbf v}_i = (0.85,0.51)$, which corresponds the angle of $1.03$ rad in \cref{fig: input angle 2D output}, where the group 1 neuron gathered at epoch 25k.
Knowing that $e_{11} = -0.6$ and $e_{21} = -0.5$ (which can be roughly read from \cref{fig:error_plot} at epoch 25k), we have: 
$$
\mathbf d_{1i}^{\hat{\mathbf v}_i}\cdot \hat{\mathbf v}_i = (e_{11} \mathbf{x}_1 + e_{21} \mathbf{x}_2)\cdot (0.85,0.51) = (-0.32,-0.2) \cdot (0.85,0.51) = -0.37 \neq 0.
$$
We thus identified the escape neurons to be the ones that have zero-amplitude parameters at this saddle.
Escaping from the first saddle was triggered by the movement of a group of small living neurons (group 1 neurons) exploiting the loss-decreasing path offered by such escape neurons.

\textit{At the second saddle point, group 1 neurons have non-negligible amplitudes while group 2 neurons indicate the existence of zero-amplitude neurons in the nearby saddle.}
In other words, if the parameters are placed exactly at the saddle point, the parameters of group 2 neurons should all be zero.
Consequently, at the saddle point, $\frac{\partial \mathcal L}{\partial s_i} = 0$  for any $i$ that denotes a group 2 neuron and any tangential direction $\mathbf v_i$ (from \cref{eq: dL/ds}). 
In this case, we can find a tangential direction $\Tilde{\mathbf v}_i$ of group 2 neurons such that $\mathbf d_{1i}^{\Tilde{\mathbf v}_i}\cdot \Tilde{\mathbf v}_i\neq 0$.
We specify that $\Tilde{\mathbf v}_i = (-0.65,-0.76)$, corresponding to $3.22$ rad in \cref{fig: input angle 2D output}, which is where group 2 neurons were attracted to before the second saddle point was escaped.
Reading from \cref{fig:error_plot} that $e_{31} = 0.4$ and $e_{41} = -0.8$, we have:
$$
\mathbf d_{1i}^{\Tilde{\mathbf v}_i}\cdot \Tilde{\mathbf v}_i = \left(e_{31}\mathbf{x}_3+e_{41}\mathbf{x}_4\right)\cdot(-0.65,-0.76) = (-0.48,0.56)\cdot(-0.65,-0.76) =-0.11 \neq 0.
$$
This tells us that, the neurons with zero parameters are escape neurons at the second saddle.
Escaping from the second saddle was then triggered by the movement of a group of small living neurons (group 2) exploiting the loss-decreasing path offered by such escape neurons.

\textit{At the third saddle point, both groups of neurons have non-negligible amplitudes.} 
We will show that one of the group 2 neurons is approximately an escape neuron.
It is not exactly an escape neuron since the parameter is not exactly at the saddle point but in the vicinity.
The parameter trajectory of this specific neuron is denoted by thickened curves in \cref{fig: 2D output training process}.
In this part, $i$ denotes that single hidden neuron.
The input weight of this neuron is $(-0.12,0.63)$ (which can be computed from the readings in \cref{fig: input angle 2D output,fig: input norm 2D output}), and we pick one of its tangential derivatives $\check{\mathbf v}_i = (0.98,0.19)$.
We will show that $\mathbf d_{1i}^{\Tilde{\mathbf v}_i}\cdot \Tilde{\mathbf v}_i\neq 0$ and  $\frac{\partial \mathcal L}{\partial \check{s}_i}$ (the tangential direction along $\check{\mathbf v}_i$) equals zero, making this neuron (approximately) an escape neuron.

We can roughly read from \cref{fig:ek1,fig:ek2} that  $e_{11} = -0.18$, $e_{21}=0.12$,$e_{12} = 0.12$, $e_{22}=0.08$ at epoch 100k.
With this, we can compute that  
$$
\mathbf d_{1i}^{\check{\mathbf v}_i}\cdot \check{\mathbf v}_i = \left(e_{11}\mathbf{x}_1+e_{21}\mathbf{x}_2\right)\cdot(0.98,0.19) = (0.18,0.03)\cdot(0.98,0.19) =0.18\neq0.
$$
$$
\mathbf d_{2i}^{\check{\mathbf v}_i}\cdot \check{\mathbf v}_i = \left(e_{12}\mathbf{x}_1+e_{22}\mathbf{x}_2\right)\cdot(0.98,0.19) = (0.12,0.02)\cdot(0.98,0.19) =0.12.
$$
We can also read from \cref{fig: output 1 2D output,fig: output 2 2D output,} to get $h_{1i} = 0.37$, $h_{2i}=-0.53$.
These leads to
$$
\frac{\partial \mathcal L}{\partial \check{s}_i} = h_{1i} \mathbf d_{1i}^{\check{\mathbf v}_i}\cdot \check{\mathbf v}_i+h_{2i} \mathbf d_{2i}^{\check{\mathbf v}_i}\cdot \check{\mathbf v}_i = 0.00.
$$
With these computations, we show that the neuron we study is (approximately) an escape neuron.
We can apply the same computation to any group 2 neurons at the third saddle point to find that all of them are (approximately) escape neurons.
Eventually, their movement (splitting) caused the parameter to escape from this saddle.

\textit{With the above, we show that the experiment is consistent with our \cref{consequence: loss-decreasing inactively propagated living neurons}.} All saddle escapes are the results of parameter variation of escape neurons in the nearby saddle point.

\textit{We can also analyze that the last stationary point at which the training process ended is a local minimum.}
At the end of the training, all the input weights of the hidden neurons lie within the angle range of $(\frac{1}{4}\pi,1\frac{1}{4}\pi)$, which means that $\mathbf{w}_i\cdot \mathbf{x}_4 < 0$, and $\{\mathbf x_4, y_4\}$ is the only training sample that has not been perfectly fitted yet, as shown in \cref{fig:error_plot}.
Thus, according to \cref{eq: djivi}, 
\begin{equation}
    \mathbf{d}_{ji}^{\mathbf{v}_i} = 0, \quad \text{for all $(i,j)\in I\times J $ and all tangential directions $\mathbf v_i$}.
    \label{eq: no escape neurons}
\end{equation}
This is due to the following: the first summation of \cref{eq: djivi} is zero because all the error terms involved are zero.
The second summation of \cref{eq: djivi} is also zero because $\alpha^- = 0$.
Moreover, $\mathbf{d}_{ji}^{\mathbf{v}_i} = 0$ disqualifies all the hidden neurons from being escape neurons, according to \cref{def: escape neurons}.

\section{{Applicability of \cref{fig: training schematic} to Small But Non-vanishing Initialization}}
\label{app: small but non-vanishing init training dynamics}
Networks are rarely trained in the vanishing initialization regime.
For one thing, the alignment of input weights might excessively reduce the number of kinks in the learned function and prevent the training from reaching global minima \cite{boursier2024early}, as is the case in \cref{fig: small initialization training dynamics}.
{In this section, we train the network initialized with law $\mathcal{N}\left(0,(8.75\times 10^{-4})^2\right)$ (by rescaling the initialization weights in \cref{sec: illustrative example} without changing their direction)}. 
The process is visualized in \cref{fig: larger initialization training dynamics}.
The larger initialization scale enables the network to reach zero loss. 
With this example, we show how our insights derived from vanishing initialization can be applied to small but non-vanishing initialization.

\begin{figure*}[htbp]
    \centering
    \begin{subfigure}[b]{0.221\textwidth}
        \includegraphics[width=\textwidth]{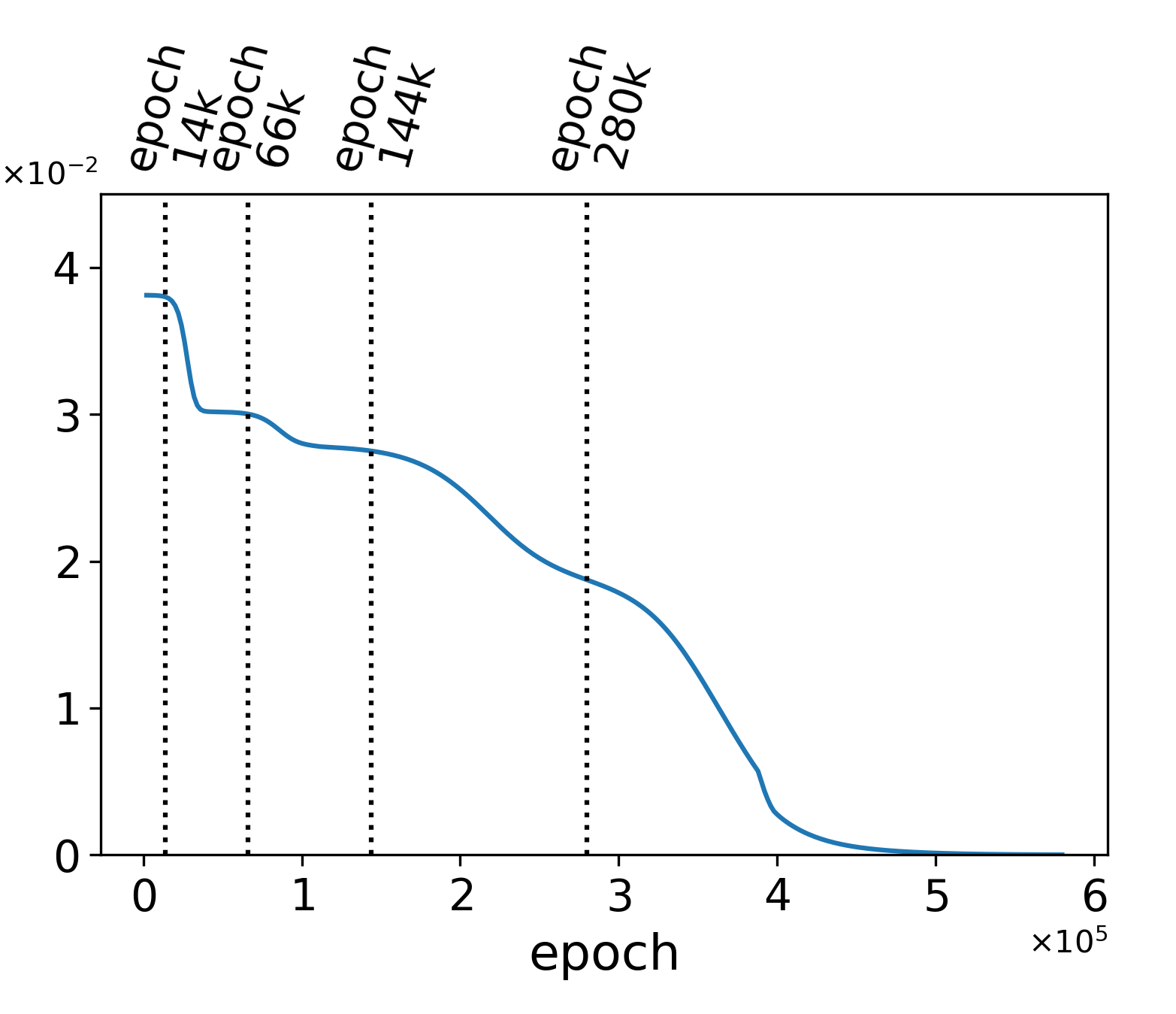}
        \caption{\footnotesize{loss curve}}
        \label{fig: larger init loss curve}
    \end{subfigure}
    \begin{subfigure}[b]{0.216\textwidth}
        \includegraphics[width=\textwidth]{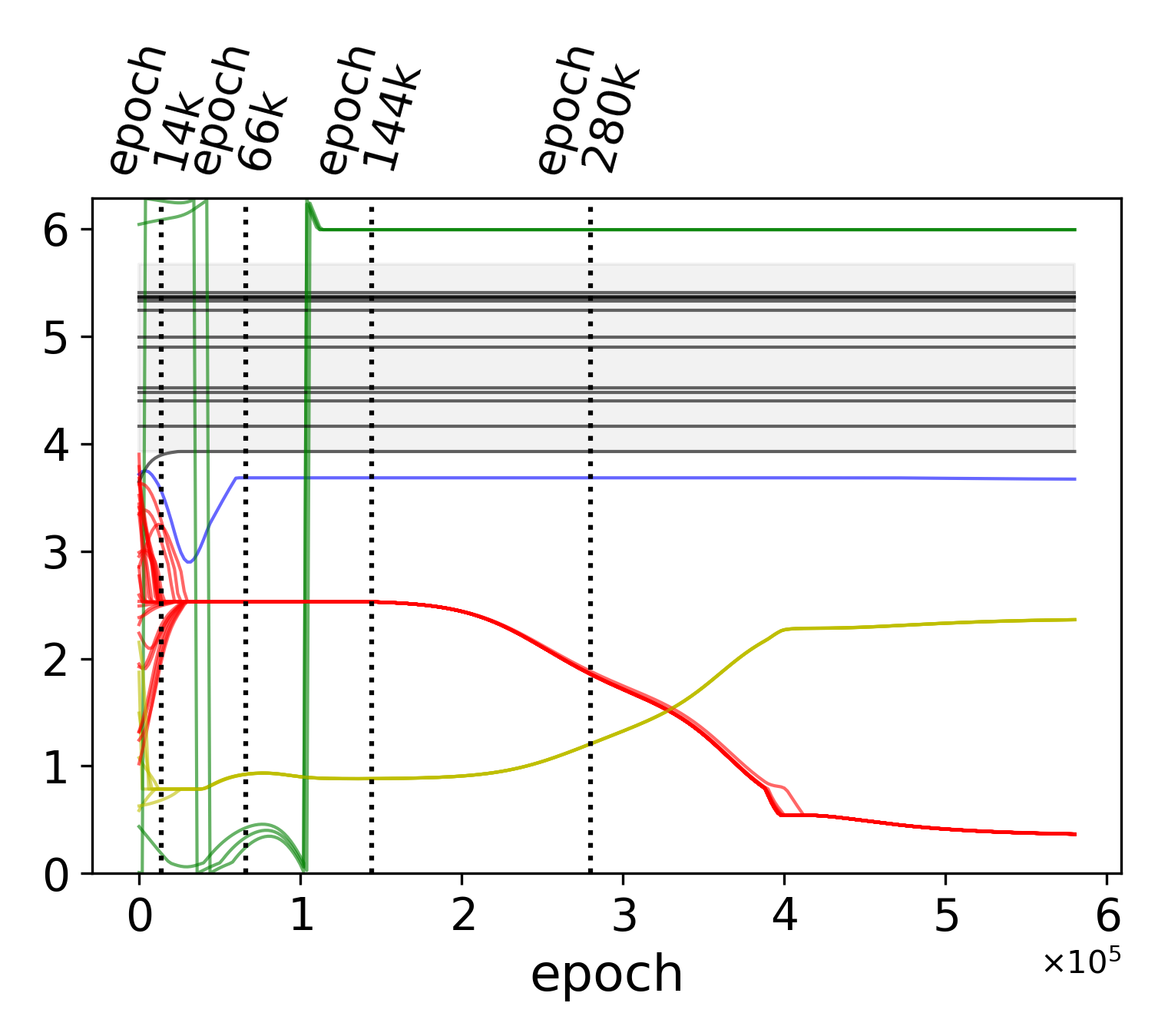}
        \caption{\footnotesize{direction of $\mathbf{w}_i$ (rad)}}
        \label{fig: larger init input orientation}
    \end{subfigure}
    \begin{subfigure}[b]{0.216\textwidth}
        \includegraphics[width=\textwidth]{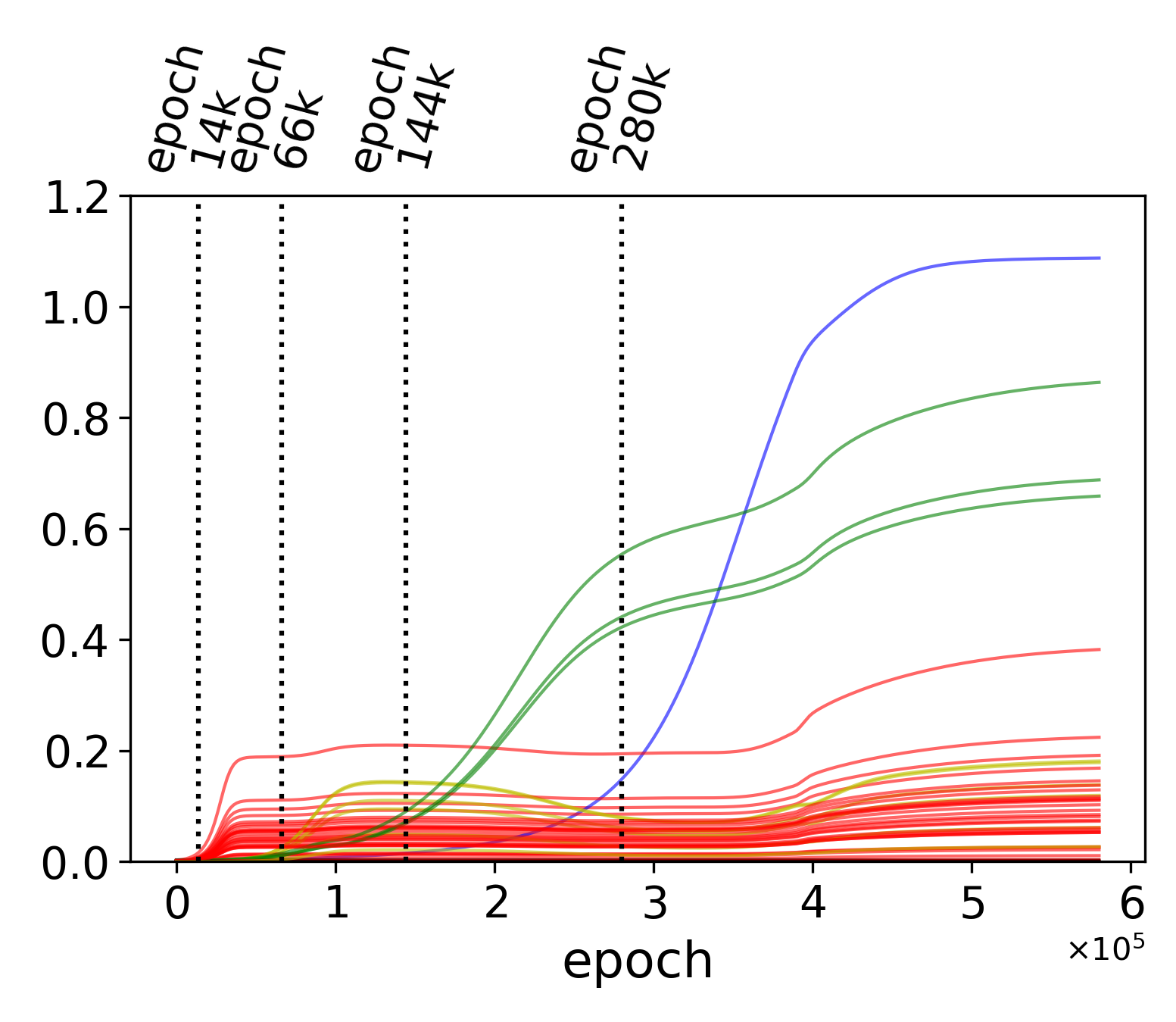}
        \caption{\footnotesize{$\Vert\mathbf{w}_i\Vert$}}
        \label{fig: larger init input norm}
    \end{subfigure}
    \begin{subfigure}[b]{0.216\textwidth}
    \includegraphics[width=\textwidth]{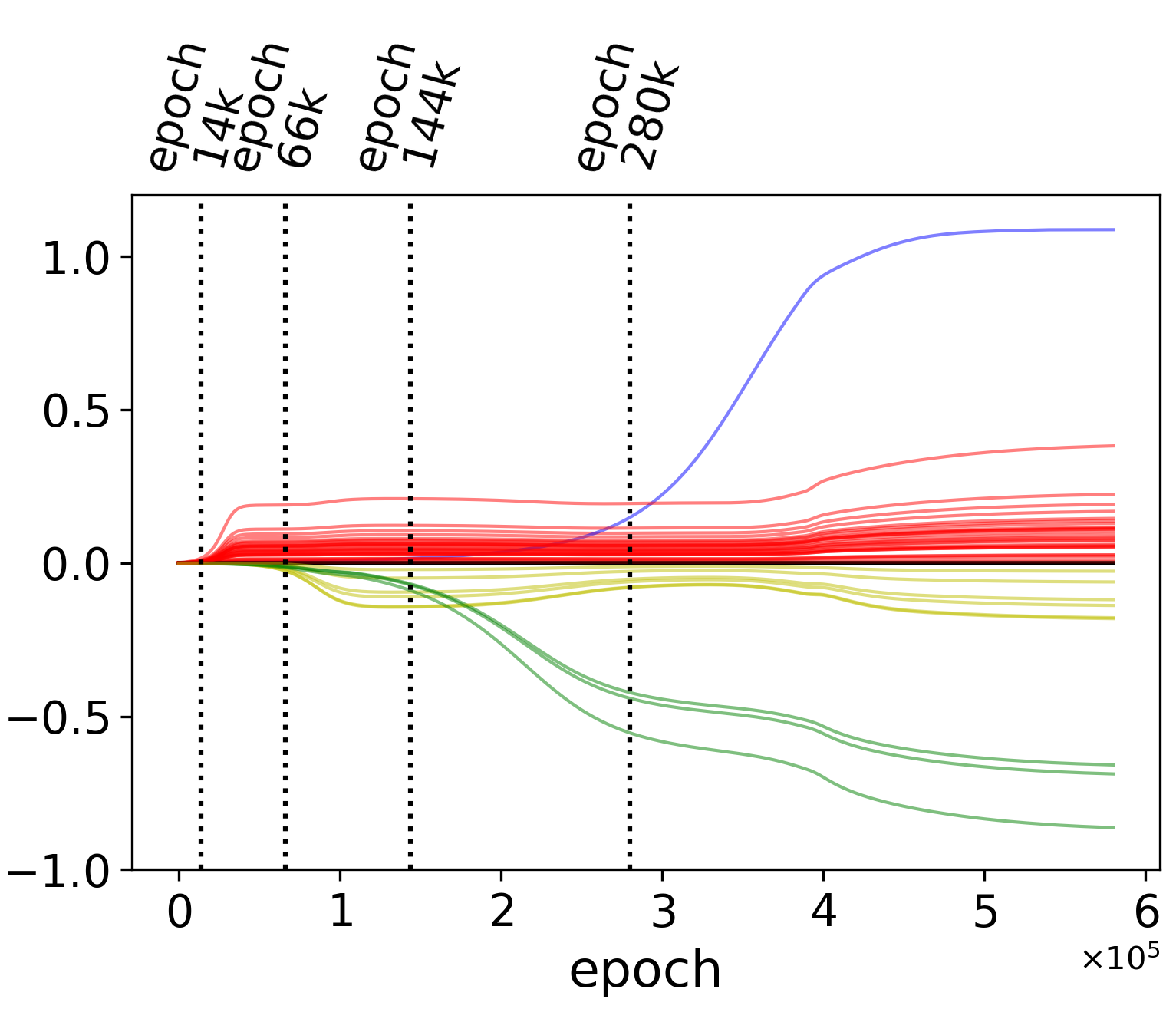}
        \caption{\footnotesize{$h_{j_0i}$}}
        \label{fig: larger init output weight}
    \end{subfigure}
    \begin{subfigure}[b]{0.091\textwidth}
    \includegraphics[width=\textwidth]{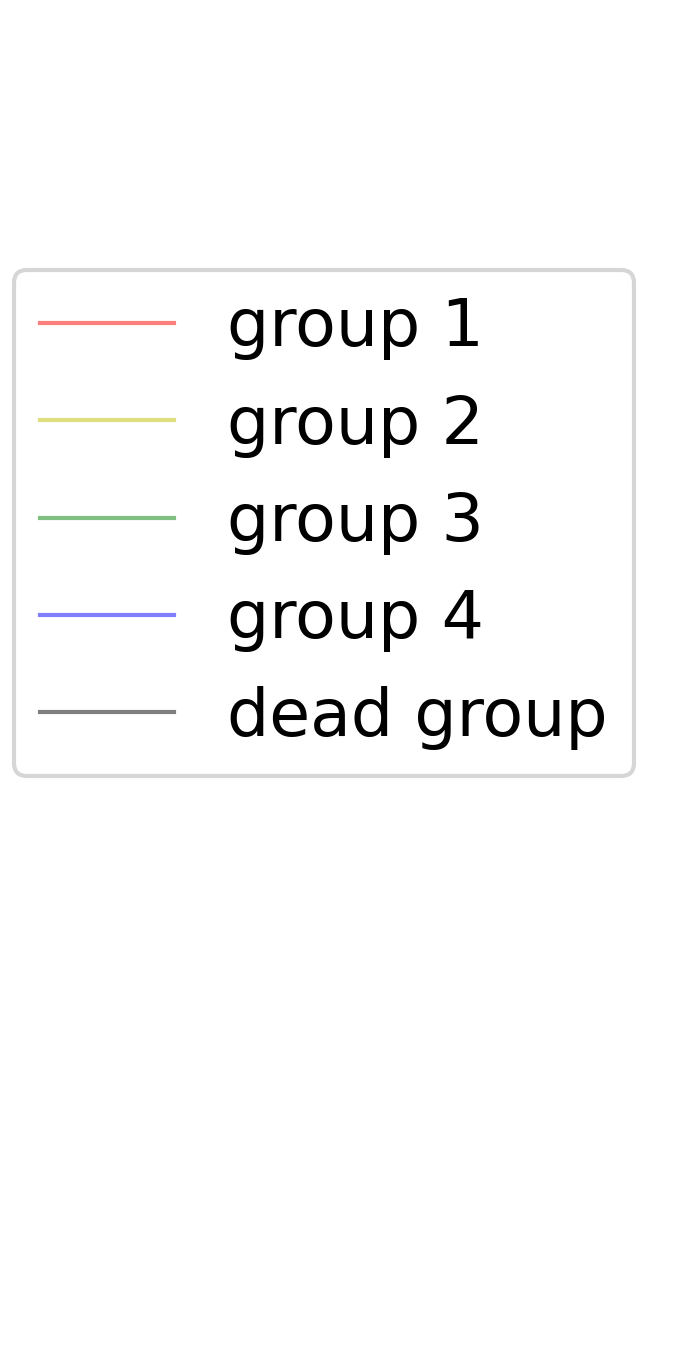}
    \end{subfigure}
    \caption{\textit{Evolution of all parameters during the training process with small but non-vanishing initialization scale.} There are more groups of neurons formed compared to \cref{fig: small initialization training dynamics}. 
    The loss experiences repeated accelerations and decelerations, influenced by nearby saddle points.
    The accelerations of loss decreasing roughly coincide with the amplitude increase of groups of small living neurons.
    }
    \label{fig: larger initialization training dynamics}
\end{figure*}

The loss curve for this initialization scheme is shown in \cref{fig: larger init loss curve}, where one can observe the decrease of loss accelerated and decelerated several times.
This indicates that the network parameters were affected by several nearby saddle points.
We mark the beginnings of the accelerations with dotted vertical lines in \cref{fig: larger init loss curve}.
The evolution of all the parameters is summarized in \cref{fig: larger init input orientation,fig: larger init input norm,fig: larger init output weight}.
Just as in \cref{fig: small initialization training dynamics}, we color-code the curves in \cref{fig: larger init input orientation,fig: larger init input norm,fig: larger init output weight} based on the $\mathbf{w}_i$ grouping.

\emph{We can observe that the accelerations of loss decreasing also roughly coincide with the amplitude increase of small living neurons. 
This means that the acceleration of loss decreasing in this regime also exploits the loss-decreasing path given by the escape neurons in the nearby saddle points, similar to the vanishing initialization regime.}

We also note two differences between small but non-vanishing and vanishing initialization.
First, with the former, the loss curve no longer has extremely flat plateaus, and the small living neurons no longer appear extremely small in amplitude.
This is because the gradient flow from small but non-vanishing initialization does not get as close to the stationary points as does the trajectory from vanishing initialization \cite{jacot2022saddletosaddle}.
Second, the former gives rise to four living neuron groups, corresponding to four kinks,\footnote{See \cref{fig:output function ex01 larger init} for the emergence of the four kinks.} which suffices to reach zero loss; while the latter only creates two kinks and gets stuck at a high loss.
The vanishing initialization limit induces a low-rank bias on the weights \citep{bousquets_paper,jacot2022saddletosaddle,correlatedReLU}, which is alleviated by larger initialization scales to yield better expressivity with more kinks in the learned function (See \cref{app: evolution of the learned function ex01}).
A detailed account of this is in \cref{app: low-rank bias and init scale}.
The investigation of the minimal initialization scale that results in global minima is a meaningful future direction.



\section{How Larger Initialization Scale Alleviates the Low-rank Bias}
\label{app: low-rank bias and init scale}
The low-rank bias induced by small initialization is due to the distinctive dynamical pattern induced by such initialization. The rotation speed of a neuron is, loosely speaking, $O(1)$; while the amplitude increase of a neuron is slower when its amplitude is smaller, as observed by \citet{bousquets_paper,GDReLUorthogonaloutput,correlatedReLU}. Thus, given the initialization is sufficiently small, the neurons will have enough time to gather close to the attracting angles: The attracting angles are determined by the network function, which remains almost fixed (approximately as a zero function) during the early alignment phase when all the neurons have negligible amplitudes. 
As a result, the attracting angles are also almost fixed during this phase, and the neurons have enough time to approach them. This process causes a lower rank in the weight, which is preserved throughout training.
With (slightly) larger initialization, the neurons' amplitude increase becomes much faster, and the network function will deviate from the zero function before the neurons rotate to the original attracting angles. The rapid variation of network function gives birth to new attracting angles and thus leads to a higher rank in the weight matrix, as shown in \cref{fig: larger initialization training dynamics}.
There, the group 4 neurons and group 3 neurons does not manage to merge with group 1 and group 2, respectively, before being steered away by newly generated attracted angles.
However, such merging took place thoroughly with vanishing initialization (\cref{fig: small initialization training dynamics}).

\section{The Evolution of the Learned Function}
\label{app: evolution of the learned function ex01}
In this section, we demonstrate the evolution of the learned function for both vanishing initialization and slightly larger initialization.
We highlight the output function before and after the amplitude increase of the grouped living neurons.

The learned functions from vanishing initialization are shown in \cref{fig:output function ex01}.
Remember that, to simulate vanishing initialization, we initialized all the parameters in the network independently with law $\mathcal{N}\left(0,(5\times10^{-6})^2\right)$. 
In \cref{fig:output function ex01}, we only draw the network function (using the blue line) with respect to the first component of the input, since the second component is always $1$ in the training data to simulate the bias.
Namely, in \cref{fig:output function ex01}, we draw the following function:
\begin{equation}
\Tilde{y}(x)=\hat{y}\big( (x,1)\big),~ x\in \mathbbm{R},
\end{equation}
where $\hat{y}(\cdot)$ is the network function in \cref{eq: output function}.

\begin{figure*}[htbp]
    \centering
    \begin{subfigure}[b]{0.32\textwidth}
        \includegraphics[width=\textwidth]{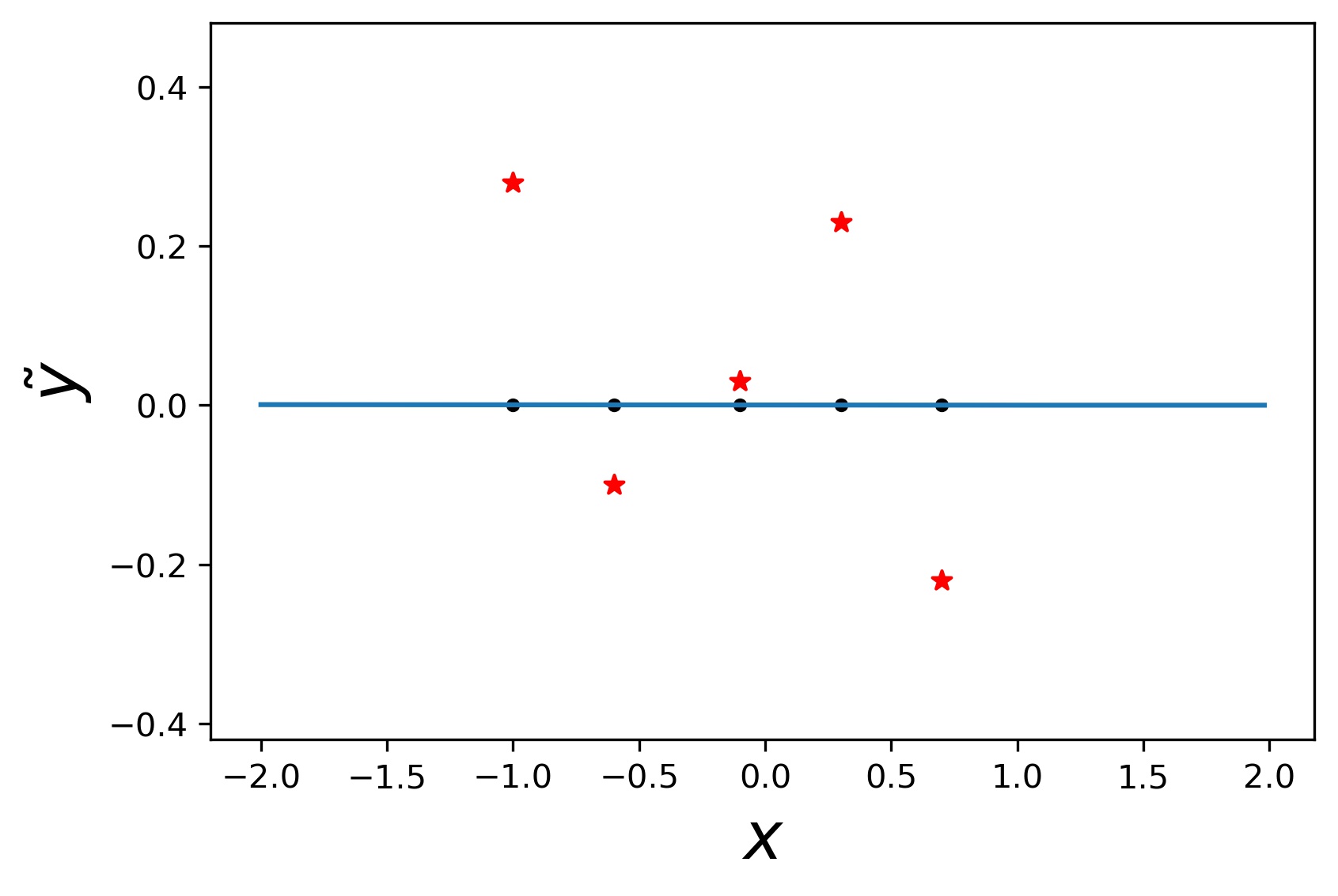}
        \caption{epoch $42k$}
        \label{fig:42k output function}
    \end{subfigure}
    \begin{subfigure}[b]{0.32\textwidth}
        \includegraphics[width=\textwidth]{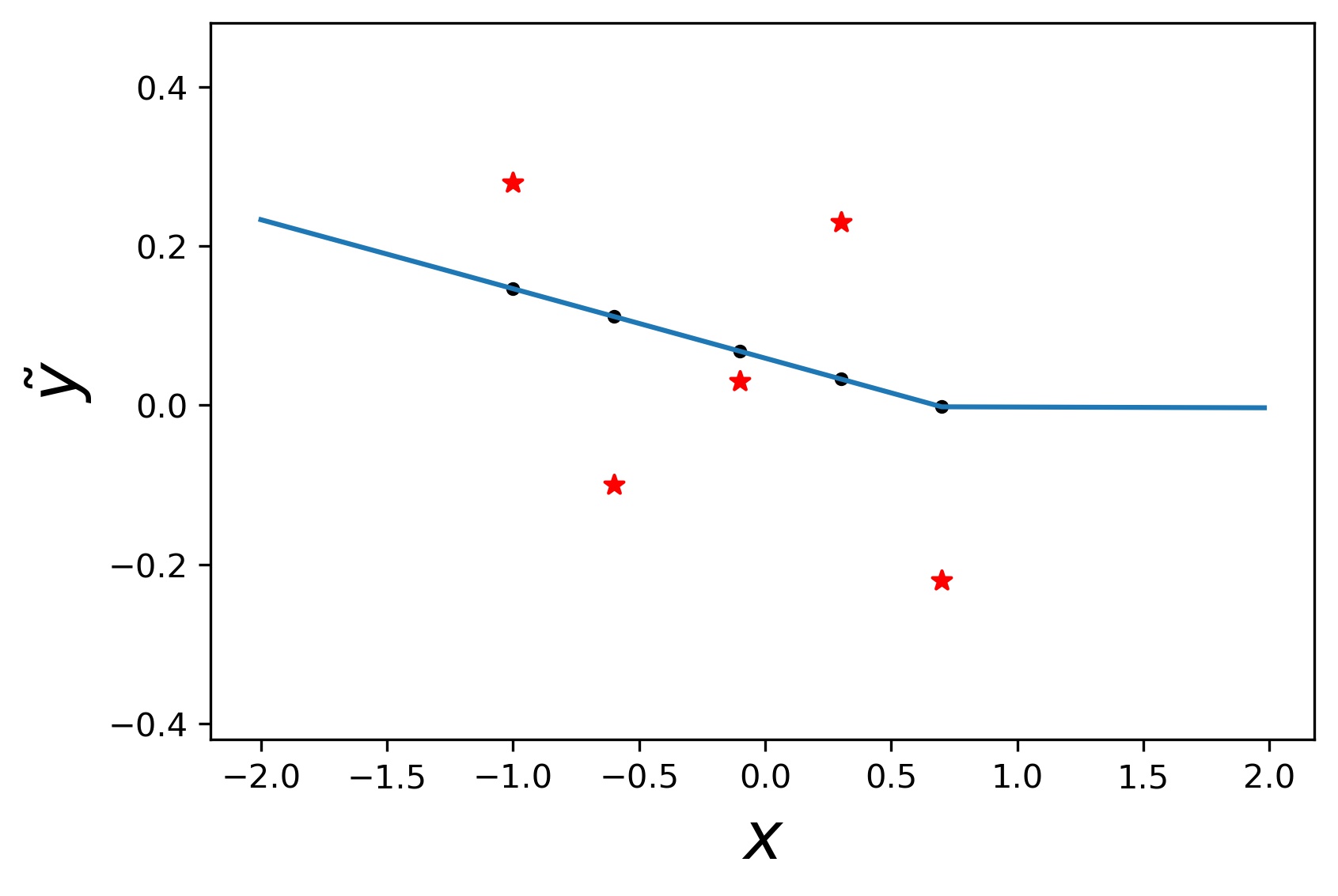}
        \caption{epoch $152k$}
        \label{fig:152k output function}
    \end{subfigure}
    \begin{subfigure}[b]{0.32\textwidth}
        \includegraphics[width=\textwidth]{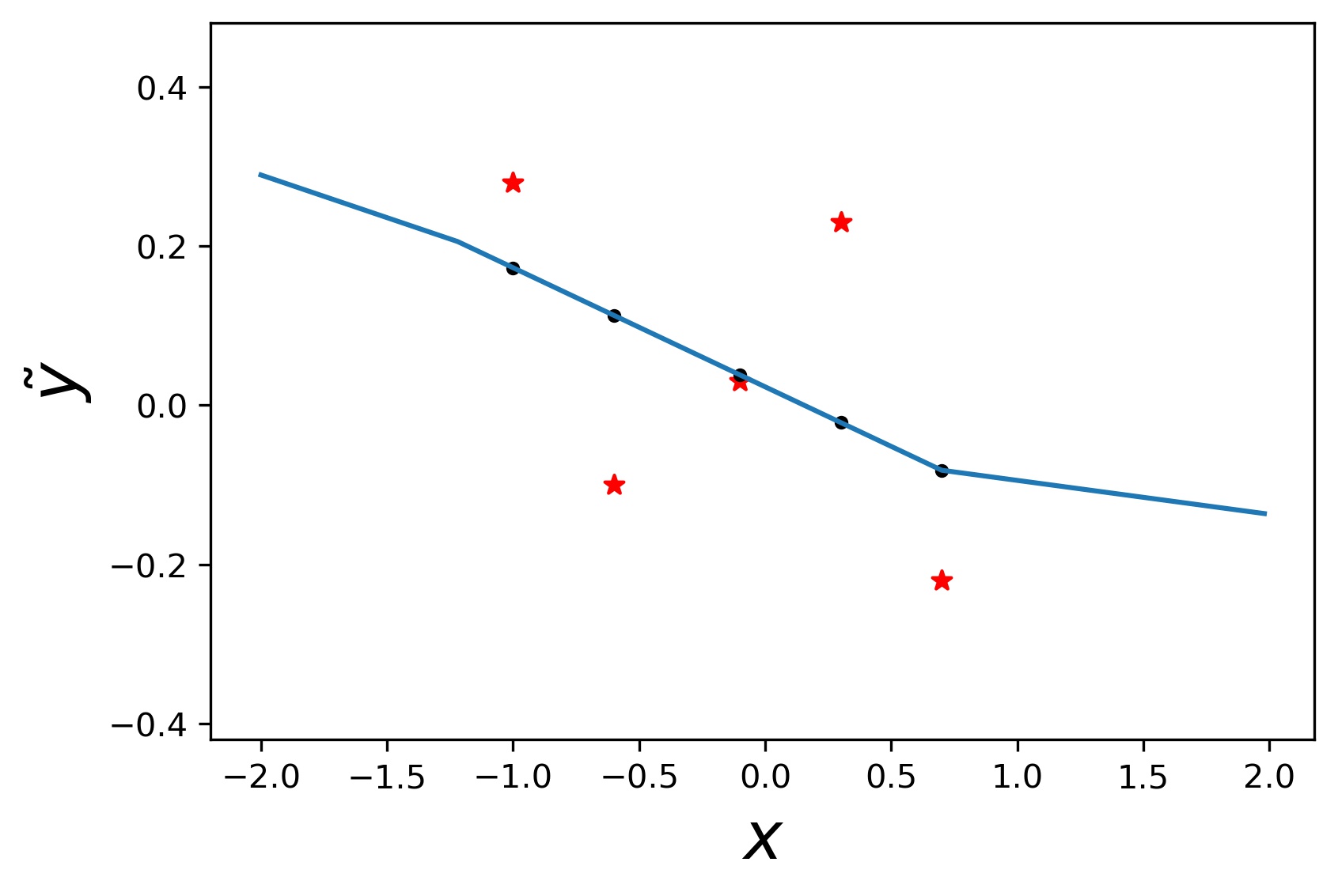}
        \caption{epoch $500k$}
        \label{fig:500k output function}
    \end{subfigure}
    \caption{The evolution of the learned function during training with an initialization standard deviation of $5\times10^{-6}$.}
    \label{fig:output function ex01}
\end{figure*}

The five red stars are $(x_k, y_k)$'s, $x_k$'s being the first component of all the training samples, and $y_k$'s are the scalar target values.
The five dots corresponds to $(x_k, \Tilde{y}(x_k))$.

One can observe that, at epoch $42$k, the network function is, approximately, a zero function.
After the amplitude increase of the group 1 neurons, by epoch $152$k, the network function has learned one kink.
The amplitude increase starting from roughly epoch $152$k forms the second kink of the network function.
Afterward, the network stops evolving, keeping the two kinks before the end of training (epoch $500$k).

The evolution of the learned function from a slightly larger initialization scale is shown in \cref{fig:output function ex01 larger init}.
In this experiment, the parameters are initialized independently with law $\mathcal{N}\left(0,(8.75\times 10^{-4}\right)$.
We can observe that, with the amplitude increase of $4$ groups of small living neurons at roughly epochs $14k$, $66k$, $144k$, and $280k$; there emerged $4$ kinks.
Notice, there should be $4$ kinks in the learned function at the end. 
However, only $3$ of them are shown, because the last one is located too far away from the input range we show in the image.

\begin{figure}[htb]
\vspace{30pt}
  \centering
  \begin{subfigure}[b]{0.32\textwidth}
    \includegraphics[width=\linewidth,height=5cm,keepaspectratio]{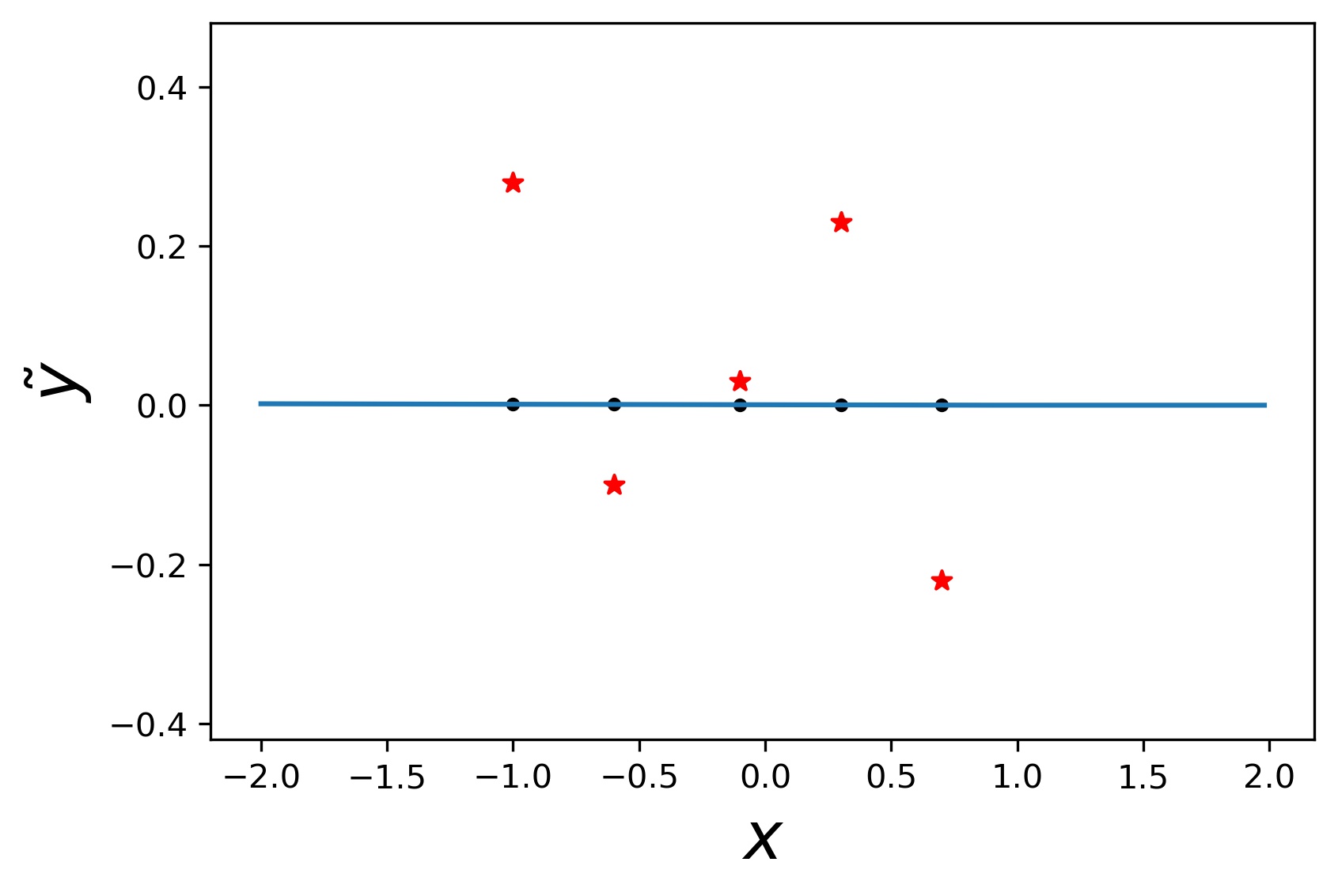}
    \caption{Epoch $14$k}
       \label{fig:14k output function larger init}
  \end{subfigure}
  \begin{subfigure}[b]{0.32\textwidth}\includegraphics[width=\linewidth,height=5cm,keepaspectratio]{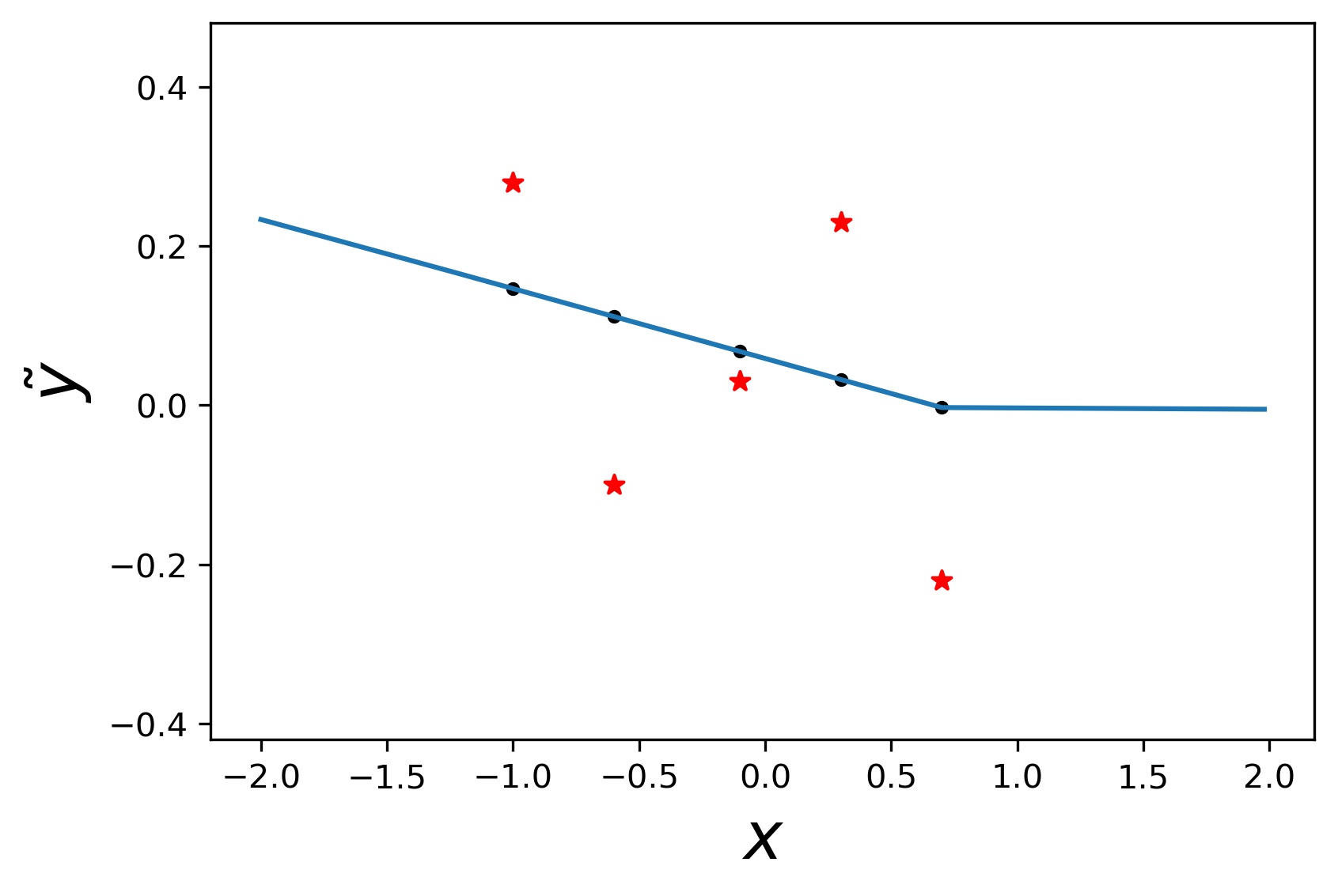}
    \caption{Epoch $66$k}
       \label{fig:66k output function larger init}
  \end{subfigure}
  \begin{subfigure}[b]{0.32\textwidth}\includegraphics[width=\linewidth,height=5cm,keepaspectratio]{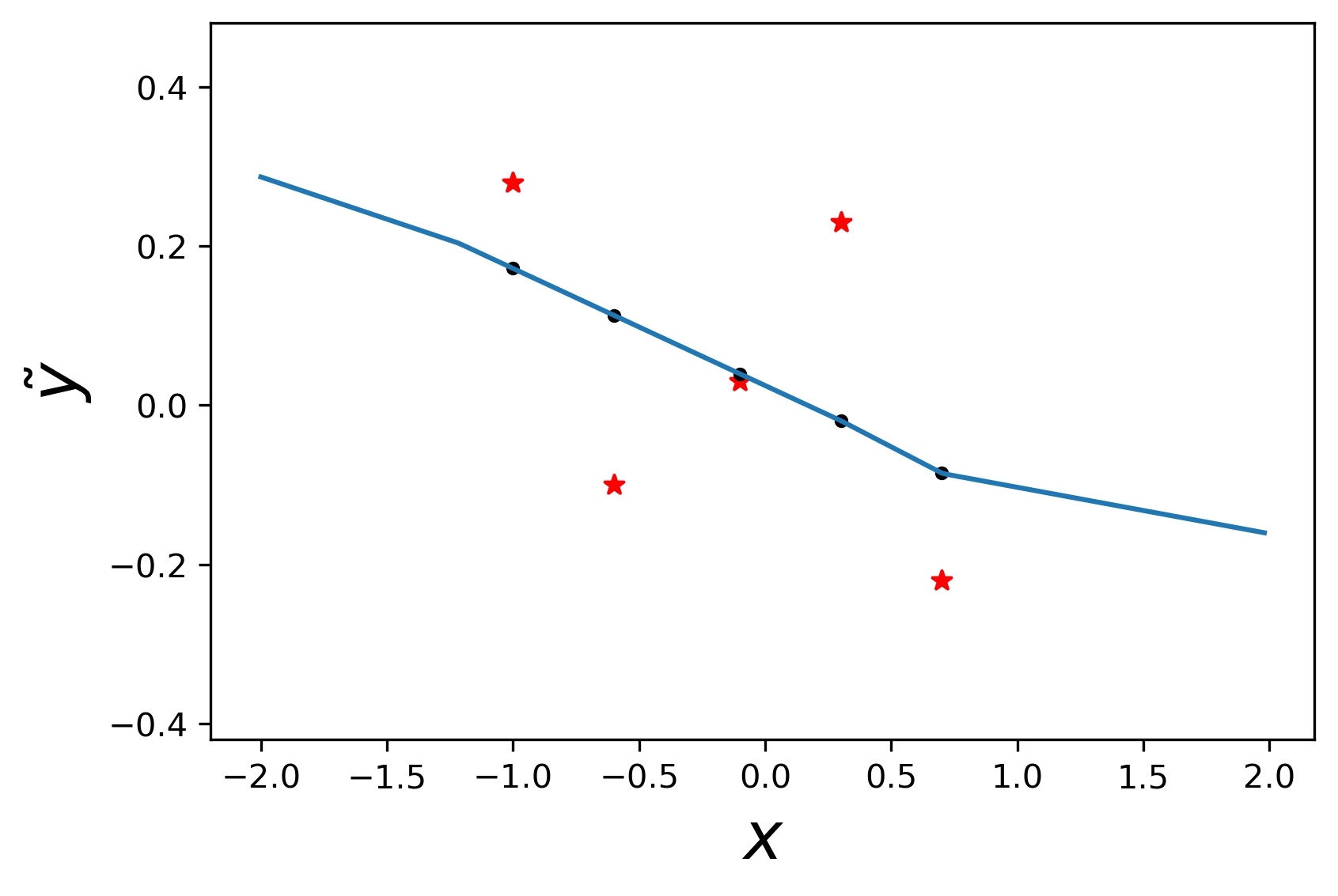}
    \caption{Epoch $144$k}
       \label{fig:144k output function larger init}
  \end{subfigure}
  \begin{subfigure}[b]{0.32\textwidth}\includegraphics[width=\linewidth,height=5cm,keepaspectratio]{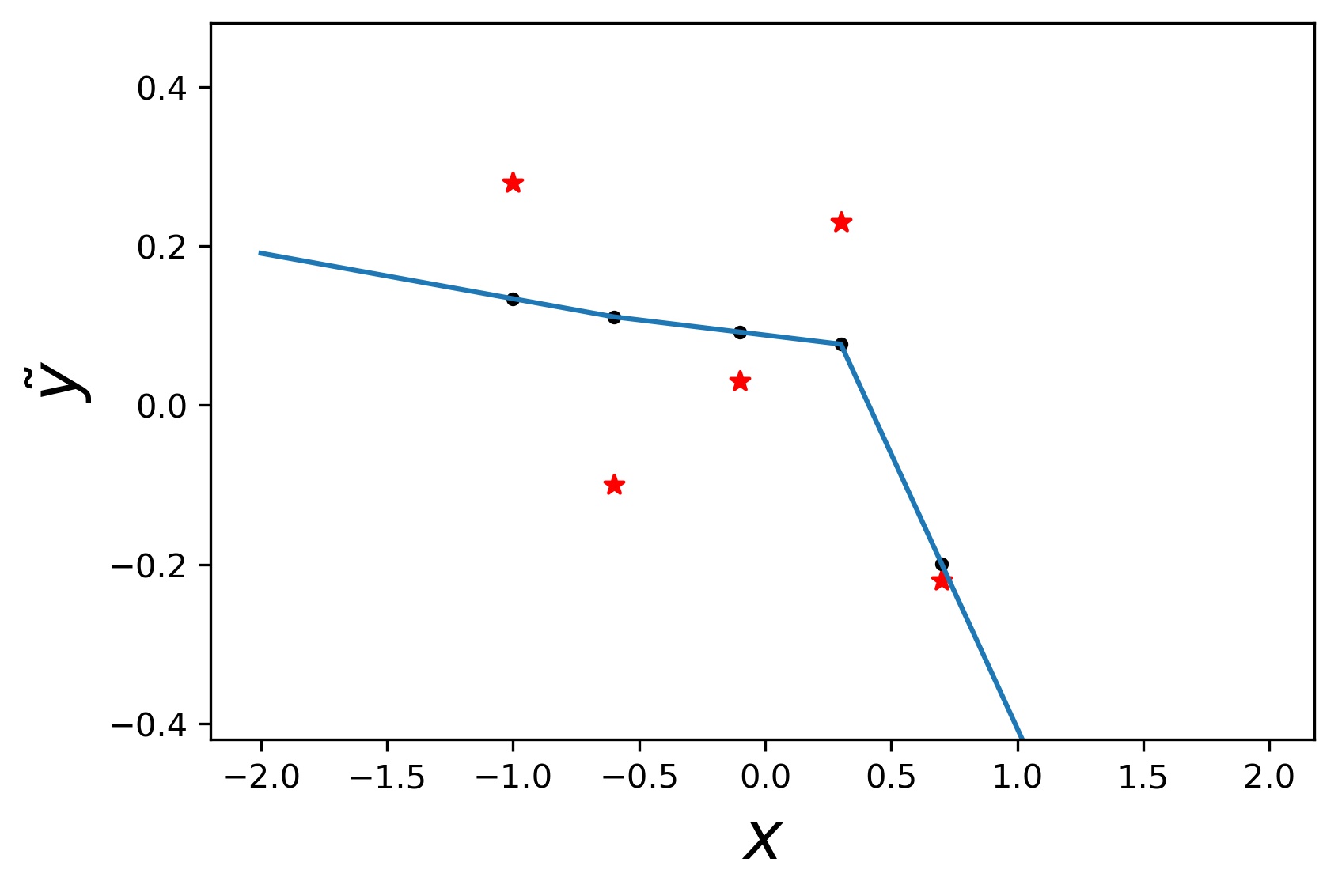}
    \caption{Epoch $280$k}
       \label{fig:28k output function larger init}
  \end{subfigure}
  \begin{subfigure}[b]{0.32\textwidth}\includegraphics[width=\linewidth,height=5cm,keepaspectratio]{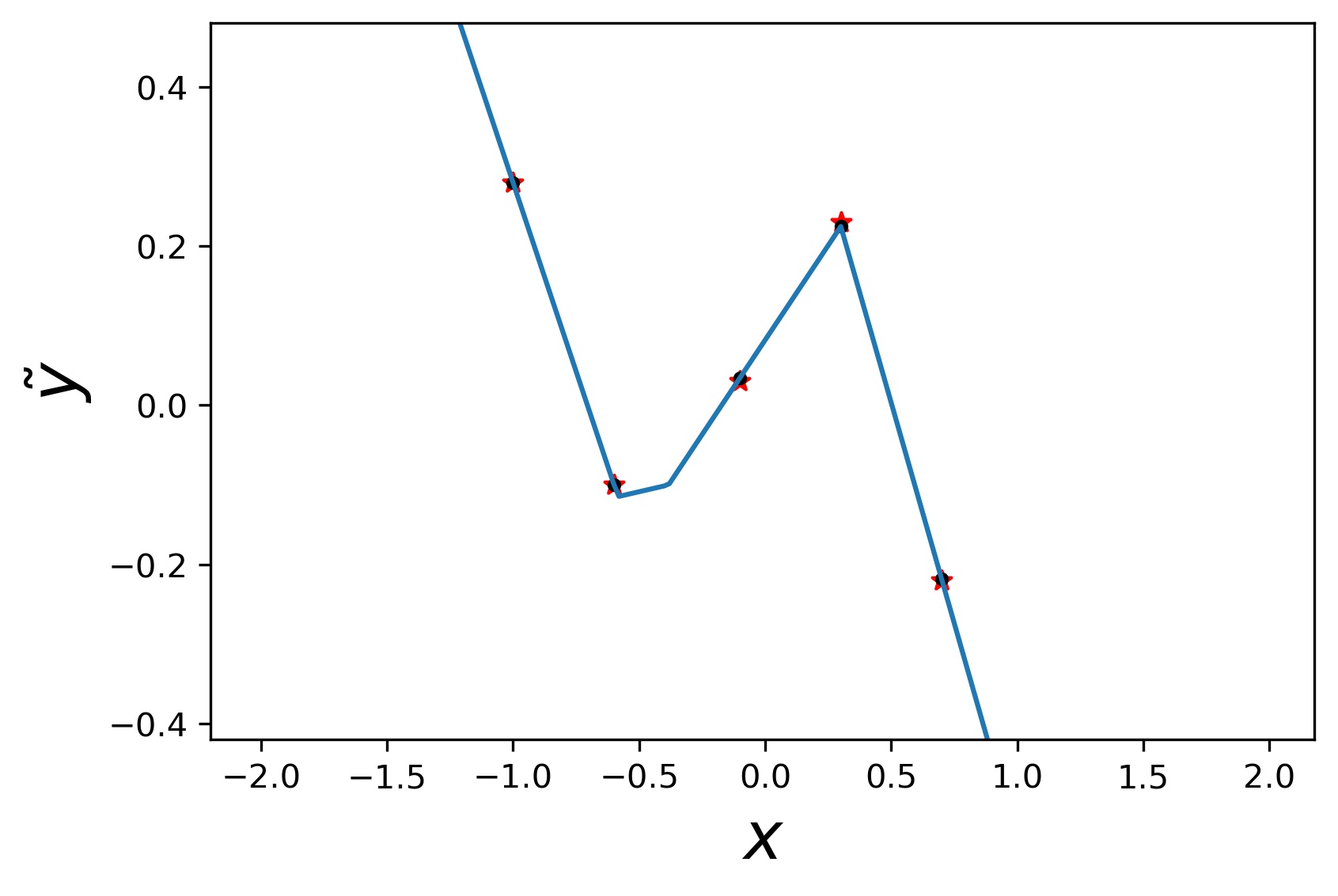}
    \caption{Epoch $580$k}
       \label{fig:580k output function larger init}
  \end{subfigure}
  \caption{Evolution of the learned function during the training starting with a larger initialization scale.}
\label{fig:output function ex01 larger init}
\end{figure}

\section{Proof of \cref{lemma: unit replication stationarity minimality result}}
\label{app: proof of unit replication stationary preserve lemma}
\subsection{Conditions for Unit Replication to Preserve Stationarity}
Let us denote the parameters of the stationary point before unit replication by $\overline{\mathbf{P}}$ and the parameters after by $\overline{\mathbf{P}}^\prime$.
Then, \cref{def: stationary point} dictates that $\overline{\mathbf{P}}$ must satisfy:
\begin{subequations}
\begin{align}
\left.\frac{\partial \mathcal{L} (\mathbf{P}) }{\partial h_{ji}}\right|_{\mathbf{P} = \overline{\mathbf{P}}} = 0,& \hspace{5pt} \text{$\forall$ } j\in J, i\in I;\\
\left.\frac{\partial \mathcal{L} (\mathbf{P}) }{\partial r_{i}}\right|_{\mathbf{P} = \overline{\mathbf{P}}} = 0,& 
\hspace{5pt} \text{$\forall$ } i\in I;\\
\left.\frac{\partial \mathcal{L} (\mathbf{P}) }{\partial s_{i}}\right|_{\mathbf{P} = \overline{\mathbf{P}}} \geq 0,&
\text{ $\forall$ } i\in I, \text{ $\forall$ tangential direction $\mathbf{v}_i$ of $\overline{\mathbf{w}}_i$}.
\end{align}
\end{subequations}
It is easy to check that, after unit replication, the above still holds for the hidden neurons within the set $I\backslash{\{i_0\}}$, which are the hidden neurons untouched by unit replication, since the network function is not changed by this process. 
For the rest of the hidden neurons in $\overline{\mathbf{P}}^\prime$, 
which are indexed by $i_0^l\in L$, we can also deduce whether they conform to the conditions for stationarity.
We have the following:
\begin{align}
\left.\frac{\partial\mathcal{L}(\mathbf{P})}{\partial h_{ji_0^l}}\right|_{\mathbf{P} = \overline{\mathbf{P}}^\prime} = \mathbf{w}_{i_0^l}\cdot \mathbf{d}_{ji_0^l} = \beta_l\mathbf{w}_{i_0}\cdot \mathbf{d}_{ji_0} = \beta_l\left.\frac{\partial\mathcal{L}(\mathbf{P})}{\partial h_{ji_0}}\right|_{\mathbf{P} = \overline{\mathbf{P}}} = 0.
\end{align}
If $\Vert \mathbf{w}_{i_0^l}\Vert=\beta_l\Vert \mathbf{w}_{i_0}\Vert>0$, then the newly generated hidden neurons have radial directions:
\begin{align}
\left.\frac{\partial\mathcal{L}(\mathbf{P})}{\partial r_{i_0^l}}\right|_{\mathbf{P} = \overline{\mathbf{P}}^\prime}
 = \sum_{j\in J}h_{ji_0^l}\mathbf{d}_{ji_0^l}\cdot\mathbf{u}_{i_0^l}
 = \gamma_l\sum_{j\in J}h_{ji_0}\mathbf{d}_{ji_0}\cdot\mathbf{u}_{i_0} = \gamma_l \underbrace{\left.\frac{\partial\mathcal{L}(\mathbf{P})}{\partial r_{i_0}}\right|_{\mathbf{P} = \overline{\mathbf{P}}}}_{(=0)} = 0.
\end{align}
If $\Vert \mathbf{w}_{i_0^l}\Vert=\beta_l\Vert \mathbf{w}_{i_0}\Vert=0$, then, by convention, the radial derivative $\left.\frac{\partial\mathcal{L}(\mathbf{P})}{\partial r_{i_0^l}}\right|_{\overline{\mathbf{P}}^\prime}$ are set to zero without loss of rigor.

As for the tangential derivatives of the newly generated hidden neurons, we have:
\begin{align}
\left.\frac{\partial\mathcal{L}(\mathbf{P})}{\partial s_{i_0^l}}\right|_{\mathbf{P} = \overline{\mathbf{P}}^\prime}
 = \sum_{j\in J}h_{ji_0^l}\mathbf{d}_{ji_0^l}\cdot\mathbf{v}_{i_0^l}
 = \gamma_l\sum_{j\in J}h_{ji_0}\mathbf{d}_{ji_0}\cdot\mathbf{v}_{i_0}=\gamma_l \underbrace{\left.\frac{\partial\mathcal{L}(\mathbf{P})}{\partial s_{i_0}}\right|_{\mathbf{P} = \overline{\mathbf{P}}}}_{(\geq 0)}, 
\end{align}
where we take$\mathbf{v}_{i_0^l}=\mathbf{v}_{i_0} $.
Namely, we are implying that any tangential direction of $\mathbf{w}_{i_0^l}$ must also be a tangential direction of $\mathbf{w}_{i_0}$. 
This is justified by the fact that $\beta_l>0$, $\forall~l \in L$.

With the above, we can infer $\left.\frac{\partial\mathcal{L}(\mathbf{P})}{\partial s_{i_0^l}}\right|_{\mathbf{P} = \overline{\mathbf{P}}^\prime}\geq 0$ for all possible tangential direction $\mathbf{v}_{i_0^l}$'s if and only if either of the following is true:
\begin{enumerate}
    \item $\frac{\partial \mathcal{L}(\overline{\mathbf{P}})}{\partial s_{i_0}}=0$, \text{$\forall$ tangential direction $\mathbf{v}_{i_0}$ of ${\mathbf{w}}_{i_0}$},
    \item $\gamma_l \geq 0$, $\forall$ $l \in L$; 
\end{enumerate}
which concludes the proof.

\subsection{Conditions for Unit Replication to Preserve Type-1 Local Minimality}
The necessary and sufficient condition to preserve type-1 local minimality after unit replication is to avoid generating escape neurons and to preserve conditions for stationarity.
Thus, we prove \cref{lemma: unit replication stationarity minimality result} by seeking necessary and sufficient conditions to achieve both these two goals.
Let us denote the parameters of the stationary point before unit replication by $\overline{\mathbf{P}}$ and the parameters after by $\overline{\mathbf{P}}^\prime$.

\subsubsection{Avoid Generating escape neurons}

First, let us consider two unit replication schemes:
\begin{enumerate}
    \item \textit{Replicate a tangentially flat hidden neuron} Choose to replicate a hidden neuron $i_0$ satisfying $\left.\frac{\partial \mathcal{L}({\mathbf{P}})}{\partial s_{i_0}}\right|_{\mathbf{P} = \overline{\mathbf{P}}}=0$, for all tangential direction $ \mathbf{v}_{i_0}$'s.
    \item \textit{Replicate with active propagation} Let $\gamma_l\neq0$, $\forall$ $l\in L$.
\end{enumerate}
\label{prop: avoid escape neurons}
\begin{proposition}
Unit replication process conforming to either one of the two methods is sufficient and necessary for avoiding escape neurons. 
\end{proposition}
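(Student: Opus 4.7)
The plan is to first record how the two key quantities appearing in \cref{def: escape neurons}, namely $\frac{\partial \mathcal L(\overline{\mathbf P}')}{\partial s_{i_0^l}}$ and $\mathbf{d}_{j i_0^l}^{\mathbf{v}_{i_0^l}} \cdot \mathbf{v}_{i_0^l}$, relate to their counterparts at the original neuron $i_0$, and then to classify when a replicated neuron meets the escape condition. Since $\beta_l > 0$, every tangential direction of $\mathbf{w}_{i_0^l} = \beta_l \mathbf{w}_{i_0}$ is a tangential direction of $\mathbf{w}_{i_0}$ (and vice versa), so I may identify $\mathbf{v}_{i_0^l}$ with the corresponding $\mathbf{v}_{i_0}$. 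Because unit replication preserves the network function, the residuals $e_{kj}$ are unchanged, and because $\beta_l > 0$ the signs of $(\mathbf{w}_{i_0^l} + \Delta s\, \mathbf{v})\cdot \mathbf{x}_k$ for small $\Delta s > 0$ agree with those for $\mathbf{w}_{i_0}$. Substituting into \cref{eq: djivi,eq: dL/ds} then gives $\mathbf{d}_{j i_0^l}^{\mathbf{v}} \cdot \mathbf{v} = \mathbf{d}_{j i_0}^{\mathbf{v}} \cdot \mathbf{v}$ and $\frac{\partial \mathcal L(\overline{\mathbf P}')}{\partial s_{i_0^l}} = \gamma_l\, \frac{\partial \mathcal L(\overline{\mathbf P})}{\partial s_{i_0}}$ for every tangential direction $\mathbf{v}$.

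For the sufficiency direction, I will handle the two cases separately. If all tangential derivatives of $\mathbf{w}_{i_0}$ vanish, then since $\overline{\mathbf P}$ is a type-1 local minimum and therefore contains no escape neuron at $i_0$, the identity $\frac{\partial \mathcal L(\overline{\mathbf P})}{\partial s_{i_0}} = \sum_j h_{j i_0}\, \mathbf{d}_{j i_0}^{\mathbf{v}} \cdot \mathbf{v} = 0$ actually forces $\mathbf{d}_{j i_0}^{\mathbf{v}} \cdot \mathbf{v} = 0$ for every $j$ and every $\mathbf{v}$; the transfer identities from the first paragraph then rule out any escape neuron among the $i_0^l$. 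If instead $\gamma_l \neq 0$ for every $l$, I use that $\frac{\partial \mathcal L(\overline{\mathbf P}')}{\partial s_{i_0^l}} = 0$ along some $\mathbf{v}$ is equivalent to $\frac{\partial \mathcal L(\overline{\mathbf P})}{\partial s_{i_0}} = 0$ along the same $\mathbf{v}$, and then the same appeal to the type-1 property of $\overline{\mathbf P}$ kills all the relevant dot products.

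For the necessity direction I argue by contrapositive: assume both conditions fail, so there exist a tangential direction $\mathbf{v}^*$ with $\frac{\partial \mathcal L(\overline{\mathbf P})}{\partial s_{i_0}} \neq 0$ and an index $l^* \in L$ with $\gamma_{l^*} = 0$. Then $h_{j i_0^{l^*}} = 0$ for every $j \in J$, so $\frac{\partial \mathcal L(\overline{\mathbf P}')}{\partial s_{i_0^{l^*}}} = 0$ along every tangential direction; meanwhile the nonvanishing tangential derivative at $i_0$ along $\mathbf{v}^*$ forces $\mathbf{d}_{j' i_0}^{\mathbf{v}^*} \cdot \mathbf{v}^* \neq 0$ for some $j'$, and by the transfer identity the same holds at $i_0^{l^*}$. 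Thus $i_0^{l^*}$ satisfies \cref{def: escape neurons}, contradicting the hypothesis that no escape neuron was created.

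The computation is largely bookkeeping; the main subtlety to get right is the careful identification of tangential directions between $\mathbf{w}_{i_0}$ and $\mathbf{w}_{i_0^l}$ (which requires $\beta_l > 0$ and is the reason unit replication does not allow sign flips on the input side), together with the appeal to the type-1 local minimum assumption on $\overline{\mathbf P}$ which converts a scalar equation $\sum_j h_{j i_0}\, \mathbf{d}_{j i_0}^{\mathbf{v}} \cdot \mathbf{v} = 0$ into the much stronger componentwise vanishing $\mathbf{d}_{j i_0}^{\mathbf{v}} \cdot \mathbf{v} = 0$ for all $j$. Without that implication the sufficiency argument in Case 2 would not close.
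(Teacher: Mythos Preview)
Your proof is correct and follows essentially the same approach as the paper: you establish the transfer identities $\mathbf{d}_{ji_0^l}^{\mathbf{v}}\cdot\mathbf{v}=\mathbf{d}_{ji_0}^{\mathbf{v}}\cdot\mathbf{v}$ and $\frac{\partial\mathcal{L}(\overline{\mathbf{P}}')}{\partial s_{i_0^l}}=\gamma_l\,\frac{\partial\mathcal{L}(\overline{\mathbf{P}})}{\partial s_{i_0}}$, then use the type-1 assumption on $\overline{\mathbf{P}}$ to upgrade a vanishing tangential derivative to componentwise vanishing of the $\mathbf{d}_{ji_0}^{\mathbf{v}}\cdot\mathbf{v}$, exactly as the paper does. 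Your organization is slightly cleaner (you package the transfer identities once upfront and argue direction-by-direction in the active-propagation case rather than splitting into tangentially-flat versus non-flat subcases), but the substance is the same.
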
 

\begin{proof}
\textcolor{white}{x}\vspace{5pt}\\ 
\textbf{Sufficiency}:

Let us discuss the first way, replicating a tangentially flat hidden neuron.
Since we have:
\begin{equation}
\left.\frac{\partial \mathcal{L}({\mathbf{P}})}{\partial s_{i_0}}\right|_{\mathbf{P} = \overline{\mathbf{P}}} = \sum_{j\in J}\overline{h}_{ji_0}\mathbf{d}_{ji_0}^{\mathbf{v}_{i_0}}\cdot \mathbf{v}_{i_0}=0, \hspace{5pt} \text{$\forall$ tangential direction $\mathbf{v}_{i_0}$ of }{\overline{\mathbf{w}}}_{i_0}.
\end{equation}
Based on the definition of type-1 local minima, we know that:
\begin{equation}
\mathbf{d}_{ji_0}^{\mathbf{v}_{i_0}}\cdot \mathbf{v}_{i_0}=0, \hspace{5pt} \text{$\forall j\in J$, $\forall$ tangential direction $\mathbf{v}_{i_0}$ of }{\overline{\mathbf{w}}}_{i_0}.
\label{eq: dv = 0 before unit replication}
\end{equation}
After unit replication, we have:  For all $l\in L$, (1) $\mathbf{d}_{ji_0^l}^{\mathbf{v}_{i_0^l}} = \mathbf{d}_{ji_0}^{\mathbf{v}_{i_0}}$, $\forall$ $j\in J$, for all  tangential direction $\mathbf{v}_{i_0}$ of ${\overline{\mathbf{w}}}_{i_0}$; (2) $\overline{\mathbf{w}}_{i_0^l} = \beta_l\overline{\mathbf{w}}_{i_0}$ with $\beta_l>0$, which means any tangential direction of $\overline{\mathbf{w}}_{i_0^l}$ are also a tangential direction of $\overline{\mathbf{w}}_{i_0}$. These, combined with \cref{eq: dv = 0 before unit replication}, leads to:
\begin{equation}
\mathbf{d}_{ji_0^l}^{\mathbf{v}_{i_0^l}}\cdot \mathbf{v}_{i_0^l}=0, \hspace{5pt} \text{$\forall~l \in L$, $\forall~j \in J$, $\forall$  tangential direction $\mathbf{v}_{i_0^l}$ of }\mathbf{\overline{w}}_{i_0^l}.
\end{equation}
According to  \cref{def: escape neurons}, such $i_0^l$ cannot be escape neurons.

\vspace{5pt}
Next, let us discuss the second way, replicating with active propagation.
If the replicated hidden neuron is a tangentially flat hidden neuron, then the discussion above will already guarantee that the unit replication process does not introduce escape neurons.
Thus, we only need to focus on the following type of neurons:
\begin{equation}
\left.\frac{\partial \mathcal{L}({\mathbf{P}})}{\partial s_{i_0}}\right|_{\mathbf{P} = \overline{\mathbf{P}}} = \sum_{j\in J}\overline{h}_{ji_0}\mathbf{d}_{ji_0}^{\mathbf{v}_{i_0}}\cdot \mathbf{v}_{i_0}>0, \hspace{5pt} \text{for some tangential direction $\mathbf{v}_{i_0}$ of}~\mathbf{\overline{w}}_{i_0}.
\label{eq: tangentially non flat neuron}
\end{equation}

In this case, for any direction $\mathbf{v}_{i_0}$ with $\left.\frac{\partial \mathcal{L}({\mathbf{P}})}{\partial s_{i_0}}\right|_{\mathbf{P} = \overline{\mathbf{P}}} = \sum_{j\in J}\overline{h}_{ji_0}\mathbf{d}_{ji_0}^{\mathbf{v}_{i_0}}\cdot \mathbf{v}_{i_0}>0$, we have that, after unit replication, in the same direction ($\mathbf{v}_{i_0^l} = \mathbf{v}_{i_0}$):
\begin{equation}
\left.\frac{\partial \mathcal{L}({\mathbf{P}})}{\partial s_{i_0^l}}\right|_{\mathbf{P} = \overline{\mathbf{P}}^\prime} = \sum_{j\in J}\overline{h}_{ji_0^l}\mathbf{d}_{ji_0^l}^{\mathbf{v}_{i_0^l}}\cdot \mathbf{v}_{i_0^l} =\gamma_l\sum_{j\in J}\overline{h}_{ji_0}\mathbf{d}_{ji_0}^{\mathbf{v}_{i_0}}\cdot \mathbf{v}_{i_0}  = \gamma_l \left.\frac{\partial \mathcal{L}({\mathbf{P}})}{\partial s_{i_0}}\right|_{\mathbf{P} = \overline{\mathbf{P}}}\neq0, 
\end{equation}
if $\gamma_l>0$, $\forall~l \in L$. 

For any direction $\mathbf{v}_{i_0}$ with $\left.\frac{\partial \mathcal{L}({\mathbf{P}})}{\partial s_{i_0}}\right|_{\mathbf{P} = \overline{\mathbf{P}}} = \sum_{j\in J}\overline{h}_{ji_0}\mathbf{d}_{ji_0}^{\mathbf{v}_{i_0}}\cdot \mathbf{v}_{i_0}=0$, we know that:
\begin{equation}
\mathbf{d}_{ji_0}^{\mathbf{v}_{i_0}}\cdot \mathbf{v}_{i_0}=0,~\forall~j\in J, 
\label{eq: another dv orthogonality}
\end{equation}
since the parameter before unit replication is a type-1 local minimum.
After unit replication, we have that, in the same direction, in the same direction ($\mathbf{v}_{i_0^l} = \mathbf{v}_{i_0}$):
\begin{equation}
\left.\frac{\partial \mathcal{L}({\mathbf{P}})}{\partial s_{i_0^l}}\right|_{\mathbf{P} = \overline{\mathbf{P}}^\prime} = \sum_{j\in J}\overline{h}_{ji_0^l}\mathbf{d}_{ji_0^l}^{\mathbf{v}_{i_0^l}}\cdot \mathbf{v}_{i_0^l} = \gamma_l \sum_{j\in J}\overline{h}_{ji_0}\mathbf{d}_{ji_0}^{\mathbf{v}_{i_0}}\cdot \mathbf{v}_{i_0} = \gamma_l \left.\frac{\partial \mathcal{L}({\mathbf{P}})}{\partial s_{i_0}}\right|_{\mathbf{P} = \overline{\mathbf{P}}}=0,
\end{equation}
and we also have:
\begin{equation}
\mathbf{d}_{ji_0^l}^{\mathbf{v}_{i_0^l}}\cdot \mathbf{v}_{i_0^l}=\mathbf{d}_{ji_0}^{\mathbf{v}_{i_0}}\cdot \mathbf{v}_{i_0}=0,~\forall~j\in J,~\forall~l\in L.
\end{equation}
This shows that replicating with active propagation also avoids introducing escape neurons.

Thus, a unit replication strategy conforming to at least one of the two methods mentioned above is sufficient to avoid generating escape neurons.

\vspace{5pt}
\textbf{$\blacksquare$ Necessary}:

We conduct this proof by contradiction. If we replicate a hidden neuron $i_0$ that is not tangentially flat (described by \cref{eq: tangentially non flat neuron}) with $\gamma_l = 0$ for some $l\in L$, we can prove that there must exist escape neurons in the resulting parameters.
For convenience, let us specify that 
$\hat{l}$ has
$\gamma_{\hat{l}} = 0$.

Since \cref{eq: tangentially non flat neuron} holds, we can find a direction, denoted by a unit vector $\hat{\mathbf{v}}_{i_0}$, satisfying $\hat{\mathbf{v}}_{i_0}\cdot\overline{\mathbf{w}}_{i_0} = 0$ and $\left.\frac{\partial \mathcal{L}({\mathbf{P}})}{\partial \hat{s}_{i_0}}\right|_{\mathbf{P} = \overline{\mathbf{P}}} = \sum_{j\in J}\overline{h}_{ji_0}\mathbf{d}_{ji_0}^{\mathbf{v}_{i_0}}\cdot \hat{\mathbf{v}}_{i_0}>0$, this must mean that there exists $\hat{j}$ with:
\begin{equation}
\mathbf{d}_{\hat{j}i_0}^{\mathbf{v}_{i_0}}\cdot \hat{\mathbf{v}}_{i_0}\neq0.
\end{equation}

After unit replication, we have that, for the hidden neuron $i_0^{\hat{l}}$, the loss function's derivative with respect to its input weight in the direction of $\hat{\mathbf{v}}_{i_0^{\hat{l}}} = \hat{\mathbf{v}}_{i_0}$ is:
\begin{equation}
\left.\frac{\partial \mathcal{L}({\mathbf{P}})}{\partial s_{i_0^{\hat{l}}}}\right|_{\mathbf{P} = \overline{\mathbf{P}}} = \sum_{j\in J}\overline{h}_{ji_0^{\hat{l}}}\mathbf{d}_{ji_0^{\hat{l}}}^\mathbf{v}\cdot \hat{\mathbf{v}}_{i_0^{\hat{l}}} = \gamma_{\hat{l}} \sum_{j\in J}\overline{h}_{ji_0}\mathbf{d}_{ji_0}^{\mathbf{v}_{i_0}}\cdot \hat{\mathbf{v}}_{i_0} =0.
\label{eq: contradicting type-1 1}
\end{equation}
Additionally, we also have:
\begin{equation}
\mathbf{d}_{\hat{j}i_0^{\hat{l}}}^\mathbf{v}\cdot \hat{\mathbf{v}}_{i_0^{\hat{l}}} = \mathbf{d}_{\hat{j}i_0}^{\mathbf{v}_{i_0}}\cdot \hat{\mathbf{v}}_{i_0}\neq0.
\label{eq: contradicting type-1 2}
\end{equation}
\cref{eq: contradicting type-1 1} and \cref{eq: contradicting type-1 2} signifies the existence of one escape neuron $i_0^{\hat{l}}$ after unit replication.

\end{proof}

\subsubsection{Preserving Stationarity Conditions}
The first half of \cref{lemma: unit replication stationarity minimality result} 
is the sufficient and necessary conditions for preserving conditions for stationarity during unit replication.

\subsection{Putting Things Together}
We want to avoid generating escape neurons while preserving conditions for stationarity at the same time.
The necessary and sufficient condition for the occurrence of these two events should be the intersection of the necessary and sufficient conditions in \cref{prop: avoid escape neurons} and \cref{lemma: unit replication stationarity minimality result}, which is exactly the necessary and sufficient condition of \cref{lemma: unit replication stationarity minimality result}.

\section{Results Regarding Inactive Units}
\label{app: inactive units}
\begin{wrapfigure}{r}{0.44\linewidth}
\vspace{-50pt}
    \centering  
\includegraphics[scale=0.4]{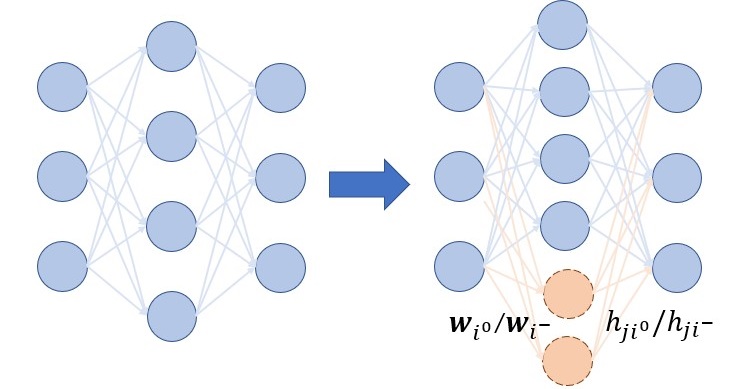}
        \caption{inactive units}
        \label{fig: zero/negative units}
\vspace{-20pt}
\end{wrapfigure}
There are $2$ types of inactive units, depicted in \cref{fig: zero/negative units}:\\
\textit{(2-1) Orthogonal units\footnote{Orthogonal units are not discussed in \cite{embeddingNEURIPS2019}.}}\\
Add more hidden neurons, which are indexed with $i^0\in I^0$, where $h_{ji^0}$ is arbitrary, and $\mathbf{w}_{i^0}\cdot\mathbf{x}_k = 0$, for all $k\in K$.
\\
\textit{(2-2) Negative (positive) units.}\\
Suppose $\alpha^-=0$.
Add new hidden neurons $i^-\in I^-$, where $\mathbf{w}_{i^-}\cdot \mathbf{x}_k<0$ for all $k \in K$, and $h_{ji^-}$ are arbitrary for all $j\in J$.
This is what we call negative units. 
This embedding scheme can likewise be extrapolated for when $\alpha^+ = 0$, giving positive units.

\subsection{Results Regarding Orthogonal Units}
\label{app: about orthogonal units}
This type of network embedding does not generally preserve conditions for stationarity or local minimality. 
Nonetheless, based on our definition of stationary points in \cref{sec: stationary points} and our analysis of local minima in \cref{sec: local minima theory}, one can carry out investigations into them on a case-by-case basis when analyzing a specific example.

Let us first discuss whether stationarity will be preserved under this network embedding scheme. 
Let us denote the parameters after adding orthogonal units by ${\mathbf{P}}^\prime$.
Since the newly added hidden neurons satisfy $\mathbf{w}_{i^0}\cdot\mathbf{x}_k = \mathbf{0}$ for all $k~\in K$, we know that $\left.\frac{\partial\mathcal{L}(\mathbf{P})}{\partial h_{j{i^0}}}\right|_{\mathbf{P} = {\mathbf{P}}^\prime} = 0$, and $\left.\frac{\partial\mathcal{L}(\mathbf{P})}{\partial r_{i^0}}\right|_{\mathbf{P} = {\mathbf{P}}^\prime}$ equals zero.
Thus, whether the tangential derivative preserves the condition in \cref{def: stationary point} determines whether stationarity is preserved.
For the tangential derivative, we have:
\begin{equation}
\left.\frac{\partial\mathcal{L}(\mathbf{P})}{\partial s_{i^0}}\right|_{\mathbf{P} = {\mathbf{P}}^\prime} = \sum_{j\in J}h_{ji^0}\mathbf{d}_{ji^0}^{\mathbf{v}_{i^0}}\cdot\mathbf{v}_{i^0}.
\end{equation}
Stationarity would require the above to be non-negative for all possible tangential direction $\mathbf{v}_{i^0}$, which cannot be guaranteed for general cases.
Certain trivial cases where definite conclusions can be established are when the network has reached zero loss ($\mathbf{d}_{i^0}^{\mathbf{v}_{i^0}} =0$ for all possible $\mathbf{v}_{i^0}$) or $h_{ji^0} = 0$ for all $j\in J$.
In those cases, stationarity will be preserved.

We are not able to have general conclusions regarding whether the insertion of orthogonal units preserves type-1 local minimality as well since orthogonal units do not specify anything regarding whether they are escape neurons in general.

\subsection{Results Regarding Negative Units}
\begin{proposition}
\label{lemma: negative unit condition for stationarity}
Suppose that $\alpha^-=0$.
Adding \textbf{negative units} preserves the stationarity of stationary points. 
\end{proposition}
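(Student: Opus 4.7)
The plan is to apply \cref{lemma: first-order derivatives} directly to the newly added negative units and verify that each of the three stationarity conditions in \cref{def: stationary point} is satisfied. The key observation is that, since $\alpha^-=0$ and $\mathbf{w}_{i^-}\cdot\mathbf{x}_k<0$ for every training input, the negative units contribute nothing to the network output: $\rho(\mathbf{w}_{i^-}\cdot\mathbf{x}_k)=0$ for all $k\in K$. Consequently, the residuals $e_{kj}$ at the embedded parameter set $\mathbf{P}'$ coincide with those at the original stationary point $\overline{\mathbf{P}}$, so all partial derivatives associated with the pre-existing neurons are unchanged, and the three stationarity conditions in \cref{def: stationary point} continue to hold for every $i\in I$ and $j\in J$.

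It then remains to verify the conditions at the new neurons $i^-\in I^-$. First I would inspect $\mathbf{d}_{ji^-}$ as defined in \cref{lemma: first-order derivatives}: the positive-activation sum is empty because $\mathbf{w}_{i^-}\cdot\mathbf{x}_k<0$ for all $k$, and the negative-activation sum vanishes because it carries the factor $\alpha^-=0$. Thus $\mathbf{d}_{ji^-}=\mathbf{0}$, which immediately gives $\frac{\partial\mathcal{L}}{\partial h_{ji^-}}=\mathbf{w}_{i^-}\cdot\mathbf{d}_{ji^-}=0$ and $\frac{\partial\mathcal{L}}{\partial r_{i^-}}=\sum_{j\in J}h_{ji^-}\mathbf{d}_{ji^-}\cdot\mathbf{u}_{i^-}=0$, covering conditions (1) and (2) of \cref{def: stationary point}.

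For condition (3), I would use the openness of the strict inequality $\mathbf{w}_{i^-}\cdot\mathbf{x}_k<0$: for any tangential direction $\mathbf{v}_{i^-}$ and all sufficiently small $\Delta s_{i^-}>0$, we still have $(\mathbf{w}_{i^-}+\Delta s_{i^-}\mathbf{v}_{i^-})\cdot\mathbf{x}_k<0$ for every $k\in K$. Hence, by the same argument as above, $\mathbf{d}_{ji^-}^{\mathbf{v}_{i^-}}=\mathbf{0}$, so $\frac{\partial\mathcal{L}}{\partial s_{i^-}}=\sum_{j\in J}h_{ji^-}\mathbf{d}_{ji^-}^{\mathbf{v}_{i^-}}\cdot\mathbf{v}_{i^-}=0\geq 0$, which suffices for condition (3) regardless of the (arbitrary) choice of $h_{ji^-}$.

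I do not anticipate a real obstacle: the entire argument hinges on the fact that negative units are \emph{strictly} in the negative half-space with respect to every training input, so no perturbation can push them across the kink at first order. The only subtlety worth flagging is ensuring that ``sufficiently small'' $\Delta s_{i^-}$ is uniform in $k\in K$, which follows because $K$ is finite and $|\mathbf{w}_{i^-}\cdot\mathbf{x}_k|$ is bounded away from zero uniformly in $k$.
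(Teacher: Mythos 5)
Your proposal is correct and follows essentially the same route as the paper: both arguments rest on $\mathbf{d}_{ji^-}=\mathbf{0}$ (empty positive-activation sum plus the $\alpha^-=0$ factor) together with the observation that a negative unit is orthogonal to no training input, so $\mathbf{d}_{ji^-}^{\mathbf{v}_{i^-}}=\mathbf{d}_{ji^-}=\mathbf{0}$ and all three conditions of \cref{def: stationary point} hold with equality. Your explicit remark that the strict inequalities persist under small tangential perturbations (uniformly over the finite set $K$) just spells out what the paper states more tersely.
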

\begin{proof}

\label{app: proof of negative unit condition for stationarity preserve lemma}
Let us denote the parameters after the insertion of negative units by $\overline{\mathbf{P}}^\prime$.
It is easy to check that the parameters associated with the originally existing hidden neurons $i\in I$ still satisfy the conditions for stationarity after inserting the negative units, since the negative units do not change the network output $\mathbf{y}_k$ for all inputs $\mathbf{x}_k$.
For the negative units $i^-\in I^-$, we observe the following.
\begin{equation}
\mathbf{d}_{ji^-} = \sum_{\substack
{k:\\
\mathbf{w}_i^-\cdot\mathbf{x}_k>0}}\alpha^+e_{kj}\mathbf{x}_k = 0,
\end{equation}
since this summation will be over an empty set of $k$.
This leads to $\left.\frac{\partial\mathcal{L}(\mathbf{P})}{\partial h_{j{i^-}}}\right|_{\mathbf{P} = \overline{\mathbf{P}}^\prime} = 0$ and $\left.\frac{\partial\mathcal{L}(\mathbf{P})}{\partial r_{i^-}}\right|_{\mathbf{P} = \overline{\mathbf{P}}^\prime} = 0$ according to their formulae in \cref{eq: dL/dh,eq: dL/dr}.

Moreover, we have $\mathbf{d}_{ji^-}^{\mathbf{v}_{i^-}} = \mathbf{d}_{ji^-}$
for tangential direction $\mathbf{v}_{i^-}$of $\mathbf{w}_{i^-}$, since $\mathbf{w}_{i^-}$ is orthogonal to no $\mathbf{x}_k$'s.
This shows that the tangential derivative of the loss function with respect to $\mathbf{w}_{i^-}$ is also zero, according to \cref{eq: dL/ds}.

\end{proof}

\begin{remark}
We remind the readers that \cite{embeddingNEURIPS2019} has already proved that adding negative units will preserve the local minimality of the network.
\end{remark}

\section{Results Regarding Inactive Propagation}
\label{app: inactive propagation}

\begin{wrapfigure}{r}{0.44\linewidth}
\vspace{-10pt}
    \centering  
\includegraphics[scale=0.4]{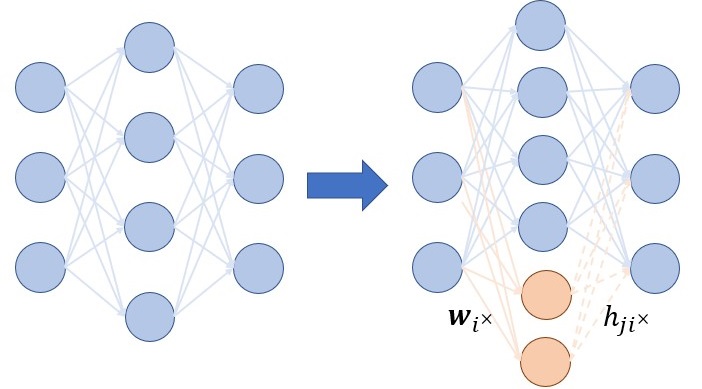}
        \caption{inactive propagation}
        \label{fig: inactive propagation}
        \vspace{-10pt}
\end{wrapfigure}

Inactive propagation can be carried out as shown in \cref{fig: inactive propagation}:
add hidden neurons $i^\times\in I^{\times}$, where $\mathbf{w}_{i^\times}$ is taken arbitrarily and $h_{ji^\times}=0$, $\forall~j\in J$, as shown in \cref{fig: inactive propagation}.

This type of network embedding also does not preserve stationarity or type-1 local minimality generally.

We start by discussing the preservation of stationarity.
Let us denote the parameters after adding units with inactive propagation by ${\mathbf{P}}^\prime$.
In this case, the added hidden neurons have $h_{ji^\times} = 0$ for all $j\in J$.
As a result, according to \cref{eq: dL/dr} and \cref{eq: dL/ds}, the radial derivative (which exists only when $\mathbf{w}_{i^\times} \neq \mathbf{0}$)  and tangential derivatives (towards all tangential directions) must all be zero.
However, the derivative with respect to output weight might not be zero:
\begin{equation}
\left.\frac{\partial\mathcal{L}(\mathbf{P})}{\partial h_{ji^\times}}\right|_{\mathbf{P} = {\mathbf{P}}^\prime}  = \mathbf{\mathbf{w}}_{i^\times}\cdot \mathbf{d}_{ji^\times}.
\label{eq: inactive propagation output weight derivative}
\end{equation}
Thus, the stationarity condition is not guaranteed to hold in general after adding inactive propagation units.

However, one may also notice that there are certain ways of enforcing stationarity:
If we choose $\mathbf{w}_{i^\times} = \mathbf{0}$ or $\mathbf{w}_{i^\times} = \beta\mathbf{w}_{i}$ with $i\in I$, $\beta>0$, then we equate \cref{eq: inactive propagation output weight derivative} to zero and preserves stationarity.
The former also belongs to the case of orthogonal units, and the latter also belongs to the case of unit replication.

Then, we discuss whether type-1 local minimality is preserved.
Preserving type-1 local minimality entails preserving stationarity and avoiding escape neurons, according to 
\cref{thm: local minimum general output dimension}.
We have discussed in the above that stationarity is not necessarily preserved in general. 
Moreover, regarding escape neurons, one can find that since an inactive propagation unit must have their tangential derivative being zero (since $h_{ji^\times} = 0$), without further strong restriction on the $\mathbf{d}_{ji^\times}$'s, the inactive propagation unit is highly likely an escape neuron.

\section{Comparison of Theorem 10 of \cite{embeddingNEURIPS2019} and \cref{lemma: unit replication stationarity minimality result}}
\label{app: why F is often zero}
\cite{embeddingNEURIPS2019} attempted to address the same question of whether local minimality is preserved by network embedding for our setup (which is also non-smooth) with its Theorem 10.
It managed to find a specific scheme of unit replication that turns local minima into saddles, potentially contradicting \cref{lemma: unit replication stationarity minimality result}.
However, one can verify that the assumption in their theorem does not conform to our setting, which we explicate below.

Theorem 10 of \cite{embeddingNEURIPS2019} requires a constructed $F$ matrix not to be a zero matrix to construct positive and negative eigenvalues for the Hessian matrix at the resulting parameters after unit replication.
In this way, it is shown that the parameters after unit replication constitute a strict saddle.
Please refer to \cite{embeddingNEURIPS2019} for more detail.

However, it turns out that, for the empirical squared loss, which is a widely used loss, the F matrix is zero at a type-1 local minimum, which we prove in the rest of this section. 
Hence Theorem 10 of \cite{embeddingNEURIPS2019} cannot say anything about such a situation.

\begin{lemma}
Suppose a set of parameters $\overline{\mathbf{P}}$ is a stationary point in our setting. 
If an input $\overline{\mathbf{w}}_i$ is such that $\mathbf{x}_k\cdot\overline{\mathbf{w}}_i\neq 0$, for all $k\in K$ (an assumption also taken by Theorem 10 of \cite{embeddingNEURIPS2019}), we must have that $\mathbf{d}_{ji} = \mathbf{0}$ for all $j\in J$.
\end{lemma}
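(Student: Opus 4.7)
The plan is to use that the assumption $\overline{\mathbf{w}}_i\cdot\mathbf{x}_k\neq 0$ for every $k\in K$ places $\overline{\mathbf{w}}_i$ in a neighborhood where each sign $\operatorname{sgn}(\mathbf{w}\cdot\mathbf{x}_k)$ is constant, so $\rho(\mathbf{w}\cdot\mathbf{x}_k)=c_k(\mathbf{w}\cdot\mathbf{x}_k)$ with $c_k\in\{\alpha^+,\alpha^-\}$ fixed. Consequently $\mathcal{L}$ is smooth (actually quadratic) in $\mathbf{w}_i$ on that neighborhood, and in particular $\mathbf{d}_{ji}^{\mathbf{v}_i}=\mathbf{d}_{ji}$ for every tangential direction $\mathbf{v}_i$ of $\overline{\mathbf{w}}_i$, independent of its orientation.

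I would then unpack the stationarity conditions of \cref{def: stationary point}. The vanishing of $\partial_{h_{ji}}\mathcal{L}$ from \cref{eq: dL/dh} gives $\overline{\mathbf{w}}_i\cdot\mathbf{d}_{ji}=0$ for every $j\in J$, so each $\mathbf{d}_{ji}$ lies in the hyperplane orthogonal to $\overline{\mathbf{w}}_i$. For each unit vector $\mathbf{v}_i\perp\overline{\mathbf{w}}_i$, both $\pm\mathbf{v}_i$ are admissible tangential directions, so applying the one-sided non-negativity $\partial_{s_i}\mathcal{L}\geq 0$ on both sides yields
\begin{align*}
\sum_{j\in J}\overline{h}_{ji}\,\mathbf{d}_{ji}\cdot\mathbf{v}_i=0\qquad\text{for every }\mathbf{v}_i\perp\overline{\mathbf{w}}_i,
\end{align*}
which combined with the radial identity $\sum_{j}\overline{h}_{ji}\mathbf{d}_{ji}\cdot\mathbf{u}_i=0$ from \cref{eq: dL/dr} produces the vector equality $\sum_{j\in J}\overline{h}_{ji}\mathbf{d}_{ji}=\mathbf{0}$ in $\mathbbm{R}^d$.

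The heart of the argument is then to upgrade this sum identity to the per-$j$ conclusion $\mathbf{d}_{ji}=\mathbf{0}$. My plan is to invoke the non-escape condition for neuron $i$, implicit in the surrounding discussion that targets type-1 local minima: by \cref{def: escape neurons}, if $\partial_{s_i}\mathcal{L}=0$ along every tangential direction (which is the case here by the previous step) and $i$ is not an escape neuron, then $\mathbf{d}_{ji}^{\mathbf{v}_i}\cdot\mathbf{v}_i=0$ must hold individually for every $j$ and every tangential $\mathbf{v}_i$. Since we are in the smooth region, $\mathbf{d}_{ji}^{\mathbf{v}_i}=\mathbf{d}_{ji}$, so $\mathbf{d}_{ji}\cdot\mathbf{v}=0$ for all $\mathbf{v}\perp\overline{\mathbf{w}}_i$; together with $\mathbf{d}_{ji}\cdot\overline{\mathbf{w}}_i=0$ from the previous paragraph, this tests $\mathbf{d}_{ji}$ against a basis of $\mathbbm{R}^d$, giving $\mathbf{d}_{ji}=\mathbf{0}$.

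The hard part is exactly this last step: the first-order relations $\overline{\mathbf{w}}_i\cdot\mathbf{d}_{ji}=0$ and $\sum_j\overline{h}_{ji}\mathbf{d}_{ji}=\mathbf{0}$ are not strong enough on their own, as they admit configurations with some $\overline{h}_{ji}=0$ and $\mathbf{d}_{ji}\neq\mathbf{0}$, which are precisely the escape-neuron saddles ruled out at a type-1 local minimum by \cref{thm: local minimum general output dimension}. I would therefore either import the non-escape property explicitly, or strengthen the hypothesis from ``stationary point'' to ``type-1 local minimum'' in line with the surrounding text, after which the orthogonal-decomposition argument above closes the lemma.
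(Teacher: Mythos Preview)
Your proposal is correct and follows essentially the same route as the paper: use smoothness at $\overline{\mathbf{w}}_i$ to identify $\mathbf{d}_{ji}^{\mathbf{v}_i}=\mathbf{d}_{ji}$, pair the output-weight stationarity $\overline{\mathbf{w}}_i\cdot\mathbf{d}_{ji}=0$ with the per-$j$ tangential condition $\mathbf{d}_{ji}\cdot\mathbf{v}_i=0$ coming from the non-escape assumption, and conclude $\mathbf{d}_{ji}=\mathbf{0}$. You are also right that the per-$j$ step cannot be closed from first-order stationarity alone and requires the type-1 local minimum hypothesis; the paper's own proof imports exactly this (``the fact that the stationary point we are investigating is a type-1 local minimum''), consistent with the surrounding text even though the lemma is stated for a stationary point.
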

\begin{proof}
If $\mathbf{w}_i\cdot\mathbf{x}_k\neq 0$ for all $k\in K$, then the network is continuously differentiable.
Thus we must have $\frac{\partial \mathcal{L}(\mathbf{\overline{P}})}{\partial s_i} = 0$ for all $\mathbf{v}_i$'s. 
Otherwise, it will not be a stationary point.
 
Moreover, the fact that the stationary point we are investigating is a type-1 local minimum gives:
\begin{equation}
\mathbf{d}_{ji}\cdot \mathbf{v}_i = \mathbf{d}_{ji}^{\mathbf{v}_i}\cdot \mathbf{v}_i =0, \text{ for any tangential direction $\mathbf{v}_i$ of $\overline{\mathbf{w}}_i$}.
\label{eq: F analysis dv orthogonality}
\end{equation}
    Moreover, since the parameter before unit replication $\overline{\mathbf{P}}$ is a stationary point, we must have:
\begin{equation}
\left.\frac{\partial \mathcal{L}(\mathbf{P})}{\partial h_{ji}}\right|_{\mathbf{P} = \overline{\mathbf{P}}} = \overline{\mathbf{w}}_i\cdot\mathbf{d}_{ji} = 0.
\end{equation}
Thus, $\mathbf{d}_{ji_0}$ must lies in the tangential space of $\overline{\mathbf{w}}_i$.
If $\mathbf{d}_{ji_0}\neq \mathbf{0}$, then it must be parallel to some unit vector $\mathbf{v}_i$ satisfying $\mathbf{v}_i\cdot\overline{\mathbf{w}}_i = 0$, which contradicts \cref{eq: F analysis dv orthogonality}.
\end{proof}

Then we investigate the $F$ matrix.
It is helpful to recap the setting of Theorem 10 by \cite{embeddingNEURIPS2019}.
First, they studied the ReLU activation function, meaning $\alpha^+ = 1, \alpha^- = 0$.
Moreover, they only discussed replicating a hidden neuron $i_0$ with \begin{equation}
\overline{\mathbf{w}}_{i_0}\cdot \mathbf{x}_k\neq 0\text{, $\forall$ $k\in K$},
\label{eq: no orthogonal k i in F}
\end{equation}
which we account for in the above helper lemma.
From \cite{embeddingNEURIPS2019}, we know that $F \in \mathbbm{R}^{d\times\vert J\vert}$ has each of its column being:
\begin{equation}
F_{:, j} = \sum_{k\in K}e_{kj}\frac{\partial \rho(\mathbf{x}_k\cdot\mathbf{w}_{i_0})}{\partial \mathbf{w}_{i_0}} = \sum_{\substack{k:\\ \mathbf{x}_k\cdot\mathbf{w}_{i_0}>0\\} }e_{kj}\mathbf{x}_k = \mathbf{d}_{ji_0}.  
\end{equation}
Notice that the above derivative is not hindered by the non-differentiability in the activation function thanks to \cref{eq: no orthogonal k i in F}. 

Remember that we have proved $\mathbf{d}_{ji_0} = \mathbf{0}$ for all $j \in J$, rendering F a zero matrix.

\section{The Applications of the Insight For Network Embedding}
\label{app: network embedding application}

Our discussion on network embedding in \cref{sec: network embedding} extends multiple previous results that did not consider non-differentiable cases.

\citet{deepest_splitting,wang2024lemon} {proposed a training scheme} where one neuron is split at an underfitting local minimum to create an escapable saddle point.
Such a training scheme can lower the cost of training since the training can start from a smaller (pretrained) network.
\cref{lemma: unit replication stationarity minimality result} effectively shows how to implement such a scheme for scalar-output networks, where all the local minima are all of type-1: simply replicate a neuron whose input weight has non-zero tangential derivative and choose $\gamma_l\leq 0$.
Such an insight might also be instrumental for multidimensional-output networks since type-2 local minima should be rare, as we discussed in \cref{sec: why not only if for the general case??}.

Our results regarding stationarity preservation also serve as constructive proof that one can always embed a stationary point of a narrower network into a wider network.
In other words, the stationary points of wider networks ``contain" those of the narrower networks. 
This is termed the \emph{embedding principle}, discussed in \citet{zhang2021embedding} for smooth networks.

\cref{sec: network embedding} also provides a perspective of understanding the merit of overparameterization for optimization.
Stationary points and spurious local minima might worsen the performance of GD-based optimization.
However, to our rescue, network embedding can cause some stationary points (local minima, resp.) to lose stationarity (local minimality, resp.).
This might explain why wider networks tend to reach better training loss.
It is also easy to check that embedding parameters that are not stationary points (local minima) will not result in stationary points (local minima).
Similar phenomena are noted in \citet{berfin_geometry, embeddingNEURIPS2019,zhang2021embedding} for smooth networks.
A meaningful next step is to quantify how manifolds of saddle points and local minima scale with network width \citep{berfin_geometry} for ReLU-like networks.

\end{document}